\definecolor{mypink2}{RGB}{219, 48, 122}
\newtheorem{definition}{Definition}[section]
\newtheorem{theorem}{Theorem}[section]
\newtheorem{proposition}{Proposition}[section]
\newcommand{\tmo}{m}
\newcommand{\qmo}{u}
\newcommand{\Qmo}{U}
\renewcommand{\cal}[1]{\mathcal{#1}}
\renewcommand{\rm}[1]{\mathrm{#1}}
\renewcommand{\sf}[1]{\mathsf{#1}}
\newcommand{\bb}[1]{\mathbb{#1}}
\renewcommand{\r}{\mathbb{R}}
\newcommand{\sbrac}[1]{\left [ #1 \right ]}
\newcommand{\norm}[1]{\left|\left|{#1}\right|\right|}
\newcommand{\iprod}[2]{\left\langle{#1},{#2}\right\rangle}
\newcommand{\Ebbs}[2]{\mathbb{E}_{#1}\left[#2\right]}
\newtheorem*{remark*}{Remark}
\newtheorem{assumption}{Assumption}
\newtheorem{example}{Example}
\definecolor{mplPurple}{RGB}{125, 0, 125}
\icmltitlerunning{Momentum Particle Maximum Likelihood}
\begin{document}

\twocolumn[
\icmltitle{Momentum Particle Maximum Likelihood}
\icmlsetsymbol{equal}{*}

\begin{icmlauthorlist}
\icmlauthor{Jen Ning Lim}{warwick}
\icmlauthor{Juan Kuntz}{poly}
\icmlauthor{Samuel Power}{bris}
\icmlauthor{Adam M. Johansen}{warwick}
\end{icmlauthorlist}
\icmlaffiliation{warwick}{University of Warwick}
\icmlaffiliation{bris}{University of Bristol}
\icmlaffiliation{poly}{Polygeist}
\icmlkeywords{Machine Learning, ICML}
\vskip 0.3in
\icmlcorrespondingauthor{Jen Ning Lim}{Jen-Ning.Lim@warwick.ac.uk}
]
\printAffiliationsAndNotice{}
\begin{abstract}
Maximum likelihood estimation (MLE) of latent variable models is often recast as the minimization of a free energy functional over an extended space of parameters and probability distributions. This perspective was recently combined with insights from optimal transport to obtain novel particle-based algorithms for fitting latent variable models to data. Drawing inspiration from prior works which interpret `momentum-enriched' optimization algorithms as discretizations of ordinary differential equations, we propose an analogous dynamical-systems-inspired approach to minimizing the free energy functional. The result is a dynamical system that blends elements of Nesterov's Accelerated Gradient method, the underdamped Langevin diffusion, and particle methods. Under suitable assumptions, we prove that the continuous-time system minimizes the functional. By discretizing the system, we obtain a practical algorithm for MLE in latent variable models. The algorithm outperforms existing particle methods in numerical experiments and compares favourably with other MLE algorithms.
\end{abstract}

\section{Introduction}

In this work, we study parameter estimation for (probabilistic) latent variable models $p_{\theta} \left( y, x\right)$ with parameters $\theta \in \mathbb{R}^{d_{\theta}}$, unobserved (or latent) variables $x \in \mathbb{R}^{d_{x}}$, and observed variables $y \in \mathbb{R}^{d_{y}}$ (which we treat as fixed throughout). The type II maximum likelihood approach \citep{good1983} estimates $\theta$ by maximizing the \emph{marginal likelihood} $p_{\theta} \left( y \right) := \int p_{\theta} \left( y, x \right) \, \mathrm{d}x$. However, for most models of practical interest, the integral has no known closed-form expressions and we are unable to optimize the marginal likelihood directly.

This issue is often overcome (e.g.,\ \citet{neal1998view}) by constructing an objective defined over an extended space, whose optima are in one-to-one correspondence with those of the MLE problem. To this end, we define the `free energy' functional:
\begin{align}\label{eq:freeenergy}
    \mathcal{E} \left( \theta, q \right) := \int  \log \left( \frac{q \left( x \right)}{p_{\theta} \left( y, x \right)} \right)q \left( x \right) \, \mathrm{d}x.
\end{align}
The infimum of $\cal{E}$ over the \emph{extended space} $\r^{d_\theta}\times\cal{P}(\r^{d_x})$ (where $\cal{P}(\r^{d_x})$ denotes the space of probability distributions over $\r^{d_x}$) coincides with negative of the log marginal likelihood's supremum:
\begin{equation}\label{eq:freeenergyinf}\cal{E}^*:=\inf_{(\theta,q)}\cal{E}(\theta,q)=-\sup_{\theta\in \mathbb{R}^{d_\theta}}\log p_\theta(y).\end{equation}
To see this, note that 
\begin{equation}\label{eq:freeenergyrKL}\mathcal{E} \left( \theta, q \right) = - \log p_{\theta} \left( y \right) + \mathsf{KL}\left( q, p_{\theta} \left( \cdot \mid y \right) \right),\end{equation}
where $\mathsf{KL}$ denotes the Kullback--Leibler divergence and $p_\theta(\cdot|y):=p_\theta(y,\cdot)/p_\theta(y)$ the posterior distribution. So, if  $(\theta^*,q^*)$ minimizes the free energy, then $\theta^*$ maximizes the marginal likelihood and $q^*(\cdot)=p_{\theta^*}(\cdot|y)$. In other words, by minimizing the free energy, we solve our MLE problem.

This perspective motivates the search for practical procedures that minimize the free energy $\mathcal{E}$. One such example is the classical Expectation Maximization (EM) algorithm applicable to models for which the posterior distributions $p_{\theta} \left( \cdot | y \right)$ are available in closed form. As shown in \citet{neal1998view}, its iterates coincide with those of coordinate descent applied to the free energy $\cal{E}$. Recently, \citet{Kuntz2022} sought analogues of gradient descent (GD) applicable to the free energy $\cal{E}$. In particular, building on ideas popular in optimal transport (e.g., see \citet{ambrosio2005gradient}), they identified an appropriate notion for the free energy's gradient $\nabla \cal{E}$, discretized the corresponding gradient flow $(\dot{\theta}_t,\dot{q}_t)=-\nabla \cal{E}(\theta_t,q_t)$, and obtained a practical algorithm they called particle gradient descent (PGD).

In optimization, GD is well-known to be suboptimal: other practical first-order methods achieve better worst-case convergence guarantees and practical performance \citep{nemirovskij1983problem, nesterov1983method}. A common feature among algorithms that achieve the optimal `accelerated' convergence rates is the presence of `momentum' effects in the dynamics of the algorithm. Roughly speaking, if GD for a function $f$ is analogous to solving the ordinary differential equation (ODE)
\begin{equation}\label{eq:eucgf}
    \dot{\theta}_t = -\nabla_\theta f \left( \theta_t \right)\end{equation}
with $\nabla_\theta f$ denoting $f$'s Euclidean gradient, then a `momentum' method is akin to solving a second-order ODE like 
%
\begin{align}
    \dot{\theta}_t &= \eta m_t, \quad 
    \dot{m}_t = -\gamma \eta m_t - \nabla_\theta f \left( \theta_t \right),
    \label{eq:dhamil_euclidean}
\end{align}
where $\gamma, \eta$ denote positive hyperparameters~\citep{su2014differential,wibisono2016variational}. In the context of machine learning, \citet{sutskever2013importance} demonstrated empirically that incorporating momentum effects into stochastic gradient descent (SGD) often has substantial benefits such as eliminating the performance gap between standard SGD and competing `Hessian-free' deterministic methods based on higher-order differential information \citep{martens2010deep}.

Given the successes of momentum methods, it is natural to ask whether PGD can be similarly accelerated. In this work, we affirmatively answer this question. Our contributions are: (1) we construct a continuous-time flow that incorporates momentum effects into the gradient flow studied in \citet{Kuntz2022}; (2) we derive a discretization that outperforms PGD and compares competitively against other methods. As theoretical contributions: (3) we prove in \Cref{prop:existence_uniqueness} the existence and uniqueness of the flow's solutions; (4) under suitable conditions, we show in \Cref{prop:convergence_flow} that the flow converges to $\cal{E}$'s minima; and, (5) we asymptotically justify in \Cref{prop:chaos} the particle approximations we use when discretizing the flow. The proofs of \Cref{prop:existence_uniqueness} and \Cref{prop:chaos} require generalizations of proofs pertaining to McKean--Vlasov SDEs, and the proof of \Cref{prop:convergence_flow} is a generalization of the \citet{ma2019there}'s convergence argument. These may be of independent interest for future works that operate in the extended space $\mathbb{R}^{d_\theta} \times \cal{P}(\mathbb{R}^{d_x})$.

The structure of this manuscript is as follows: in  \Cref{sec:background}, we review gradient flows on the Euclidean space $\mathbb{R}^{d_\theta}$, probability space $\cal{P}(\mathbb{R}^{d_x})$, and extended space $\mathbb{R}^{d_\theta} \times \cal{P}(\mathbb{R}^{d_x})$. We also review momentum methods on $\mathbb{R}^{d_\theta}$ and $\cal{P}(\mathbb{R}^{d_x})$. In \Cref{sec:proposal}, we describe how momentum can be incorporated into the gradient flow on $\mathbb{R}^{d_\theta} \times \cal{P}(\mathbb{R}^{d_x})$, resulting in a dynamical system we refer to as the `momentum-enriched flow'. In \Cref{sec:mt_convergence}, we establish the flow's convergence. In \Cref{sec:mt_discretization}, we discretize the flow and obtain an MLE algorithm we refer to as Momentum Particle Descent (MPD). In \Cref{sec:experiments}, we demonstrate empirically the efficacy of our method and study the effects of various design choices. As a large-scale experiment, we benchmark our proposed method for training Variational Autoencoders \citep{kingma2013auto} against current methods for training latent variable models. We conclude in Section~\ref{sec:conc} with a discussion of our results, their limitations, and future work.

\section{Background}
\label{sec:background}

Our goal is to incorporate momentum into the gradient flow on the extended space $\r^{d_\theta}\times\cal{P}(\r^{d_x})$ considered in \citet{Kuntz2022} and obtain an algorithm for type II MLE that outperforms PGD. We begin by surveying the standard gradient flows on $\r^{d_\theta}\times\cal{P}(\r^{d_x})$'s component spaces and how momentum is incorporated in those settings.

\subsection{Gradient Flows on $\mathbb{R}^d$ and their Acceleration}\label{sec:euclidean}

Given a differentiable function $f : \mathbb{R}^d \to \mathbb{R}$, $f$'s Euclidean gradient flow is~\eqref{eq:eucgf}.
%
%
%
%
Discretizing \eqref{eq:eucgf} in time using a standard forward Euler integrator yields GD. As observed in~\citet{su2014differential,shi2021understanding}, algorithms such as \citet{nesterov1983method}'s accelerated gradient (NAG) that achieve the optimal convergence rates are instead obtained by discretising the second-order ODE~\eqref{eq:dhamil_euclidean}; cf.\ \Cref{sec:nag_discretization}. This ODE models the evolution of a kinetic particle with both position and momentum. In particular, for convex quadratic $f$, \eqref{eq:dhamil_euclidean} reproduces the dynamics of a damped harmonic oscillator \citep{mccall2010classical}. Hence, the ODE's hyperparameters have intuitive interpretations: $\eta^{-1}$ represents the particle's mass and $\gamma$ determines the damping's strength. 

We can rewrite~\eqref{eq:dhamil_euclidean} concisely as 
\begin{equation}\label{eq:euhamflow}(\dot{\theta}_t,\dot{m}_t) = - \mathsf{D}_\gamma \nabla_{ \left( \theta, m \right)} f_\eta \left( \theta_t, m_t \right),\end{equation}
where $f_\eta$ denotes the `Hamiltonian function' and $\mathsf{D}_\gamma$ the `damping matrix':
\begin{equation}\label{eq:hamilanddamp}f_\eta \left( \theta, m \right)=f \left( \theta \right) + \frac{\eta}{2} \| m \|^2,\enskip \mathsf{D}_\gamma :=  \left(
        \begin{array}{cc}
            0_{d} & -I_{d}\\
            I_{d} & \gamma I_{d}
        \end{array}\right).
\end{equation}
In other words,~\eqref{eq:dhamil_euclidean} is a `damped' or `conformal' Hamiltonian flow~\citep{mclachlan2001conformal, maddison2018hamiltonian}. In particular, we can interpret NAG as a discretisation of $F_\eta$'s `$\gamma$-damped Hamiltonian flow'~\citep{wibisono2016variational, wilson2016lyapunov}.

\subsection{Gradient Flows on $\mathcal{P} \left( \mathbb{R}^d \right)$ and their Acceleration}\label{sec:w2gradflow}
We can follow a similar program to solve optimization problems over $\mathcal{P} \left( \mathbb{R}^d \right)$. To do so, we require an analogue of the Euclidean gradient $\nabla_\theta f$ for (differentiable) functions $f$ on $\r^d$  applicable to (sufficiently-regular) functionals $F$ on $\cal{P}(\r^d)$. One way of defining $\nabla_\theta f(\theta)$ at a given point $\theta$  is as the unique vector satisfying 
\begin{equation}\label{eq:euclideanmetric}f ( \theta' ) =  f \left( \theta \right) + \langle \nabla_\theta f \left( \theta \right), \theta' - \theta \rangle + o(\sf{d}_E(\theta, \theta')),
\end{equation}
%
%
where $\iprod{\cdot}{\cdot}$ and $\sf{d}_E$ denote the Euclidean inner product and metric related by the formula
\begin{equation}\label{eq:length}
    \sf{d}_E(\theta,\theta')^2=\inf_\lambda \int_0^1\iprod{\lambda_t}{\lambda_t}\,\rm{d}t,
\end{equation}
where the infimum is taken over all (sufficiently-regular) curves $\lambda:[0,1]\to\r^d$ connecting $\theta$ and ${\theta}'$. Roughly speaking, we can reverse this process to define $F$'s gradient for a given metric on $\cal{P}(\r^d)$. Here, we focus on the Wasserstein-2 metric $\mathsf{W}_2$ for which \eqref{eq:length}'s analogue is the Benamou-Brenier formula~\citep[Eq.7.2]{peyre2019}:
\begin{align*}
    \mathsf{W}_2(q, \tilde{q})^2=\inf_\rho \int_0^1g_{\rho_t}(\dot{\rho}_t,\dot{\rho}_t)\,\rm{d}t
\end{align*}
where the infimum is taken over all (sufficiently-regular) curves $\rho:[0,1]\to\r^d$ connecting $q$ and $\tilde{q}$ and 
$$g_{q}(m,m'):=\int\iprod{ v(x)}{ v'(x)}q(x)\rm{d}x,$$
where $v$ and $v'$ solve the respective continuity equation, i.e.,
$m=-\nabla_x\cdot(q v),\quad m' =-\nabla_x\cdot(q v').$
Then, replacing $(\theta,\theta',f,\nabla_\theta,\iprod{\cdot}{\cdot},\sf{d}_E)$ with $(q, q',F,\nabla_{q},g_q(\cdot,\cdot),$ $\mathsf{W}_2)$ in~\eqref{eq:euclideanmetric} and solving for $\nabla_{q}F$, we obtain
$$\nabla_{q} F \left( q \right) = - \nabla_x \cdot \left( q \nabla_x \delta_q F \left [ q \right ] \right),$$
where $\delta_q F [q]:\r^d\to\r$ denotes the functional's \emph{first variation}: the function satisfying
$$
     {F} \left( q+\varepsilon \chi \right) = {F} \left( q \right) + \varepsilon\int \delta_q F \left[ q \right] \left( x \right) \, \chi(x) \, \mathrm{d} x + o(\varepsilon)
$$
for all (sufficiently-regular) $\chi:\r^d\to\r$ s.t.\ $\int \chi(x)\mathrm{d}x=0$. This is, of course, not the only choice; other metrics induce other `geometries' and lead to other algorithms (e.g., see \citet{duncan2019geometry, liu2017stein, sharrock2023coinem} for the Stein case). For a more rigorous presentation of these concepts, see \Cref{appen:gf_w2} and references therein.

We can now define $F$'s Wasserstein gradient flow just as in the Euclidean case (except we have a PDE rather than an ODE): 
$
    \dot{q}_t = - \nabla_{q} F \left( q_t \right).
$
A well-known example of such a flow is that for which $F(\cdot) := \mathsf{KL} \left( \cdot, p \right)$ for a fixed $p$ in $ \mathcal{P} \left( \mathbb{R}^d \right)$. In this case,  $\delta_q F \left[ q \right] = \log (q/p)$ and the flow reads
$
    \dot{q}_t =   \nabla_x \cdot \left( q_t \nabla_x \log (q_t/p) \right).
$
As noted in~\citet{jordan1998variational}, this is  the Fokker-Planck equation satisfied by the law of the overdamped Langevin SDE with invariant measure $p$: 
$
    \mathrm{d} X_t = \nabla_x \ell \left( X_t \right) \, \mathrm{d} t + \sqrt{2} \, \mathrm{d} W_t,
$ 
where $\ell(x):=\log p(x)$ and $(W_t)_{t\geq0}$ denotes the standard Wiener process. For this reason, papers such as~\citet{wibisono2018,ma2019there} view the overdamped Langevin algorithm obtained by discretizing this SDE using the Euler--Maruyama scheme as an analogue of GD for $F$.

\citet{ma2019there} go a step further and search for `accelerated' methods for minimizing $F$. To do so, they consider distributions over the momentum-enriched space $\r^{2d}$ rather than $\r^d$ and they replace $F$ on $\cal{P}(\r^d)$ with the following `Hamiltonian' on  $\cal{P}(\r^d\times\r^d$):
\begin{equation}\label{eq:w2hamiltonian}
    F_\eta (\cdot):=\mathsf{KL} \left( \cdot, p(\rm{d}x) \otimes r_\eta (\rm{d}u) \right)\,,\end{equation}
where $\eta>0$ denotes a fixed hyperparameter, $r_\eta:=\mathcal{N} \left(0, \eta^{-1} I_d \right)$ a zero-mean $\eta^{-1}$-variance isotropic Gaussian distribution over the momentum variables $u$, and $p\otimes r_\eta$ the product measure. They then define the following $\cal{P}(\r^{2d})$-valued analogue of the $\gamma$-damped Hamiltonian flow~\eqref{eq:euhamflow}:
\begin{align}\label{eq:w2hamflow}
    \dot{q}_t=-\sf{D}_\gamma\nabla_{q} F_\eta(q_t)=\nabla_{(x,u)} \cdot \left( q_t  \mathsf{D}_\gamma \nabla_{(x,u)} \delta_q F_\eta \left[ q_t \right] \right).
\end{align}
This is the Fokker--Planck equation satisfied by the law of the underdamped Langevin SDE:
\begin{equation}\label{eq:underdampedla}
\rm{d}X_t=\eta U_t\rm{d}t,\enskip \rm{d}U_t=[\nabla_x\ell(X_t)-\gamma \eta U_t]\rm{d}t+\sqrt{2\gamma}\rm{d}W_t.\end{equation}
Discretizing the SDE, \citet{ma2019there} recover the underdamped Langevin algorithm~(e.g.,~see~\citet{cheng2018underdamped}) and show that, under suitable conditions, it achieves faster convergence rates than the overdamped Langevin algorithm.
\subsection{Gradient Flows on $\mathbb{R}^{d_\theta} \times \mathcal{P} \left( \mathbb{R}^{d_x} \right)$}\label{sec:pgd}
To minimize a functional $\cal{E}$ on $\mathbb{R}^{d_\theta} \times \mathcal{P} \left( \mathbb{R}^{d_x} \right)$, we can follow steps analogous to those in the previous two sections. First, given a metric $\sf{d}$ on this space, we can define a gradient $\nabla \cal{E}$ w.r.t.\ $\sf{d}$ similarly as in the beginning of \Cref{sec:w2gradflow}; cf.\ \citet[Appendix A]{Kuntz2022}. Setting $\sf{d}((\theta,q),(\theta',q')):=\sqrt{\sf{d}_E(\theta,\theta')^2+\sf{W}_{2}(q,q')^2}$, one finds that $\nabla\cal{E}=(\nabla_\theta\cal{E},\nabla_q\cal{E})$, and the corresponding flow reads
\begin{align*}
    \dot{\theta}_t &= -\nabla_\theta \mathcal{E} \left( \theta_t, q_t \right), \quad
    \dot{q}_t = -\nabla_{q} \mathcal{E} \left( \theta_t, q_t \right).
\end{align*}
\citet{Kuntz2022} were interested in MLE of latent variable problems and set $\cal{E}$ to be the free energy in~\eqref{eq:freeenergy}. In this case, the above reduces to
$$\dot{\theta}_t = \int\nabla_{\theta} \ell \left( \theta_t, x \right) q_t \left(\mathrm{d} x\right),\enskip \dot{q}_t = \nabla_{x} \cdot \left( q_t\nabla_{x} \log\frac{q_t}{\rho_{\theta_t}} \right),$$
%
where $\rho_\theta(\cdot) := p_\theta (y, \cdot)$. The above is the Fokker--Planck equation of the following McKean--Vlasov SDE:
\begin{align*}
    d\theta_t &=\int  \nabla_{\theta} \ell \left( \theta_t, x \right)q_t\left( \mathrm{d} x  \right) , \\
    \mathrm{d} X_t &= \nabla_x \ell \left( \theta_t, X_t \right) \, \mathrm{d} t + \sqrt{2} \, \mathrm{d} W_t,
\end{align*}
where $q_t := \mathrm{Law} \left( X_t \right)$ and $\ell(\theta,x):=\log \rho_\theta(x)$. Because $\theta_t$'s drift depends on the $X_t$'s law, this is a `McKean--Vlasov' process \citep{kac1956foundations, mckean1966class}. Approximating the SDE as described in \Cref{sec:pgd_discetization}, we obtain PGD.
\section{Momentum-Enriched Flow for MLE}
\label{sec:proposal}

Following an approach analogous to those previously taken to obtain accelerated algorithms for minimizing functionals on $\r^{d_\theta}$ and $\cal{P}(\r^{d_x})$ (cf.~\Cref{sec:euclidean,sec:w2gradflow}, respectively), we now derive a momentum-enriched flow that minimizes the free energy functional $\cal{E}$ over $\r^{d_\theta}\times\cal{P}(\r^{d_x})$ defined in~\eqref{eq:freeenergy}. Approximating this flow, we will obtain an algorithm that experimentally outperforms PGD (\Cref{sec:pgd}).

We begin by enriching both components of the space with momentum variables: we replace $\theta$ and $x$ in $\r^{d_\theta}$ and $\r^{d_x}$ with $(\theta,m)$ and $(x,u)$ in $\r^{2d_\theta}$ and $\r^{2d_x}$; and we correspondingly substitute $\cal{P}(\r^{d_x})$ with $\cal{P}(\r^{2d_x})$. Next, we define a `Hamiltonian' on the momentum-enriched space $\r^{2d_\theta}\times\cal{P}(\r^{2d_x})$ analogous to  $f_\eta$ and $F_\eta$ in~(\ref{eq:hamilanddamp},\ref{eq:w2hamiltonian}):
\begin{align}
    \label{eq:hamilboth}
    \mathcal{F} \left( \theta, m, q \right) :=& \int  \log \left( \frac{q \left(x, u \right)}{\rho_{\theta, \eta_x} \left( x, u \right)} \right) \, q \left( x, u \right)\mathrm{d} x \, \mathrm{d} u \nonumber\\
    &+ \frac{\eta_\theta}{2} \| m \|^2.
\end{align}
where $\left( \eta_\theta, \eta_x \right)$ denote positive hyperparameters and $\rho_{\theta, \eta_x} \left( x, u \right) := p_\theta \left( y, x \right) r_{\eta_x} \left( u \right)$ with  $r_{\eta_x} := \mathcal{N} ( 0, $ $\eta_x^{-1} I_{d_x})$.  Consider now the following $\left( \gamma_\theta, \gamma_x \right)$-damped Hamiltonian flow of $\mathcal{F}$ analogous to~(\ref{eq:euhamflow},\ref{eq:w2hamflow}): 
\begin{subequations}\label{eq:mpd_pde}\begin{align}
    &(\dot{\theta}_t,\dot{m}_t) = - \mathsf{D}_{\gamma_\theta}  \nabla_{(\theta,m)} \mathcal{F} \left( \theta_t,m_t,  q_t \right), \\
    &\dot{q}_t = \nabla_{(x,u)} \cdot \left( q_t \mathsf{D}_{\gamma_x} \nabla_{(x,u)} \mathcal\delta_q \cal{F} \left[ \theta_t,m_t,  q_t \right] \right).\end{align}
\end{subequations}
This is the Fokker--Planck equation for the following McKean--Vlasov SDE:
\begin{subequations}
\label{eq:mpd_flow}
    \begin{align}
    \mathrm{d} \theta_{t} &=\eta_{\theta} m_{t} \, \mathrm{d} t, \\
    \mathrm{d} m_{t} &= \left[\int  \nabla_{\theta} \ell \left( \theta_t, x \right)q_{t,X} \left( \mathrm{d} x \right)- \gamma_{\theta} \eta_{\theta} m_{t}\right] \, \mathrm{d} t,\\
    \mathrm{d} X_{t} &= \eta_{x}  U_{t} \, \mathrm{d} t, \\
    \mathrm{d} U_{t} &= [\nabla_{x} \ell \left( \theta_{t}, X_t \right)-\gamma_{x} \eta_{x} U_{t}] \, \mathrm{d} t + \sqrt{2  \gamma_{x}} \, \mathrm{d}W_{t},
\end{align}
\end{subequations}%
where $q_{t,X} := \mathrm{Law} \left( X_{t} \right)$. Assuming that $\ell$ has a Lipschitz gradient, the SDE has a unique strong solution:
\begin{assumption}[$K$-Lipschitz gradient]
	\label{ass:gradlip} We assume that the potential $\ell$ has a $K$-Lipschitz gradient, i.e., there is some $K>0$ such that the inequality holds
\begin{align*}
    \left \| \nabla_{(\theta, x)} \ell(\theta, x) - \nabla_{(\theta, x)} \ell(\theta', x') \right \| \le K  \|(\theta,x) - (\theta',x') \|,
\end{align*}
for all $(\theta, x)$ and $(\theta', x')$ in $\mathbb{R}^{d_\theta} \times \mathbb{R}^{d_x}$.
\end{assumption} 
\begin{proposition}[Existence and uniqueness of strong solutions to \eqref{eq:mpd_flow}]
    Under \Cref{ass:gradlip}, there exists a unique strong solution to \eqref{eq:mpd_flow} for any initial condition $(\theta_0, m_0, q_0)$ in $\bb{R}^{2d_\theta}\times \cal{P}(\bb{R}^{2d_x})$.
    \label{prop:existence_uniqueness}
\end{proposition}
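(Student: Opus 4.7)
The plan is to establish \Cref{prop:existence_uniqueness} by a Banach fixed-point argument in the spirit of Sznitman's classical construction for McKean--Vlasov SDEs, adapted to the present coupled deterministic-stochastic structure. Since stochasticity enters only through the $(X_t, U_t)$ block and the $(\theta_t, m_t)$ equations depend on the law only via the $X$-marginal $q_{t,X} = \mathrm{Law}(X_t)$, I would take this marginal flow as the fundamental unknown and construct a self-map whose fixed point yields a strong solution. Fix a horizon $T > 0$ and work on the complete metric space $\mathcal{S}_T := C([0,T]; \mathcal{P}_2(\mathbb{R}^{d_x}))$ equipped with the uniform Wasserstein-2 metric $\mathsf{d}_T(\mu, \tilde\mu) := \sup_{t \in [0,T]} \mathsf{W}_2(\mu_t, \tilde\mu_t)$.

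Given a candidate $\bar\mu \in \mathcal{S}_T$, first solve the deterministic ODE
\[
\dot\theta_t = \eta_\theta m_t, \qquad \dot m_t = \int \nabla_\theta \ell(\theta_t, x)\,\bar\mu_t(\mathrm{d}x) - \gamma_\theta \eta_\theta m_t,
\]
with initial condition $(\theta_0, m_0)$. \Cref{ass:gradlip} makes the right-hand side Lipschitz in $(\theta, m)$ with a constant uniform in $t$ and in $\bar\mu$, so Picard--Lindelöf delivers a unique global solution $(\theta_t^{\bar\mu}, m_t^{\bar\mu})$. Feed this $\theta_t^{\bar\mu}$ into the (non-mean-field) SDE
\[
\mathrm{d}X_t = \eta_x U_t\,\mathrm{d}t, \qquad \mathrm{d}U_t = [\nabla_x \ell(\theta_t^{\bar\mu}, X_t) - \gamma_x \eta_x U_t]\,\mathrm{d}t + \sqrt{2\gamma_x}\,\mathrm{d}W_t,
\]
with $(X_0, U_0) \sim q_0$; \Cref{ass:gradlip} again makes the drift Lipschitz, so a unique strong solution exists by classical theory. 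Define $\Phi(\bar\mu)_t := \mathrm{Law}(X_t)$.

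The core step is to show $\Phi$ contracts on $\mathcal{S}_T$ for $T$ small. This proceeds in two stages. First, subtracting two copies of the ODE and using Gronwall together with the Kantorovich--Rubinstein bound $|\!\int \nabla_\theta \ell(\theta, x)\,\mathrm{d}\bar\mu_s - \int \nabla_\theta \ell(\theta, x)\,\mathrm{d}\tilde\mu_s| \le K\mathsf{W}_2(\bar\mu_s, \tilde\mu_s)$ (valid because $\nabla_\theta \ell(\theta, \cdot)$ is $K$-Lipschitz by \Cref{ass:gradlip}) yields
\[
\sup_{s \le t}\left(\|\theta_s^{\bar\mu} - \theta_s^{\tilde\mu}\|^2 + \|m_s^{\bar\mu} - m_s^{\tilde\mu}\|^2\right) \le C_1 \int_0^t \mathsf{W}_2(\bar\mu_s, \tilde\mu_s)^2\,\mathrm{d}s.
\]
Second, driving both copies of the SDE with the \emph{same} Brownian motion and the same initial sample (synchronous coupling) and applying It\^o's formula to $\|X_t^{\bar\mu} - X_t^{\tilde\mu}\|^2 + \|U_t^{\bar\mu} - U_t^{\tilde\mu}\|^2$ plus Gronwall gives $\mathsf{W}_2(\Phi(\bar\mu)_t, \Phi(\tilde\mu)_t)^2 \le C_2 \int_0^t \|\theta_s^{\bar\mu} - \theta_s^{\tilde\mu}\|^2\,\mathrm{d}s$. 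Composing these bounds produces $\mathsf{d}_T(\Phi(\bar\mu), \Phi(\tilde\mu))^2 \le CT^2 \mathsf{d}_T(\bar\mu, \tilde\mu)^2$, hence a strict contraction for $T$ below a threshold determined by $K$, $\eta_\theta$, $\eta_x$, $\gamma_\theta$, $\gamma_x$ alone. The local fixed point extends to any horizon by pasting: restart the argument from the terminal state $(\theta_{T_*}, m_{T_*}, q_{T_*})$, since the relevant Lipschitz constants are uniform in the state. Finally, recovering the full joint law $q_t \in \mathcal{P}(\mathbb{R}^{2d_x})$ from the fixed-point marginal is automatic, as it is the law of the SDE solution driven by the resulting $\theta_t$.

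The principal technical obstacle, and the reason the authors flag this as a generalization of standard McKean--Vlasov proofs, is the propagation of second moments: the ODE for $(\theta, m)$ is only meaningful if $\bar\mu_t$ has a finite second moment uniformly in $t$, and the Wasserstein contraction estimate above is only a contraction in $\mathcal{P}_2$. I would establish this via a standard energy estimate, applying It\^o to $\|X_t\|^2 + \|U_t\|^2$ and exploiting the linear growth of $\nabla_x \ell$ implied by \Cref{ass:gradlip}, which yields a Gronwall-type bound on the moments depending only on the data; this in turn controls the $X$-moments entering the $m$-equation. Once the moment bookkeeping and the cross-coupling between a deterministic ODE and a stochastic SDE are handled, the remainder of the argument is a direct adaptation of Sznitman's scheme to the extended space $\mathbb{R}^{2d_\theta} \times \mathcal{P}(\mathbb{R}^{2d_x})$.
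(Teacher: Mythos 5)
Your proposal is correct and would yield a valid proof, but the route differs from the paper's in two respects. The paper sets up the Picard iteration over the \emph{joint} space $C([0,T],\bb{R}^{2d_\theta}\times\cal{P}(\bb{R}^{2d_x}))$: given a candidate curve $\nu=(\nu^\vartheta,\nu^\Upsilon)$, it substitutes both $\nu^\vartheta$ in place of $(\theta_t,m_t)$ and $\nu^\Upsilon$ in place of $\mathrm{Law}(X_t,U_t)$ into the drift, solves the resulting classical SDE, and defines $F_T(\nu)_t=(\vartheta^\nu_t,\mathrm{Law}(\Upsilon^\nu_t))$; uniqueness then follows by showing the $k$-fold iterate $F_T^k$ is a contraction on the fixed horizon via an inductive Gr\"onwall bound, so no gluing is needed. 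You instead iterate only over the $X$-marginal curve $\bar\mu\in C([0,T];\cal{P}_2(\bb{R}^{d_x}))$, exploiting the triangular structure ($\bar\mu$ determines $(\theta_t,m_t)$ by an ODE, which determines $\mathrm{Law}(X_t)$ by an SDE), and then close by small-time contraction plus pasting. Your choice buys a leaner iteration variable and a simple one-step contraction; the paper's buys a single argument on $[0,T]$ without restarting and a more symmetric treatment of the two components.

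A few small points if you flesh this out. The inequality $\mmag{\int\nabla_\theta\ell(\theta,x)\,\mathrm{d}\bar\mu-\int\nabla_\theta\ell(\theta,x)\,\mathrm{d}\tilde\mu}\le K\sf{W}_2(\bar\mu,\tilde\mu)$ is not Kantorovich--Rubinstein directly; KR gives the $\sf{W}_1$ bound, and you then use $\sf{W}_1\le\sf{W}_2$ (the paper does exactly this, citing Villani). In your step~2 you do not actually need It\^o's formula: under synchronous coupling the Brownian terms cancel, so the squared difference solves a pathwise, deterministic inequality and Gr\"onwall applies directly. Finally, your observation about second-moment propagation is well placed and slightly under-appreciated in the paper's own write-up: to have $\Phi:\cal{S}_T\to\cal{S}_T$ one must check $\mathrm{Law}(X_t)$ has uniformly bounded second moments, which follows from the linear growth of $\nabla_x\ell$ guaranteed by \Cref{ass:gradlip}; the paper handles this implicitly through its definition of $\cal{P}$ as finite-second-moment measures and the Lipschitz estimate on the drift (\Cref{prop:lipschitz_drift}), rather than by an explicit energy estimate.
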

\begin{proof}The proof is a generalization of \citet[Theorem 1.7]{carmona2016lectures}'s proof; see \Cref{{proof:existence_uniqueness}}.\end{proof}

\section{Convergence of Momentum-Enriched Flow}
\label{sec:mt_convergence}

To evidence that our approach is theoretically well-founded, we wish to show the momentum-enriched flow~\eqref{eq:mpd_pde} solves the maximum likelihood problem. We do this by showing
that the log marginal likelihood evaluated along $(\theta_t)_{t\geq0}$ (of the momentum-enriched flow~\eqref{eq:mpd_pde}) converges exponentially fast to its supremum. Since directly working with the marginal likelihood is difficult, we exploit the following inequality: for all $m \in \mathbb{R}^{d_\theta}$, $q\in \cal{P}(\bb{R}^{2d_x})$,
\begin{equation}
    \label{eq:relationship_evidence_free_energy}
    - \log p_\theta(y) \le \cal{E}(\theta, q_X) \le \cal{F}(\theta, m, q), \enskip 
\end{equation}
where $q_X$ denotes $q$'s $x$-marginal. 
From \eqref{eq:relationship_evidence_free_energy} and the fact that both $\cal{E}$ and $\cal{F}$ share the same infimum $\cal{E}^*$, it holds that if
$\cal{F}$ in~\eqref{eq:hamilboth} decays exponentially fast to its infimum along the solutions $(\theta_t,m_t,q_t)_{t\geq0}$ of~\eqref{eq:mpd_pde}, then we have the desired result of convergence in log marginal likelihood (as summarized in \eqref{eq:relationship_evidence_free_energy}). Hence, 
we dedicate the remainder of this section to establishing that $\cal{F}$ convergence exponentially along the momentum-enriched flow.

Because the momentum-enriched flow~\eqref{eq:mpd_pde} is not a gradient flow, we are unable to prove the convergence using the techniques commonly applied to study gradient flows; e.g., see~\citet{otto2000}.  Instead, we need to resort to an involved argument featuring a Lyapunov function $\cal{L}$ along the lines of those in \citet{wilson2016lyapunov,wibisono2016variational,ma2019there}. 

Two immediate choices for Lyapunov functions are available: $\cal{E}$ and $\cal{F}$. However, unlike the gradient case, $\cal{E}$ along the momentum-enriched flow \eqref{eq:mpd_flow} is not monotonic and can fluctuate. As for $\cal{F}$, we show in \Cref{prop:contraction} that $\cal{F}$ along the flow  denoted by $\cal{F}_t:=\cal{F}(\theta_t,m_t,q_t)$ satisfies
\begin{align}\label{eq:hamiltoniantimeder}
    \dot{\mathcal{F}}_t = -\gamma_{\theta} \left\Vert \nabla_{m} \mathcal{F}_t \right\Vert ^{2} - \gamma_{x} \left\Vert \nabla_{u} \delta_{q} \mathcal{F}_t \right\Vert ^{2} \leq 0;
\end{align}
implying that $\cal{F}_t$ is non-increasing. This shows that the flow is in some sense stable, but it is not enough to prove the convergence: the two gradients could vanish without the flow necessarily having reached a minimizer. To preclude this happening, we need to force $(\theta,m)$-gradients to feature in~\eqref{eq:hamiltoniantimeder} similarly to the $(x,u)$-gradients. To achieve this, we follow~\citet{wilson2016lyapunov, ma2019there} and introduce the following `twisted' gradient terms:
\begin{align*}
    \left\Vert \nabla_{\left( \theta, m \right)} \mathcal{F} \right\Vert _{T_{\left( \theta, m \right)}}^{2} &:= \left\langle \nabla_{\left( \theta, m \right)} \mathcal{F}, T_{\left( \theta, m \right)} \nabla_{\left( \theta, m \right)} \mathcal{F} \right\rangle, \\
    \left\Vert \nabla_{\left( x, u \right)} \delta_{q} \mathcal{F} \right\Vert _{T_{\left( x, u \right)}}^{2} &:= \left\langle \nabla_{\left( x, u \right)} \delta_{q} \mathcal{F}, T_{\left( x, u \right)} \nabla_{\left( x, u \right)} \delta_{q} \mathcal{F} \right\rangle,
\end{align*}
where 
\begin{align*}
    T_{\left( \theta, m \right)} &= K^{-1} \left(
    \begin{array}{cc}
        \tau_{\theta} I_{d_{\theta}} & \frac{\tau_{\theta m}}{2} I_{d_{\theta}} \\
        \frac{\tau_{\theta m}}{2} I_{d_{\theta}} & \tau_{m} I_{d_{\theta}}
    \end{array}\right), \\
    T_{\left( x, u \right)} &= K^{-1} \left(
    \begin{array}{cc}
        \tau_{x} I_{d_{x}} & \frac{\tau_{xu}}{2}  I_{d_{x}}\\
        \frac{\tau_{xu}}{2} I_{d_{x}} & \tau_{u} I_{d_{x}}
    \end{array}\right),
\end{align*}
with $K$ denoting the Lipschitz constant in \Cref{ass:gradlip} and $\left( \tau_{\theta}, \tau_{\theta m}, \tau_{m}, \tau_{x}, \tau_{xu}, \tau_{x}\right)$  non-negative constants such that the above matrices are positive-semidefinite. We then add the above terms to $\cal{F}-\cal{E}^*$ to obtain our Lyapunov function:
\begin{align}
    \label{eq:lyapunov}
        \mathcal{L}:= \mathcal{F}-\cal{E}^*+ \left\Vert \nabla_{\left( \theta, m \right)} \mathcal{F} \right\Vert _{T_{\left( \theta, m \right)}}^{2}+ \left\Vert \nabla_{\left( x, u \right)} \delta_{q} \mathcal{F} \right\Vert _{T_{\left( x, u \right)}}^{2}.
\end{align}
We prove the convergence for models satisfying the a ``log Sobolev inequality'' (see \citet{caprio2024error} for a detailed discussion). It extends the log Sobolev inequality popular in optimal transport and the Polyak--Łojasiewicz inequality often used to argue linear convergence of GD algorithms.
\begin{assumption}[Log Sobolev inequality]\label{ass:logsobolev} There exists a constant $C_{\cal{E}}>0$, s.t.~for all $(\theta,q)$ in $\r^{d_\theta}\times\in\cal{P}(\r^{d_x})$,
\begin{equation*}
    \cal{E}(\theta, q)-\cal{E}^* \le \frac{1}{2C_{\cal{E}}} \norm{\nabla \cal{E}(\theta, q)}^2_q,
\end{equation*}
where $
\norm{\nabla \cal{E}(\theta, q)}_q := \norm{\nabla_\theta \cal{E}(\theta, q)}^2 + \norm{\nabla_x \log (q/\rho_\theta)}^2_q$ with $\norm{\cdot}_q$ denoting the $L^2_q$ norm.
\end{assumption}
\begin{theorem}[Exponential convergence of the momentum-enriched flow]
\label{prop:convergence_flow} Suppose that Assumptions~\ref{ass:gradlip},~\ref{ass:logsobolev} hold and there exists  $\left(\varphi, \tau_\theta, \tau_{\theta \tmo}, \tau_\tmo, \tau_x, \tau_{x\qmo},\tau_\qmo \right)$ satisfying the inequalities \eqref{eq:flow_conditions} in the supplement. If $\cal{L}_t:=\cal{L}(\theta_t,m_t,q_t)$,
\begin{align*}
    \dot{\mathcal{L}}_t \le - \varphi C \mathcal{L}_t,
\end{align*}
where $C = \min \{C_\cal{E}, \eta_\theta, \eta_x\}$. Moreover, 
\begin{align}
\label{eq:convergence_logp}
\begin{split}
\sup_{\theta\in\r^{d_\theta}} \log p_\theta(y) - \log p_{\theta_t}(y)
&\\\leq \cal{E}_t-\cal{E}^*
    \leq  \mathcal{F}_t -\cal{E}^*\le \cal{L}_t\leq \mathcal{L}_0 \exp \left ( - \varphi C t \right )&.
    \end{split}
\end{align}
\end{theorem}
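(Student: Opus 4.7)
The plan is a Lyapunov argument in the spirit of \citet{wilson2016lyapunov,ma2019there}: first establish the differential inequality $\dot{\mathcal{L}}_t \le -\varphi C \mathcal{L}_t$, then apply Grönwall to get the exponential decay, and finally reduce the log-marginal-likelihood gap to $\mathcal{L}_t$ through a short chain of inequalities. That last chain is routine: $\mathcal{F}_t-\mathcal{E}^* \le \mathcal{L}_t$ follows from \eqref{eq:lyapunov} because $T_{(\theta,m)}$ and $T_{(x,u)}$ are positive semi-definite by construction; $\mathcal{E}_t-\mathcal{E}^* \le \mathcal{F}_t-\mathcal{E}^*$ is the second half of \eqref{eq:relationship_evidence_free_energy}; and $\sup_\theta \log p_\theta(y)-\log p_{\theta_t}(y) \le \mathcal{E}_t-\mathcal{E}^*$ follows by combining \eqref{eq:freeenergyinf} with the bound $-\log p_{\theta_t}(y)\le \mathcal{E}(\theta_t,q_{t,X})$, immediate from \eqref{eq:freeenergyrKL}.

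The heart of the proof is computing $\dot{\mathcal{L}}_t$. The first contribution $\dot{\mathcal{F}}_t$ is already supplied by \eqref{eq:hamiltoniantimeder}. For the twisted-norm terms, I would apply the chain rule to $\|\nabla_{(\theta,m)}\mathcal{F}\|^2_{T_{(\theta,m)}}$ and $\|\nabla_{(x,u)}\delta_q\mathcal{F}\|^2_{T_{(x,u)}}$ along the flow \eqref{eq:mpd_flow}; this produces Hessian-times-gradient terms that \Cref{ass:gradlip} converts into $K$ times squared gradients, together with mixed position-momentum cross terms arising both from the non-gradient drift and from the off-diagonals $\tau_{\theta m},\tau_{xu}$, which I would split by Young's inequality. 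To absorb the residual $\varphi C(\mathcal{F}_t-\mathcal{E}^*)$, I would use the decomposition $\mathcal{F}(\theta,m,q) - \mathcal{E}(\theta,q_X) = \tfrac{\eta_\theta}{2}\|m\|^2 + \mathbb{E}_{q_X}[\mathsf{KL}(q_{U\mid X}\|r_{\eta_x})]$, which follows from \eqref{eq:freeenergyrKL}, \eqref{eq:hamilboth}, and the chain rule for KL. The first summand equals $\tfrac{1}{2\eta_\theta}\|\nabla_m \mathcal{F}\|^2$ since $\nabla_m\mathcal{F}=\eta_\theta m$; the second is bounded by $\tfrac{1}{2\eta_x}\|\nabla_u \delta_q\mathcal{F}\|^2_q$ via the Gaussian log-Sobolev inequality for $r_{\eta_x}$; and $\mathcal{E}(\theta,q_X)-\mathcal{E}^*$ is controlled by $\|\nabla\mathcal{E}\|^2_{q_X}$ through \Cref{ass:logsobolev}. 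This is exactly where the constant $C=\min\{C_\mathcal{E}, \eta_\theta, \eta_x\}$ arises.

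Collecting everything, $\dot{\mathcal{L}}_t + \varphi C \mathcal{L}_t$ becomes a quadratic form in the four gradient blocks $(\nabla_\theta\mathcal{F}, \nabla_m \mathcal{F}, \nabla_x \delta_q \mathcal{F}, \nabla_u \delta_q \mathcal{F})$, and the inequalities \eqref{eq:flow_conditions} are precisely those needed to make each block coefficient nonpositive; Grönwall then yields $\mathcal{L}_t \le \mathcal{L}_0 e^{-\varphi C t}$. The main obstacle is not any single calculation but the simultaneous tuning of $(\varphi, \tau_\theta, \tau_{\theta m}, \tau_m, \tau_x, \tau_{xu}, \tau_u)$ so that (i) $T_{(\theta,m)}$ and $T_{(x,u)}$ remain positive semi-definite; (ii) all cross terms from the non-gradient drift and from the off-diagonals can be absorbed into diagonal ones after Young's inequality; and (iii) the residual form dominates $\varphi C$ times the full Lyapunov function. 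These competing demands are exactly what \eqref{eq:flow_conditions} packages, and verifying they admit a joint solution is where the argument extends \citet{ma2019there}'s single-space proof to the product space $\mathbb{R}^{d_\theta}\times\mathcal{P}(\mathbb{R}^{d_x})$.
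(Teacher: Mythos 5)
Your outline matches the paper's proof strategy exactly: a Lyapunov argument computing $\dot{\mathcal{L}}_t$, bounding it by a quadratic form in the four gradient blocks, verifying that form is dominated by $-\varphi C\mathcal{L}_t$ under \eqref{eq:flow_conditions}, transferring the log-Sobolev inequality from $\mathcal{E}$ to $\mathcal{F}$ via the decomposition $\mathcal{F}-\mathcal{E}=\tfrac{\eta_\theta}{2}\|m\|^2+\mathbb{E}_{q_X}[\mathsf{KL}(q_{U\mid X}\,\|\,r_{\eta_x})]$ plus Bakry--\'{E}mery, and closing with Gr\"onwall and the chain $-\log p_{\theta_t}(y)\le\mathcal{E}_t\le\mathcal{F}_t\le\mathcal{L}_t$. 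Two places in your sketch substantially understate the work, though neither is a wrong turn: differentiating $\|\nabla_{(x,u)}\delta_q\mathcal{F}\|^2_{T_{(x,u)}}$ along the flow is not a pointwise chain rule but a Wasserstein first-variation computation that, in the paper, requires the adjoint $\nabla^*$, commutator identities for $\nabla_x,\nabla_u$, and an antisymmetric operator $B$ just to reduce the third-order terms; and \Cref{ass:gradlip} is not used to replace Hessian-times-gradient by $K$ times a squared norm --- the Hessians $\nabla^2_x\ell$ and $\nabla^2_\theta\mathcal{F}$ are retained inside the matrices $S_{(\theta,m)},S_{(x,u)}$, and PSD is certified on the whole spectral interval $[-K,K]$ via a dedicated $2\times2$-block criterion, which is precisely what the six inequalities in \eqref{eq:flow_conditions} encode as endpoint checks at $\pm K$.
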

\begin{proof}
    The proof extends \citet[Proposition~1]{ma2019there}'s proof. It requires generalizing from $\cal{P}(\bb{R}^{d_x})$ to the product space and controlling the terms that arise from the interaction between the two spaces; cf.~\Cref{sec:convergence_flow}.
\end{proof}
It is not difficult to find hyperparameter choices for which 
 the inequalities in \eqref{eq:flow_conditions} have solutions; cf.~ \Cref{sec:eq_flow_conditions true} for some concrete examples.
\section{Momentum Particle Descent}
\label{sec:mt_discretization}
In order to realize an actionable algorithm, we need to discretize the dynamics of the momentum-enriched flow \eqref{eq:mpd_flow} both in space and in time. In this section, we derive the algorithm called Momentum Particle Descent (MPD) via discretization.

For the space discretization, following the approach of \citet{Kuntz2022},  we approximate 
$
    q_{t,X} \approx \frac{1}{M} \sum_{i \in \left[ M \right]} \delta_{X^{i,M}_t} =: q^M_{t,X}
$ using a collection of particles.
This yields the following system: for  $i \in [M]$,
\begin{subequations}
    \label{eq:space_discretization}
\begin{align}
    \mathrm{d} \theta^M_{t} =&\eta_{\theta} m^M_{t} \, \mathrm{d} t \\
    \mathrm{d} m^M_{t} =& - \gamma_{\theta} \eta_{\theta} m^M_{t} \, \mathrm{d} t -\nabla_\theta \cal{E} \left(\theta^M_t, q^M_{t,X} \right) \, \mathrm{d} t \\
    \mathrm{d} X_t^{i,M} =& \eta_{x} U_t^{i,M} \, \mathrm{d} t \\
    \begin{split}
    \mathrm{d} U_t^{i,M} =& -\gamma_{x} \eta_{x} U_t^{i,M} \, \mathrm{d} t + \nabla_{x} \ell \left( \theta_{t}^{M}, X_t^{i,M} \right) \, \mathrm{d} t \\ &+ \sqrt{2  \gamma_{x}} \, \mathrm{d} W_t^{i}.
    \end{split}
\end{align}
\end{subequations}
where $\{W_t^{i}\}_{i=1}^M$ comprises $M$ independent standard Wiener processes. \Cref{prop:chaos} establishes asymptotic pointwise propagation of chaos which provides asymptotic justification for using a particle approximation for $q_{t,X}$ to approximate the flow in \eqref{eq:mpd_flow}. The proof can be found in the \Cref{proof:chaos}.
\begin{proposition} Under \Cref{ass:gradlip}, we have
    $$
    \lim_{M\rightarrow \infty} \mathbb{E} \sup_{t\in [0,T]} \{\|\vartheta_t - \vartheta_t^M\|^2 + \sf{W}^2_2(q_t^{\otimes M},q^M_t)\} = 0,
    $$
    where $\vartheta_t := (\theta_t, \tmo_t)$, $q_t^{\otimes M} := \Pi_{i=1}^Mq_t$ with $q_t = \rm{Law}(X_t, U_t)$ are defined by the SDE in \eqref{eq:mpd_flow}; and  $\vartheta^M_t:=(\theta^M_t, \tmo_t^M)$ ,  and $q^M_t = \rm{Law}(\{(X_t^{i,M}, U_t^{i,M})\}_{i=1}^M)$ in \eqref{eq:space_discretization}.
    \label{prop:chaos}
\end{proposition}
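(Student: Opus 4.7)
The natural route is the synchronous coupling argument for McKean--Vlasov propagation of chaos (in the spirit of \citet{carmona2016lectures}), adapted to the present extended space $\bb{R}^{2d_\theta}\times\cal{P}(\bb{R}^{2d_x})$. On a probability space carrying the Brownian motions $\{W^i\}_{i=1}^M$ of the particle system, I would construct the McKean--Vlasov solution $(\theta_t,\tmo_t,X_t,U_t)$ of \eqref{eq:mpd_flow} together with $M$ independent copies $(\tilde X_t^i,\tilde U_t^i)_{i=1}^M$ of its $(X,U)$ component, with each copy driven by the same $W^i$ as in \eqref{eq:space_discretization} and all copies sharing the common parameter trajectory $(\theta_t,\tmo_t)$. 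These copies are then i.i.d.\ with common law $q_t$, so their joint law equals $q_t^{\otimes M}$, and pairing each copy with its particle counterpart furnishes an admissible coupling of $q_t^{\otimes M}$ with $q_t^M$.

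Writing $e^\theta_t:=\theta_t-\theta_t^M$, $e^{\tmo}_t:=\tmo_t-\tmo_t^M$, $e^{X,i}_t:=\tilde X_t^i-X_t^{i,M}$, and $e^{U,i}_t:=\tilde U_t^i-U_t^{i,M}$, the synchronous coupling cancels the noise terms so the errors satisfy random ODEs. Adding and subtracting $\tfrac{1}{M}\sum_i\nabla_\theta\ell(\theta_t,\tilde X_t^i)$ splits the discrepancy in the $\tmo$-drift as
\begin{align*}
\int\nabla_\theta\ell(\theta_t,x)\,q_{t,X}(\mathrm dx) &- \tfrac{1}{M}\sum_i\nabla_\theta\ell(\theta_t^M,X_t^{i,M}) \\
&= \xi_t^M + \tfrac{1}{M}\sum_i\bigl[\nabla_\theta\ell(\theta_t,\tilde X_t^i) - \nabla_\theta\ell(\theta_t^M,X_t^{i,M})\bigr],
\end{align*}
with Monte Carlo residual $\xi_t^M := \int\nabla_\theta\ell(\theta_t,x)\,q_{t,X}(\mathrm dx) - \tfrac{1}{M}\sum_i\nabla_\theta\ell(\theta_t,\tilde X_t^i)$. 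The second term is controlled by \Cref{ass:gradlip}, and collecting everything into the coupling energy
\begin{align*}
V_t^M := \|e^\theta_t\|^2 + \|e^{\tmo}_t\|^2 + \tfrac{1}{M}\sum_{i=1}^M\bigl(\|e^{X,i}_t\|^2 + \|e^{U,i}_t\|^2\bigr),
\end{align*}
standard Cauchy--Schwarz manipulations give the pathwise inequality $\dot V_t^M \le C_1 V_t^M + C_2\|\xi_t^M\|^2$ with constants depending only on $K,\eta_\theta,\eta_x,\gamma_\theta,\gamma_x$.

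Integrating, taking $\sup_{t\le T}$, followed by expectation and Grönwall, yields
\begin{align*}
\Eb\Bigl[\sup_{t\le T} V_t^M\Bigr] \le C_2 e^{C_1 T}\,\Eb\int_0^T\|\xi_s^M\|^2\,\mathrm ds.
\end{align*}
Conditional on the $\theta$-path, the vectors $\nabla_\theta\ell(\theta_s,\tilde X_s^i)$ are i.i.d.\ with common mean $\int\nabla_\theta\ell(\theta_s,x)q_{s,X}(\mathrm dx)$, so $\Eb\|\xi_s^M\|^2 \le M^{-1}\Eb\|\nabla_\theta\ell(\theta_s,\tilde X_s^1)\|^2$; the latter is uniformly bounded on $[0,T]$ via the linear-growth implication of \Cref{ass:gradlip} combined with a standard $L^2$ a priori estimate for the McKean--Vlasov solution (whose existence is secured by \Cref{prop:existence_uniqueness}). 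Hence $\Eb\int_0^T\|\xi_s^M\|^2\,\mathrm ds = O(1/M)\to 0$, and the synchronous coupling converts this into the required bound on $\|\vartheta_t-\vartheta_t^M\|^2 + \sf{W}_2^2(q_t^{\otimes M},q_t^M)$.

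The main obstacle is the absence of any noise-smoothing in the $\theta$-equation: the Monte Carlo residual $\xi_s^M$ enters $\dot{\tmo}^M$ directly with no time-averaging, so closing the Grönwall simultaneously demands the sharp $O(1/M)$ variance decay \emph{and} uniform-in-time moment control on both $\theta_s$ and $\tilde X_s^1$. Establishing these a priori estimates, and arranging the Lipschitz/Monte Carlo split so that the Lipschitz part absorbs cleanly into $V_t^M$ while $\xi_t^M$ is handled separately, is the core technical work; once in hand, the remainder is a direct generalisation of the Sznitman--Carmona--Delarue synchronous-coupling template.
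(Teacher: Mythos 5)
Your proof is correct but takes a genuinely different — and in fact sharper — route than the paper. The paper's argument proceeds entirely at the level of the abstract drift $b_\vartheta(\vartheta, q)$, shown to be Lipschitz in $(\vartheta, q)$ for the $\sf{W}_1$ metric (Propositions \ref{prop:f_theta_lipschitz} and \ref{prop:lipschitz_drift}), then inserts the triangle inequality $\sf{W}_2^2(q_s, q^{\,M}_s) \lesssim \sf{W}_2^2(q_s, \bar q^{\,M}_s) + \sf{W}_2^2(\bar q^{\,M}_s, q^{\,M}_s)$, where $\bar q^{\,M}_s$ is the empirical measure of the synchronous i.i.d.\ copies. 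The second term is absorbed into the Gr\"{o}nwall loop, while the first is handed to the empirical-measure convergence rate of \citet[Lemma 1.9]{carmona2016lectures}, which is only $o(1)$ (and dimension-dependent). You instead exploit the mean-field \emph{linearity} of the drift: $\nabla_\theta\mathcal{E}(\theta, q)=-\int\nabla_\theta\ell(\theta,x)\,q_X(\mathrm dx)$ is an average, so the error in the $\tmo$-drift splits exactly into a zero-mean Monte Carlo residual $\xi^M_t$ (variance $O(1/M)$, unconditionally, since $(\theta_t,\tmo_t)$ is a deterministic trajectory here so the conditioning you mention is vacuous) plus Lipschitz-controlled terms. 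Your decomposition therefore bypasses the Wasserstein empirical-measure machinery entirely and upgrades the paper's $o(1)$ to a dimension-free $O(1/M)$ rate, at the modest price of needing the $L^2$ a priori moment bound on $X_t$ (which the Lipschitz assumption supplies via linear growth and Gr\"{o}nwall, and which is already implicit in the existence proof). The one hazard you correctly flag — the residual enters $\dot{\tmo}^M$ with no time-averaging — is harmless precisely because $\xi^M_t$ depends only on the synchronous copies and not on the particle errors, so the pathwise inequality $\dot V^M_t \le C_1 V^M_t + C_2\|\xi^M_t\|^2$ closes the Gr\"{o}nwall cleanly.
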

As for the time discretization, as noted by \citet{ma2019there}, naive application of an Euler--Maruyama scheme to momentum-enriched dynamics may be insufficient for attaining an accelerated rate of convergence. We appeal to the literature on the discretization of underdamped Langevin dynamics to design an appropriate integrator. One integrator that can achieve accelerated rates is the (Euler) Exponential Integrator \citep{cheng2018underdamped, sanz2021wasserstein, hochbruck2010exponential}. The main strategy is to `freeze' the nonlinear components of the dynamics in a suitable way, and then solve the resulting linear SDE. We also incorporate a partial update inspired by \citet{sutskever2013importance}'s interpretation of NAG. More precisely, given $\left( \theta_0, m_0, \{ \left( X^i_0, U_0^i \right) \}_{i=1}^M \right)$, define an approximating SDE on the time interval $t \in \left[ 0, h \right]$ as follows: for $i \in [M]$,
\begin{subequations}
    \begin{align}
    \mathrm{d} \tilde{\theta}_t &=\eta_\theta \tilde{m}_t \, \mathrm{d} t \label{eq:theta_approx_sde}\\
    \mathrm{d} \tilde{m}_t &= - \gamma_\theta \eta_\theta \tilde{m}_t \, \mathrm{d} t - \nabla_\theta \cal{E} \left( \bar{\theta}_0, \tilde{q}_{0,X}^M\right) \, \mathrm{d} t \label{eq:tmo_approx_sde}\\
    \mathrm{d} \tilde{X}_t^i &= \eta_x \tilde{U}_t^i \, \mathrm{d} t \label{eq:x_approx_sde}\\
     \quad \mathrm{d} \tilde{U}_t^i &= -\gamma_x \eta_x \tilde{U}_t^i \, \mathrm{d} t + \nabla_{x} \ell \left( \bar{\bar{\theta}}_0, X_0^i \right) \, \mathrm{d} t + \sqrt{2 \gamma_{x}} \, \mathrm{d} W_t^i \label{eq:qmo_approx_sde},
\end{align}
\label{eq:approximate_sde}
\end{subequations}
where $\tilde{q}_{t,X}^M := \frac{1}{M}\sum_{i=1}^M \delta_{\tilde{X}_t^i}$, 
and the fixed variables $\bar{\theta}$ and $\bar{\bar{\theta}}_0$ (for reasons which will become clear when faced with the solved SDE) can be thought of as a partial update to $\tilde{\theta}_t$. This is a \textit{linear} SDE, which can then be solved analytically, yielding the following iteration (see \Cref{sec:our_discretization} for details): for $i \in [M]$, we have
\begin{align*}
    \begin{split}
    \tilde{\theta}_k =& \tilde{\theta}_{k - 1} + \frac{\iota_\theta \left( h \right)}{\gamma_\theta}  \tilde{m}_{k - 1} \\
    &- \frac{1}{\gamma_\theta}  \left( h - \frac{\iota_\theta \left( h \right)}{\gamma_\theta \eta_\theta} \right) \nabla_\theta \cal{E} \left(\bar{\theta}_{k - 1} , \tilde{q}_{k-1,X}^M\right)
    \end{split} \\
    \begin{split}
    \tilde{m}_k =& (1-\iota_\theta \left( h \right)) \tilde{m}_{k - 1} - \frac{\iota_\theta \left( h \right)}{\gamma_\theta \eta_\theta} \nabla_\theta \cal{E} \left(\bar{\theta}_{k - 1}, \tilde{q}_{k-1,X}^M\right) 
    \end{split}\\
    \begin{split}
    \tilde{X}_k^i =& \tilde{X}_{k - 1}^i + \frac{\iota_x \left( h \right)}{\gamma_x} \tilde{U}_{k - 1}^i \\
    &+ \frac{1}{\gamma_x} \left( h - \frac{\iota_x \left( h \right)}{\gamma_x \eta_x} \right) \nabla_x \ell \left( \bar{\bar{\theta}}_{k-1}, \tilde{X}_{k - 1}^i \right) + L_\Sigma^{XX} \xi_k^i 
    \end{split}\\
    \begin{split}
     \tilde{U}_k^i =& (1-\iota_x \left( h \right) )\tilde{U}_{k - 1}^i + \frac{\iota_x \left( h \right)}{\gamma_x \eta_x} \nabla_x \ell \left( \bar{\bar{\theta}}_{k-1}, \tilde{X}_{k - 1}^i \right) \\
     &+ L_\Sigma^{XU} \xi_k^i + L_\Sigma^{UU} \xi_k^{\prime, i},
    \end{split}
   \end{align*}
where $\iota_x \left( h \right) := 1-\exp \left( - \eta_x \gamma_x h \right)$, $\left\{ \xi_k^i, \xi_k^{\prime, i} \right\}_{i \in \left[ M \right]} \overset{\text{i.i.d.}}{\sim} \mathcal{N} \left( 0_{d_x}, I_{d_x} \right)$, and $\left( L_\Sigma^{XX}, L_\Sigma^{XU}, L_\Sigma^{UU} \right)$ are suitable scalars that depend on $(\eta_x, \gamma_x)$ that can be found in the appendix as \Cref{eq:sampling_constants}. We set 
\begin{equation}
    \label{eq:partial_updates}
    \bar{\theta}_{k-1} := \tilde{\theta}_{k-1} + \frac{\iota_\theta(h)}{\gamma_\theta} \tilde{m}_{k-1}, \enskip \bar{\bar{\theta}}_{k-1} =\tilde{\theta}_k.
\end{equation}
\begin{figure*}
	\begin{subfigure}[b]{0.32\textwidth}
		\centering
		\includegraphics[width=\textwidth]{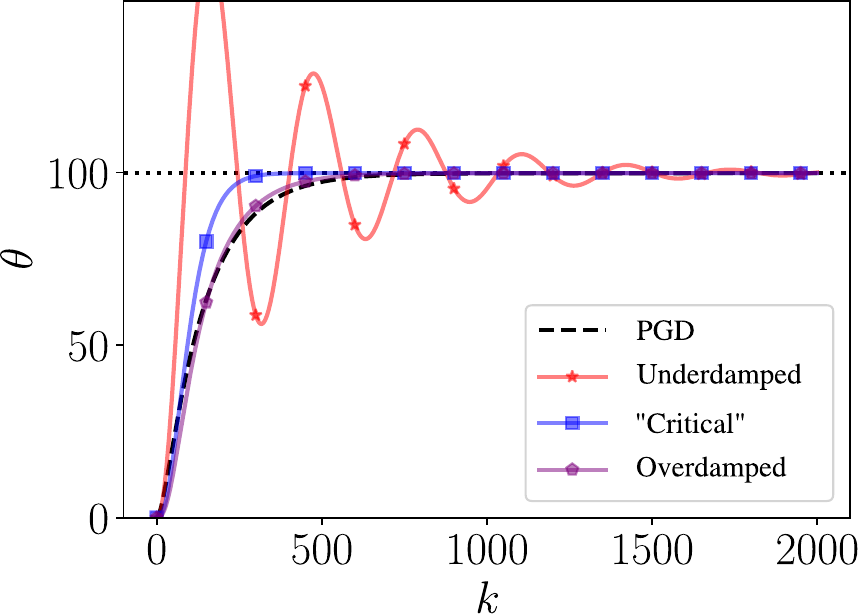}
		\caption{Different regimes in MPD.}
		\label{fig:different_regimes}
	\end{subfigure}
		\begin{subfigure}[b]{0.32\textwidth}
		\centering
		\includegraphics[width=\textwidth]{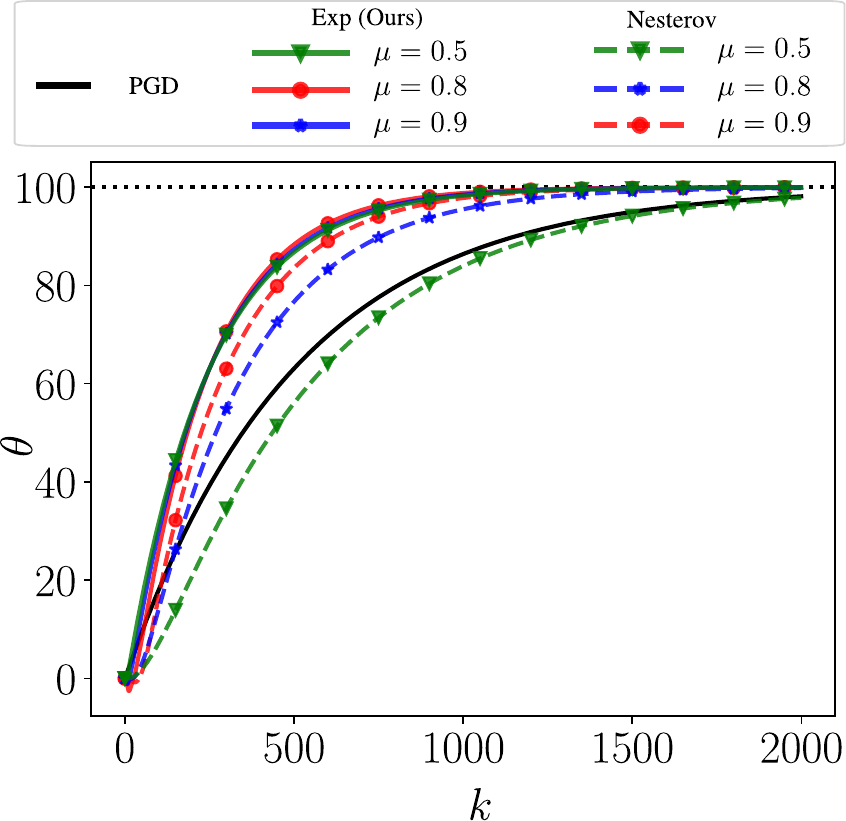}
		\caption{$\theta_t$ integrator comparison.}
		\label{fig:theta_integrators}
	\end{subfigure}
		\begin{subfigure}[b]{0.32\textwidth}
		\centering
		\includegraphics[width=\textwidth]{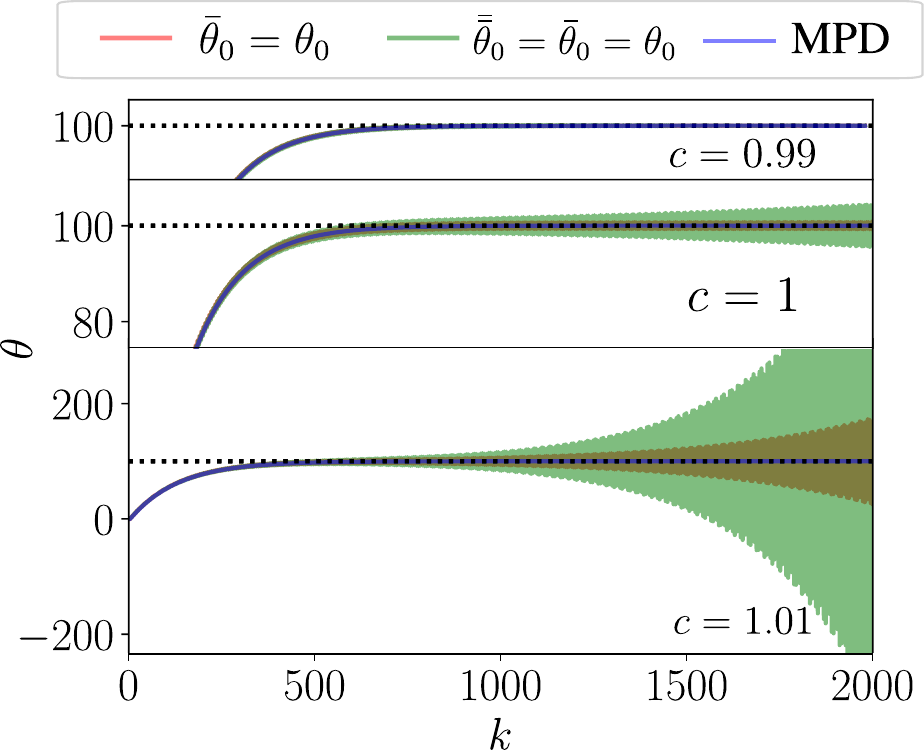}
		\caption{Role of gradient correction.}
		\label{fig:gc}
	\end{subfigure}
 \vspace{-1mm}
	\caption{\textbf{Toy Hierarchical Model}. (a) Different regimes that arise from varying the momentum parameters; (b) comparison between our Exponential (Exp) integrator and a Nesterov-like integrator for different momentum parameters; (c) we compare the performance of the MPD with and without gradient correction.}
\label{fig:toy_hmm}
\end{figure*}
\begin{figure*}[ht]
    \includegraphics[width=\linewidth]{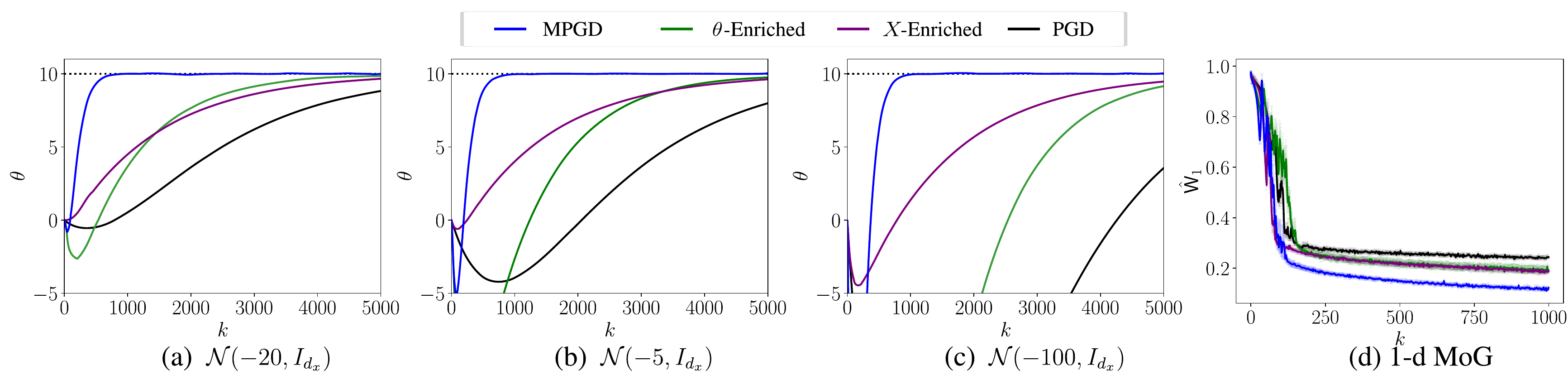}
    \vspace{-3mm}
    \caption{\textbf{Comparison of MPD with algorithms that only enrich one component.} (a), (b), (c) shows the performance on the $\rm{ToyHM}(10,12)$ with different initialization of the particle cloud $\{X_0^i\}_{i=1}^M$. In (d) shows the $\hat{\sf{W}}_1$ vs iterations of a density estimation problem on a Mixture of Gaussian (MoG) dataset. MPD is shown in \textcolor{blue}{blue}, $\theta$-only-enriched shown in \textcolor{green}{green}, $X$-only-enriched in \textcolor{mplPurple}{purple}, and PGD in black. The results are averaged over $10$ independent trials.}
    \vspace{-3mm}
    \label{fig:why_accelerate_both}
\end{figure*}
The astute reader will notice that our approximating ODE-SDE (i.e., \Cref{eq:approximate_sde}) deviates from \citet{cheng2018underdamped}'s style of discretization. Specifically, the difference lies in where we compute the gradient in \Cref{eq:tmo_approx_sde,eq:qmo_approx_sde} which we refer to as \textit{gradient correction}. The choices of $\bar{\theta}_{k-1}$ and $\bar{\bar{\theta}}_{k-1}$ in \Cref{eq:partial_updates} are chosen to reflect partial and full updates to $\tilde{\theta}_k$ respectively. This is reminiscent of NAG's usage of a partial parameter update to compute the gradient, unlike Polyak's Heavy Ball, which does not \citep[Section 2.1]{sutskever2013importance}. In our case, we empirically found that this choice is critical for a more stable discretization and enables the algorithm to be ``faster'' by taking larger step sizes  (see \Cref{sec:toy_hmm}).

The choice of the momentum parameters $\left(\eta_{\theta}, \gamma_{\theta}, \eta_{x}, \gamma_{x}\right)$ is crucial to the practical performance of MPD. We observe that (as is also the case for NAG, the underdamped Langevin SDE, and other similar systems) there are three different qualitative regimes for the dynamics: i) the underdamped regime, in which the parameter values oscillate, ii) the overdamped regime, in which one recovers PGD-type behaviour, and iii) the critically-damped regime, in which oscillations are largely suppressed, but the momentum effects are still able to accelerate the convergence behaviour relative to PGD. While rigorous approaches are limited to simple targets (e.g., \citet{dockhorn2022score}), one can utilize cross-validation to obtain these parameters in practice.%

\section{Experiments}
\label{sec:experiments}

In this section, we study various design choices of MPD and demonstrate the efficacy of our proposal through empirical results. The structure of this section is as follows:
in \Cref{sec:toy_hmm}, we demonstrate on the Toy Hierarchical model the effects of various design choices; then, in \Cref{sec:momentum_or_no}, we compare MPD with algorithms that incorporate momentum in either $\theta$ or $X$ only; finally, in \Cref{sec:vae}, we compare our proposal on training VAEs with other methods. The code for reproducibility is available online: \url{https://github.com/jenninglim/mpgd}.

\subsection{Toy Hierarchical Model}
\label{sec:toy_hmm}
As a toy example, we consider the hierarchical model proposed in \citet{Kuntz2022}. The model $\rm{ToyHM}(\theta,\sigma)$ is given by $p_\theta(y, x) := \prod_{i=1}^N \mathcal{N}(y_i|x_i, 1)\cal{N}(x_i| \theta, \sigma^2)$, where $N=100$ is the number of data points. The dataset $y$ is sampled from a model with $\theta = 100, \sigma=1$. 
In this experiment, we wish to understand the behaviour of MPD compared against PGD. We are particularly interested in (1) how the momentum parameters $\left(\eta_{\theta}, \gamma_{\theta}, \eta_{x}, \gamma_{x}\right)$ affect the optimization process;  (2) comparing our proposed discretization to another that follows in the style of NAG (detailed in \Cref{sec:cheng_nag_mpd}); and (3) the role of the gradient correction term described in \Cref{sec:mt_discretization}. The results are shown in \Cref{fig:toy_hmm}. Further experiment details can be found in \Cref{appendix:toyhmm}.

 In \Cref{fig:different_regimes}, we show that different regimes can arise from different choices of hyper-parameters (as discussed in \Cref{sec:mt_discretization}). In \Cref{fig:theta_integrators}, we compare the performance of MPD using (our) Exponential integrator with a NAG-like discretization for $(\theta_t, \tmo_t)$-component. We vary the ``momentum coefficient'', i.e., we have $\mu \in \{0.5, 0.8, 0.9\}$ with $\mu_\theta = \mu_x= \mu$. It can be seen that MPD with our exponential integrator for $\theta_t$ performs better than NAG-like integrator (see \Cref{appen:heuristic} for more discussion). In \Cref{fig:gc}, we show the effect of the gradient correction term for three different step sizes in $(\theta, \tmo)$-components and $(X,\Qmo)$-components. The different lines in the figure are generated from multiplying the stepsize of each component with a constant $c \in \{0.99, 1, 1.01\}$. It can be seen that our proposed gradient correction results in a more effective discretization. In \textcolor{red}{red}, we show MPD with the absence of gradient correction in \Cref{eq:tmo_approx_sde} (i.e., when we use $\bar{\theta}_0 = \theta_0$ in \Cref{eq:tmo_approx_sde}), and, in \textcolor{green}{green}, we show the MPD when the gradient correction is absent in both \Cref{eq:tmo_approx_sde} and \Cref{eq:qmo_approx_sde} (i.e. when we use $\bar{\bar{\theta}}_0 = \bar{\theta}_0 = \theta_0$ in \Cref{eq:tmo_approx_sde} and \Cref{eq:qmo_approx_sde}). It can be seen the gradient correction term results in a more stable algorithm.

\begin{figure*}[h]
	\centering
	\includegraphics[width=0.8\textwidth]{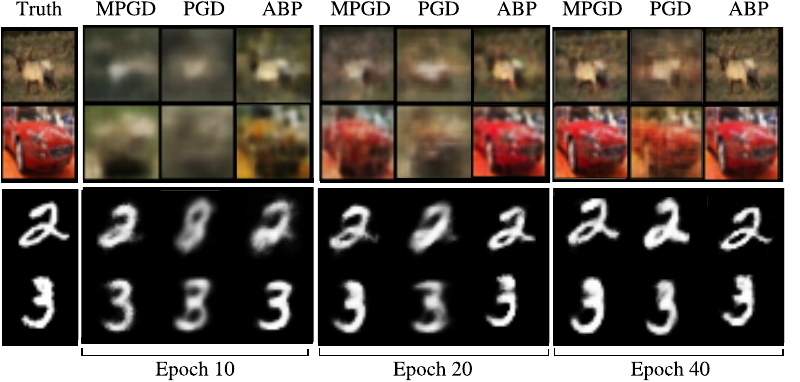}
 \vspace{-3mm}
	\caption{\textbf{Posterior Cloud  vs Epochs}. We show the evolution of the reconstruction of a particle for \textit{persistent} methods. The particle is taken at epoch $\{10, 20, 40\}$ on MNIST and CIFAR-10.}
	\label{fig:particle_vs_epoch}
\end{figure*}
\begin{table*}[ht]
\centering
\begin{tabular}{@{}cccccc@{}}
\toprule
Dataset & MPD           & PGD             & ABP            & SR               & VI             \\ \midrule
MNIST  & $\bm{47.9} \pm 0.6$ & $104.1 \pm 1.2$ & $\bm{45.9} \pm 0.8$  & $109.97 \pm 12.5$ & $72.6 \pm 1.3$ \\
CIFAR  & $\bm{93.2} \pm 1.5$ & $106.2 \pm 4.8$ & $97.0 \pm 8.0$ & $126.25 \pm 13.0$ & $\bm{94.2} \pm 6.7$ \\ \bottomrule
\end{tabular}
 \caption{\textbf{FID scores} on MNIST and CIFAR-10 after the final epoch. In \textbf{bold}, we indicate the lowest two scores on average. We write $(\mu \pm \sigma)$ to indicate the mean $\mu$ and standard deviation $\sigma$ of the FID score calculated over three independent trials.}
 \label{tab:fid_scores}
 \vspace{-3mm}
\end{table*}

\subsection{Why Accelerate Both Components?}
\label{sec:momentum_or_no}
One may question the importance of momentum-enriching both components of PGD. We show in two experiments that enriching only one component results in a suboptimal algorithm. To show this, we consider the $\rm{ToyHM}(10, 12)$, and, as a more realistic example, a $1$-d density estimation problem using a VAE on a Mixture of Gaussians with two equally weighted components which are $\cal{N}(2, 0.5)$ and $\cal{N}(-2, 0.5)$. To measure the performance of each algorithm, for the first problem, we examine the parameter of the model which should converge to the true value of $10$; and, for the second, we compute the empirical Wasserstein-1 $\hat{\sf{W}}_1$ using the empirical CDF.

\Cref{fig:why_accelerate_both} shows the results of both experiments. Overall, it can be seen that the presence of momentum results in a better performance. Furthermore, we observe that the algorithms which only enrich one component can exhibit problems which are not present for MPD. It can be seen that the poor initialization of the cloud drastically lengthens the transient phase of the $\theta$-only enriched algorithm, whereas MPD can recover more rapidly. This is a typical setting in high-dimensional settings (see \Cref{fig:why_accelerate_both} (d)). Conversely, while the $X$-only enriched algorithm does not suffer as much from poor initialization, it can be observed to suffer from slow convergence in the $\theta$-component (intuitively, one pays a larger price for poor conditioning of the ideal MLE objective). Overall, it can be observed that MPD noticeably outperforms PGD and all other intermediate methods in all settings. For details/discussion, see \Cref{appendix:more_experiments}.
\subsection{Image Generation}
\label{sec:vae}
For this task, we consider two datasets: MNIST \citep{lecun1998gradient} and CIFAR-10 \citep{krizhevsky2009learning}. For the model, we use a variational autoencoder \citep{kingma2013auto} with a VampPrior \citep{tomczak2018vae}. As baselines, we compare our proposed method MPD against PGD, Alternating Backpropagation (ABP) \citep{han2017alternating}, Short Run (SR) \citep{nijkamp2020learning}, and amortized variational inference (VI) \citep{kingma2013auto}. ABP is the most similar to PGD. They differ in that ABP takes multiple steps of the (unadjusted and overdamped) Langevin algorithm (ULA) instead of PGD's single step. They (MPD, PGD, ABP) are also \textit{persistent}, meaning that the ULA chain starts at the previous particle location, whereas SR restarts the chain at a random location sampled from the prior but like ABP runs the chain for several steps. Further experiment details can be found in \Cref{appendix:vae}.

 We are interested in the generative performance of the resulting models. For a qualitative measure, we show the samples produced by the model in \Cref{fig:cifar_samples} for CIFAR (for MNIST samples, see \Cref{fig:mnist_samples} in the Appendix). As a quantitative measure, we report the Fr\'{e}chet inception distance (FID) \citep{heusel2017gans} as shown in \Cref{tab:fid_scores}. It can be seen that our proposed method does well compared against other baselines. The closest competitor is ABP. As noted by \citet{Kuntz2022}, ABP can reap the benefits of taking multiple ULA steps to locate the posterior mode quickly and reduce the transient phase. This hypothesis is further confirmed in \Cref{fig:particle_vs_epoch}, where we visualize the evolution of a single particle across various epochs. It can be seen that ABP's and MPD's distinct methods of reducing the transient phase (by taking multiple steps or utilizing momentum) are effective. As a competitor SR does perform badly, which we attribute to its non-persistent property: by restarting the particles at the prior and running short chains, it is unable to overcome the bias of short chains and locate the posterior mode. Variational inference performs well for CIFAR but surprisingly falls slightly short in MNIST.
%
\section{Conclusion, Limitations, and Future work}\label{sec:conc}
We presented Momentum Particle Descent MPD, a method of incorporating momentum into particle gradient descent. We establish several theoretical results such as the convergence in continuous time; existence and uniqueness; as well as some guarantees for the usage of the particle approximation. Through experiments, we showed that, for suitably chosen momentum parameters, the resulting algorithm achieves better performance than PGD.

The main limitation that may impede the widespread adoption of MPD over PGD is the requirement for tuning the momentum parameters. This issue is inherited from the Underdamped Langevin dynamics where it is generally understood to be somewhat more challenging than its overdamped counterpart, as witnessed by the dearth of papers proposing practical tuning strategies for this class of algorithms. Although we found that the momentum coefficient heuristic (see \Cref{appen:heuristic}) worked well in our experiments, future work includes a systematic method of tuning momentum parameters $\left(\eta_{\theta}, \gamma_{\theta}, \eta_{x}, \gamma_{x}\right)$ following \citet{riou2023adaptive} (or some justification for using the momentum coefficient heuristic). Another future work is a theoretical characterization of the difference between our discretization scheme compared with other potential schemes akin to \citet{sanz2021wasserstein}.

\section*{Acknowledgements}
We thank our anonymous reviewers for their helpful comments which improved the quality of the paper. JL acknowledge the support from the Feuer International Scholarship.
JK and AMJ acknowledge support from the Engineering and Physical Sciences Research Council (EPSRC; grant EP/T004134/1).
AMJ acknowledges further support from the EPSRC (grants EP/R034710/1 and EP/Y014650/1).
SP acknowledges support from the Engineering and Physical Sciences Research Council (EPSRC; grant EP/R018561/1).

\section*{Impact Statement}
This paper presents work whose goal is to advance the field of Machine Learning. There are many potential societal consequences of our work, none of which we feel must be specifically highlighted here.

\bibliographystyle{apalike}
\bibliography{ref}

\clearpage

\appendix
\onecolumn
\appendixpage
\appendixtitleon
\startcontents[sections]
\printcontents[sections]{l}{1}{\setcounter{tocdepth}{2}}

\section{Notation}

The following table summarizes some key notation used throughout.

\bgroup
\def\arraystretch{1.5}
\begin{tabular}{p{1in}p{5.25in}}
	$\cal{E}$ & Free energy defined in \Cref{eq:freeenergy}. \\
	$\cal{F}$ & Momentum-enriched free energy defined in \Cref{eq:hamilboth}. \\
	$z_t$ & The tuple $(\theta_t, m_t, q_t) \in \mathbb{R}^{d_\theta} \times \mathbb{R}^{d_\theta} \times \mathcal P(\mathbb{R}^{d_x} \times \mathbb{R}^{d_x})$.\\
        $\nabla f$ & Euclidean gradient of $f$. If $f:\r^{n} \rightarrow \r^{m}$ is vector-valued, we have $\nabla f \in \r^{n \times m}$. \\
	$\nabla_a f$ & Euclidean gradient of $f$ w.r.t.\ $a$.  \\
	$\nabla_{(a,b)} f$ &  $\left [\nabla_{a} f, \nabla_{b} f\right ]^\top$.\\
	$\ell$ & $\ell (\theta, x) :=  \log p_\theta (y, x)$. \\
	$\rho_\theta$ & $\rho_\theta(x) := p_\theta(y, x)$. \\
        $\nabla \cdot f$ & If $f: \r^{n} \rightarrow \r^{n}$, we have $\nabla\cdot f := \partial_i f_i$. If $f: \r^{n} \rightarrow \r^{n} \times \r^{m}$, we have $(\nabla \cdot f)_i := \partial_j f_{ji}$.\\
	$\nabla_{a} \cdot $ & Divergence operator w.r.t. $a$. \\
	$\nabla^*_{a}$ & Adjoint operator of $\nabla_a$ (see \Cref{sec:adjoint}).\\
	$[n]$ & $[n] := \{1,..., n\}$.\\
	$\Delta$ & Laplacian operator, $\Delta = \nabla\cdot\nabla$. \\
	$\cal{P}(\mathbb{R}^d)$ & The space of probability measures that are absolutely continuous w.r.t. Lebesgue measure (have densities) and possess finite second moments. \\
	$\iprod{\cdot}{\cdot}$ (and $\|\cdot\|$)  & Euclidean inner product or Frobenius inner product (and its inner norm).\\
	$\iprod{\cdot}{\cdot}_\rho$ (and $\|\cdot\|_\rho $) & $L^2(\rho)$ inner product (and its norm). \\
	$o(\epsilon)$ & Bachmann--Landau little-o notation. \\
	$\partial F(\mu)$ & Fr\'{e}chet subdifferential (see \Cref{def:frechet_wasserstein}).
\end{tabular}

\section{Related Work}

The present work sits at the juncture of i) deterministic gradient flows for optimising
objectives over ``parameter’' spaces, typically expressible through the discretization of ODEs, and
ii) stochastic gradient flows for optimising objectives over the space of probability measures,
typically expressible through discretisation of mean-field SDEs. The former class of problems
is too vast to be properly surveyed here (for an overview, see \citet{santambrogio2017euclidean, ambrosio2005gradient}), effectively including a large proportion of modern
continuous optimisation problems. The latter class has seen substantial growth over the past
few years in particular, with various problems related to sampling \citep{liu2017stein, bernton2018langevin, garbuno2020affine, duncan2019geometry}, variational inference
\citep{yao2022mean, lambert2022variational, diao2023forward}, and the training of shallow neural networks \citep{mei2018mean, chizat2018global, nitanda2017stochastic, hu2021mean, nitanda2022convex, chizat2022mean, chen2022uniform, suzuki2023convergence} being studied in this framework. While
there exist earlier works which combine optimisation with Markovian sampling (e.g. stochastic
approximation approaches to the EM algorithm \citep{delyon1999convergence, de2021efficient}, training of energy-based models \citep{hinton2002training}, and
hyperparameter tuning in MCMC \citep{andrieu2008tutorial, roberts2009examples}), the connection to gradient flows remains somewhat
under-developed at present. We hope that the present work can encourage further
exploration of these connections.

\section{Gradient Flow on $\mathbb{R}^{d_\theta} \times \cal{P}(\mathbb{R}^{d_x})$}

\label{appen:gf}
In this section, we describe gradient flows on the extended space $\mathbb{R}^{d_\theta} \times \cal{P}(\mathbb{R}^{d_x})$. We begin with an exposition of gradient flows in Wasserstein space, i.e. the space of distributions $\cal{P}(\mathbb{R}^{d_x})$ endowed with the Wasserstein-$2$ metric. We aim to give an intuitive introduction as opposed to a rigorous one; those readers with an interest in the latter are directed to \citet{ambrosio2005gradient}. Using the ideas in \citet{ambrosio2005gradient}, we show how the notion of gradients can be generalized to the product space $\mathbb{R}^{d_\theta} \times \cal{P}(\mathbb{R}^{d_x})$.

\subsection{Gradient Flow on $\cal{P}(\mathbb{R}^{d_x})$}
\label{appen:gf_w2}
We begin by describing the gradient flow on $\cal{P}(\mathbb{R}^{d_x})$ endowed with the Wasserstein-$2$ metric. The Wasserstein distance is defined as
\begin{align*}
    \sf{W}_2^2(p,q) = \inf_{\pi \in \Pi(p,q)} \int_{\r^d \times \r^d}\left \| x - y \right \|^2 \pi(\mathrm{d} x  \times \mathrm{d} y),
\end{align*}
where $\Pi(p,q)$ is the set of all couplings between $p$ and $q$.
 
In \citet[Chapter 11]{ambrosio2005gradient}, the authors discuss various approaches for adapting gradient flows on well-studied spaces (such as Euclidean and Riemannian spaces) to the Wasserstein space. One of these approaches proceeds by first defining suitable notions of tangent space and subdifferential, following which the simple definition of gradient flow modelled on Riemannian manifolds can then be reproduced. In this case, the Fr\'{e}chet subdifferential \citep[Definition 10.1.1]{ambrosio2005gradient} is defined as follows:
\begin{definition}[Fr\'{e}chet differential on Wasserstein Space] \label{def:frechet_wasserstein}
Let $F:\cal{P}(\mathbb{R}^{d_x}) \rightarrow \mathbb{R}$ be a sufficiently regular function. We say that $\xi \in L^2(q)$ belongs to the Fr\'{e}chet subdifferential $\partial F (q)$ if for all $q' \in \cal{P}(\mathbb{R}^{d_x})$, we have
\begin{align*}
    F(q') - F(q) \ge \iprod{\xi}{t_q^{q'}- i}_q + o(\sf{W}_2(q, q')),
\end{align*}
where $t_p^q$ is the optimal map between $p$ and $q$ \citep[see (7.1.4)]{ambrosio2005gradient} and $i$ is the identity map. Furthermore, if $\xi \in \partial F (q)$ also satisfies
\begin{align*}
    F(t_{\#}q) - F(q) \ge \iprod{\xi}{t-i}_q + o(\|t-i\|_q),
\end{align*}
for all $t \in L^2(q)$, then we say that $\xi$ is a \textbf{strong} subdifferential.
\end{definition}

See \citet[Definition 10.1.1]{ambrosio2005gradient} for more details. The strong subdifferential can be thought of as the (Wasserstein) ``gradient'' of $F$. Equipped with this notion of gradient, we can define the gradient (descent) flow of $F:\cal{P}(\mathbb{R}^{d_x}) \rightarrow \mathbb{R}$ as follows:
\begin{definition}[Gradient Flow]
	We say that  a curve $p_t: [0,1] \rightarrow \cal{P}(\mathbb{R}^{d_x})$ is a gradient flow of $F: \cal{P}(\mathbb{R}^{d_x}) \rightarrow \mathbb{R}$ if for all $t>0$, it satisfies the continuity equation $\partial_t p_t + \nabla_x \cdot (v_t p_t) = 0$, where the tangent vector $v_t$ satisfies $- \,v_t \in \partial F(p_t)$ for all $t$.
\end{definition}
Thus, for our application, we are interested in computing the strong subdifferential of $F$. If $F$ is an integral of the type
\begin{equation}\label{eq:f_type_w}
    F(p) = \int f(x, p(x), \nabla_x p(x))\mathrm{d} x ,
\end{equation}
where $f:\mathbb{R}^{d_x}\times\mathbb{R}\times\mathbb{R}^{d_x}\to\mathbb{R}$ is sufficiently regular, then we will see that its strong subdifferential admits an analytic solution.

Functionals of this form of great interest in the calculus of variations (e.g., see \citet{gelfand2000calculus}). A vital quantity which is used to study these functionals is the first variation. Writing $\delta_p F[p] : \mathbb{R}^{d_x} \rightarrow \mathbb{R}$ for the first variation of $F$, it is a function that satisfies
\begin{equation*}
\left. \frac{\rm{d}}{\rm{d}\epsilon} F(q_\epsilon)\right\vert_{\epsilon=0} = - \int \delta_q F[q] (x) \, \nabla_x \cdot (q(x)v(x)) \rm{d}x,
\end{equation*}
for all $v$ such that $q_\epsilon := (i + \epsilon v)_\# q \in \mathcal{P}(\mathbb{R}^{d_x})$ for sufficiently small $\epsilon$.
One can readily establish that for \eqref{eq:f_type_w}-typed $F$, it is given by
\begin{align*}
	\delta_p F[p](x) := \nabla_{(2)} f(x, p(x), \nabla_x p(x)) - \nabla_x \cdot (\nabla_{(3)} f(x, p(x), \nabla_x p(x)),
\end{align*}
where $\nabla_{(i)}$ denotes the partial derivative w.r.t.~the $i$-th argument \citep[Eq.\ (10.4.2)]{ambrosio2005gradient}. It can be shown that for any $\xi \in \partial F(p)$ which is a strong subdifferential, it holds that $\xi(x) \overset{p-a.e.}{=} \nabla_x \delta_p F[p](x)$  \citep[Lemma 10.4.1]{ambrosio2005gradient}.

\subsection{Gradient flow on  $\mathbb{R}^{d_\theta} \times \cal{P}(\mathbb{R}^{d_x})$} \label{sec:gc_on_product_space}

We provided here a generalization of the Fr\'{e}chet differential (for a broad survey of Fr\'{e}chet differentials, see \citet{kruger2003frechet}) to the extended space $\mathbb{R}^{d_\theta} \times \cal{P}(\mathbb{R}^{d_x})$:
\begin{definition}[Fr\'{e}chet differential on $\mathbb{R}^{d_\theta} \times \cal{P}(\mathbb{R}^{d_x})$]\label{eq:strong_extended_sd}
Let $\cal{G} :\mathbb{R}^{d_\theta}\times \cal{P}(\mathbb{R}^{d_x}) \rightarrow \mathbb{R}$ be a sufficiently regular function. We say that $(\xi_\theta, \xi_q) \in \mathbb{R}^{d_\theta} \times L_2(q)$ belongs to the Fr\'{e}chet subdifferential $\partial \cal{G} (\theta, q)$ if for all $(\theta', q') \in \mathbb{R}^{d_\theta} \times \cal{P}(\mathbb{R}^{d_x})$, it holds that
\begin{align*}
    \cal{G}(\theta',q') - \cal{G}(\theta, q) \ge \iprod{\xi_\theta}{\theta' - \theta}+ \iprod{\xi_q}{t_{q}^{q'} - i}_q + o(\sf{W}_2(q', q) + \|\theta'-\theta\|).
\end{align*}
Furthermore, if $(\xi_\theta, \xi_q) \in \partial \cal{G} (\theta, q)$ also satisfies
\begin{align*}
\cal{G}(\theta + \tau,t_{\#}q) - \cal{G}(\theta, q) \ge \iprod{\xi_\theta}{\tau}+ \iprod{\xi_q}{t - i}_q + o(\|t-i\|_q + \|\tau\|),
\end{align*}
for all $(\tau, t) \in \mathbb{R}^{d_\theta} \times L_2(q)$, then we say that $(\xi_\theta, \xi_q)$ is a \textbf{strong} subdifferential. 
\end{definition}
For the remainder of the section, we shall assume that for any perturbation $(\sigma, v)$, there exists some $\delta_q F[\theta, q]:\mathbb{R}^{d_x} \rightarrow \mathbb{R}$ (called the first variation) such that the following expansion holds:
\begin{align} \label{eq:fv_extended}
	\frac{\rm{d}}{\rm{d}\epsilon} \cal{G}(\theta_\epsilon, q_\epsilon)\big \vert_{\epsilon = 0} =& \iprod{\nabla_\theta \cal{G} (\theta, q)}{\sigma} - \int \delta_q \cal{G}[\theta, q](x)\, \nabla_x \cdot (v(x) q(x))\, \mathrm{d} x, 
\end{align}
where $q_\epsilon := (i + \epsilon v)_\# q$ and $\theta_\epsilon := \theta + \epsilon \sigma$. For all $\cal{G}$ of interest in this paper, this assumption holds; for instance, see \Cref{prop:fv_L1}.

We follow in the argument of \citet[Lemma 10.4.1]{ambrosio2005gradient} to show that if $(\xi_\theta, \xi_q) \in \partial \cal{G}(\theta, q)$ is a strong subdifferential, then we have that $\xi_\theta = \nabla_\theta \cal{G}(\theta, q)$ and $\xi_q(x) \overset{q-a.e.}{=} \nabla_x \delta_q \cal{G}[\theta, q](x)$. Let $(\xi_\theta, \xi_p) \in \partial \cal{G}(\theta, p)$ be a strong subdifferential.
By using the strong subdifferential property for some perturbation $(\epsilon\sigma, \epsilon v)$ and taking left and right limits of \eqref{eq:strong_extended_sd}, we obtain that
\begin{align*}
    \limsup_{\epsilon \uparrow 0} \frac{\cal{G}(\theta_\epsilon, q_\epsilon) - \cal{G}(\theta, q)}{\epsilon} \le \iprod{\xi_\theta}{\sigma} + \iprod{\xi_q}{v}_q \le \liminf_{\epsilon \downarrow 0} \frac{\cal{G}(\theta_\epsilon, q_\epsilon)- \cal{G}(\theta, q)}{\epsilon}.
\end{align*}
On the other hand, from our assumption \eqref{eq:fv_extended}, we have that
\begin{align*}
	\lim_{\epsilon \rightarrow 0} \frac{\cal{G}(\theta_\epsilon, q_\epsilon) - \cal{G}(\theta, q)}{\epsilon} &= \iprod{\nabla_\theta \cal{G} (\theta, q)}{\sigma} - \int \delta_q \cal{G}[\theta, q](x)\, \nabla_x \cdot (v(x) q(x)) \, \mathrm{d} x \\
	&= \iprod{\nabla_\theta \cal{G} (\theta, q)}{\sigma} + \iprod{\nabla_x \delta_q \cal{G}[\theta, q]}{v}_q,
\end{align*}
where the last equality follows from integration by parts and the divergence theorem. Hence, we have that $\xi_\theta = \nabla_\theta \cal{G}(\theta, q)$, and $\xi_q \overset{q-a.e.}{=} \nabla_x \delta_q \cal{G}[\theta, q]$.

\subsection{First Variation}
\label{sec:fv}

In this section, we derive the first variation for integral expressions taking a certain form. In particular, we are interested in computing the strong subdifferential of $\cal{G}$ of the following types:
\begin{align}
	\cal{G}(\theta, q) &:= \int {f}(\theta, x, q(x), \nabla_x q(x))\,\mathrm{d} x , \text{ where } f:\r^{d_\theta} \times \r^{d_x} \times \r \times \r^{d_x}; \tag{VI-I}
	\label{eq:vi_type1} \\
	\cal{G}(\theta, q) &:= \int \int f(\theta, x, x^\prime, q(x), q(x^\prime))\,\mathrm{d} x  \mathrm{d} x^\prime, \text{ where } f:\r^{d_\theta} \times \r^{d_x}  \times \r^{d_x} \times \r \times \r. \tag{VI-II}
	\label{eq:vi_type2}
\end{align}
An exampled of \eqref{eq:vi_type1}-typed $\cal{G}$ is when $\cal{G}$ is the free energy $\cal{E}$. Following standard techniques from the calculus of variations \citep{gelfand2000calculus}, consider a perturbation $(\sigma, v) \in \mathbb{R}^{d_\theta} \times L_2(q)$ and define a mapping $\Phi$ by
\begin{align*}
    \Phi (\epsilon) := \cal{G}(\theta_\epsilon, q_\epsilon ),
\end{align*}
where $q_\epsilon := (i + \epsilon v)_\# q$ and $\theta_\epsilon := \theta + \epsilon \sigma$.

\textbf{\eqref{eq:vi_type1}-typed $\cal{G}$}. By application of the change-of-variables formula (for instance, see \citet[Lemma  5.5.3]{ambrosio2005gradient}), we can compute the density $q_\epsilon$ and its derivative $\frac{\partial}{\partial \epsilon} q_\epsilon (y)$ as follows:
\begin{align*}
	q_\epsilon(y) &= \frac{q}{\rm{det}(I+\epsilon \nabla_x v )} \circ (i + \epsilon v)^{-1}(y), \quad
	\frac{\partial}{\partial \epsilon} q_\epsilon(y) \big |_{\epsilon = 0}  = - \nabla_x \cdot [q v ] (y).
\end{align*}
Thus, we can compute the derivative of $\Phi$, provided that the interchange of derivative and integral can be justified, as
%
\begin{align*}
	\frac{\rm{d}}{\rm{d}\epsilon}\Phi(\epsilon) \bigg \vert_{\epsilon =0} =& \int \frac{\rm{d}}{\rm{d}\epsilon} f(\theta_\epsilon, x, q_\epsilon(x), \nabla_x q_\epsilon(x))\,\mathrm{d} x  \\
	=& \int \iprod{\nabla_\theta f(\theta, x, q(x), \nabla_x q(x))}{\sigma} \,\mathrm{d} x  \\
	&- \int \nabla_{(3)} f(\theta, x, q(x), \nabla_x q(x)) \, \nabla_x \cdot [vq](x) \, \mathrm{d} x  \\
	& - \int \iprod{\nabla_{(4)} f(\theta, x, q(x), \nabla_x q(x))}{\nabla_x [\nabla_x \cdot [vq]](x)}\, \mathrm{d} x.
\end{align*}
Applying integration by parts and the divergence theorem, we obtain that
\begin{align*}
	\frac{\rm{d}}{\rm{d}\epsilon}\Phi(\epsilon) \bigg \vert_{\epsilon =0} =& \iprod{\nabla_\theta \cal{G}(\theta, q)}{\sigma} \\
	&- \int \nabla_{(3)} f(\theta, x, q(x), \nabla_x q(x)) \, [\nabla_x \cdot (vq)](x) \, \mathrm{d} x \\
	& + \int \nabla_x \cdot [\nabla_{(4)} f](\theta, x, q(x), \nabla_x q(x)) [\nabla_x \cdot (vq)](x) \, \mathrm{d} x.
\end{align*}
We can hence write the first variation of \eqref{eq:vi_type1}-typed $\cal{G}$ as
\begin{equation}\label{eq:fv_t1}
    \delta_q \cal{G}[\theta, q](x) = \nabla_{(3)} f(\theta, x, q(x), \nabla_x q(x)) - \nabla_x \cdot (\nabla_{(4)} f(\theta, x, q(x), \nabla_x q(x)).
\end{equation}

\textbf{\eqref{eq:vi_type2}-typed $\cal{G}$}. Similarly to above, we can define $\Phi(\epsilon) := \cal{G}(\theta_\epsilon, q_\epsilon )$, whose derivative is given by
\begin{align*}
	\frac{\rm{d}}{\rm{d}\epsilon}\Phi(\epsilon) \bigg \vert_{\epsilon =0} =& \iprod{\nabla_\theta \cal{G} (\theta, q)}{\sigma} \\
	&- \int \int \nabla_{(4)} f(\theta, x, y, q(x), q(y))\, \nabla_x \cdot [qv] (x)\,\mathrm{d} x \, \mathrm{d} y\\
	&- \int \int \nabla_{(5)} f(\theta, x, y, q(x), q(y)) \, \nabla_y \cdot [qv] (y)\,\mathrm{d} x \, \mathrm{d} y.
\end{align*}
Hence, the first variation is given by
\begin{align}
	\delta_q \cal{G}[\theta, q](x) =& \int \left [\nabla_{(4)} f(\theta, x, y, q(x), q(y)) + \nabla_{(5)} f(\theta, y, x, q(y), q(x)) \right ] \mathrm{d} y.
	\label{eq:fv_t2}
\end{align}
\begin{example}[First Variation of $\cal{E}$]
    It can be seen that the free energy is \eqref{eq:vi_type1}-typed with $f(\theta, x, a, g) = a \cdot \log \frac{a}{p_\theta(y, x)}$. Hence, by combining the formula \eqref{eq:fv_t1} with the fact that
        \begin{align*}
        	\nabla_{3} f(\theta,x, a, b) &= \log \frac{a}{p_\theta} + 1, \\
        	\nabla_{4}(f(\theta,x,a,g) &= 0,
        \end{align*}
    we obtain the following expression for the first variation:
        \begin{align*}
            \delta_q \cal{E} [\theta, q](x) = \log \frac{q(x)}{p_\theta(y,x)} + 1.
        \end{align*}
    This provides an alternative derivation of this expression to that given in \citet[Lemma 1]{Kuntz2022}.
\end{example}

\section{A log Sobolev inequality for $\cal{E}$ can be transferred to $\cal{F}$}
In this section, we show that nice properties of the functional $\cal{E}$ transfer to the functional $\cal{F}$, i.e. if $\cal{E}$ satisfies a log Sobolev inequality (\Cref{ass:logsobolev}), then so does $\cal{F}$ (with a modified constant):
\begin{proposition}\label{prop:logsobF}
    If \Cref{ass:logsobolev} holds, then upon defining $C:=\min\{C_{\cal{E}},\eta_\theta,\eta_x\}$, it holds that
	$$
	\cal{F}(\theta,\tmo,q)-\cal{E}^* \le \frac{1}{2C} \norm{\nabla\cal{F}(\theta,\tmo,q)}_q^2, \quad \forall \theta, \tmo\in \r^{d_\theta}, \enskip q\in\cal{P}(\r^{2d_x}),
	$$
 where 
\begin{equation}
\label{eq:gradZnorm}
  \norm{\nabla\cal{F}(\theta,\tmo,q)}_q^2 := \|\nabla_{(\theta, m)} \cal{F}(\theta, m, q)\|^2 + \|\nabla_{(x,\qmo)} \log q - \nabla_{(x,\qmo)} \log \rho_{\theta, \eta_x}\|^2_q.
\end{equation}
\end{proposition}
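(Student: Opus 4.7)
The plan is to decompose $\cal{F}$ into $\cal{E}$ plus two ``excess'' terms and then bound each piece separately using an ingredient that contributes a factor of $C_{\cal{E}}$, $\eta_x$, or $\eta_\theta$ to the final constant. Factorize $q(x,u)=q_X(x)q_{U\mid X}(u\mid x)$ and use $\rho_{\theta,\eta_x}(x,u)=\rho_\theta(x)r_{\eta_x}(u)$ to get
\begin{equation*}
\cal{F}(\theta,m,q)=\cal{E}(\theta,q_X)+\int q_X(x)\,\mathsf{KL}\bigl(q_{U\mid X}(\cdot\mid x),r_{\eta_x}\bigr)\,\mathrm{d}x+\frac{\eta_\theta}{2}\|m\|^2.
\end{equation*}
Subtracting $\cal{E}^*$ and treating the three summands independently reduces the claim to three inequalities whose right-hand sides sum to the four pieces of $\norm{\nabla\cal{F}}_q^2$ in~\eqref{eq:gradZnorm}.

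For the first piece, apply \Cref{ass:logsobolev} at the marginal $(\theta,q_X)$. A direct calculation gives $\nabla_\theta\cal{F}(\theta,m,q)=\nabla_\theta\cal{E}(\theta,q_X)$, since the momentum and $u$-marginal contributions are $\theta$-free. For the $x$-score, write $\nabla_x\log q_X(x)=\Ebbs{q_{U\mid X}(\cdot\mid x)}{\nabla_x\log q(x,U)}$ and note that $r_{\eta_x}$ does not depend on $x$, so $\nabla_x\log(q_X/\rho_\theta)(x)=\Ebbs{q_{U\mid X}(\cdot\mid x)}{\nabla_x\log(q/\rho_{\theta,\eta_x})(x,U)}$; Jensen then yields
\begin{equation*}
\norm{\nabla_x\log(q_X/\rho_\theta)}_{q_X}^2\le\norm{\nabla_x\log(q/\rho_{\theta,\eta_x})}_q^2.
\end{equation*}
Combining, $\cal{E}(\theta,q_X)-\cal{E}^*\le(2C_{\cal{E}})^{-1}\bigl(\|\nabla_\theta\cal{F}\|^2+\|\nabla_x\log(q/\rho_{\theta,\eta_x})\|_q^2\bigr)$.

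For the conditional KL term, invoke the Gaussian log Sobolev inequality for $r_{\eta_x}=\cal{N}(0,\eta_x^{-1}I_{d_x})$, which has log Sobolev constant $\eta_x$. For each fixed $x$,
\begin{equation*}
\mathsf{KL}\bigl(q_{U\mid X}(\cdot\mid x),r_{\eta_x}\bigr)\le\frac{1}{2\eta_x}\int q_{U\mid X}(u\mid x)\bigl\|\nabla_u\log(q_{U\mid X}(\cdot\mid x)/r_{\eta_x})(u)\bigr\|^2\mathrm{d}u,
\end{equation*}
and since $\log q(x,u)-\log\rho_{\theta,\eta_x}(x,u)$ differs from $\log q_{U\mid X}(u\mid x)-\log r_{\eta_x}(u)$ by an $x$-only function, their $u$-gradients coincide. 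Integrating against $q_X$ gives $\Ebbs{q_X}{\mathsf{KL}(q_{U\mid X},r_{\eta_x})}\le(2\eta_x)^{-1}\norm{\nabla_u\log(q/\rho_{\theta,\eta_x})}_q^2$. Finally, $\nabla_m\cal{F}=\eta_\theta m$ gives $\tfrac{\eta_\theta}{2}\|m\|^2=(2\eta_\theta)^{-1}\|\nabla_m\cal{F}\|^2$. Summing the three bounds and replacing each of $C_{\cal{E}},\eta_x,\eta_\theta$ by $C=\min\{C_{\cal{E}},\eta_\theta,\eta_x\}$ assembles exactly $(2C)^{-1}\norm{\nabla\cal{F}}_q^2$.

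The only non-routine step is the Jensen inequality reducing the marginal $x$-score to a conditional expectation of the joint score; once it is in place, the remaining inequalities are essentially identity manipulations and appeals to the Gaussian log Sobolev inequality. No difficulties from the $(\theta,m)$ coupling arise because $m$ enters $\cal{F}$ only through a decoupled quadratic.
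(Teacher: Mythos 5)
Your proof is correct and follows essentially the same route as the paper's: both decompose $\cal{F}-\cal{E}^*$ via the factorization $q=q_X\,q_{U\mid X}$, $\rho_{\theta,\eta_x}=\rho_\theta\otimes r_{\eta_x}$ into a marginal free-energy term, a conditional KL, and the momentum quadratic; both use the identity $\nabla_\theta\cal{F}=\nabla_\theta\cal{E}(\theta,q_X)$, the Jensen/score-identity argument to dominate the joint $x$-score by the marginal one, and the Gaussian (Bakry--\'Emery) log Sobolev inequality for $r_{\eta_x}$. The only difference is cosmetic: the paper proves the gradient-norm decomposition as a standalone inequality before assembling, whereas you split the objective first and bound each summand.
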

\begin{proof}
    As we show at the end of the proof, we have the fact that
	\begin{equation}
		\norm{\nabla_{(x,\qmo)} \log\left (\frac{q}{\rho_\theta\otimes  r_{\eta_x}} \right )}_q^2 
		\ge {\norm{\nabla_{x} \log \left ( \frac{q_X}{\rho_\theta} \right )}^2}_{q_X}
		+ {\norm{ \nabla_{\qmo} \log \frac{q_{\Qmo|X}}{r_{\eta_x}}}^2}_q.
		\label{eq:lya_bound}
	\end{equation}
	From the definition, we have
	\begin{equation*}
		\norm{\nabla\cal{F}(\theta, \tmo, q)}_{q}^2 = 
		\norm{\nabla_\theta  \cal{F} (\theta, \tmo, q)}^2
		+ \norm{\eta_\theta m}^2
		+ \norm{\nabla_{(x,\qmo)} \log\frac{q}{\rho_{\theta, \eta_x}}}_q^2.\end{equation*}
	Using ~\eqref{eq:lya_bound} and the fact that 
	$$\nabla_\theta\cal{F}(\theta, m, q) = -\int \nabla_\theta \log \rho_{\theta, \eta_x}(x,\qmo) q(\rm{d}x,\rm{d}\qmo) =-\int \nabla_\theta \log \rho_\theta(x)q_X(\rm{d}x)= \nabla_\theta\cal{E}(\theta, q_X),$$ 
	then it follows that
	\begin{align*}
		\norm{\nabla\cal{F}(\theta, m, q)}_{q}^2
		\ge&\norm{\nabla_\theta \cal{E}(\theta, q_X)}_{q}^2
		+ \eta_\theta^2\norm{ m}^2
		+ {\norm{ \nabla_{\qmo} \log \frac{q_{U|X}}{r_{\eta}}}^2}_q.
	\end{align*}
	Because $C = \min \{C_{\cal{E}}, \eta_\theta, \eta_x\}$, the above implies that
	\begin{align}\label{eq:nfe987anf8uafdsa}
		\frac{1}{2C} \norm{\nabla\cal{F}(\theta, \tmo, q)}_{q}^2 &\ge 
		\frac{1}{2C_{\cal{E}}} \norm{\nabla_\theta \cal{E}(\theta, q)}_{q}^2
		+\frac{\eta_\theta}{2}\norm{m}^2
		+ \frac{1}{2\eta_x} {\norm{ \nabla_{\qmo} \log \frac{q_{\Qmo|X}}{r_{\eta_x}}}^2_q}.\end{align}
	Since $r_{\eta_x} = \mathcal{N}(0, \eta_x^{-1}I_{d_x})$ is $\eta_x$-strongly log-concave, by the Bakry--{\'E}mery criterion of \citet{bakry2006diffusions}, it holds that
	$$\frac{1}{2\eta_x} {\norm{ \nabla_{\qmo} \log \left ( \frac{q(\cdot |x)}{r_{\eta_x}}\right ) }^2_{q(\rm{d}\qmo|x)} }\geq \mathsf{KL}(q_{\Qmo|x}|r_{\eta_x})\quad\forall x\in\cal{X},$$
	and so we obtain that
	\begin{align*}
		\frac{1}{2\eta_x} {\norm{ \nabla_{\qmo} \log \frac{q_{\Qmo|X}}{r_{\eta_x}}}^2}_q &= \frac{1}{2\eta_x} \Ebbs{q_X}{\norm{ \nabla_{\qmo} \log \frac{q(\cdot |X)}{r_{\eta_x}}}^2_{q(\rm{d}u|X)}}\\
		&\ge \Ebbs{q_X}{\mathsf{KL}(q_{\Qmo|X}|r_{\eta_x})}.
	\end{align*}
	Plugging the above into \eqref{eq:nfe987anf8uafdsa} and using the log Sobolev inequality for $\cal{E}$, we obtain that for $\cal{F}$:
	\begin{align*}
		\frac{1}{2C} \norm{\nabla\cal{F}(\theta, \tmo, q)}_{q}^2
		&\ge
		\cal{E}(\theta, q_X)-\cal{E}^* + \frac{\eta_\theta}{2}\| m\|_2^2 +   
		\Ebbs{q_X}{\mathsf{KL}(q_{\Qmo|X}|r_{\eta_x})} \\
		&\geq\cal{F}(\theta, m, q) -\cal{E}^*.
	\end{align*}
	
	We have one loose end to tie up: proving \eqref{eq:lya_bound}. To do so, note that
	\begin{align}
		\norm{\nabla_{(x,\qmo)} \log \left ( \frac{q}{\rho_\theta  \otimes r_{\eta_x}} \right )}_q^2 
		=& {\norm{\nabla_{(x,\qmo)} \log \left (\frac{q_X}{\rho_\theta} \right ) + \nabla_{(x,\qmo)} \log \left (\frac{q_{\Qmo|X}}{r_{\eta_x}} \right ) }^2_q}\nonumber\\
		=& {\norm{\nabla_{x} \log \left (\frac{q_X}{\rho_\theta} \right ) + \nabla_{x} \log q_{\Qmo|X}}^2_q} \\
		&+ {\norm{\nabla_{\qmo} \log \left (\frac{q_{\Qmo|X}}{r_{\eta_x}} \right )}^2_q}.\label{eq:lya_bound_decomp}
	\end{align}
	Taking the first term and expanding the square, we have
	\begin{align*}
		\norm{\nabla_{x} \log \left( \frac{q_X}{\rho_\theta} \right ) + \nabla_{x} \log q_{U|X}}^2_{q}
		=& {\norm{\nabla_{x} \log \left( \frac{q_X}{\rho_\theta} \right )}^2_q} \\
		&+ 2 \iprod{\nabla_{x} \log \left ( \frac{q_X}{\rho_\theta} \right )}
		{\nabla_{x} \log q_{\Qmo|X}}_q \\
		&+ {\norm{\nabla_{x} \log q_{\Qmo|X}}^2_q}.
	\end{align*}
	Applying Jensen's inequality to the final term, we obtain
	\begin{align*}
		\norm{\nabla_{x} \log \left ( \frac{q_X}{\rho_\theta} \right ) + \nabla_{x} \log q_{U|X}}^2_q
		\ge& {\norm{\nabla_{x} \log \left ( \frac{q_X}{\rho_\theta} \right ) }^2}_{q_X} \\
		&+ 2 {\iprod{\nabla_{x} \log \left ( \frac{q_X}{\rho_\theta} \right ) } 
			{\Ebbs{q(\rm{d}\qmo|\cdot )}{\nabla_{x} \log q(\Qmo|\cdot )}}_{q_X}} \\
		&+ {\norm{\Ebbs{q(\rm{d}\qmo|\cdot )}{ \nabla_{x} \log q(\Qmo|\cdot )}}^2_{q_X}}.
	\end{align*}
    One can compute explicitly that for all $x$, it holds that $\Ebbs{q(\rm{d}\qmo|x)}{\nabla_{x} \log q(\Qmo|x)} = 0$; we thus obtain that
    \begin{align*}
        \norm{\nabla_{x} \log \frac{q_X}{\rho_\theta} + \nabla_{x} \log q_{U|X}}^2_q 
	\ge \norm{\nabla_{x} \log \frac{q_X}{\rho_\theta}}^2_{q_X},
    \end{align*}
	and so \eqref{eq:lya_bound} follows from \eqref{eq:lya_bound_decomp}. 
\end{proof}
\section{$\cal{F}(\theta_t, \tmo_t, q_t)$ is non-increasing}\label{sec:non-increasing}
In the following proposition, we show that $\cal{F}_t:= \cal{F}(\theta_t, \tmo_t, q_t)$ is non-increasing in time.
\begin{proposition}\label{prop:contraction}
	For any $\gamma_x \ge 0$ and $\gamma_\theta \ge 0$, it holds that
    \begin{align*}
        \dot{\mathcal{F}}_t = -\gamma_\theta \|\nabla_{m}\cal{F}_t\|^2  -  \gamma_x \|\nabla_{\qmo}\delta_{q}\cal{F}_t\|^2_{q_t}\le 0.
    \end{align*}
\end{proposition}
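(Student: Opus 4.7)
The plan is to differentiate $\mathcal{F}_t$ along the trajectory using the chain rule, split the result into a finite-dimensional part and a $q$-part, and exploit the fact that the damping matrix $\sf{D}_\gamma$ has the block structure~\eqref{eq:hamilanddamp} in which the off-diagonal blocks are antisymmetric so that only the $\gamma I$ lower-right block survives in the quadratic form $\iprod{v}{\sf{D}_\gamma v}$.

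First I would write
\begin{align*}
    \dot{\mathcal{F}}_t = \iprod{\nabla_\theta \mathcal{F}_t}{\dot{\theta}_t} + \iprod{\nabla_m \mathcal{F}_t}{\dot{m}_t} + \int \delta_q \mathcal{F}_t(x,u)\, \dot{q}_t(x,u)\, \mathrm{d}x\, \mathrm{d}u,
\end{align*}
and then plug in the flow equations~\eqref{eq:mpd_pde}. Since $\sf{D}_{\gamma_\theta}\nabla_{(\theta,m)}\mathcal{F}_t = (-\nabla_m\mathcal{F}_t,\, \nabla_\theta\mathcal{F}_t + \gamma_\theta\nabla_m\mathcal{F}_t)$, we get $\dot{\theta}_t = \nabla_m\mathcal{F}_t$ and $\dot{m}_t = -\nabla_\theta\mathcal{F}_t - \gamma_\theta\nabla_m\mathcal{F}_t$. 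Substituting, the two cross terms $\pm\iprod{\nabla_\theta\mathcal{F}_t}{\nabla_m\mathcal{F}_t}$ cancel, leaving the Euclidean contribution $-\gamma_\theta\|\nabla_m\mathcal{F}_t\|^2$.

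Next I would handle the $q$-term by substituting the PDE $\dot{q}_t = \nabla_{(x,u)}\cdot(q_t \sf{D}_{\gamma_x}\nabla_{(x,u)}\delta_q\mathcal{F}_t)$ and integrating by parts (assuming enough decay of $q_t$ at infinity to kill boundary terms, which follows from the regularity of solutions granted by \Cref{prop:existence_uniqueness}) to obtain
\begin{align*}
    \int \delta_q \mathcal{F}_t\, \dot{q}_t\, \mathrm{d}x\, \mathrm{d}u = -\int \iprod{\nabla_{(x,u)}\delta_q \mathcal{F}_t}{\sf{D}_{\gamma_x}\nabla_{(x,u)}\delta_q \mathcal{F}_t}\, q_t\, \mathrm{d}x\, \mathrm{d}u.
\end{align*}
Writing $a := \nabla_x \delta_q\mathcal{F}_t$ and $b := \nabla_u \delta_q\mathcal{F}_t$ and using the definition of $\sf{D}_{\gamma_x}$ in~\eqref{eq:hamilanddamp}, one computes $\iprod{(a,b)}{\sf{D}_{\gamma_x}(a,b)} = -\iprod{a}{b} + \iprod{b}{a} + \gamma_x\|b\|^2 = \gamma_x\|b\|^2$, so the $q$-contribution simplifies to $-\gamma_x\|\nabla_u\delta_q\mathcal{F}_t\|^2_{q_t}$. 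Adding the two contributions gives the claimed identity, and non-positivity is immediate since $\gamma_\theta,\gamma_x \geq 0$.

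The only technical point requiring care is the integration by parts: one must verify that $q_t$ and $\delta_q\mathcal{F}_t = \log(q_t/\rho_{\theta_t,\eta_x}) + 1$ decay sufficiently at infinity so that the boundary contribution vanishes. This is the main obstacle, but it is standard in this setting — it can be justified via the regularity established for solutions of~\eqref{eq:mpd_flow} and the Gaussian-tail behaviour of $q_t$ driven by the underdamped dynamics; alternatively, one can state the identity for smooth compactly-supported test initial data and extend by density, which is the approach I would adopt in a formal write-up.
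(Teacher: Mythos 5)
Your proof is correct and follows essentially the same route as the paper's: both compute $\dot{\mathcal F}_t$ via the chain rule, reduce it to quadratic forms in the damping matrix $\mathsf{D}_\gamma$, and observe that only the lower-right $\gamma I$ block contributes. The paper packages the cancellation of cross terms as a decomposition of $\mathsf{D}_\gamma$ into symmetric and skew-symmetric parts, whereas you cancel them by explicit substitution of the flow equations and direct computation of $\iprod{(a,b)}{\mathsf D_{\gamma_x}(a,b)}$; these are the same observation stated in two equivalent ways. Your remark about the integration by parts and the need for decay of $q_t$ is a reasonable technical caveat that the paper leaves implicit by writing the $q$-contribution directly as an $L^2(q_t)$ quadratic form.
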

\begin{proof}
	We begin by computing the time derivative
	\begin{align}
	   \dot{\cal{F}}_t =& - \left\langle \nabla_{(\theta, m)} \cal{F}_t,\begin{pmatrix}
			0 & -I_{d_\theta}\\
			I_{d_\theta} & \gamma_{\theta}I_{d_\theta}
		\end{pmatrix}
		\nabla_{(\theta, m)}
		\cal{F}_t \right\rangle \label{eq:dFdt_t1}\\
		& - \left\langle \nabla_{(x,\qmo)}\delta_{q}\cal{F}_t, \begin{pmatrix}
			0 & -I_{d_x}\\
			I_{d_x} & \gamma_{x}I_{d_x}
		\end{pmatrix} \nabla_{(x,\qmo)}\delta_{q}\cal{F}_t \right\rangle_{q_t}.
		\label{eq:dFdt_t2}
	\end{align}
	Decomposing the matrix $\begin{pmatrix}
		0 & -I_{d_\theta}\\
		I_{d_\theta} & \gamma_{\theta}I_{d_\theta}
	\end{pmatrix}$ into symmetric and skew-symmetric components (write $D$, $Q$ respectively), i.e.
    \begin{align*}
        \begin{pmatrix}
		0 & -I_{d_\theta}\\
		I_{d_\theta} & \gamma_{\theta}I_{d_\theta}
	\end{pmatrix}
	= 	\underbrace{\begin{pmatrix}
			0 & -I_{d_\theta}\\
			I_{d_\theta} & 0
	\end{pmatrix}}_{=:Q}
	+
	\underbrace{\begin{pmatrix}
			0 & 0\\
			0 &  \gamma_{\theta}I_{d_\theta}
	\end{pmatrix}}_{=:D},
    \end{align*}
	we can simplify the RHS of \eqref{eq:dFdt_t1} to
    \begin{align*}
        \text{RHS } \eqref{eq:dFdt_t1} =- \left\langle \nabla_{(\theta, \tmo)} \cal{F}_t,D
	\nabla_{(\theta, \tmo)} \cal{F}_t \right\rangle = -\gamma_\theta \|\nabla_{\tmo}\cal{F}_t\|^2.
    \end{align*}
	Similarly, we can show that 
    \begin{align*}
        \eqref{eq:dFdt_t2} = -\gamma_x\|\nabla_{\qmo}\delta_{q}\cal{F}_t\|^2_{q_t}.
    \end{align*}
    As such, for $\gamma_x \ge 0$, $\gamma_\theta \ge 0$, the claim follows.
\end{proof}

\section{Proof of \Cref{prop:convergence_flow}}
\label{sec:convergence_flow}
The outline of the proof of \Cref{prop:convergence_flow} is:
\begin{itemize}
	\item \textbf{Step 1} (\Cref{sec:step_one_proof}): Explicitly computing an upper bound of the time derivative of $\cal{F}$ as a quadratic form.
	\item  \textbf{Step 2} (\Cref{sec:step_two_proof}): Under the conditions specified in \eqref{eq:flow_conditions}, we show that this time derivative is bounded above by another quadratic form that allows us to apply the log Sobolev inequality of \Cref{prop:logsobF}.
	\item \textbf{Step 3}: Using log Sobolev inequality,  \Cref{prop:logsobF}, and Gr\"{o}nwall's inequality, we obtain the desired result.
\end{itemize}
The remainder of the sections are dedicated to supporting the proof of Steps 1 and 2. This is done by developing the technical tools and carrying out explicit computations. The particular roles of the following sections are:
\begin{enumerate}
	\item \Cref{sec:first_variation}: Computing the time derivative of $\cal{L}$ for Step 1.
	\item \Cref{sec:commutator,sec:adjoint,sec:b}: Introducing the adjoint, commutator and another operator, as well as their explicit forms.
	\item \Cref{sec:dldqdt_t2}: Using the operators and explicit forms in \Cref{sec:commutator,sec:adjoint,sec:b}, we can upper bound terms introduced in \Cref{sec:first_variation} for Step 1. We utilize this in Step 1.
	\item \Cref{sec:xterm}. Bounding the cross terms (or interaction terms) between $\theta$ and $x$ that arises in the time derivative.
	\item \Cref{sec:psd_conditions}. Establishing sufficient conditions for a matrix with a particular form to be positive semi-definite given in \Cref{prop:psd_conds}. This is used in Step 2.
\end{enumerate}

Recall that the Lyapunov function is given by:
\begin{equation*}
	\mathcal{L} :=\cal{F}
	-\cal{E}^*
    + \left\Vert \nabla_{\left( \theta, m \right)} \mathcal{F} \right\Vert _{T_{\left( \theta, m \right)}}^{2} + \left\Vert \nabla_{\left( x, u \right)} \delta_{q} \mathcal{F} \right\Vert _{T_{\left( x, u \right)}}^{2}
\end{equation*}
where 
\begin{align}
T_{(\theta,\tmo)}
:= \frac{1}{K}
\begin{pmatrix}
    \tau_\theta I_{d_\theta} & \frac{\tau_{\theta\tmo}}{2}I_{d_\theta} \\
    \frac{\tau_{\theta\tmo}}{2}I_{d_\theta}  & \tau_\tmo I_{d_\theta}
\end{pmatrix},\quad 
T_{(x,\qmo)}
:= \frac{1}{K}
\begin{pmatrix}
    \tau_x I_{d_x} & \frac{\tau_{x\qmo}}{2}I_{d_x} \\
    \frac{\tau_{x\qmo}}{2}I_{d_x}  & \tau_\qmo I_{d_x}
\end{pmatrix}
\label{eq:t_def}
\end{align}
\begin{proof}[Proof of \Cref{prop:convergence_flow}]
	 The proof is completed in the following three steps:
	 
	 \textbf{Step 1.} In Section \ref{sec:step_one_proof}, we show that the time derivative of $\mathcal{L}_t := \cal{L}(z_t)$ satisfies the following upper bound:
	 	\begin{align*}
	 		\frac{\rm{d}}{\rm{d}t}\mathcal{L}_t \le& - \iprod{\nabla_{(\theta, \tmo)}\cal{F}_t}{S_{(\theta, \tmo)}  \nabla_{(\theta, \tmo)}\cal{F}_t}
	 		- \iprod{\nabla_{(x,\qmo)}  \delta_{q}\cal{F}_t}{S_{(x,\qmo)} \nabla_{(x,\qmo)}  \delta_{q}\cal{F}_t}_{q_t}.
	 	\end{align*}
	 	where $S_{(x,\qmo)}$ and $S_{(\theta, \tmo)}$ are suitable matrices defined in \eqref{eq:s_x}  and \eqref{eq:s_theta}, respectively.
	
	\textbf{Step 2.} Then, in Section \ref{sec:step_two_proof}, when \Cref{eq:flow_conditions} holds for some rate $\varphi > 0$, we have that
	\begin{align*}
		S_{(\theta, \tmo)} &\succeq \varphi C\left ( T_{(\theta, \tmo)} +  \frac{1}{2C}I_{d_\theta} \right), \\
		S_{(x,\qmo)} &\succeq \varphi C\left (T_{(x,\qmo)} +  \frac{1}{2C}I_{d_x} \right),
	\end{align*}
	where $C:=\min\{C_{\cal{E}},\eta_\theta,\eta_x\}$.
	
	\textbf{Step 3.} By Step $1$ and Step $2$, we obtain 
	\begin{align*}
		\frac{\rm{d}}{\rm{d}t}\mathcal{L}_t \le& - \varphi C \left (\frac{1}{2C}\norm{\nabla_{(\theta, m)}\cal{F}_t}^2 
		+  \left\Vert \nabla_{\left( \theta, m \right)} \mathcal{F}_t \right\Vert _{T_{\left( \theta, m \right)}}^{2} \right)\\ 
		&- \varphi C \left (  \frac{1}{2C}{\norm{\nabla_{(x,\qmo)}  \delta_{q} \cal{F}_t }^2_{q_t}} +  \left\Vert \nabla_{\left( x, u \right)} \delta_{q} \mathcal{F}_t \right\Vert _{T_{\left( x, u \right)}}^{2} \right ) .
	\end{align*}
	Comparing the above to~\eqref{eq:gradZnorm}, applying \Cref{prop:logsobF} and the \Cref{eq:lyapunov},
    %
    \begin{align*}
        \frac{\rm{d}}{\rm{d}t}\mathcal{L}_t \le - \varphi C \mathcal{L}_t,
    \end{align*}
	and application of Gronwall's inequality yields the final result
    \begin{align*}
        \mathcal{F}_t -\cal{E}^* \le \mathcal{L}_0\exp \left ( - \varphi C t \right ).
    \end{align*}
\end{proof}
\subsection{Proof of Step 1}
\label{sec:step_one_proof}
As shown in \Cref{prop:fv_lyapunov}, the time derivative of the Lyapunov function $\mathcal{L}$ is given by
\begin{align}
    \frac{\rm{d}}{\rm{d}t}\mathcal{L}_t =& - \iprod{\nabla_{(\theta,m)} \mathcal{L}_t} {
        \begin{pmatrix}
            0 & -I_{d_\theta}\\
            I_{d_\theta} & \gamma_{\theta}I_{d_\theta}
        \end{pmatrix}
    \nabla_{(\theta,m)}\cal{F}_t} \label{eq:dldthetadt} \\
    &- \iprod{\nabla_{(x,\qmo)} \delta_{q} \mathcal{L}_t} {
        \begin{pmatrix}
            0 & -I_{d_x}\\
            I_{d_x} & \gamma_{\theta}I_{d_x}
        \end{pmatrix}
    \nabla_{(x,\qmo)} \delta_{q} \cal{F}_t}_{q_t} \label{eq:dldqdt},
\end{align}
where we abbreviate $\mathcal{L}_t:=\mathcal{L}(\theta_t, m_t, q_t)$ and $\cal{F}_t:=\mathcal{F}(\theta_t, m_t, q_t)$.

We deal with the inner products in (\ref{eq:dldthetadt},\ref{eq:dldqdt}) one-by-one, starting with~\eqref{eq:dldthetadt}: we first compute the gradient of $\cal{L}$ w.r.t. to $(\theta, \tmo)$, finding that
%
\begin{align*}
    \nabla_{\left(\theta,m\right)}\cal{L} = \nabla_{(\theta, m)}\cal{F}
    + 2[\nabla_{(\theta, m)}^2 \cal{F}]\, T_{(\theta, \tmo)}\nabla_{(\theta, m)} \cal{F}
    + 2 \Ebbs{q}{[\nabla_{(\theta, m)}\nabla_{(x,\qmo)} \delta_q \cal{F}] \,  T_{(x,\qmo)} \nabla_{(x,\qmo)} \delta_q \cal{F}},
\end{align*}
where, as before, assume that the derivative-integral exchange can justified (see \Cref{sec:fv}).
It then follows that
\begin{align}
    \eqref{eq:dldthetadt} &= - \iprod{\nabla_{(\theta,m)} \cal{F}_t}{\tilde{\Gamma}_{\theta} \nabla_{(\theta, m)} \cal{F}_t}\label{eq:dldthetadt_t1} \\
    &-2 \iprod{\nabla_{(\theta,m)} \nabla_{(x,\qmo)}[\delta_{q} \cal{F}_t]\, T_{(x, \qmo)} \nabla_{(x,\qmo)} \delta_{q} \cal{F}_t}{
        \begin{pmatrix}
            0 & -I_{d_\theta}\\
            I_{d_\theta} & \gamma_{\theta} I_{d_\theta}
        \end{pmatrix}
    \nabla_{(\theta, m)} \cal{F}_t}_{q_t}.\label{eq:dldthetadt_t2}
\end{align}
where 
\begin{align*}
    \tilde{\Gamma}_\theta :=
        \begin{pmatrix}
            0 & -I_{d_\theta}\\
            I_{d_\theta} & \gamma_{\theta}I_{d_\theta}
        \end{pmatrix}
    + 2 T_{(\theta,\tmo)} \nabla_{(\theta,m)}^2 \cal{F}_t
        \begin{pmatrix}
                0 & -I_{d_\theta} \\
                I_{d_\theta} & \gamma_{\theta} I_{d_\theta}
        \end{pmatrix}.
\end{align*}
By $\cal{F}$'s definition,
\begin{align*}
    \nabla^2_{(\theta,\tmo)} \cal{F} =
        \begin{pmatrix}
            \nabla^2_\theta \cal{F} & 0_{d_\theta} \\
            0_{d_\theta} &\eta_\theta I_{d_\theta}
        \end{pmatrix};
\end{align*}
whence we see that
\begin{align*}
    \nabla_{(\theta,m)}^2 \cal{F} 
        \begin{pmatrix}
        	0_{d_\theta} & -I_{d_\theta}\\
    		I_{d_\theta} & \gamma_{\theta} I_{d_\theta}
        \end{pmatrix}
    &=
        \begin{pmatrix}
            0_{d_\theta} &-\nabla_\theta^2 \cal{F} \\
            \eta_\theta I_{d_\theta} &\gamma_\theta \eta_\theta I_{d_\theta}
        \end{pmatrix}\\
    \Rightarrow T_{(\theta,\tmo)} \nabla_{(\theta,m)}^2 \cal{F} 
        \begin{pmatrix}
            0_{d_\theta} & - I_{d_\theta}\\
            I_{d_\theta} & \gamma_{\theta} I_{d_\theta}
        \end{pmatrix}
    &= 
        \begin{pmatrix}
            \frac{\tau_{\theta\tmo}  \eta_\theta}{2} I_{d_\theta}
                &\frac{\tau_{\theta\tmo} \gamma_\theta \eta_\theta}{2} I_{d_\theta} 
                    -\tau_{\theta} \nabla_\theta^2  \cal{F} \\
            \tau_{\tmo}  \eta_\theta  I_{d_\theta}
                &\tau_{\tmo} \gamma_\theta\eta_\theta I_{d_\theta} - \frac{\tau_{\theta\tmo}}{2} \nabla_\theta^2 \cal{F}
        \end{pmatrix},\\
    \Rightarrow  \tilde{\Gamma}_\theta &=
        \begin{pmatrix}
            \tau_{\theta\tmo} \eta_\theta I_{d_\theta}
                &[\tau_{\theta\tmo} \gamma_\theta \eta_\theta - 1] I_{d_\theta} - 2 \tau_{\theta} \nabla_\theta^2 \cal{F}_t \\
            [2 \tau_{\tmo} \eta_\theta + 1] I_{d_\theta}
                &[2 \tau_{\tmo} \eta_\theta + 1] \gamma_\theta I_{d_\theta} -\tau_{\theta\tmo} \nabla_\theta^2 \cal{F}_t
        \end{pmatrix}.
\end{align*}
Given that, for any matrix $A$, we have $\iprod{\cdot}{A\cdot} = \iprod{\cdot}{A^{sym}\cdot}$ where $A^{sym} = \left (A + A^\top \right )/2$, then with $\Gamma_\theta :=\frac{\tilde{\Gamma}_\theta + \tilde{\Gamma}_\theta^\top}{2}$, we obtain that
\begin{equation}\label{eq:dldthetadt_t1_v2}
    \eqref{eq:dldthetadt_t1} =- \iprod{\nabla_{(\theta,m)} \cal{F}_t}{\Gamma_\theta \nabla_{(\theta, m)} \cal{F}_t},
\end{equation}
and hence that
\begin{equation}
    \Gamma_\theta = 
        \begin{pmatrix}
            \tau_{\theta\tmo}  \eta_\theta  I_{d_\theta}
                &\left[ \tau_{\tmo} + \frac{\tau_{\theta\tmo} \gamma_\theta}{2} \right]  \eta_\theta   I_{d_\theta} - \tau_{\theta}  \nabla_\theta^2 \cal{F}_t\\
            \left[ \tau_{\tmo} + \frac{\tau_{\theta\tmo} \gamma_\theta}{2} \right]  \eta_\theta  I_{d_\theta} - \tau_{\theta}  \nabla_\theta^2 \cal{F}_t
                &[ 2  \tau_{\tmo}  \eta_\theta + 1]  \gamma_\theta  I_{d_\theta} - \tau_{\theta\tmo}  \nabla_\theta^2 \cal{F}_t
        \end{pmatrix}.
\end{equation}

We now turn our attention to~\eqref{eq:dldqdt}.  Defining $\cal{H}_t(x, \qmo) := \sqrt{\frac{q_t(x, \qmo)}{\rho_{\theta_t, \eta_x}(x, \qmo)}}$ and \Cref{prop:fv_lyapunov}, we see that
\begin{align}
    \text{\eqref{eq:dldqdt}} =& - 4  \iprod{\nabla_{(x,\qmo)} \log \cal{H}_t}{
        \begin{pmatrix} 
            0_{d_x} & -I_{d_x} \\
            I_{d_x} & \gamma_{x}  I_{d_x}
        \end{pmatrix}
    \nabla_{(x,\qmo)} \log \cal{H}_t}_{q_t} \label{eq:dldqdt_t1} \\
    &- 8  \iprod{\nabla_{(x,\qmo)} \frac{\nabla_{(x,\qmo)} ^* T_{(x,\qmo)}  \nabla_{(x,\qmo)} \cal{H}_t} {\cal{H}_t}}{
        \begin{pmatrix}
    		0_{d_x} & -I_{d_x}\\
    		I_{d_x} & \gamma_{x}  I_{d_x}
        \end{pmatrix}
    \nabla_{(x,\qmo)} \log \cal{H}_t}_{q_t} \label{eq:dldqdt_t2} \\
    &+ 4  \iprod{ \nabla_{(x,\qmo)} \nabla_{(\theta,m)} \sbrac{\log {\rho_{\theta_t}}}  T_{(\theta,\tmo)}  \nabla_{(\theta,m)}\cal{F}_t}{
        \begin{pmatrix}
            0_{d_x} & -I_{d_x}\\
            I_{d_x} & \gamma_{x}  I_{d_x}
        \end{pmatrix}
    \nabla_{(x,\qmo)} \log \cal{H}_t}_{q_t}.\label{eq:dldqdt_t3}
\end{align}
Furthermore, we have
\begin{align*}
    \text{\eqref{eq:dldqdt_t1}} = -4  \gamma_x  \norm{\nabla_\qmo \log \cal{H}_t}^2_{q_t} = - \gamma_x  \norm{\nabla_{\qmo} \log\frac{q_t}{\rho_{\theta_t, \eta_x}} }^2_{q_t}.
\end{align*}
In \Cref{prop:simplifying_dldqdt_t2}, we show that \eqref{eq:dldqdt_t2} can be further simplified as follows:
\begin{align*}
    \text{\eqref{eq:dldqdt_t2}}  &= - 2  \gamma_x  \left\langle \nabla_{(x,\qmo)}\nabla_{\qmo} \left [\log \frac{q_t}{\rho_{\theta_t, \eta_x}}\right ], T_{(x,\qmo)}  \nabla_{(x,\qmo)} \nabla_{\qmo} \left[\log  \frac{q_t}{\rho_{\theta_t, \eta_x}}  \right] \right\rangle_{q_t} \\
    &- \left\langle \nabla_{(x,\qmo)} \log \frac{q_t}{\rho_{\theta_t, \eta_x}}, \tilde\Gamma_x  \nabla_{(x,\qmo)} \log \frac{q_t}{\rho_{\theta_t, \eta_x}} \right\rangle_{q_t},
\end{align*}
where
\begin{align*}
    \tilde\Gamma_x =
        \begin{pmatrix}
            \tau_{x \qmo}\eta_x I_{d_x}
                &\frac{\tau_{\qmo} +  \tau_{x \qmo}  \gamma_x}{2}  \eta_x  I_{d_x} 
                    + {\tau_{x}}  \nabla^2_x \ell \\
            \frac{\tau_{\qmo} + \tau_{x \qmo}  \gamma_x}{2}  \eta_x  I 
                + {\tau_{x}}  \nabla^2_x \ell
                &2  \gamma_x  \tau_{\qmo}  \eta_x  I_{d_x} + {\tau_{x \qmo}}  \nabla^2_x \ell
        \end{pmatrix}.
\end{align*}
We can then combine \eqref{eq:dldqdt_t1} and \eqref{eq:dldqdt_t2} to obtain 
\begin{align*}
    \text{\eqref{eq:dldqdt_t1}} + \text{ \eqref{eq:dldqdt_t2}} =& - 2  \gamma_x   \left\langle \nabla_{(x,\qmo)} \nabla_{\qmo} \sbrac{\log \frac{q_t}{\rho_{\theta_t, \eta_x}} }, T_{(x,\qmo)}  \nabla_{(x,\qmo)} \nabla_{\qmo} \sbrac{ \log \frac{q_t}{\rho_{\theta_t, \eta_x}} } \right\rangle_{q_t} \\
    &- \left\langle \nabla_{(x,\qmo)} \log \frac{q_t}{\rho_{\theta_t, \eta_x} }, \Gamma_x  \nabla_{(x,\qmo)} \log \frac{q_t}{\rho_{\theta_t, \eta_x} } \right\rangle_{q_t} .
\end{align*}
where $\Gamma_x = \tilde\Gamma_x + 
    \begin{pmatrix}
        0_{d_x} & 0_{d_x} \\
        0_{d_x} & \gamma_x I_{d_x}
    \end{pmatrix}$ is again positive semi-definite.

Given any p.s.d.~matrix $M \in \mathbb{R}^{d\times d}$ and a general matrix $A\in \mathbb{R}^{d\times d}$, it holds generically that 
\begin{align*}
    \langle A, MA\rangle = \iprod{A}{LL^\top A} = \iprod{L^\top A}{L^\top A} = \|L^\top A\|^2 \ge 0,
\end{align*}
where $M = L L^\top$ is Cholesky decomposition of $M$. Thus, we have the following upper bound:
\begin{align}
    \text{\eqref{eq:dldqdt_t1}} + \text{ \eqref{eq:dldqdt_t2}} &\le - \left\langle \nabla_{(x,\qmo)} \log \frac{q_t}{\rho_{\theta_t, \eta_x}}, \Gamma_{x} \nabla_{(x,\qmo)} \log \frac{q_t}{\rho_{\theta_t, \eta_x} } \right\rangle_{q_t} \\
    &= - \left \langle \nabla_{(x,u)} \delta_q \cal{F}_t, \Gamma_{x}  \nabla_{(x,u)} \delta_q \cal{F}_t \right \rangle_{q_t}. \label{eq:q_bound_summary}
\end{align}

\textbf{Cross Terms \eqref{eq:dldthetadt_t2} and \eqref{eq:dldqdt_t3}.} We will now deal with the cross terms of \eqref{eq:dldqdt_t3} and \eqref{eq:dldthetadt_t2}. In \Cref{prop:xterm_bound}, we show the following upper-bound: for all $\varepsilon > 0$, it holds that
\begin{align}
    \text{\eqref{eq:dldqdt_t3}} + \text{\eqref{eq:dldthetadt_t2}} &\le \varepsilon \left \langle \nabla_{(\theta, m)} \cal{F}_t, \Gamma_{\times}  \nabla_{(\theta, m)} \cal{F}_t \right\rangle
        + \frac{1}{\varepsilon} \left\langle \nabla_{(x,\qmo)} \delta_{q}\cal{F}_t, \nabla_{(x,\qmo)} \delta_{q} \cal{F}_t \right\rangle_{q_t}, \label{eq:xterm_bound_summary}
\end{align}
where $\Gamma_{\times} = K^2  
    \begin{pmatrix}
        \tau_\theta^2  I_{d_\theta} & \tau_\theta  \left ( \frac{\tau_{xu}+\tau_{\theta m }}{2}\right)  I_{d_\theta}  \\
        \tau_\theta  \left ( \frac{\tau_{xu}+\tau_{\theta m }}{2}\right) I_{d_\theta}  & \left( \left ( \frac{\tau_{xu}+\tau_{\theta m }}{2}\right)^2 + \tau_x^2 \right)  I_{d_\theta} 
    \end{pmatrix}$.

\textbf{Conclusion of Step 1.} Combining the results in (\ref{eq:dldthetadt_t1_v2}, \ref{eq:q_bound_summary}, \ref{eq:xterm_bound_summary}), we upper bound the time derivative of the Lyapunov function as
\begin{align*}
    \frac{\rm{d}}{\rm{d}t}\mathcal{L}_t \le&
    - \left \langle \nabla_{(\theta, m)}  \cal{F}_t, S_{(\theta, \tmo)}  \nabla_{(\theta, m)} \cal{F}_t \right \rangle 
    - \left \langle \nabla_{(x,u)} \delta_q \cal{F}_t, S_{(x,\qmo)}  \nabla_{(x,u)} \delta_q \cal{F}_t \right \rangle_{q_t} ,
\end{align*}
where 
\begin{align}
    S_{(x,\qmo)} &= \Gamma_{x} - \frac{1}{\varepsilon} I_{2d_x} =
    	\begin{pmatrix}
    		\left ( \tau_{x \qmo}\eta_x -\frac{1}{\varepsilon}\right ) I_{d_x} 
                &\frac{\tau_{\qmo} +  \tau_{x \qmo}  \gamma_x}{2}  \eta_x  I_{d_x} 
                    + {\tau_{x}}  \nabla^2_x \ell \\
    		\frac{\tau_{\qmo} +  \tau_{x \qmo}  \gamma_x}{2}  \eta_x  I_{d_x} 
                + {\tau_{x}}  \nabla^2_x \ell 
                &\left (\gamma_x  (2   \tau_{\qmo}  \eta_x + 1) - \frac{1}{\varepsilon}\right )  I_{d_x} + {\tau_{x \qmo}}  \nabla^2_x\ell
    	\end{pmatrix}, \label{eq:s_x} \\
	S_{(\theta, \tmo)} &= \Gamma_{\theta} - \varepsilon \Gamma_{\times} =
        \begin{pmatrix}
		  s_{\theta}  I_{d_\theta}
            &s_{\theta \tmo}  I_{d_\theta} - \tau_{\theta}  \nabla_\theta^2 \cal{F}_t \\
        s_{\theta \tmo}  I_{d_\theta} - \tau_{\theta}  \nabla_\theta^2 \cal{F}_t
            &s_{\tmo}  I_{d_\theta} - \tau_{\theta\tmo}  \nabla_\theta^2  \cal{F}_t \\
		\end{pmatrix}, \label{eq:s_theta}
\end{align}
with constants defined as
\begin{align*}
	s_{\theta} &:= \tau_{\theta\tmo} \eta_\theta 
        - \varepsilon  K^2  \tau_\theta^2, 
    \quad 
	s_{\theta \tmo} := \tau_{\tmo}  \eta_\theta
        + \frac{\tau_{\theta\tmo}  \gamma_\theta}{2}  \eta_\theta 
            - \varepsilon  K^2  \tau_\theta  \frac{\tau_{xu} + \tau_{\theta m }}{2}, \\
	s_{\tmo} &:= (2  \tau_{\tmo}  \eta_\theta + 1)  \gamma_\theta 
    - \varepsilon  K^2  \left( \left( \frac{\tau_{xu} + \tau_{\theta m }}{2} \right)^2 + \tau_x^2 \right ).
\end{align*}

\subsection{Proof of Step 2}
\label{sec:step_two_proof}

By the definitions of $T_{(\theta,\tmo)}$ and $T_{(x,\qmo)}$ in~\eqref{eq:t_def}, and those of $S_{(\theta,\tmo)}$ and $S_{(x,\qmo)}$ in~(\ref{eq:s_x}, \ref{eq:s_theta}),
\begin{align}
	S_{(\theta,\tmo)} - \varphi  C \left (T_{(\theta,\tmo)} + \frac{1}{2C}  I_{d_\theta} \right ) &=
	\begin{pmatrix}
		\alpha_\theta  I_{d_\theta}
            &\beta_\theta  I_{d_\theta} - {\tau_{\theta}}  \nabla_\theta^2 \cal{F} \\
		\beta_\theta  I_{d_\theta} - {\tau_\theta}  \nabla_\theta^2 \cal{F}
    		& \kappa_\theta  I_{d_\theta} - \tau_{\theta m}  \nabla_\theta^2 \cal{F}
	\end{pmatrix} \label{eq:show_psdx}, \\
	S_{(x, u)} - \varphi  C  \left (T_{(x,u)} + \frac{1}{2C}  I_{d_x} \right ) &=
	\begin{pmatrix}
		\alpha_x  I_{d_x}
            &\beta_x  I_{d_x} + {\tau_{x}}  \nabla_x^2\ell \\
        \beta_x  I_{d_x} + {\tau_x}  \nabla_x^2 \ell
    		& \kappa_x  I_{d_x} + {\tau_{x \qmo}}  \nabla_x^2 \ell
	\end{pmatrix}. \label{eq:show_psdtheta}
\end{align}
where
\begin{align*}
	\alpha_\theta &:= \tau_{\theta\tmo}  \eta_\theta 
        - \varepsilon  K^2  \tau_\theta^2 
            - \varphi  C  \left (\tau_{\theta} + \frac{1}{2C} \right ), \\
    \beta_\theta &:= \left (\tau_{\tmo} +\frac{\tau_{\theta\tmo}  \gamma_\theta}{2}\right )  \eta_\theta 
        - \varepsilon  K^2  \tau_\theta  \left ( \frac{\tau_{xu} + \tau_{\theta m }}{2}\right)
	       - \varphi  C   \frac{\tau_{\theta m} }{2},  \\
	\kappa_\theta &:= (2  \tau_{\tmo}  \eta_\theta + 1)  \gamma_\theta 
        - \varepsilon  K^2  \left ( \left ( \frac{\tau_{xu} + \tau_{\theta m }}{2}\right)^2 + \tau_x^2 \right ) 
            - \varphi  C  \left (\tau_{\tmo} + \frac{1}{2C} \right ), \\
	\alpha_x &= \tau_{x \qmo}  \eta_x - \frac{1}{\varepsilon} 
        - \varphi  C  \left (\tau_x + \frac{1}{2C} \right ) ,\\
	\beta_x &= \frac{(\tau_{\qmo} +  \tau_{x \qmo}  \gamma_x)  \eta_x}{2} 
        - \varphi  C  \frac{\tau_{x \qmo}}{2} ,\\
	\kappa_x &= \gamma_x  (2  \tau_{\qmo}  \eta_x + 1) - \frac{1}{\varepsilon}  
        - \varphi  C  \left (\tau_{\qmo} + \frac{1}{2C} \right ).
\end{align*}

\begin{proposition}
    Defining $\left( \alpha_\theta, \beta_\theta, \kappa_\theta, \alpha_x, \beta_x, \kappa_x \right)$ as above, assume that $K \ge 2C_{\cal{E}}$ if
    \begin{subequations}
    	\begin{align}
    		\tau_{\theta m} K - \alpha_\theta - \kappa_\theta &\le 0,  \\
    		\tau_\theta^2 K^2 + (\tau_{\theta m }\alpha_\theta - 2\beta_\theta \tau_\theta) K - \alpha_\theta \kappa_\theta + \beta^2_\theta &\le 0, \\
    		\tau_\theta^2 K^2 - (\tau_{\theta m }\alpha_\theta- 2\beta_\theta \tau_\theta) K - \alpha_\theta \kappa_\theta + \beta^2_\theta &\le 0, \\
    		{\tau_{xu}}K - \alpha_x - \kappa_x &\le 0, \\
    		{\tau_x^2}K^2 - \left ({\tau_{x \qmo}}\alpha_x - 2\beta_x {\tau_x} \right ) K - \alpha_x \kappa_x + \beta^2_x &\le 0, \\
    		{\tau_x^2}K^2 + \left ({\tau_{x \qmo}}\alpha_x - 2\beta_x {\tau_x} \right ) K - \alpha_x \kappa_x + \beta^2_x &\le 0,
    	\end{align}
    	\label{eq:flow_conditions}
    \end{subequations}
    hold for some rate $\varphi > 0$. The following lower bounds for $S_{(x,\qmo)}$ and $S_{(\theta,\tmo)}$ then hold:
	$$
	S_{(x,\qmo)} \succeq \varphi C \left (T_{(x,\qmo)} + \frac{1}{2C}I_{d_x} \right ), \quad S_{(\theta,\tmo)} \succeq \varphi C \left (T_{(\theta,\tmo)} + \frac{1}{2C}I_{d_\theta} \right ),
	$$
	where 
	$C:=\min\{C_{\cal{E}},\eta_\theta,\eta_x\}$.
\end{proposition}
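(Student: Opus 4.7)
The plan is to reduce each of the two matrix inequalities $S - \varphi C(T + \tfrac{1}{2C} I) \succeq 0$ to a family of scalar $2 \times 2$ positive-semidefiniteness checks indexed by the eigenvalues of the relevant Hessian, and then to verify those checks directly from inequalities (a)--(f) in \eqref{eq:flow_conditions}.

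Both \eqref{eq:show_psdx} and \eqref{eq:show_psdtheta} share the common block structure
\begin{equation*}
    M(H) = \begin{pmatrix} \alpha I_d & \beta I_d + \tau H \\ \beta I_d + \tau H & \kappa I_d + \tau' H \end{pmatrix},
\end{equation*}
for a symmetric Hessian $H$ ($H = \nabla_\theta^2 \cal{F}_t$ in \eqref{eq:show_psdtheta}, with $\tau = -\tau_\theta$ and $\tau' = -\tau_{\theta m}$; $H = \nabla_x^2 \ell$ in \eqref{eq:show_psdx}, with $\tau = \tau_x$ and $\tau' = \tau_{x u}$) and scalars $\alpha, \beta, \kappa$ as specified in the proposition. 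I would diagonalise $H = P \Lambda P^\top$, conjugate $M(H)$ by the orthogonal matrix $P \oplus P$, and permute coordinates so that the two components belonging to each eigenspace are grouped together. This decouples $M(H)$ into a direct sum of $2 \times 2$ blocks
\begin{equation*}
    B_i = \begin{pmatrix} \alpha & \beta + \tau \lambda_i \\ \beta + \tau \lambda_i & \kappa + \tau' \lambda_i \end{pmatrix}, \qquad i = 1, \ldots, d.
\end{equation*}
Since orthogonal conjugation preserves positive semi-definiteness, $M(H) \succeq 0$ is equivalent to $B_i \succeq 0$ for every $i$, which for a symmetric $2\times 2$ matrix is further equivalent to the joint non-negativity of $\mathrm{tr}(B_i)$ and $\det(B_i)$.

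By Assumption~\ref{ass:gradlip} both Hessians have operator norm bounded by $K$, so each $\lambda_i$ lies in $[-K, K]$ and it suffices to check the trace and determinant conditions for all $\lambda \in [-K,K]$. The trace $\alpha + \kappa + \tau' \lambda$ is affine in $\lambda$, so it is non-negative throughout the interval precisely when it is non-negative at the two endpoints; after matching signs this reproduces (a) for the $\theta$-block and (d) for the $x$-block. The determinant $\alpha(\kappa + \tau'\lambda) - (\beta + \tau\lambda)^2 = -\tau^2 \lambda^2 + (\alpha\tau' - 2\beta\tau)\lambda + \alpha\kappa - \beta^2$ is a concave quadratic in $\lambda$ (leading coefficient $-\tau^2 \le 0$), so its non-negativity on $[-K,K]$ is likewise equivalent to non-negativity at $\lambda = \pm K$; evaluating at both endpoints gives exactly the pairs (b)--(c) for the $\theta$-block and (e)--(f) for the $x$-block.

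A symmetric $2\times 2$ matrix with non-negative trace and non-negative determinant has real eigenvalues whose sum and product are both non-negative, which forces each eigenvalue to be non-negative and yields $B_i \succeq 0$. The only genuinely delicate step in executing this plan is sign bookkeeping, namely ensuring that the endpoint checks at $\lambda = K$ and $\lambda = -K$ pair up with (b)/(c) and (e)/(f) exactly as written, after absorbing the signs of $\tau$ and $\tau'$ inherited from whether $H$ is $\nabla_\theta^2 \cal{F}_t$ or $\nabla_x^2 \ell$. The hypothesis $K \ge 2C_{\cal{E}}$ appears to play no essential role in this algebraic reduction and I would expect it to enter only elsewhere, for instance to ensure positivity of the weight matrices $T_{(\theta,m)}$ and $T_{(x,u)}$ used in the Lyapunov function.
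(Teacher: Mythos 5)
Your proposal is correct and follows essentially the same route as the paper: both observe that $S_{(\theta,\tmo)}-\varphi C\left(T_{(\theta,\tmo)}+\tfrac{1}{2C}I\right)$ and $S_{(x,\qmo)}-\varphi C\left(T_{(x,\qmo)}+\tfrac{1}{2C}I\right)$ share the block structure
\begin{equation*}
\begin{pmatrix}\alpha I & \beta I+aH\\ \beta I+aH & \kappa I+cH\end{pmatrix},\qquad -KI\preceq H\preceq KI,
\end{equation*}
and both reduce positive semi-definiteness to scalar conditions indexed by the eigenvalues of $H$, which by affinity/concavity need only be checked at $\lambda=\pm K$. The paper packages this as \Cref{prop:psd_conds} via the characteristic polynomial of the block matrix; you go directly to the $2\times2$ blocks and check trace and determinant, which is the same computation in a slightly cleaner dress.

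Two small items you gloss over merit a sentence in a full writeup. First, $-KI_{d_\theta}\preceq\nabla^2_\theta\cal{F}\preceq KI_{d_\theta}$ does not follow directly by reading \Cref{ass:gradlip} as a statement about $\cal{F}$; it follows from $-KI\preceq\nabla^2_{(\theta,x)}\ell\preceq KI$ by passing the quadratic form through the integral $\nabla^2_\theta\cal{F}(\theta,\tmo,q)=-\int\nabla^2_\theta\ell(\theta,x)\,q(\mathrm{d}x\,\mathrm{d}\qmo)$, which is precisely the last short argument in the paper's proof. Second, checking the trace at both endpoints $\lambda=\pm K$ gives \emph{two} inequalities per block, whereas \eqref{eq:flow_conditions} lists only one for each ((a) and (d)); the resolution is that the sign of the $H$-coefficient in the lower-right block ($-\tau_{\theta\tmo}\le 0$ for the $\theta$-block, $\tau_{x\qmo}\ge 0$ for the $x$-block, given non-negative $\tau$'s) singles out which endpoint is binding and makes the other automatic, and this is exactly the case split that appears in \Cref{prop:psd_conds}. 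Your observation that $K\ge 2C_{\cal{E}}$ plays no role in the algebraic reduction is consistent with the paper's own proof, where that hypothesis never visibly appears.
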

\begin{proof}
Note the matrices in (\ref{eq:show_psdx}, \ref{eq:show_psdtheta}) have the same form as that in \eqref{eq:psd_cond_form}. As we show below,
\begin{equation}\label{eq:fdsu98f0nau8ifai}
    -KI_{d_x} \preceq \nabla_x^2 \ell \preceq KI_{d_x},\quad-KI_{d_\theta} \preceq  \nabla^2_\theta \cal{F} \preceq KI_{d_\theta}.
\end{equation}
Then, applying \Cref{prop:psd_conds} tells us that (\ref{eq:show_psdx}, \ref{eq:show_psdtheta}) are positive semidefinite if the conditions \eqref{eq:flow_conditions} are satisfied.

Now, to prove the inequalities in~\eqref{eq:fdsu98f0nau8ifai}. \Cref{ass:gradlip} and \citep[Lemma 1.2.2]{nesterov2003introductory} imply the first two inequalities and
$$-KI_{d_\theta}\preceq \nabla^2_\theta \ell\preceq KI_{d_\theta}.$$
The other two inequalities in~\eqref{eq:fdsu98f0nau8ifai} follow from the above. To see this, for $\nabla^2_\theta \cal{F} \preceq KI_{d_\theta}$, we have 
$$
\iprod{v}{(KI_{d_\theta} - \nabla^2_\theta \cal{F}) v} = \int \iprod{v}{(KI_{d_\theta} + \nabla^2_\theta \ell) v}q(\rm{d}x\times \rm{d}u),
$$
and since $\iprod{v}{\nabla^2_\theta [\ell] v} \ge -K \|v\|^2$, we have shown that $\iprod{v}{(KI_{d_\theta} - \nabla^2_\theta \cal{F}) v} \ge 0$ implying the desired result of $\nabla^2_\theta \cal{F} \preceq KI_{d_\theta}$. A similar argument can be made for $-KI_{d_\theta} \preceq  \nabla^2_\theta \cal{F}$.
\end{proof}
\subsection{Examples of \eqref{eq:flow_conditions} holding}
\label{sec:eq_flow_conditions true}
We verified the above with a symbolic calculator (see \url{https://github.com/jenninglim/mpgd}) written in Mathematica for the choices of $\eta_x, \eta_\theta, \gamma_x, \gamma_\theta$ hold for the following choices:
\begin{enumerate}
    \item Rate $\varphi=\frac{1}{10}$, momentum parameters $\gamma_x = \frac{3}{2}$, $\gamma_\theta = 3$, $\eta_x = 2K$, $\eta_\theta =2 K$, and elements of the Lyapunov function
    $\tau_\theta=\frac{3}{8}, \tau_{\theta\tmo}=\frac{5}{4},\tau_\tmo = 2,\tau_x=\frac{1}{4} ,\tau_{x\qmo} = \frac{5}{4},\tau_\qmo=\frac{7}{4}$, and $\varepsilon = 15$.
    \item Rate $\varphi=\frac{1}{30}$, momentum parameters $\gamma_x = \frac{6}{5}$, $\gamma_\theta = \frac{5}{2}$, $\eta_x = 2K$, $\eta_\theta =2 K$, and elements of the Lyapunov function
    $\tau_\theta=\frac{3}{8}, \tau_{\theta\tmo}=\frac{5}{4},\tau_\tmo = 2,\tau_x=\frac{1}{4} ,\tau_{x\qmo} = 1,\tau_\qmo=\frac{3}{2}$, and $\varepsilon = 15$.
\end{enumerate}
\subsection{Supporting Proofs and Derivations}
\subsubsection{The derivative of $\cal{L}_t$ along the flow}\label{sec:first_variation}
Here, we prove~(\ref{eq:dldthetadt},\ref{eq:dldqdt}).
\begin{proposition}
	The derivative of the Lyapunov function $\cal{L}$ is given by
	\begin{align*}
		\frac{\rm{d}}{\rm{d}t}\mathcal{L}_t
		=& - \iprod{\nabla_{(\theta,m)}\mathcal{L}_t}{\begin{pmatrix}0 & -I_{d_\theta}\\
				I_{d_\theta} & \gamma_{\theta}I_{d_\theta}
			\end{pmatrix}\nabla_{(\theta,m)}\cal{F}_t} -\iprod{\nabla_{(x,\qmo)}\delta_{q}\mathcal{L}_t}{\begin{pmatrix}0 & -I_{d_x}\\
				I_{d_x} & \gamma_{\theta}I_{d_x}
			\end{pmatrix}\nabla_{(x,\qmo)}\delta_{q}\cal{F}_t}_{q_t},
	\end{align*}
	where the first variation is given by
	\begin{align*}
		\delta_{q}\mathcal{L}_t =&\delta_q \cal{F}_t +\delta_q \cal{L}^1_t + \delta_q \cal{L}^2_t \\
		=& \log \frac{q_t}{\rho_{\theta_t, \eta_x}} + 1
		-2\iprod{ \nabla_{(\theta,\tmo)} \log {\rho_{\theta_t}}
		}{T_{(\theta,\tmo)} \nabla_{(\theta,m)}\cal{F}_t} \\
		&+ \frac{4}{\cal{H}_t}  \nabla_{(x,\qmo)} ^* T_{(x,\qmo)}\nabla_{(x,\qmo)} \cal{H}_t,
	\end{align*}
	with $\nabla^*$ and $\cal{H}_t := \cal{H}_{\theta_t, q_t}$ defined in \Cref{prop:fv_L2}.
	\label{prop:fv_lyapunov}
\end{proposition}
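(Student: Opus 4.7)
The plan is to decompose $\mathcal{L} = (\mathcal{F} - \mathcal{E}^*) + \mathcal{L}^1 + \mathcal{L}^2$, with $\mathcal{L}^1 := \|\nabla_{(\theta,m)}\mathcal{F}\|^2_{T_{(\theta,m)}}$ and $\mathcal{L}^2 := \|\nabla_{(x,u)}\delta_q\mathcal{F}\|^2_{T_{(x,u)}}$, and then apply the chain rule on the extended space $\mathbb{R}^{2d_\theta}\times\mathcal{P}(\mathbb{R}^{2d_x})$ component-wise. For the Euclidean part, the ordinary chain rule supplies $\langle\nabla_{(\theta,m)}\mathcal{L}_t,(\dot\theta_t,\dot m_t)\rangle$, and substituting $(\dot\theta_t,\dot m_t) = -\mathsf{D}_{\gamma_\theta}\nabla_{(\theta,m)}\mathcal{F}_t$ gives one of the two inner products claimed. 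For the $q$-part, one uses the first-variation formalism of \Cref{appen:gf} to write $\int \delta_q\mathcal{L}_t\cdot\dot q_t\,\mathrm{d}x\,\mathrm{d}u$, substitutes $\dot q_t = \nabla_{(x,u)}\cdot(q_t\mathsf{D}_{\gamma_x}\nabla_{(x,u)}\delta_q\mathcal{F}_t)$, and integrates by parts to recover the $q_t$-weighted inner product. A routine derivative-integral interchange must be justified here, but is straightforward under \Cref{ass:gradlip} with mild integrability of $q_t$.

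With the overall shape of $\dot{\mathcal{L}}_t$ in hand, it remains to compute $\delta_q\mathcal{L}_t$ summand-by-summand. The contribution from $\mathcal{F}$ follows from the standard KL-type first-variation calculation carried out at the end of \Cref{sec:fv}: the momentum Gaussian $r_{\eta_x}$ contributes nothing to $q$-perturbations, and the $\frac{\eta_\theta}{2}\|m\|^2$ term is $q$-independent, yielding $\delta_q\mathcal{F} = \log(q/\rho_{\theta,\eta_x}) + 1$. For $\mathcal{L}^1$, the only $q$-dependence enters through $\nabla_\theta\mathcal{F}(\theta,m,q) = -\int \nabla_\theta\log\rho_\theta(x)\,q(\mathrm{d}x,\mathrm{d}u)$, which is linear in $q$; combined with the symmetry of $T_{(\theta,m)}$ and the fact that the $m$-component of $\nabla_{(\theta,m)}\log\rho_\theta$ vanishes identically, this yields $\delta_q\mathcal{L}^1 = -2\langle\nabla_{(\theta,m)}\log\rho_\theta,\,T_{(\theta,m)}\nabla_{(\theta,m)}\mathcal{F}\rangle$ directly.

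The main obstacle is the first variation of $\mathcal{L}^2$, since the functional depends on $q$ both through the weighting measure and through $\nabla_{(x,u)}\delta_q\mathcal{F} = \nabla_{(x,u)}\log(q/\rho_{\theta,\eta_x})$ inside the integrand. The key trick, encapsulated in the supporting \Cref{prop:fv_L2}, is to change variables to $\mathcal{H}_{\theta,q} := \sqrt{q/\rho_{\theta,\eta_x}}$, under which $\nabla_{(x,u)}\log(q/\rho_{\theta,\eta_x}) = 2\nabla_{(x,u)}\mathcal{H}/\mathcal{H}$ and $q = \mathcal{H}^2\rho_{\theta,\eta_x}$. This recasts the density-weighted quadratic form as the bilinear expression $\mathcal{L}^2 = 4\int\langle\nabla_{(x,u)}\mathcal{H},T_{(x,u)}\nabla_{(x,u)}\mathcal{H}\rangle\,\rho_{\theta,\eta_x}\,\mathrm{d}x\,\mathrm{d}u$ against the fixed reference measure $\rho_{\theta,\eta_x}$, so that $q$ only enters through $\mathcal{H}$. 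A perturbation $q\mapsto q+\varepsilon\chi$ then gives $\dot{\mathcal{H}}[\chi] = \chi/(2\mathcal{H}\rho_{\theta,\eta_x})$, and differentiating the bilinear form together with the adjoint operator $\nabla_{(x,u)}^*$ of \Cref{sec:adjoint} (crucially, computed with respect to the reference measure $\rho_{\theta,\eta_x}$ rather than Lebesgue) yields $\delta_q\mathcal{L}^2 = (4/\mathcal{H})\nabla_{(x,u)}^*T_{(x,u)}\nabla_{(x,u)}\mathcal{H}$. Summing the three contributions delivers the claimed formula for $\delta_q\mathcal{L}_t$ and completes the proof.
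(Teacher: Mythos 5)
Your proposal is correct and lands on the same decomposition $\mathcal{L} = (\mathcal{F}-\mathcal{E}^*) + \mathcal{L}^1 + \mathcal{L}^2$, the same chain-rule structure for $\dot{\mathcal{L}}_t$, and the same closed-form first variations. Where it diverges from the paper's proof is in how the first variations of $\mathcal{L}^1$ and $\mathcal{L}^2$ are extracted. The paper instantiates a general ``variational integral'' framework (types VI-I and VI-II in the appendix): it writes out the integrand $f$ explicitly, computes $\nabla_{(3)}f$, $\nabla_{(4)}f$, applies the generic first-variation formulas, and only afterwards recognizes the resulting expression as $\frac{4}{\mathcal{H}}\nabla_{(x,u)}^*T_{(x,u)}\nabla_{(x,u)}\mathcal{H}$. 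You instead bypass that machinery with two ad hoc but tidier observations: for $\mathcal{L}^1$, you note that $\nabla_\theta\mathcal{F}$ is \emph{linear} in $q$ (the $m$-component being $q$-independent), so differentiating a quadratic form $v\mapsto\langle v,Tv\rangle$ along a linear perturbation immediately yields $-2\langle\nabla_{(\theta,m)}\log\rho_\theta,T\nabla_{(\theta,m)}\mathcal{F}\rangle$ once symmetry of $T$ is invoked; for $\mathcal{L}^2$, you change variables to $\mathcal{H}$ \emph{up front}, which turns the $q$-weighted quadratic form into a genuine bilinear form $4\langle\nabla\mathcal{H},T\nabla\mathcal{H}\rangle_{\rho_{\theta,\eta_x}}$ against a fixed reference measure, so that the adjoint $\nabla^*_{(x,u)}$ (taken w.r.t.\ $\rho_{\theta,\eta_x}$, as you correctly emphasize) drops out of a single integration by parts rather than emerging after a quotient-rule computation. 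The paper's route is more systematic and reusable; yours is shorter and makes the role of $\mathcal{H}$ and the adjoint structurally transparent from the start. Both are valid and reach identical formulas.
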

\begin{proof}
	From the linearity of $\cal{L}$, we have
	$$
	\dot{\mathcal{L}}_t= \dot{\cal{F}}_t + \dot{\cal{L}^1_t} + \dot{\cal{L}^2_t},
	$$
	where 
	\begin{align*}
		\mathcal{L}_t^1 &:=\iprod{\nabla_{(\theta,m)} \cal{F}_t}{T_{(\theta,m)}\nabla_{(\theta,m)} \cal{F}_t},\\
		\mathcal{L}_t^2&:={\iprod{\nabla_{(x,\qmo)} \delta_{q} \cal{F}_t}{T_{(x,u)}\nabla_{(x,\qmo)} \delta_{q} \cal{F}_t}}_{q_t}.
	\end{align*}
	From \Cref{sec:gc_on_product_space}, it can be seen that the time derivatives of $\cal{F}_t$, $\cal{L}^1_t$, and $\cal{L}^2_t$, are given by
	\begin{align*}
		\dot{\cal{F}_t} &= \iprod{\nabla_{(\theta, \tmo)} \cal{F}_t}{{(\dot\theta_t, \dot\tmo_t)}} + \iprod{\nabla_{(x,\qmo)} \delta_q \cal{F}_t}{(\dot{x}_t, \dot\qmo_t)}_{q_t}, \\
		\dot{\cal{L}^1_t} &= \iprod{\nabla_{(\theta, \tmo)} \cal{L}^1_t}{{(\dot\theta_t, \dot\tmo_t)}} + \iprod{\nabla_{(x,\qmo)} \delta_q \cal{L}^1_t}{(\dot{x}_t, \dot\qmo_t)}_{q_t},\\
		\dot{\cal{L}^2_t} &=\iprod{\nabla_{(\theta, \tmo)} \cal{L}^2_t}{{(\dot\theta_t, \dot\tmo_t)}} + \iprod{\nabla_{(x,\qmo)} \delta_q \cal{L}^2_t}{(\dot{x}_t, \dot\qmo_t)}_{q_t},
	\end{align*}
	where ${(\dot\theta_t, \dot\tmo_t)} = \begin{pmatrix}0 & -I_{d_\theta}\\
		I_{d_\theta} & \gamma_{\theta}I_{d_\theta}
	\end{pmatrix}\nabla_{(\theta,m)}\cal{F}_t$ and $(\dot{x}_t, \dot\qmo_t) = \begin{pmatrix}0 & -I_{d_x}\\
		I_{d_x} & \gamma_{x}I_{d_x}
	\end{pmatrix}\nabla_{(x,\qmo)}\delta_{q}\cal{F}_t$.
	From \Cref{prop:fv}, \Cref{prop:fv_L1}, and \Cref{prop:fv_L2}, and linearity of the inner product, we obtain as desired.
\end{proof}
\begin{proposition}
	\label{prop:fv}
	The first variation of $\cal{F}$ is given by
	$$
	\delta_q \cal{F} [z](x, \qmo) = \log \frac{q(x,\qmo)}{\rho_{\theta, \eta_x}(x, \qmo)} + 1,
	$$
	where $z= (\theta, \tmo, q)$.
\end{proposition}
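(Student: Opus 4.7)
The plan is to recognize $\cal{F}$ as belonging to the (VI-I) class of functionals treated in Appendix~\ref{sec:fv}, at which point the formula~\eqref{eq:fv_t1} gives the answer almost immediately. First I would split $\cal{F}(\theta,m,q) = \cal{F}_1(\theta,q) + \frac{\eta_\theta}{2}\|m\|^2$, where
\[
\cal{F}_1(\theta,q) := \int \log\!\left(\frac{q(x,u)}{\rho_{\theta,\eta_x}(x,u)}\right) q(x,u)\,\mathrm{d}x\,\mathrm{d}u.
\]
Since the momentum penalty $\tfrac{\eta_\theta}{2}\|m\|^2$ is independent of $q$, it contributes $0$ to the first variation in $q$, so it suffices to compute $\delta_q \cal{F}_1[\theta,q]$ with $\theta$ held fixed.

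Next I would exhibit $\cal{F}_1$ as a (VI-I)-type functional in the sense of~\eqref{eq:vi_type1}, with state variable $(x,u)\in\r^{2d_x}$ in place of $x$, by taking
\[
f\bigl((x,u),\,\theta,\,a,\,g\bigr) := a\log\frac{a}{\rho_{\theta,\eta_x}(x,u)},
\]
so that $f$ does not depend on the gradient argument $g$. The partial derivatives needed for~\eqref{eq:fv_t1} are then
\[
\nabla_{(3)} f = \log\frac{a}{\rho_{\theta,\eta_x}(x,u)} + 1, \qquad \nabla_{(4)} f = 0,
\]
and substituting $a = q(x,u)$ into the formula~\eqref{eq:fv_t1} yields
\[
\delta_q \cal{F}_1[\theta,q](x,u) = \log\frac{q(x,u)}{\rho_{\theta,\eta_x}(x,u)} + 1,
\]
which is the claimed expression.

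I do not expect any substantive obstacle: the only mild point is checking that the derivative/integral interchange implicit in the derivation of~\eqref{eq:fv_t1} applies here, which follows under the standing regularity conventions on $q$ (positivity, sufficient decay, and $q\in\cal{P}(\r^{2d_x})$ with finite second moments, as declared in the Notation table). To close the argument I would verify the defining identity~\eqref{eq:fv_extended} directly for $\cal{F}$ on a perturbation $q_\varepsilon = (i+\varepsilon v)_\#q$, $\theta_\varepsilon=\theta+\varepsilon\sigma$, $m_\varepsilon=m$: differentiating $\varepsilon\mapsto\cal{F}(\theta_\varepsilon,m,q_\varepsilon)$ at $\varepsilon=0$, the $\theta$-part produces $\langle\nabla_\theta\cal{F},\sigma\rangle$ and the $q$-part produces $-\int [\log(q/\rho_{\theta,\eta_x})+1]\,\nabla\cdot(qv)\,\mathrm{d}x\,\mathrm{d}u$, confirming both the formula for $\delta_q\cal{F}$ and consistency with the strong subdifferential identification in Appendix~\ref{sec:gc_on_product_space}.
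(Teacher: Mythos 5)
Your proposal is correct and follows essentially the same route as the paper: identify the integral part of $\cal{F}$ as a \eqref{eq:vi_type1}-typed functional with $f$ depending on the density but not its gradient, and read off the first variation from formula~\eqref{eq:fv_t1}. You are marginally more careful than the paper in explicitly separating off the $m$-dependent term $\tfrac{\eta_\theta}{2}\|m\|^2$ (which is not an integral against $q$ and so is not literally part of a (VI-I)-typed expression) and observing that it contributes nothing to the $q$-variation, but this does not change the substance of the argument.
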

\begin{proof}
	It can be seen that $\cal{F}$ is \eqref{eq:vi_type1}-typed equation with $f((\theta, \tmo), (x, \qmo), q, g) = q \log \frac{q}{\rho_{\theta, \eta_x} (x)}$. Thus, using the formula \eqref{eq:fv_t1}, we obtain as its first variation
	$$
	\delta_q \cal{F} [z](x, \qmo) = \log \frac{q(x,\qmo)}{\rho_{\theta, \eta_x}(x, \qmo)} + 1,
	$$
	Hence, we have as desired.
\end{proof}
For the first variation of $\mathcal{L}^1$, we have the following result:
\begin{proposition}
	\label{prop:fv_L1}
	The first variation of  $\mathcal{L}^1$ is given by
	$$
	\delta_q \mathcal{L}^1[z](x, \qmo) = -2\iprod{ \nabla_{(\theta,\tmo)} \log {\rho_{\theta}(x)}
	}{T_{(\theta,\tmo)} \nabla_{(\theta,m)}\cal{F}[z]},
	$$
 where $z= (\theta, \tmo, q)$.
\end{proposition}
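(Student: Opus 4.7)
The plan is a direct computation exploiting the product structure of $\mathcal{L}^1$. Observe that $\mathcal{L}^1$ is a quadratic form in $\nabla_{(\theta,\tmo)}\cal{F}$ with a $q$-independent kernel $T_{(\theta,\tmo)}$. Moreover, by the definition of $\cal{F}$ in~\eqref{eq:hamilboth}, the $\tmo$-component $\nabla_\tmo \cal{F} = \eta_\theta m$ is also independent of $q$. Thus the only $q$-dependence in $\mathcal{L}^1$ enters through
\begin{equation*}
\nabla_\theta \cal{F}(\theta,\tmo,q) = -\int \nabla_\theta \log \rho_\theta(x)\, q(\mathrm{d}x,\mathrm{d}\qmo),
\end{equation*}
which is \emph{linear} in $q$.

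First, I consider a signed perturbation $q \mapsto q + \varepsilon \chi$ with $\int \chi \,\mathrm{d}x\,\mathrm{d}\qmo = 0$ (as in the definition of the first variation in Appendix C). Under the Lipschitz assumption~\ref{ass:gradlip}, dominated convergence justifies interchanging derivative and integral, giving the expansion
\begin{equation*}
\nabla_{(\theta,\tmo)}\cal{F}(\theta,\tmo,q+\varepsilon\chi) = \nabla_{(\theta,\tmo)}\cal{F}(\theta,\tmo,q) + \varepsilon\,\Delta(\chi) + o(\varepsilon),
\end{equation*}
where $\Delta(\chi)$ has zero $\tmo$-component and $\theta$-component equal to $-\int \nabla_\theta \log\rho_\theta(x)\,\chi(\mathrm{d}x,\mathrm{d}\qmo)$.

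Next, I substitute this into $\mathcal{L}^1$, use the bilinearity of the inner product together with the symmetry of $T_{(\theta,\tmo)}$ to obtain
\begin{equation*}
\mathcal{L}^1(\theta,\tmo,q+\varepsilon\chi) = \mathcal{L}^1(\theta,\tmo,q) + 2\varepsilon\bigl\langle \Delta(\chi),\, T_{(\theta,\tmo)}\nabla_{(\theta,\tmo)}\cal{F}[z]\bigr\rangle + o(\varepsilon).
\end{equation*}
Swapping the finite-dimensional inner product with the integral defining $\Delta(\chi)$ and recognising that the $\tmo$-component of $\Delta$ vanishes, the linear term becomes
\begin{equation*}
-2\varepsilon\int \bigl\langle \nabla_{(\theta,\tmo)}\log\rho_\theta(x),\, T_{(\theta,\tmo)}\nabla_{(\theta,\tmo)}\cal{F}[z]\bigr\rangle\, \chi(\mathrm{d}x,\mathrm{d}\qmo),
\end{equation*}
where we harmlessly write $\nabla_{(\theta,\tmo)}\log\rho_\theta(x)$ with a zero $\tmo$-entry. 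Comparing with the definition of the first variation then yields the claimed expression.

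The computation is essentially routine; the only real care required is the derivative-integral interchange and the handling of the trivial $m$-component. These are handled uniformly via \Cref{ass:gradlip} and the observation that $\rho_\theta$ and hence $\nabla_\theta\log\rho_\theta$ do not depend on $\tmo$ or $\qmo$, so no additional work beyond the chain rule for the bilinear form is needed.
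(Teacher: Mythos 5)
Your proof is correct and arrives at the stated formula, but it takes a genuinely different route from the paper. The paper expresses $\mathcal{L}^1$ as a double integral of type~\eqref{eq:vi_type2}, identifies the integrand $f((\theta,\tmo),(x,\qmo),(x',\qmo'),q,q')$, and then invokes the general first-variation formula~\eqref{eq:fv_t2} for such functionals, combining the $\nabla_{(4)}f$ and $\nabla_{(5)}f$ contributions via symmetry to get the factor of~$2$. You instead observe that $\mathcal{L}^1$ is a quadratic form with a $q$-independent kernel $T_{(\theta,\tmo)}$ applied to $\nabla_{(\theta,\tmo)}\cal{F}$, whose $\tmo$-component is $q$-independent and whose $\theta$-component is affine in $q$; this lets you differentiate the bilinear form directly under a signed perturbation $q\mapsto q+\varepsilon\chi$, with the factor of $2$ coming from the symmetry of $T$ and the minus sign from $\nabla_\theta\cal{F}=-\int\nabla_\theta\log\rho_\theta\,\mathrm{d}q$. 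Your argument is more elementary and avoids the abstract \eqref{eq:vi_type2} machinery; it also sidesteps a mild bookkeeping subtlety in the paper's route, where the chosen decomposition of $f$ produces terms constant in $(x',\qmo')$ whose literal $\mathrm{d}x'\,\mathrm{d}\qmo'$-integral in~\eqref{eq:fv_t2} diverges and is implicitly discarded as a harmless additive constant. The price is that your argument is specific to the quadratic structure of $\mathcal{L}^1$, whereas the paper's formula~\eqref{eq:fv_t2} is a reusable template (it is also used elsewhere). One small caveat: you work with the Eulerian perturbation $q+\varepsilon\chi$, while the rigorous development in Appendix~C.2 defines $\delta_q$ via push-forwards $(i+\epsilon v)_\#q$; the two notions coincide up to an additive constant (here zero), so this is fine, but it is worth being explicit that $\chi=-\nabla\cdot(qv)$ realizes your perturbation within that framework.
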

\begin{proof}
	It can be seen that $\mathcal{L}^1$ is a variational integral of type \eqref{eq:vi_type2} with
	\begin{align*}
		f((\theta, \tmo), (x, \qmo), (x', \qmo'), q, q') =& qq' \iprod{ \nabla_\theta \log {\rho_\theta(x)}}{T_{\theta \theta} \nabla_\theta \log{\rho_\theta (x')} } \\
		&- q\iprod{\nabla_\theta \log {\rho_\theta (x)}}{T_{\theta \tmo}\tmo} \\
		&- q'\iprod{\tmo}{T_{\tmo \theta }\nabla_\theta \log {\rho_\theta (x')}} \\
		&+ \iprod{\tmo}{T_{\tmo \tmo }\tmo},
	\end{align*}
	where we write $T_{\theta \theta}$ for the upper left block of $T_{(\theta,\tmo)}$, and similarly for $T_{\theta \tmo}$, and $T_{\tmo \tmo}$.
	
	Using the formula \eqref{eq:fv_t2} and the fact that
	\begin{align*}
		\nabla_{(4)} f((\theta, \tmo), (x, \qmo), (x', \qmo'), q, q')
  =& q' \iprod{ \nabla_\theta \log {\rho_\theta(x)}}{T_{\theta\theta} \nabla_\theta \log{\rho_\theta(x')} } - \iprod{\nabla_\theta \log {\rho_\theta(x)}}{T_{\theta \tmo}\tmo},
	\end{align*}
	For the first term of \eqref{eq:fv_t2}, we have 
	\begin{align*}
		\int \nabla_{(4)} f((\theta, \tmo), (x, \qmo), (x', \qmo'), q(x,\qmo), q(x',\qmo')) \rm{d}x' \rm{d}\qmo' \\
  =- \iprod{\nabla_{(\theta,\tmo)} \log {\rho_\theta(x)}}{T_{(\theta, \tmo)}\nabla_{(\theta, \tmo)} \cal{F}[z]},
	\end{align*}
	and by symmetry, it follows similarly for the last term of \eqref{eq:fv_t2}. We have obtained as desired.
\end{proof}

\begin{proposition}
	\label{prop:fv_L2}
	The first variation of $\mathcal{L}^2$ is given by
    \begin{equation*}
        \delta_{q}\mathcal{L}^2[z] = \frac{4}{\cal{H}_{\theta, q}}  \nabla_{(x,\qmo)} ^* T_{(x,\qmo)}\nabla_{(x,\qmo)} \cal{H}_{\theta, q},
    \end{equation*}
	where $\cal{H}_{\theta, q}(x, \qmo) := \sqrt{\frac{q(x,\qmo)}{\rho_{\theta, \eta_x}(x,\qmo)}}$ and the adjoint operator $\nabla_{(x,\qmo)} ^*$ (see \Cref{sec:adjoint}) is defined as
	$$\nabla_{(x,\qmo)} ^*(f)(x,\qmo):= - \iprod{\nabla_{(x,\qmo)}  \log \rho_{\theta, \eta_x}(x,\qmo)}{ f(x,\qmo)} - \nabla_{(x,\qmo)} \cdot [ f](x,\qmo),
	$$
	where $\nabla_{x}\cdot$ is the divergence operator w.r.t. $x$.
\end{proposition}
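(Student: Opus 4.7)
The plan is to exploit the square-root reparametrization $\mathcal{H}_{\theta,q} = \sqrt{q/\rho_{\theta,\eta_x}}$ to recast $\mathcal{L}^2$ in a form whose $q$-dependence is transparent, and then integrate by parts against the weight $\rho_{\theta,\eta_x}$ so the adjoint operator $\nabla_{(x,u)}^*$ emerges naturally. Concretely, from \Cref{prop:fv} we have $\nabla_{(x,u)} \delta_q \mathcal{F} = \nabla_{(x,u)} \log(q/\rho_{\theta,\eta_x}) = (2/\mathcal{H}_{\theta,q}) \nabla_{(x,u)} \mathcal{H}_{\theta,q}$, and since $q = \mathcal{H}_{\theta,q}^2 \rho_{\theta,\eta_x}$ the definition of $\mathcal{L}^2$ reduces to
\begin{equation*}
\mathcal{L}^2(z) = 4 \int \iprod{\nabla_{(x,u)} \mathcal{H}_{\theta,q}}{T_{(x,u)} \nabla_{(x,u)} \mathcal{H}_{\theta,q}}\, \rho_{\theta,\eta_x}(x,u) \, dx\, du.
\end{equation*}
In this form only $\mathcal{H}_{\theta,q}$ depends on $q$, the weight $\rho_{\theta,\eta_x}$ does not.

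Next I would take the perturbation $q_\epsilon := (i+\epsilon v)_\# q$ used in \Cref{sec:fv} and differentiate under the integral (justified by the same regularity assumptions invoked throughout the appendix). From $2\mathcal{H}_{\theta,q_\epsilon} \partial_\epsilon \mathcal{H}_{\theta,q_\epsilon} = \partial_\epsilon q_\epsilon/\rho_{\theta,\eta_x}$ and the identity $\partial_\epsilon q_\epsilon|_{\epsilon=0} = -\nabla_{(x,u)} \cdot (qv)$ we obtain
\begin{equation*}
\partial_\epsilon \mathcal{H}_{\theta,q_\epsilon}\big|_{\epsilon=0} = -\frac{\nabla_{(x,u)}\cdot(qv)}{2\,\mathcal{H}_{\theta,q}\,\rho_{\theta,\eta_x}},
\end{equation*}
whence
\begin{equation*}
\partial_\epsilon \mathcal{L}^2(q_\epsilon)\big|_{\epsilon=0}
= 8 \int \iprod{T_{(x,u)} \nabla_{(x,u)} \mathcal{H}_{\theta,q}}{\nabla_{(x,u)} \phi}\, \rho_{\theta,\eta_x}\, dx\,du,
\end{equation*}
with $\phi := \partial_\epsilon \mathcal{H}_{\theta,q_\epsilon}|_0$.

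The third step is an integration by parts in the weighted space. Using $\nabla_{(x,u)}\cdot(\rho_{\theta,\eta_x} f) = -\rho_{\theta,\eta_x} \nabla_{(x,u)}^* f$, which follows directly from the definition of $\nabla_{(x,u)}^*$ in the statement, I would shift the gradient off $\phi$ onto $T_{(x,u)} \nabla_{(x,u)} \mathcal{H}_{\theta,q}$ to get
\begin{equation*}
\partial_\epsilon \mathcal{L}^2(q_\epsilon)\big|_{\epsilon=0}
= -4 \int \frac{\nabla_{(x,u)}\cdot(qv)}{\mathcal{H}_{\theta,q}} \, \nabla_{(x,u)}^* \bigl(T_{(x,u)} \nabla_{(x,u)} \mathcal{H}_{\theta,q}\bigr) \, dx\, du.
\end{equation*}
Matching this against the defining identity $\partial_\epsilon \mathcal{L}^2|_0 = -\int \delta_q \mathcal{L}^2 \cdot \nabla_{(x,u)}\cdot(qv) \, dx\, du$ reads off the claimed formula $\delta_q \mathcal{L}^2 = (4/\mathcal{H}_{\theta,q})\nabla_{(x,u)}^*(T_{(x,u)} \nabla_{(x,u)} \mathcal{H}_{\theta,q})$.

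The main obstacle is bookkeeping rather than conceptual: one must justify the differentiation under the integral and discard boundary terms in the integration by parts, both of which rely on the unstated regularity/decay hypotheses used throughout \Cref{sec:fv}. The square-root parametrization also tacitly requires $q>0$ on the domain where the calculation is performed, which I would record as a regularity assumption consistent with those already in force in the preceding propositions.
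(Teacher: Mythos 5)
Your proof is correct, and it follows a genuinely different route from the paper's. The paper treats $\mathcal{L}^2$ as a variational integral of type \eqref{eq:vi_type1} with the explicit integrand $f\bigl((\theta,m),(x,u),q,g\bigr) = q\,\bigl\langle g/q - \nabla_{(x,u)}\log\rho_{\theta,\eta_x},\, T_{(x,u)}\bigl[g/q - \nabla_{(x,u)}\log\rho_{\theta,\eta_x}\bigr]\bigr\rangle$, computes $\nabla_{(3)}f$ and $\nabla_{(4)}f$, plugs into the general first-variation formula \eqref{eq:fv_t1}, and then algebraically repackages the resulting two terms \eqref{eq:l2_fv_t1}--\eqref{eq:l2_fv_t2} in terms of $\mathcal{H}_{\theta,q}$ to expose the adjoint $\nabla_{(x,u)}^*$. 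You instead perform the change of variables $q = \mathcal{H}_{\theta,q}^2\,\rho_{\theta,\eta_x}$ \emph{before} differentiating, converting $\mathcal{L}^2$ into a Dirichlet-type energy $4\int\langle\nabla\mathcal{H},T\nabla\mathcal{H}\rangle\,\rho_{\theta,\eta_x}$ whose $q$-dependence sits entirely in $\mathcal{H}$; then the derivative $\partial_\epsilon\mathcal{H}_{q_\epsilon}|_0 = -\nabla\cdot(qv)/(2\mathcal{H}\rho_{\theta,\eta_x})$ and a single weighted integration by parts (the adjoint identity $\int\langle T\nabla\mathcal{H},\nabla\phi\rangle\rho = \int\nabla^*(T\nabla\mathcal{H})\,\phi\,\rho$) deliver the formula directly. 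What your route buys is transparency: the factor $4/\mathcal{H}_{\theta,q}$ and the operator $\nabla_{(x,u)}^*$ emerge in one step rather than from post-hoc algebraic recombination, and the calculation sidesteps the general (VI-I) machinery entirely. What the paper's route buys is uniformity: the same template is reused for the first variations of $\mathcal{E}$, $\mathcal{F}$, and $\mathcal{L}^1$, whereas your argument is bespoke to $\mathcal{L}^2$. Your closing caveats (regularity to interchange $\partial_\epsilon$ and $\int$, vanishing boundary terms, positivity of $q$ for the square-root parametrization) are the right ones and match the tacit hypotheses already in force in \Cref{sec:fv} and \Cref{sec:adjoint}.
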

\begin{proof}
	First note that $\cal{L}^2$ can be written equivalently as
	$$
	\iprod{\frac{\nabla_{(x,\qmo)} q}{q}-\nabla_{(x,\qmo)}\log \rho_{\theta, \eta_x}}{T_{(x,u)}\left [\frac{\nabla_{(x,\qmo)} q}{q}-\nabla_{(x,\qmo)}\log \rho_{\theta, \eta_x} \right ]}_q.
	$$
	It can be seen that $\cal{L}^2$ is a variational integral of type \eqref{eq:vi_type1} with
	$$
	f((\theta, \tmo), (x, \qmo), q, g) = q \iprod{\frac{g}{q} - \nabla_{(x,\qmo)} \log \rho_{\theta, \eta_x}(x,\qmo)}{T_{(x,u)}\left [\frac{g}{q} - \nabla_{(x,\qmo)} \log \rho_{\theta, \eta_x}(x,\qmo) \right ]}.
	$$
As
	\begin{align*}
		\nabla_{(3)}f((\theta, \tmo), (x, \qmo), q, g) =& \iprod{\frac{g}{q} - \nabla_{(x, \qmo)} \log \rho_{\theta, \eta_x}(x,\qmo)}{T_{(x,u)} \left [\frac{g}{q} - \nabla_{(x,\qmo)} \log \rho_{\theta, \eta_x}(x,\qmo) \right ]} \\
		&-2q\iprod{\frac{g}{q^2}}{T_{(x,u)}\left [\frac{g}{q} - \nabla_{(x,\qmo)} \log \rho_{\theta, \eta_x}(x,\qmo) \right ]} \\
		=& - \iprod{\frac{g}{q} + \nabla_{(x,\qmo)} \log \rho_{\theta, \eta_x}(x,\qmo)}{T_{(x,u)}\left [\frac{g}{q} - \nabla_{(x,\qmo)} \log \rho_{\theta, \eta_x}(x,\qmo)\right ]},
	\end{align*}
	and 
	\begin{align*}
		\nabla_{(4)}f((\nabla, \tmo), (x, \qmo), q, g) =& 2T_{(x,u)}\left [\frac{g}{q} - \nabla_{(x,\qmo)} \log \rho_{\theta, \eta_x}(x,\qmo) \right],
	\end{align*}
	using formula \eqref{eq:fv_t1}, we obtain
	\begin{subequations}
		\begin{align}
			\delta_q \cal{L}^2[z](x, \qmo) =&- \iprod{\nabla_{(x,\qmo)} \log {q(x,\qmo)}+ \nabla_{(x,\qmo)} \log {\rho_{\theta, \eta_x}(x,\qmo)}}{T_{(x,u)}\nabla_{(x,\qmo)} \log \frac{q(x,\qmo)}{\rho_{\theta, \eta_x}(x,\qmo)}} \label{eq:l2_fv_t1}\\
			& - \nabla_{(x,\qmo)} \cdot \left ( 2T_{(x,u)} \nabla_{(x,\qmo)} \log\frac{q(x,\qmo)}{\rho_{\theta, \eta_x}(x,\qmo)} \right ). \label{eq:l2_fv_t2}
		\end{align}
	\end{subequations}
	To obtain the desired result, note that
	we can rewrite 
	\begin{align*}
		\eqref{eq:l2_fv_t1} &= -\iprod{\nabla_{(x,u)} \log \sqrt{q} + \nabla_{(x,u)} \log \sqrt{\rho_{\theta, \eta_x}}}{\frac{4}{\cal{H}_{\theta,q}}T_{(x,u)}\nabla_{(x,u)} \log \cal{H}_{\theta,q}} \\
		\eqref{eq:l2_fv_t2} &= - \nabla_{(x,\qmo)} \cdot \left (\frac{4}{\cal{H}_{\theta,q}}T_{(x,\qmo)} \nabla_{(x,\qmo)}\cal{H}_{\theta,q} \right ), \\
		&= - \iprod{\nabla_{(x,\qmo)} \frac{4}{\cal{H}_{\theta,q}}}{T_{(x,\qmo)} \nabla_{(x,\qmo)} \cal{H}_{\theta,q}} - \frac{4}{\cal{H}_{\theta,q}}\nabla_{(x,\qmo)} \cdot (T_{(x,\qmo)}\nabla_{(x,\qmo)}\cal{H}_{\theta,q}) \\
		&= \iprod{\nabla_{(x,\qmo)} \log \cal{H}_{\theta,q}}{\frac{4}{\cal{H}_{\theta,q}}T_{(x,\qmo)} \nabla_{(x,\qmo)} \cal{H}_{\theta,q}} - \frac{4}{\cal{H}_{\theta,q}}\nabla_{(x,\qmo)} \cdot (T_{(x,\qmo)}\nabla_{(x,\qmo)}\cal{H}_{\theta,q}).
	\end{align*}
	Hence, we have obtained the desired result.
\end{proof}

\subsubsection{The Adjoint of $\nabla$}\label{sec:adjoint}
Throughout the proofs, we use $\nabla^*$ to denote the linear map that is an adjoint of $\nabla$ (the Euclidean gradient operator). 

In the view of $\nabla$ as a linear map from $(L^2_{\rho}(\cal{X};\r),\iprod{\cdot}{\cdot}_{\rho})$ to $(L^2_{\rho}(\cal{X};\cal{X}),\iprod{\cdot}{\cdot}_{\rho})$
where $\cal{X} \in \{\r^{d_x}, \r^{d_x}\times \r^{d_x}\}$, $\rho\in \cal{P}(\cal{X})$,
and
\begin{align*}
	L^2_{\rho}(\cal{X};\cal{X}):=\{\text{sufficiently regular }v:\cal{X}\to \cal{X}\text{ such that }\norm{v}_{\rho}<\infty\}.
\end{align*}
Then its adjoint $\nabla^*$ is given by
$$
\nabla ^*v:=-\frac{1}{\rho}\nabla \cdot[\rho v] = - \iprod{ \nabla \log \rho }{v} - \nabla \cdot  v.
$$
where $\nabla \cdot$ denotes the divergence operator. Whatever the gradient $\nabla$ is taken with respect to,  the adjoint $\nabla^*$ is taken with respect to the corresponding quantity.

We show the adjoint property for the case where $\cal{X} = \r^{d_x} \times \r^{d_x}$ as the other case follows similarly. To begin.
\begin{align*}
\iprod{\nabla f}{v}_{\rho}&=\int\iprod{\nabla f(x,u)}{v(x,u)}\rho(\rm{d}x\times du)\\
&=\int\iprod{\nabla f(x,u)}{\rho(x,u)v(x,u)}\, \rm{d}x \times du\\
&=-\int f(x,u) \nabla \cdot [\rho (x,u)v(x,u)]\, \rm{d}x\times du=\iprod{f}{\nabla^*v}_{\rho},
\end{align*}
where we used integration by parts combined with the divergence theorem and the fact that $\rho$ must vanish at the boundary to obtain the second equation.

Another case of interest is the view of $\nabla$ as a linear map from $(L^2_{\rho}(\cal{X};\cal{X}'),\iprod{\cdot}{\cdot}_{\rho})$ to $(L^2_{\rho}(\cal{X};\cal{X} \times \cal{X}'),\iprod{\cdot}{\cdot}_{\rho})$, where the underlying inner product on the latter space is induced by the Frobenius inner product, and as before, $\cal{X}, \cal{X}' \in \{\r^{d_x}, \r^{d_x}\times \r^{d_x}\}$, $\rho\in \cal{P}(\cal{X})$.

The adjoint of $\nabla$ for some $M\in L^2(\cal{X}, \cal{X} \times \cal{X}')$ is defined as
\begin{align*}
(\nabla^* M) &:= -M^\top \nabla \log \rho - \nabla \cdot M \in \cal{X'},
 \end{align*}
where $\nabla \cdot M$ denotes the divergence operator for a matrix field defined as $(\nabla \cdot M)_i = \partial_j M_{ji}$ in Einstein's notation. Each element of $(\nabla^* M)_i $ is given by
\begin{align*}
(\nabla^* M)_i &:= -\iprod{\nabla \log \rho}{M^\top e_i} -  \nabla \cdot (M^\top e_i),
\end{align*}
where $\{e_i\}_i$ is the standard basis of $\cal{X}'$.

For the case $\cal{X}=\r^{d_x} \times \r^{d_x}$, the adjoint $\nabla^*$ is property can be shown as follows: let $M\in L^2(\cal{X}, \cal{X} \times \cal{X}')$ and $v \in L^2(\cal{X},\cal{X}')$
\begin{align*}
	\iprod{\nabla v}{M}_\rho
	&= \int \sum_{i=1}^{d_\cal{X}} \sum_{j=1}^{d_\cal{X'}}(\nabla v)_{ij} M_{ij} \, \rho(\rm{d}x\times \rm{d}\qmo)\\
	&= \int \sum_{i=1}^{d_\cal{X}} \sum_{j=1}^{d_{\cal{X}'}} \partial_{i} [v_j] \, M_{ij} \, \rho(\rm{d}x\times \rm{d}\qmo) = \int \sum_{j=1}^{d_{\cal{X}'}} \iprod{\nabla v_j}{M e_j}  \, \rho(\rm{d}x\times \rm{d}\qmo)\\
	&=-\int \sum_{j=1}^{d_{\cal{X}'}} (v_j) \nabla \cdot (\rho M e_j)\,  \rm{d}x \times \rm{d}\qmo \\
	&=-\int \sum_{j=1}^{d_{\cal{X}'}} (v_j) [ \iprod{\nabla \log \rho}{M e_j} +  \nabla \cdot (M e_j)] \, \rho(\rm{d}x \times \rm{d}\qmo) \\
	&= \iprod{\nabla^* M}{v}_\rho.
\end{align*}
where, for the second line, we apply integration by parts combined with the divergence theorem and the fact that $\rho$ vanishes at the boundary. 
 \subsubsection{Commutator $[\cdot , \cdot]$}
  \label{sec:commutator}
Another quantity widely used in the proof is the commutator denoted by $[\cdot, \cdot]$. It can be thought of as an indicator of whether two operators commute. It is defined as
\begin{equation*}
    [A,B] f = (AB)f - (BA)f.
\end{equation*}
If $[A,B]f=0$, then $A$ and $B$ is said to commute.

In this section, we list propositions and their proofs that will be useful for the proof of \Cref{prop:convergence_flow}.
\begin{proposition}
\label{prop:commute1}
Let $f : \mathbb{R}^{d_x} \times \r^{d_x}\rightarrow \mathbb{R}$ be sufficiently regular, then we have
$$
[\nabla_\qmo, \nabla^*_\qmo] \nabla_\qmo f= \eta_x \nabla_\qmo f.
$$
\end{proposition}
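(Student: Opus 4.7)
The plan is a direct computation using the two explicit forms of the adjoint given in \Cref{sec:adjoint}: the scalar case (adjoint of $\nabla_u$ acting on a vector field, returning a scalar) and the matrix case (adjoint of $\nabla_u$ acting on a matrix field, returning a vector). The starting observation is that since $\rho_{\theta,\eta_x}(x,u)=p_\theta(y,x)\,r_{\eta_x}(u)$ with $r_{\eta_x}=\cal{N}(0,\eta_x^{-1}I_{d_x})$, the $u$-score is
\begin{equation*}
\nabla_u\log\rho_{\theta,\eta_x}(x,u)=\nabla_u\log r_{\eta_x}(u)=-\eta_x u.
\end{equation*}
Hence, for a sufficiently regular scalar field $g$ and matrix field $M$,
\begin{equation*}
\nabla_u^*g=\eta_x\langle u,g\rangle-\nabla_u\cdot g,\qquad (\nabla_u^* M)_i=\eta_x\langle u,M^\top e_i\rangle-\nabla_u\cdot(M^\top e_i),
\end{equation*}
which are the only two formulae needed below.

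First I would compute $\nabla_u\nabla_u^*\nabla_u f$. Applying the scalar formula to $\nabla_u f$ gives
\begin{equation*}
\nabla_u^*\nabla_u f=\eta_x\langle u,\nabla_u f\rangle-\Delta_u f,
\end{equation*}
and then taking $\nabla_u$ and using the product rule $\partial_i\sum_j u_j\partial_j f=\partial_i f+\sum_j u_j\partial_i\partial_j f$ yields
\begin{equation*}
\nabla_u\nabla_u^*\nabla_u f=\eta_x\nabla_u f+\eta_x(\nabla_u^2 f)\,u-\nabla_u\Delta_u f.
\end{equation*}

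Next I would compute $\nabla_u^*\nabla_u\nabla_u f$ using the matrix formula applied to the Hessian $M=\nabla_u^2 f$ (which is symmetric). The $i$-th component gives $\eta_x\langle u,\nabla_u\partial_i f\rangle-\Delta_u\partial_i f=\eta_x(\nabla_u^2 f\,u)_i-\partial_i\Delta_u f$, i.e.
\begin{equation*}
\nabla_u^*\nabla_u\nabla_u f=\eta_x(\nabla_u^2 f)\,u-\nabla_u\Delta_u f.
\end{equation*}
Subtracting the two expressions, the Hessian term $\eta_x(\nabla_u^2 f)u$ and the Laplacian-gradient term $\nabla_u\Delta_u f$ cancel, leaving exactly $\eta_x\nabla_u f$, which is the claim.

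The computation involves no serious obstacle; the only thing to be careful with is correctly applying the matrix-field adjoint to the Hessian (matching index conventions from \Cref{sec:adjoint}) and confirming that $\nabla_u\Delta_u f=\nabla_u\cdot(\nabla_u\nabla_u f)$ on both sides, so that the second-order terms cancel cleanly. Everything else is standard product-rule manipulation and the fact that the $u$-score of $\rho_{\theta,\eta_x}$ is linear in $u$ with slope $-\eta_x$.
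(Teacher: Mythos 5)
Your computation is correct and takes essentially the same route as the paper's proof: expand both orderings $\nabla_u\nabla_u^*\nabla_u f$ and $\nabla_u^*\nabla_u\nabla_u f$ using the scalar and matrix forms of the adjoint, observe by equality of mixed partials that the third-order terms $\nabla_u\Delta_u f=\nabla_u\cdot(\nabla_u^2 f)$ cancel, and conclude via the curvature $-\nabla_u^2\log r_{\eta_x}=\eta_x I$. The only cosmetic difference is that you substitute the explicit Gaussian score $\nabla_u\log\rho_{\theta,\eta_x}=-\eta_x u$ at the outset, whereas the paper carries $\nabla_u\log\rho$ abstractly and plugs in $\nabla_u^2\log r_{\eta_x}=-\eta_x I$ only in the final line.
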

\begin{proof} Expanding out the commutator, we obtain
\begin{align*}
    [\nabla_\qmo, \nabla^*_\qmo] \nabla_\qmo f &= \nabla_\qmo \nabla^*_\qmo \nabla_\qmo f - \nabla^*_\qmo \nabla_\qmo \nabla_\qmo f\\
    &- \nabla_\qmo \iprod{\nabla_\qmo \log \rho_{\theta, \eta_x}}{\nabla_\qmo f} - \nabla_\qmo [\nabla_\qmo \cdot \nabla_\qmo f] + \nabla^2_\qmo f\, \nabla_\qmo \log \rho_{\theta, \eta_x} + \nabla_\qmo \cdot (\nabla_\qmo^2 f),
\end{align*}
and since $(\nabla_\qmo [\nabla_\qmo \cdot \nabla_\qmo f])_i = \partial_{u_i} \partial_{u_j}\partial_{u_j} f = \partial_{\qmo_j} \partial_{\qmo_j} \partial_{u_i} f = (\nabla_\qmo \cdot (\nabla_\qmo^2 f))_i$. We have that 
$$
[\nabla_\qmo, \nabla^*_\qmo] \nabla_\qmo f = - \nabla_u^2 \log r_{\eta_x} \, \nabla_\qmo f = \eta_x \nabla_\qmo f.
$$
\end{proof}

\begin{proposition}
	\label{prop:commute2}
Let $f: \mathbb{R}^{d_x} \times \r^{d_x} \rightarrow \mathbb{R}$ be sufficiently regularly,
we have that $\nabla_u$ and $\nabla^*_x \nabla_x$ commutes. In other words, we have
$$
[\nabla_u, \nabla^*_x \nabla_x]  f = 0.
$$
\end{proposition}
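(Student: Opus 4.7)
The plan is to reduce the claim to the commutativity of ordinary partial derivatives by exploiting the product structure of $\rho_{\theta,\eta_x}(x,u) = p_\theta(y,x)\, r_{\eta_x}(u)$. Writing out the adjoint explicitly using the definition from \Cref{sec:adjoint}, we have
\begin{equation*}
\nabla_x^* \nabla_x f = -\langle \nabla_x \log \rho_{\theta,\eta_x},\, \nabla_x f \rangle - \Delta_x f.
\end{equation*}
The key observation is that $\nabla_x \log \rho_{\theta,\eta_x}(x,u) = \nabla_x \log p_\theta(y,x)$ is independent of $u$, because $r_{\eta_x}(u)$ factors out and vanishes under $\nabla_x$.

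Given this, I would apply $\nabla_u$ to both terms of $\nabla_x^* \nabla_x f$. Since $\nabla_x \log p_\theta(y,x)$ has no $u$-dependence, $\nabla_u$ passes through it and hits only $\nabla_x f$, yielding $-\langle \nabla_x \log p_\theta(y,x), \nabla_u \nabla_x f\rangle - \nabla_u \Delta_x f$. On the other side, computing $\nabla_x^* \nabla_x (\nabla_u f)$ gives $-\langle \nabla_x \log p_\theta(y,x), \nabla_x \nabla_u f \rangle - \Delta_x \nabla_u f$. The two expressions agree term by term by Schwarz's theorem on the equality of mixed partials ($\partial_{u_i}\partial_{x_j} = \partial_{x_j}\partial_{u_i}$ and hence $\nabla_u \Delta_x = \Delta_x \nabla_u$), so the commutator vanishes.

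There is no real obstacle here beyond being careful about which variable each gradient and adjoint is taken with respect to, and using the product structure of $\rho_{\theta,\eta_x}$ in $(x,u)$. Unlike \Cref{prop:commute1}, no nontrivial contribution from $\nabla_u^2 \log r_{\eta_x}$ arises, precisely because $\nabla_x^* \nabla_x$ only ever picks up derivatives of $\log \rho_{\theta,\eta_x}$ in the $x$-direction, which are $u$-independent.
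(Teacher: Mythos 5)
Your proof is correct and takes essentially the same approach as the paper's: both write out $\nabla_x^*\nabla_x$ explicitly, apply $\nabla_u$, and match terms using Schwarz's theorem on mixed partials. The only difference is presentational---the paper's proof silently drops the term where $\nabla_u$ hits $\nabla_x\log\rho_{\theta,\eta_x}$ in the product rule, whereas you explicitly flag that this term vanishes because $\rho_{\theta,\eta_x}(x,u)=p_\theta(y,x)\,r_{\eta_x}(u)$ factors and hence $\nabla_x\log\rho_{\theta,\eta_x}$ is $u$-independent, which is a worthwhile clarification.
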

\begin{proof} We begin with the definition
\begin{align*}
    [\nabla_u, \nabla^*_x \nabla_x] f =& \nabla_u \nabla^*_x \nabla_x f - \nabla^*_x \nabla_x \nabla_u f,
\end{align*}
For the first term, we have
\begin{align*}
    \nabla_u \nabla^*_x \nabla_x f &= - \nabla_u [\iprod{\nabla_x \log \rho }{\nabla_x f} + \nabla_x \cdot {\nabla_x f} ] \\
    &= - \nabla_u \nabla_x f \, \nabla_x \log \rho - \nabla_\qmo [\nabla_x \cdot {\nabla_x f}].
\end{align*}
As for the second term, we have
\begin{align*}
    \nabla^*_x \nabla_x \nabla_u f = -  \nabla_u \nabla_x f \nabla_x \log \rho  - \nabla_x \cdot (\nabla_x \nabla_u f ).
\end{align*}
Equality or $ [\nabla_u, \nabla^*_x \nabla_x] f= 0$ follows from the following fact:
	\begin{align*}
		(\nabla_x \cdot (\nabla_x \nabla_\qmo f))_i &= \partial_{x_j} \partial_{x_j} \partial_{u_i} f =\partial_{u_i} \partial_{x_j} \partial_{x_j} f = (\nabla_u \nabla_x \cdot (\nabla_x f))_i.
	\end{align*}
\end{proof}

\begin{proposition}
	\label{prop:commute3}
For $f:\mathbb{R}^{d_x} \times \r^{d_x} \rightarrow \mathbb{R}$ be sufficiently regular, we have that $\nabla_x$ and $\nabla_\qmo^* \nabla_\qmo$ commutes, i.e., we have
\begin{align*}
	[\nabla_x, \nabla_\qmo^* \nabla_\qmo]f = 0.
\end{align*}
\end{proposition}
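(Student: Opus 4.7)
The plan is to mirror the argument used for Proposition~\ref{prop:commute2}, exploiting the fact that the reference measure $\rho_{\theta,\eta_x}$ factorises as $\rho_\theta(x)\,r_{\eta_x}(u)$. The first step would be to write out $\nabla_\qmo^*\nabla_\qmo f$ explicitly using the formula from Section~\ref{sec:adjoint}. Because $\nabla_\qmo\log\rho_{\theta,\eta_x}(x,u) = \nabla_\qmo\log r_{\eta_x}(u) = -\eta_x u$ depends only on $u$ and not on $x$, the adjoint collapses to the clean form
\[
\nabla_\qmo^*\nabla_\qmo f \;=\; \eta_x\,\langle u,\nabla_\qmo f\rangle \;-\; \nabla_\qmo\!\cdot\!\nabla_\qmo f.
\]

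Next, I would expand the commutator as $[\nabla_x,\nabla_\qmo^*\nabla_\qmo]f = \nabla_x\bigl(\eta_x\langle u,\nabla_\qmo f\rangle - \nabla_\qmo\!\cdot\!\nabla_\qmo f\bigr) - \nabla_\qmo^*\nabla_\qmo(\nabla_x f)$ and treat the two contributions separately. The first-order piece: applying $\partial_{x_i}$ to $\eta_x\langle u,\nabla_\qmo f\rangle$ yields $\eta_x\sum_j u_j\,\partial_{x_i}\partial_{u_j}f$, whereas the $i$-th component of $\nabla_\qmo^*\nabla_\qmo(\nabla_x f)$ contributes $\eta_x\sum_j u_j\,\partial_{u_j}\partial_{x_i}f$; these coincide because $u$ is independent of $x$ (so $\partial_{x_i}$ sees only $f$) and by the equality of mixed partials. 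The second-order (Laplacian) piece: $(\nabla_x\nabla_\qmo\!\cdot\!\nabla_\qmo f)_i = \sum_j \partial_{x_i}\partial_{u_j}^2 f$ matches $(\nabla_\qmo\!\cdot\!\nabla_\qmo\partial_{x_i}f) = \sum_j \partial_{u_j}^2\partial_{x_i} f$ by the same Clairaut argument used in Proposition~\ref{prop:commute2}.

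There is no real analytic obstacle; the whole content is the product structure of the reference measure combined with Schwarz/Clairaut equality of mixed partials for sufficiently regular $f$. It is perhaps worth highlighting in the write-up that the argument would break for a general non-product $\rho$: there the term $-\langle\nabla_\qmo\log\rho,\nabla_\qmo f\rangle$ would produce a residual $(\nabla_x\nabla_\qmo\log\rho)^\top\nabla_\qmo f$ when differentiated in $x$, and no analogous term appears in $\nabla_\qmo^*\nabla_\qmo(\nabla_x f)$. The only care needed in the proof is index bookkeeping, which can be done either coordinate-wise, as above, or by reusing verbatim the identity $(\nabla_x\!\cdot\!(\nabla_x\nabla_\qmo f))_i = (\nabla_\qmo\nabla_x\!\cdot\!(\nabla_x f))_i$ that was invoked at the end of Proposition~\ref{prop:commute2} with the roles of $x$ and $u$ swapped.
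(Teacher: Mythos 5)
Your approach is correct and coincides with the paper's: both expand $\nabla_u^*\nabla_u$ via the adjoint formula, exploit the product structure of $\rho_{\theta,\eta_x}$ (so that $\nabla_u\log\rho_{\theta,\eta_x}=\nabla_u\log r_{\eta_x}$ is independent of $x$), and invoke Clairaut's equality of mixed partials for the Laplacian piece. The only difference is cosmetic---you substitute the explicit Gaussian form $\nabla_u\log r_{\eta_x}(u)=-\eta_x u$, whereas the paper keeps it symbolic---and your aside about the residual $(\nabla_x\nabla_u\log\rho)^\top\nabla_u f$ that would appear for a non-product $\rho$ is a correct observation but not a deviation in method.
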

\begin{proof}
	Expanding out the commutator, we shall show that
	$$
	[\nabla_x, \nabla_u^* \nabla_u]f = \nabla_x  \nabla_u^* \nabla_u f  - \nabla_u^* \nabla_u \nabla_x f = 0.
	$$
	This follows since the first term can be written as
	\begin{align*}
		\nabla_x  \nabla_u^* \nabla_u f = \nabla_x \iprod{\nabla_u \log r_{\eta_x} }{\nabla_u f} + \nabla_x[ \nabla_u \cdot (\nabla_u f)] \\
		= \nabla_x {\nabla_u [f]} \, \nabla_u \log r_{\eta_x}   + \nabla_x [\nabla_u \cdot (\nabla_u f)],
	\end{align*}
	and, for the second term, we have
	\begin{align*}
		\nabla_u^* \nabla_u \nabla_x f =\nabla_u \nabla_x [f]\, \nabla_u \log r_{\eta_x}  + \nabla_u \cdot (\nabla_u \nabla_x f).
	\end{align*}
	Note that $\nabla_u \cdot (\nabla_u \nabla_x f) = \nabla_x [\nabla_u \cdot (\nabla_u f)]$, which can be seen from
	\begin{align*}
		(\nabla_x \nabla_u \cdot (\nabla_u f))_i = \partial_{x_i} \partial_{u_j} \partial_{u_j} f = \partial_{u_j} \partial_{u_j} \partial_{x_i} f = 	(\nabla_u \cdot (\nabla_u \nabla_x f))_i.
	\end{align*}
	Hence, we have as desired.
\end{proof}

\subsubsection{Another operator $B$}
\label{sec:b}
To simplify the notation, we drop the subscripts and write $\rho := \rho_{\theta, \eta_x}$. Let $f: \mathbb{R}^{d_x} \times \mathbb{R}^{d_x} \rightarrow \mathbb{R}$, another operator (denoted by $B$) that we are interested in is defined as follow:
\begin{align*}
	B[f] &: \mathbb{R}^{d_x} \times \mathbb{R}^{d_x} \rightarrow \r :=  \nabla_{(x,\qmo)}^* \left [
	Q \nabla_{(x,\qmo)}f \right ]
\end{align*}
where $Q:= \begin{pmatrix} 0 & - I_{d_x} \\
I_{d_x} & 0
\end{pmatrix}$.
By expanding out the adjoint, $B$ can be written equivalently as
\begin{align}
	B[f] =& -  \iprod{\nabla_{(x,\qmo)} \log \rho }{Q \nabla_{(x,\qmo)} f}
	- \nabla_{(x,\qmo)} \cdot \left (Q \nabla_{(x,\qmo)} f\right ) \nonumber \\
 =& - \iprod{\nabla_x f}{\nabla_\qmo \log \rho} + \iprod{\nabla_x \log \rho}{\nabla_\qmo f}. \label{eq:Bequalone}
\end{align}
One can generalize the operator $B$ to vector-valued functions $f: \mathbb{R}^{d_x} \times \mathbb{R}^{d_x} \rightarrow \mathbb{R}^{n}$ as follows (its equivalence when $n=1$ is easy to see):
\begin{align*}
	B[f] : \mathbb{R}^{d_x} \times \mathbb{R}^{d_x} \rightarrow \mathbb{R}^{n} :=& \nabla_{(x,\qmo)}^* \left [
	Q  \nabla_{(x,\qmo)}f \right ].
\end{align*}
This can be equivalently written as
\begin{align}
	B[f] =& -  \nabla_{(x,\qmo)}f^\top Q^\top\, \nabla_{(x,\qmo)} \log \rho
	- \nabla_{(x,\qmo)} \cdot \left (Q  \nabla_{(x,\qmo)}f\right ) \nonumber \\
=& - \nabla_{(x,u)} f^\top \, \begin{pmatrix}
	\nabla_\qmo \log \rho \\
	-  \nabla_x \log \rho
\end{pmatrix} \label{eq:Bequaltwo},
\end{align}
where we use the fact that 
\begin{align*}
\left (\nabla_{(x,\qmo)} \cdot \left (Q  \nabla_{(x,\qmo)}f \right ) \right )_i &=  \partial_{(x,u)_j} \left [  \begin{pmatrix}
	\nabla_\qmo f \\
	-  \nabla_x f
\end{pmatrix}_{ji} \right ]=0.
\end{align*}
In the following proposition, we show that the operator is anti-symmetric.
\begin{proposition}
	\label{prop:antisymmetric}
	For all $f,g :\mathbb{R}^{d_x} \times \mathbb{R}^{d_x} \rightarrow \mathbb{R}^{n}$, we have $\iprod{B[f]}{g}_\rho = -\iprod{f}{B[g]}_\rho$.
\end{proposition}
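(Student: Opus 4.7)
The plan is to prove antisymmetry by peeling off the two adjoints in the definition of $B$ and exploiting that the matrix $Q$ is skew-symmetric ($Q^\top = -Q$).

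Concretely, starting from $B[f] = \nabla^*_{(x,u)}[Q\nabla_{(x,u)} f]$, I would first apply the adjoint property of $\nabla_{(x,u)}^*$ (established in \Cref{sec:adjoint} for matrix-valued fields via the Frobenius inner product) to move the adjoint off $Q\nabla_{(x,u)} f$:
\begin{equation*}
\iprod{B[f]}{g}_\rho = \iprod{\nabla^*_{(x,u)}[Q\nabla_{(x,u)} f]}{g}_\rho = \iprod{Q\nabla_{(x,u)} f}{\nabla_{(x,u)} g}_\rho.
\end{equation*}
Next, since the inner product on matrix fields is the (integrated) Frobenius inner product, $\iprod{QA}{B}_\rho = \iprod{A}{Q^\top B}_\rho$ pointwise, and $Q^\top = -Q$, so
\begin{equation*}
\iprod{Q\nabla_{(x,u)} f}{\nabla_{(x,u)} g}_\rho = -\iprod{\nabla_{(x,u)} f}{Q\nabla_{(x,u)} g}_\rho.
\end{equation*}
Finally, applying the adjoint property a second time in the reverse direction yields
\begin{equation*}
-\iprod{\nabla_{(x,u)} f}{Q\nabla_{(x,u)} g}_\rho = -\iprod{f}{\nabla^*_{(x,u)}[Q\nabla_{(x,u)} g]}_\rho = -\iprod{f}{B[g]}_\rho,
\end{equation*}
which is the claim.

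The only nontrivial point is verifying the identity $\iprod{QA}{B}_\rho = \iprod{A}{Q^\top B}_\rho$ at the level of matrix fields: this is immediate from $\mathrm{tr}((QA)^\top B) = \mathrm{tr}(A^\top Q^\top B)$ applied under the integral. There is no analytic obstacle — the application of $\nabla_{(x,u)}^*$ to both sides requires $f, g$ to be sufficiently regular and decay fast enough (together with $\rho$) so that the boundary terms in the integration by parts vanish, which is the same regularity hypothesis that underlies the adjoint identity in \Cref{sec:adjoint}. So the proof is essentially a two-line manipulation once the adjoint formalism is in hand.
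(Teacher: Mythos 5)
Your proposal is correct and matches the paper's own argument step-for-step: peel off $\nabla^*_{(x,u)}$ via the adjoint identity, move $Q$ across the Frobenius inner product, use $Q^\top = -Q$, and apply the adjoint identity in reverse. The only difference is that you spell out the Frobenius-transposition step and the skew-symmetry of $Q$ more explicitly, which the paper leaves implicit.
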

\begin{proof}
	From the definition, we have
	\begin{align*}
		\iprod{B[f]}{g}_\rho &= \iprod{\nabla_{(x,\qmo)}^* \left [
			Q \nabla_{(x,\qmo)}f \right ]}{g}_\rho \\
		&=  \iprod{Q  \nabla_{(x,\qmo)}f}{\nabla_{(x,\qmo)} g}_\rho,
	\end{align*}
	where we use the adjoint operator of $\nabla^*$ for the second equality. Continuing from the last line,
 	\begin{align*}
		\iprod{B[f]}{g}_\rho &=  \iprod{  \nabla_{(x,\qmo)}f}{Q^\top \nabla_{(x,\qmo)} g}_\rho = -\iprod{  f}{\nabla_{(x,\qmo)}^*Q \nabla_{(x,\qmo)} g}_\rho= -\iprod{f}{B[g]}_\rho.
	\end{align*}
\end{proof}

\begin{proposition}
	\label{prop:b}
	For $f: \mathbb{R}^{d_x} \times \mathbb{R}^{d_x} \rightarrow \mathbb{R}$, we have
	\begin{equation*}
		[\nabla_\qmo, B]f = \eta_x \nabla_x f, \quad \text{and} \quad  [\nabla_x, B]f = \nabla_x^2 \log \rho\, \nabla_\qmo f
  \end{equation*}
\end{proposition}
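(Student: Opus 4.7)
The plan is a direct coordinate computation based on the explicit form of $B$ given in~\eqref{eq:Bequalone}, namely
\[
B[f] = -\langle \nabla_x f,\, \nabla_\qmo \log \rho\rangle + \langle \nabla_x \log \rho,\, \nabla_\qmo f\rangle,
\]
together with the structural fact that $\rho = \rho_{\theta,\eta_x}(x,\qmo) = p_\theta(y,x)\, r_{\eta_x}(\qmo)$ factorises. The factorisation immediately gives the two ingredients on which everything hangs: (i)~$\partial_{x_i}\partial_{u_j}\log\rho = 0$ for all $i,j$, and (ii)~$\nabla_\qmo^2 \log \rho = \nabla_\qmo^2 \log r_{\eta_x} = -\eta_x I_{d_x}$. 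These are the only nontrivial properties of $\rho$ that I would use.

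For the first identity, I would compute the $i$-th component of $[\nabla_\qmo,B]f$ as $\partial_{u_i}B[f] - B[\partial_{u_i}f]$. Differentiating the expression above in $u_i$ produces four sums; the term arising from $\partial_{u_i}\nabla_x\log\rho$ vanishes by~(i), and the term arising from $\partial_{u_i}\nabla_\qmo\log\rho$ contributes $+\eta_x\,\partial_{x_i}f$ by~(ii). The remaining two sums involve mixed partials of $f$ and are precisely matched by the two sums in $B[\partial_{u_i}f]$ after commuting the order of differentiation, so everything cancels except the $\eta_x \partial_{x_i} f$ term. Collecting the components yields $[\nabla_\qmo,B]f=\eta_x\nabla_x f$.

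For the second identity, the same template applies with $\nabla_x$ in place of $\nabla_\qmo$: the term containing $\partial_{x_i}\nabla_\qmo\log\rho$ vanishes by~(i), the terms containing mixed partials of $f$ cancel against their analogues in $B[\partial_{x_i}f]$, and the only survivor is the $\partial_{x_i}\nabla_x\log\rho$ contribution, whose $i$-th component is $\sum_j (\partial_{x_i}\partial_{x_j}\log\rho)\,\partial_{u_j}f = (\nabla_x^2\log\rho\,\nabla_\qmo f)_i$. Reassembling gives $[\nabla_x,B]f = \nabla_x^2\log\rho\,\nabla_\qmo f$.

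I do not expect any real obstacle; the entire argument is bookkeeping with indices. The only point requiring care is to consistently interpret $B[\nabla_\qmo f]$ (resp.~$B[\nabla_x f]$) componentwise as $(B[\partial_{u_i}f])_i$ (resp.~$(B[\partial_{x_i}f])_i$), which is exactly the vector-valued extension already recorded in~\eqref{eq:Bequaltwo}; this ensures that the scalar identity~\eqref{eq:Bequalone} can be applied in each slot and that the index manipulations above are legitimate.
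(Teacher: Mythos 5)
Your proposal is correct and takes essentially the same approach as the paper: expand each commutator directly from the explicit expression for $B$, exploit the factorisation $\rho_{\theta,\eta_x}(x,\qmo)=p_\theta(y,x)\,r_{\eta_x}(\qmo)$ to kill the mixed term $\nabla_\qmo\nabla_x\log\rho$ and to evaluate $\nabla_\qmo^2\log\rho=-\eta_x I$, and observe that the remaining mixed-partial terms cancel by equality of mixed partials. The only difference from the paper's write-up is notational — you work componentwise rather than in matrix form — but the underlying computation is identical.
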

\begin{proof}
	For the first equality, we begin by expanding out the commutator
	$$
	[\nabla_\qmo, B]f = \nabla_\qmo B [f] - B \nabla_\qmo [f].
	$$
	From \Cref{eq:Bequalone}, we have
	$$
	 \nabla_\qmo B [f] = - \nabla_\qmo \nabla_x f \, \nabla_\qmo \log \rho - \nabla_\qmo^2 \log \rho \, \nabla_x f + \nabla_\qmo^2 f\, \nabla_x \log \rho,
	$$
	and, from \Cref{eq:Bequaltwo}, we have
\begin{equation*}
    B \nabla_\qmo [f] = - \nabla_\qmo \nabla_{(x,u)} f \,  \begin{pmatrix}
\nabla_\qmo \log \rho \\
-  \nabla_x \log \rho
\end{pmatrix} = -\nabla_u \nabla_x f\, \nabla_u \log \rho + \nabla_u^2 f \,\nabla_x \log \rho,
\end{equation*}
then, we must have
$$
[\nabla_\qmo, B]f = - \nabla_\qmo^2 \log \rho \, \nabla_x f = \eta_x \nabla_x f.
$$
For the second inequality, we begin similarly 
$$
[\nabla_x, B]f =  \nabla_x B [f] - B \nabla_x [f].
$$
For the first term, from \Cref{eq:Bequalone}, we have
$$
\nabla_x B [f] = -\nabla_x^2 f \, \nabla_\qmo \log \rho + \nabla^2_x \log \rho \, \nabla_\qmo f + \nabla_x \nabla_\qmo  f\, \nabla_x \log \rho.
$$
For the second term, from \Cref{eq:Bequaltwo}, we have
\begin{equation*}
    B \nabla_x [f] = - \nabla_x \nabla_{(x,u)} f \,  \begin{pmatrix}
    \nabla_\qmo \log \rho \\
    -  \nabla_x \log \rho
\end{pmatrix} = - \nabla_x^2f \, \nabla_u \log \rho + \nabla_x \nabla_u f\, \nabla_x \log \rho.
\end{equation*}
Hence, we have 
$$
[\nabla_x, B]f = [\nabla_x, B]f = \nabla_x^2 \log \rho\, \nabla_\qmo f.
$$
\end{proof}

\subsubsection{Simplifying \eqref{eq:dldqdt_t2}}
\label{sec:dldqdt_t2}

\begin{proposition}[Simplifying \eqref{eq:dldqdt_t2}]
	\label{prop:simplifying_dldqdt_t2}
    We have that
\begin{align*}
 \eqref{eq:dldqdt_t2}
= &- 2\gamma_x \left \langle \nabla_{(x,\qmo)}\nabla_{\qmo} \sbrac{\log \frac{q_t}{\rho_t}},
 T_{(x,\qmo)}\nabla_{(x,\qmo)}\nabla_{\qmo} \sbrac{\log \frac{q_t}{\rho_t}} \right \rangle_{q_t} \\
&- \left \langle \nabla_{(x,\qmo)}\log \frac{q_t}{\rho_t},\tilde{\Gamma}_x \nabla_{(x,\qmo)}\log \frac{q_t}{\rho_t} \right \rangle _{q_t},
\end{align*}
where $\rho_t := \rho_{\theta_t, \eta_x}$ and
$$
\tilde{\Gamma}_x = \frac{1}{2}\begin{pmatrix}
    2\tau_{x\qmo}\eta_x I_{d_x} &  \left ( \tau_{\qmo}\eta_x +  \tau_{x \qmo}\gamma_x \eta_x \right )  I_{d_x} + 2\tau_{x} \nabla^2_x [\ell]\\
     \left ( \tau_{\qmo}\eta_x +  \tau_{x \qmo} \gamma_x \eta_x \right )  I_{d_x} + 2\tau_{x} \nabla^2_x [\ell] &   4\gamma_x \tau_{\qmo}\eta_x I_{d_x} + 2\tau_{x \qmo} \nabla^2_x [\ell]
\end{pmatrix}.
$$

\end{proposition}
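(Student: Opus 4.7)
The plan is to split the matrix $\begin{pmatrix} 0 & -I_{d_x} \\ I_{d_x} & \gamma_x I_{d_x}\end{pmatrix}$ into its skew-symmetric part $Q=\begin{pmatrix}0 & -I_{d_x}\\ I_{d_x}&0\end{pmatrix}$ and its pure-damping part $D_x=\begin{pmatrix}0 & 0\\ 0 & \gamma_x I_{d_x}\end{pmatrix}$, so that \eqref{eq:dldqdt_t2} decomposes into a $Q$-piece and a $D_x$-piece that can be treated independently. Throughout, it will be convenient to write $\nabla_{(x,\qmo)}\log\cal{H}_t = \tfrac12\nabla_{(x,\qmo)}\log(q_t/\rho_t)$ and to recall that the adjoint $\nabla^{*}_{(x,\qmo)}$ (with respect to $\rho_t:=\rho_{\theta_t,\eta_x}$) relates to the outer $q_t$-inner product via $q_t=\cal{H}_t^{2}\rho_t$.

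For the $Q$-piece, I would recognize that the operator $\nabla^{*}_{(x,\qmo)} Q\nabla_{(x,\qmo)}$ is exactly the operator $B$ introduced in \Cref{sec:b}, and use its antisymmetry (\Cref{prop:antisymmetric}) to push the outer $\nabla^{*}_{(x,\qmo)}$ back onto $\cal{H}_t$. After converting to the $\log(q_t/\rho_t)$ variable, the remaining commutator obstructions can be resolved via \Cref{prop:b} (exchanging $\nabla_\qmo$, $\nabla_x$ with $B$ at the cost of a $\nabla_x f$ term and a Hessian term $\nabla_x^2\log\rho_t\,\nabla_\qmo f = \nabla_x^2\ell\,\nabla_\qmo f$, using that $r_{\eta_x}$ is quadratic in $\qmo$ so $\nabla_x^2\log\rho_t = \nabla_x^2\ell$). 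These Hessian-of-$\ell$ contributions will populate the off-diagonal and $(\qmo,\qmo)$-blocks of $\tilde\Gamma_x$.

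For the $D_x$-piece, only the $\qmo$-gradient survives in the matrix action. Integrating by parts against $q_t$ and invoking the commutator identities \Cref{prop:commute1,prop:commute2,prop:commute3} converts compositions like $\nabla_\qmo\nabla^{*}_\qmo\nabla_\qmo$ and $\nabla_x\nabla^{*}_\qmo\nabla_\qmo$ into the desired assembly: the curvature $\eta_x$ from $[\nabla_\qmo,\nabla^{*}_\qmo]$ (giving rise to the $\eta_x$-terms on the diagonal of $\tilde\Gamma_x$) plus a clean higher-order quadratic form $-2\gamma_x\langle \nabla_{(x,\qmo)}\nabla_\qmo\log(q_t/\rho_t),T_{(x,\qmo)}\nabla_{(x,\qmo)}\nabla_\qmo\log(q_t/\rho_t)\rangle_{q_t}$. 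The remaining second-order term in $\tilde\Gamma_x$ contributed by $\tau_{x\qmo}\nabla_x^2\ell$ emerges once all Hessian-of-$\rho_t$ contributions from the two pieces are merged.

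The main obstacle is purely organisational: one must track every commutator carefully, symmetrize the resulting bilinear form so that the coefficient matrix matches the claimed $\tilde\Gamma_x$ entry by entry, and verify that the non-quadratic-in-gradients ``higher order'' contributions collapse exactly into $-2\gamma_x$ times the second-derivative quadratic form displayed in the proposition. This is a direct generalisation of the single-space calculation underlying \citet[Proposition~1]{ma2019there} to the product geometry of $\cal{P}(\r^{2d_x})$ with the additional $\theta_t$-dependence entering only through $\nabla_x^2\ell(\theta_t,\cdot)$; no new analytic estimates are required beyond the commutator algebra assembled in \Cref{sec:commutator,sec:adjoint,sec:b}.
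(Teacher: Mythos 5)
Your plan is correct and runs on the same operator algebra the paper uses (the operator $B$ of \Cref{sec:b}, the $\rho_t$-adjoint $\nabla^*_{(x,\qmo)}$, and the commutator identities of \Cref{prop:commute1,prop:commute2,prop:commute3,prop:b}), but you reorganize the order of splits. The paper applies the quotient rule to the whole of \eqref{eq:dldqdt_t2} first, producing \eqref{eq:quotient_t1} and \eqref{eq:quotient_t2}, then expands the blocks of $T_{(x,\qmo)}$ to obtain \eqref{eq:nested_t1}, \eqref{eq:nested_t2}, \eqref{eq:nested_t3}, and only inside each of those splits the conformal matrix $\bigl(\begin{smallmatrix}0 & -I\\ I & \gamma_x I\end{smallmatrix}\bigr)$ into its skew part $Q$ and dissipative part $D_x$. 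You propose to do the $Q + D_x$ split at the outermost level. That is legitimate: the $Q$-contribution to the quotient rule's second term vanishes identically because $\langle v, Qv\rangle = 0$, and after passing from the $q_t$- to the $\rho_t$-inner product via $q_t = \cal{H}_t^2\rho_t$ and invoking the adjoint of $\nabla_{(x,\qmo)}$, the $Q$-piece collapses to $-8\,\langle\nabla^*_{(x,\qmo)}T_{(x,\qmo)}\nabla_{(x,\qmo)}\cal{H}_t,\, B[\cal{H}_t]\rangle_{\rho_t}$, which then expands block-by-block exactly as the $B$-related parts of \Cref{prop:nested_t1,prop:nested_t2,prop:nested_t3}. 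The upside of your ordering is that it exposes $B$ immediately and cleanly separates the Hamiltonian (skew) from the dissipative sector; the paper's ordering instead makes \eqref{eq:quotient_t2} a pure $\gamma_x$-term right away. One bookkeeping subtlety you would need to track: the $\eta_x$-entries of $\tilde\Gamma_x$ do not all come from the $D_x$-piece via $[\nabla_\qmo,\nabla^*_\qmo]\nabla_\qmo = \eta_x\nabla_\qmo$ --- the $\gamma_x$-free ones ($2\tau_{x\qmo}\eta_x$ in the $(x,x)$-block and $\tau_\qmo\eta_x$ off-diagonal) actually arise from $[\nabla_\qmo, B]\cal{H}_t = \eta_x\nabla_x\cal{H}_t$ inside the $Q$-piece, while only the $\gamma_x$-multiplied ones ($\tau_{x\qmo}\gamma_x\eta_x$ and $4\gamma_x\tau_\qmo\eta_x$) come from the $D_x$-piece. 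This is compatible with your plan but has to be accounted for explicitly when you assemble $\tilde\Gamma_x$ entry by entry.
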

\begin{proof}
Recall that $\cal{H}_t:=\sqrt{\frac{q_t}{\rho_{\theta_t}}}$. We begin by applying the quotient rule to obtain the fact that
\begin{align*}
    &\nabla_{(x,\qmo)} \left ( \frac{\nabla_{(x,\qmo)} ^* T_{(x,\qmo)}\nabla_{(x,\qmo)} \cal{H}_t}{\cal{H}_t} \right ) \\
    = &\frac{\nabla_{(x,\qmo)} \nabla_{(x,\qmo)} ^* \sbrac{T_{(x,\qmo)}\nabla_{(x,\qmo)} \cal{H}_t}}{\cal{H}_t}
    - \frac{\nabla_{(x,\qmo)} \cal{H}_t \nabla_{(x,\qmo)} ^* \sbrac{ T_{(x,\qmo)}\nabla_{(x,\qmo)} \cal{H}_t }}{\cal{H}_t^2}.
\end{align*}
Then, we can write \eqref{eq:dldqdt_t2} as 
\begin{align}
\eqref{eq:dldqdt_t2}
=& - 8 \left \langle \frac{\nabla_{(x,\qmo)} \nabla_{(x,\qmo)} ^* \sbrac{ T_{(x,\qmo)}\nabla_{(x,\qmo)} \cal{H}_t }}{\cal{H}_t},
\begin{pmatrix}0_{d_x} & -I_{d_x}\\
I_{d_x} & \gamma_{x}I_{d_x}
\end{pmatrix}\nabla_{(x,\qmo)} \log \cal{H}_t \right \rangle_{q_t}  \label{eq:quotient_t1}\\
&+ 8 \left \langle \frac{\nabla_{(x,\qmo)} \cal{H}_t  \nabla_{(x,\qmo)} ^* \sbrac{ T_{(x,\qmo)}\nabla_{(x,\qmo)} \cal{H}_t}}{\cal{H}^2_t},
\begin{pmatrix}
	0_{d_x} & -I_{d_x}\\
	I_{d_x} & \gamma_{x}I_{d_x}
\end{pmatrix}\nabla_{(x,\qmo)} \log \cal{H}_t \right \rangle _{q_t}.
\label{eq:quotient_t2}
\end{align}
We can simplify the terms individually given in \Cref{prop:quotient_t1} and \Cref{prop:quotient_t2} for \eqref{eq:quotient_t1} and \eqref{eq:quotient_t2} Hence,  we can write \eqref{eq:dldqdt_t2} equivalently the following:
\begin{subequations}
	\begin{align}
		\eqref{eq:dldqdt_t2} &=  16\gamma_{x} \left\langle \nabla_{(x,\qmo)}\nabla_{\qmo}\sbrac{\cal{H}_t},T_{(x,\qmo)}\frac{\nabla_{(x,\qmo)} \cal{H}_t}{\cal{H}_t}\otimes\nabla_{\qmo} \cal{H}_t\right\rangle_{\rho_{t}}\label{eq:g_t1}\\
		&- 8\gamma_{x} \left\langle \frac{\nabla_{(x,\qmo)}\cal{H}_t}{\cal{H}_t}\otimes\nabla_{\qmo}\cal{H}_t,
		T_{(x,\qmo)}\frac{\nabla_{(x,\qmo)}\cal{H}_t}{\cal{H}_t}\otimes\nabla_{\qmo}\cal{H}_t\right\rangle_{\rho_{t}} \label{eq:g_t2}\\
		&- 8\gamma_x \left \langle \nabla_{(x,\qmo)}\nabla_{\qmo}\sbrac{\cal{H}_t} ,T_{(x,\qmo)}\nabla_{(x,\qmo)}\nabla_{\qmo}\sbrac{\cal{H}_t} \right \rangle_{\rho_{t}} \label{eq:g_t3}\\
		&-  4\tau_{x \qmo}\eta_x \|\nabla_x \cal{H}_t\|^2_{\rho_{t}} \label{eq:g_t4}\\
		&- 4 \left \langle \nabla_{x} \cal{H}_t , \left [ \left ( \tau_{\qmo}\eta_x +  \tau_{x \qmo} \gamma_x \eta_x \right )  I_{d_x} + 2\tau_{x} \nabla^2_x \sbrac{\ell} \right ] \nabla_{\qmo} \cal{H}_t \right \rangle_{\rho_{t}} \label{eq:g_t5}\\
		&- 8 \left \langle \nabla_{\qmo} \cal{H}_t , \left [ \tau_{\qmo} \eta_x\gamma_x I_{d_x} + \frac{\tau_{x \qmo}}{2}\nabla^2_x \sbrac{\ell} \right ] \nabla_{\qmo} \cal{H}_t \right \rangle_{\rho_{t}}.\label{eq:g_t6}
	\end{align}
\end{subequations}
We can rewrite the sum of \eqref{eq:g_t1}, \eqref{eq:g_t2}, \eqref{eq:g_t3} as
\begin{align*}
&\eqref{eq:g_t1}+ \eqref{eq:g_t2}+ \eqref{eq:g_t3} \\
 &= -8\gamma_x \left \langle \nabla_{(x,\qmo)}\nabla_{\qmo} \sbrac{\cal{H}_t} - \frac{\nabla_{(x,\qmo)}\cal{H}_t}{\cal{H}_t}\otimes\nabla_{\qmo}\cal{H}_t,
 T_{(x,\qmo)}\left (\nabla_{(x,\qmo)}\nabla_{\qmo} \sbrac{\cal{H}_t} -\frac{\nabla_{(x,\qmo)}\cal{H}_t}{\cal{H}_t}\otimes\nabla_{\qmo}\cal{H}_t\right) \right \rangle_{\rho_{t}} .
\end{align*}
Since we have that
$\partial_x \partial_{\qmo} \cal{H}_t = \partial_x (\cal{H}_t \partial_\qmo \log \cal{H}_t) = (\partial_x \cal{H}_t) (\partial_\qmo \log \cal{H}_t) + \cal{H}_t (\partial_x \partial_\qmo \log \cal{H}_t)$,
we can write the above as
\begin{align*}
\eqref{eq:g_t1}+ \eqref{eq:g_t2}+ \eqref{eq:g_t3}
 &= - 8\gamma_x \left \langle \nabla_{(x,\qmo)}\nabla_{\qmo} \sbrac{\log \cal{H}_t},
 T_{(x,\qmo)}\nabla_{(x,\qmo)}\nabla_{\qmo}\sbrac{\log \cal{H}_t} \right \rangle_{q_t} \\
 &= - 2\gamma_x \left \langle \nabla_{(x,\qmo)}\nabla_{\qmo} \sbrac{\log \frac{q_t}{\rho_{t}}},
 T_{(x,\qmo)}\nabla_{(x,\qmo)}\nabla_{\qmo}\sbrac{\log  \frac{q_t}{\rho_{t}}} \right \rangle_{q_t}.
\end{align*}

As for the other terms, we have that
\begin{align*}
\eqref{eq:g_t4} + \eqref{eq:g_t5} + \eqref{eq:g_t6} =&
    -4\left \langle \nabla_{(x,\qmo)}\log \cal{H}_t,\tilde\Gamma_x \nabla_{(x,\qmo)}\log \cal{H}_t\right \rangle_{q_t} \\
    =& -\left \langle \nabla_{(x,\qmo)}\log  \frac{q_t}{\rho_t},\tilde\Gamma_x \nabla_{(x,\qmo)}\log \frac{q_t}{\rho_t} \right \rangle_{q_t},
\end{align*}
where
$$
\tilde\Gamma_x := \frac{1}{2}\begin{pmatrix}
    2 \tau_{x \qmo} \eta_x I_{d_x} &  \left ( \tau_{\qmo}\eta_x +  \tau_{x \qmo} \gamma_x \eta_x \right )  I_{d_x} + 2\tau_{x} \nabla^2_x \ell \\
     \left ( \tau_{\qmo}\eta_x +  \tau_{x \qmo} \gamma_x \eta_x \right )  I_{d_x} + 2\tau_{x} \nabla^2_x \ell &   4\gamma_x \tau_{\qmo}\eta_x I_{d_x} + 2 \tau_{x \qmo} \nabla^2_x \ell
\end{pmatrix}.
$$
We have as desired.
\end{proof}

\begin{proposition}[Simplifying \eqref{eq:quotient_t2}]
\label{prop:quotient_t2}
We have
 \begin{align*}
\eqref{eq:quotient_t2} =&16\gamma_{x}\left\langle \nabla_{(x,\qmo)}\nabla_{\qmo}\sbrac{\cal{H}_t},
T_{(x,\qmo)}\frac{\nabla_{(x,\qmo)} \cal{H}_t}{\cal{H}_t}\otimes\nabla_{\qmo} \cal{H}_t\right\rangle_{\rho_{t}}
 \\
 &-8\gamma_{x} \left\langle \frac{\nabla_{(x,\qmo)}\cal{H}_t}{\cal{H}_t}\otimes\nabla_{\qmo}\cal{H}_t,
 T_{(x,\qmo)}\frac{\nabla_{(x,\qmo)}\cal{H}_t}{\cal{H}_t}\otimes\nabla_{\qmo}\cal{H}_t\right\rangle_{\rho_{t}}.
\end{align*}
\end{proposition}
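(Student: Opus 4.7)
The plan is to convert~\eqref{eq:quotient_t2} into a $\rho_t$-weighted integral, apply the adjoint identity for $\nabla^*$, and then expand the resulting gradient via the chain rule. First, I would use the identities $q_t = \cal{H}_t^2 \rho_t$ and $\nabla_{(x,\qmo)} \log \cal{H}_t = \nabla_{(x,\qmo)} \cal{H}_t / \cal{H}_t$ to rewrite the $q_t$-inner product as a $\rho_t$-integral, factoring out the scalar $\nabla_{(x,\qmo)}^*[T_{(x,\qmo)} \nabla_{(x,\qmo)} \cal{H}_t]$. The remaining quadratic form $\langle \nabla_{(x,\qmo)}\cal{H}_t, Q_{\gamma_x} \nabla_{(x,\qmo)}\cal{H}_t\rangle$, where $Q_{\gamma_x}$ denotes the $\gamma_x$-damping block matrix appearing in~\eqref{eq:dldqdt_t2}, depends only on the symmetric part $\mathrm{diag}(0, \gamma_x I_{d_x})$ of $Q_{\gamma_x}$ and therefore collapses to $\gamma_x\|\nabla_{\qmo}\cal{H}_t\|^2$, reducing the expression to $8\gamma_x \int \nabla_{(x,\qmo)}^*[T_{(x,\qmo)} \nabla_{(x,\qmo)} \cal{H}_t] \cdot (\|\nabla_\qmo \cal{H}_t\|^2 /\cal{H}_t) \,\rho_t$.

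Next I would invoke the adjoint identity from \Cref{sec:adjoint}, namely $\int (\nabla^* v)\, f \, \rho_t = \int \langle v, \nabla f\rangle \rho_t$, applied with $v = T_{(x,\qmo)} \nabla_{(x,\qmo)} \cal{H}_t$ and $f = \|\nabla_\qmo \cal{H}_t\|^2 /\cal{H}_t$. This yields
\begin{equation*}
8\gamma_x \left\langle T_{(x,\qmo)} \nabla_{(x,\qmo)} \cal{H}_t,\; \nabla_{(x,\qmo)}\left(\frac{\|\nabla_\qmo \cal{H}_t\|^2}{\cal{H}_t}\right)\right\rangle_{\rho_t}.
\end{equation*}
Applying the quotient rule to the inner gradient produces two contributions: $\nabla_{(x,\qmo)} \|\nabla_{\qmo}\cal{H}_t\|^2 /\cal{H}_t = 2 (\nabla_{(x,\qmo)} \nabla_\qmo \cal{H}_t) \nabla_\qmo \cal{H}_t /\cal{H}_t$ and $-(\|\nabla_\qmo \cal{H}_t\|^2 /\cal{H}_t^2) \nabla_{(x,\qmo)}\cal{H}_t$. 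Each term is then rewritten using the outer-product identity $\langle T a,\, Bc\rangle_{\r^{2d_x}} = \langle B,\, T(a \otimes c)\rangle_F$ for vectors $a, c$ and a matrix $B$, which matches exactly the two tensor-product expressions in the claim (with prefactors $16\gamma_x$ and $-8\gamma_x$, respectively).

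The main conceptual obstacle is the tensor bookkeeping: $\nabla_{(x,\qmo)}\nabla_\qmo \cal{H}_t$ is a rectangular $2d_x\times d_x$ matrix field, and the ambient inner product silently switches between the Euclidean inner product on $\r^{2d_x}$ and the Frobenius inner product on $\r^{2d_x\times d_x}$ when invoking the outer-product identity. A secondary care point is extracting the correct antisymmetric-versus-symmetric contribution from $Q_{\gamma_x}$; once these two details are handled, everything else is a direct calculation.
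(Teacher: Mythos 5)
Your proposal is correct and mirrors the paper's own proof step by step: converting the $q_t$-inner product into a $\rho_t$-integral via $q_t = \cal{H}_t^2\rho_t$, collapsing the damping quadratic form to its symmetric part $\gamma_x\|\nabla_{\qmo}\cal{H}_t\|^2$, transferring $\nabla_{(x,\qmo)}^*$ to a gradient by the adjoint identity, invoking the quotient rule, and finishing with the outer-product identity $\iprod{Mv}{u}=\iprod{M}{u\otimes v}_F$. The tensor-bookkeeping concerns you raise are real but resolve exactly as you describe, so this is the same argument.
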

\begin{proof}
Noting that $\frac{1}{\cal{H}_t}\nabla_{(x,\qmo)}^{*}\sbrac{T_{(x,\qmo)}\nabla_{(x,\qmo)} \cal{H}_t}$ is scalar-valued and using $\cal{H}^2_t$ to change the inner product from $\iprod{\cdot}{\cdot}_{q_t}$ to  $\iprod{\cdot}{\cdot}_{\rho_t}$, we obtain
\begin{align*}
\eqref{eq:quotient_t2}
=&8\left \langle \frac{\nabla_{(x,\qmo)}^{*}\sbrac{T_{(x,\qmo)}\nabla_{(x,\qmo)} \cal{H}_t}}{\cal{H}_t}, \left\langle \nabla_{(x,\qmo)} \cal{H}_t,\begin{pmatrix}0_{d_x} & -I_{d_x}\\
I_{d_x} & \gamma_{x}I_{d_x}
\end{pmatrix}\nabla_{(x,\qmo)} \cal{H}_t\right\rangle \right \rangle_{\rho_{t}} \\
	=&8\gamma_{x} \left \langle \frac{\nabla_{(x,\qmo)}^{*} \sbrac{T_{(x,\qmo)}\nabla_{(x,\qmo)} \cal{H}_t}
 }{\cal{H}_t},\left\Vert \nabla_{\qmo}\cal{H}_t\right\Vert _{2}^{2} \right \rangle_{\rho_{t}}.
\end{align*}
Then, using the adjoint of operator $\nabla_{(x,\qmo)}^*$ as was described in \Cref{sec:adjoint}, we obtain
\begin{align*}
	\eqref{eq:quotient_t2} =&8\gamma_{x} \left\langle \nabla_{(x,\qmo)} \frac{\left\Vert \nabla_{\qmo} \cal{H}_t\right\Vert^2_{2}}{\cal{H}_t},
	T_{(x,\qmo)}\nabla_{(x,\qmo)} \cal{H}_t\right\rangle_{\rho_{t}}.
 \end{align*}
 Then, using the quotient rule, we have that 
 \begin{align*}
	\eqref{eq:quotient_t2} =&8\gamma_{x} \left\langle \frac{\nabla_{(x,\qmo)}\left\Vert \nabla_{\qmo}\cal{H}_t\right\Vert^{2}}{\cal{H}_t}
    ,T_{(x,\qmo)}\nabla_{(x,\qmo)} \cal{H}_t \right\rangle_{\rho_{t}} 
        -8\gamma_{x} \left\langle \frac{\left\Vert \nabla_{\qmo} \cal{H}_t\right\Vert^2_2 \nabla_{(x,\qmo)}\cal{H}_t}{\cal{H}_t^2},T_{(x,\qmo)}\nabla_{(x,\qmo)} \cal{H}_t\right\rangle_{\rho_{t}}  \\
	=&16\gamma_{x} \left\langle \frac{\nabla_{(x,\qmo)}\nabla_{\qmo}\sbrac{\cal{H}_t} \, \nabla_{\qmo} \cal{H}_t}{\cal{H}_t},T_{(x,\qmo)}\nabla_{(x,\qmo)} \cal{H}_t\right\rangle_{\rho_{t}}
         -8\gamma_{x} \left\langle \frac{\left\Vert \nabla_{\qmo}\cal{H}_t\right\Vert^2 \nabla_{(x,\qmo)}\cal{H}_t}{\cal{H}_t^{2}},T_{(x,\qmo)}\nabla_{(x,\qmo)} \cal{H}_t\right\rangle_{\rho_{t}}.
 \end{align*}
 Finally, we obtain
 \begin{align*}
\eqref{eq:quotient_t2} =&16\gamma_{x}\left\langle \nabla_{(x,\qmo)}\nabla_{\qmo}\sbrac{\cal{H}_t},T_{(x,\qmo)}\frac{\nabla_{(x,\qmo)} \cal{H}_t}{\cal{H}_t}\otimes\nabla_{\qmo} \cal{H}_t\right\rangle_{\rho_{t}}
 \\
 &-8\gamma_{x} \left\langle \frac{\nabla_{(x,\qmo)}\cal{H}_t}{\cal{H}_t}\otimes\nabla_{\qmo}\cal{H}_t,T_{(x,\qmo)}\frac{\nabla_{(x,\qmo)}\cal{H}_t}{\cal{H}_t}\otimes\nabla_{\qmo}\cal{H}_t\right\rangle_{\rho_{t}},
\end{align*}
where $\otimes$ denotes the outer product, and we use the fact that $\iprod{Mv}{u} = \iprod{M}{u \otimes v}_F$ and 
$$
 \left\Vert \nabla_{\qmo}\cal{H}_t\right\Vert^2 \nabla_{(x,\qmo)}\cal{H}_t=\left(\nabla_{(x,\qmo)}\cal{H}_t\otimes\nabla_{\qmo}\cal{H}_t\right)\, \nabla_{\qmo}\cal{H}_t.
$$
\end{proof}
\begin{proposition}[Simplifying \eqref{eq:quotient_t1}]
\label{prop:quotient_t1}
 We have
\begin{align*}
\eqref{eq:quotient_t1}
=&-8 \gamma_x\left \langle \nabla_{(x,\qmo)}\nabla_{\qmo}\sbrac{\cal{H}_t},T_{(x,\qmo)}\nabla_{(x,\qmo)}\nabla_{\qmo}\sbrac{\cal{H}_t}\right \rangle_{\rho_t} \\
&-4  \tau_{x \qmo}\eta_x \|\nabla_x \cal{H}_t\|^2_{\rho_t} \\
&- 8 \left \langle \nabla_{\qmo} \cal{H}_t, \left ( \tau_x \eta_x\gamma_x I_{d_x} + \frac{\tau_{x \qmo}}{2}\nabla^2_x \log \rho_t \right ) \nabla_{\qmo} \cal{H}_t \right \rangle_{\rho_t} \\
&-4 \left \langle \nabla_{x} \cal{H}_t , \left ( \left [ \tau_{x}\eta_x +  \tau_{x \qmo}\gamma_x \eta_x \right ]  I_{d_x} + 2 \tau_{x} \nabla^2_x \log \rho_t  \right ) \nabla_{\qmo} \cal{H}_t \right \rangle_{\rho_t},
\end{align*}
where $\rho_t := \rho_{\theta_t, \eta_x}$.
\end{proposition}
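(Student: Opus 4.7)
The plan is to first transport the $q_t$-weighted inner product to a $\rho_t$-weighted one using $q_t = \cal{H}_t^2 \rho_t$, so that the factor $1/\cal{H}_t$ in the numerator and the $\nabla_{(x,u)} \log \cal{H}_t$ in the second argument combine into $\nabla_{(x,u)} \cal{H}_t$ tested against $\rho_t$. I would then decompose the coefficient matrix as
\begin{equation*}
\begin{pmatrix} 0_{d_x} & -I_{d_x} \\ I_{d_x} & \gamma_x I_{d_x} \end{pmatrix} = Q + D_\gamma, \qquad D_\gamma := \begin{pmatrix} 0_{d_x} & 0_{d_x} \\ 0_{d_x} & \gamma_x I_{d_x} \end{pmatrix},
\end{equation*}
with $Q$ the symplectic (anti-symmetric) block that generates the operator $B$ of Section~\ref{sec:b}, and handle the two resulting contributions separately.

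For the $D_\gamma$-contribution, I would move the outer $\nabla_{(x,u)}$ onto the other factor via the adjoint identity $\iprod{\nabla f}{g}_{\rho_t} = \iprod{f}{\nabla^* g}_{\rho_t}$ (Section~\ref{sec:adjoint}). Since $D_\gamma \nabla_{(x,u)} \cal{H}_t = (0, \gamma_x \nabla_\qmo \cal{H}_t)^\top$, this reduces the contribution to $-8\gamma_x \iprod{\nabla^*_{(x,u)}[T_{(x,u)} \nabla_{(x,u)} \cal{H}_t]}{\nabla^*_\qmo \nabla_\qmo \cal{H}_t}_{\rho_t}$. Applying the adjoint a second time and then moving $\nabla_{(x,u)}$ past $\nabla^*_\qmo \nabla_\qmo$ using Propositions~\ref{prop:commute1}--\ref{prop:commute3} yields the stated first term $-8\gamma_x \iprod{\nabla_{(x,u)} \nabla_\qmo \cal{H}_t}{T_{(x,u)} \nabla_{(x,u)} \nabla_\qmo \cal{H}_t}_{\rho_t}$ together with an $\eta_x$-residue (from Proposition~\ref{prop:commute1}) which, after unpacking the $T_{(x,u)}$ blocks, feeds exactly into the $\|\nabla_x \cal{H}_t\|^2$ coefficient and the $\eta_x\gamma_x$ pieces of the $\nabla_\qmo$ and $\nabla_x$-$\nabla_\qmo$ terms.

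For the $Q$-contribution, I would use $\nabla^*_{(x,u)}[Q\nabla_{(x,u)} \cdot] = B[\cdot]$ and integrate by parts once to obtain $-8 \iprod{T_{(x,u)} \nabla_{(x,u)} \cal{H}_t}{\nabla_{(x,u)} B[\cal{H}_t]}_{\rho_t}$. Proposition~\ref{prop:b} then gives $\nabla_{(x,u)} B[\cal{H}_t] = B \nabla_{(x,u)} \cal{H}_t + E$ with $E = (\nabla^2_x \log \rho_t\, \nabla_\qmo \cal{H}_t,\; \eta_x \nabla_x \cal{H}_t)^\top$. The $B$-part contributes $-8 \iprod{T_{(x,u)} \nabla_{(x,u)} \cal{H}_t}{B \nabla_{(x,u)} \cal{H}_t}_{\rho_t}$, which vanishes: $T_{(x,u)}$ is a constant matrix so it commutes with $B$, and the $\rho_t$-anti-symmetry of $B$ (Proposition~\ref{prop:antisymmetric}) kills the symmetric pairing. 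The $E$-part, once the blocks of $T_{(x,u)}$ are expanded, produces exactly the remaining $\nabla^2_x \log \rho_t$ contributions and completes the identity.

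The main obstacle is the bookkeeping: two rounds of integration-by-parts followed by two commutator identities produce many scalar multiples of $\tau_x, \tau_{x\qmo}, \tau_\qmo$, and one has to match each of these against the correct block ($(x,x)$, $(x,\qmo)$, or $(\qmo,\qmo)$) of the stated expression. The most delicate step is verifying that the pure $B$-piece of the $Q$-contribution really vanishes — that is, that the interaction between the constant tensor $T_{(x,u)}$ and the anti-symmetric operator $B$ produces no residual term beyond the commutator correction $E$ — since any error there would contaminate the final coefficients of $\|\nabla_x \cal{H}_t\|^2$ and of the $\nabla_x$-$\nabla_\qmo$ cross term.
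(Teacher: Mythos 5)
Your route is correct and genuinely differs from the paper's in the order of decompositions. The paper first expands $\nabla^*_{(x,\qmo)}\sbrac{T_{(x,\qmo)}\nabla_{(x,\qmo)}\mathcal{H}_t}$ into its three $T$-block sub-terms \eqref{eq:nested_t1}--\eqref{eq:nested_t3} and only then splits the coupling matrix into its symplectic ($Q$) and diagonal parts inside each of Propositions~\ref{prop:nested_t1}--\ref{prop:nested_t3}; you split the coupling matrix once, globally, so the $B$-anti-symmetry cancellation is invoked a single time instead of being rediscovered block-by-block. That cancellation is correct, but for a slightly simpler reason than you give: setting $v:=\nabla_{(x,\qmo)}\mathcal{H}_t$ and $M_{ij}:=\iprod{v_j}{B[v_i]}_{\rho_t}$, Proposition~\ref{prop:antisymmetric} gives $M_{ij}=-M_{ji}$, so $\iprod{T_{(x,\qmo)}v}{Bv}_{\rho_t}=\sum_{i,j}(T_{(x,\qmo)})_{ij}M_{ij}$ vanishes purely because $T_{(x,\qmo)}$ is symmetric; you do not need $T_{(x,\qmo)}$ to commute with $B$.

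The one slip in your sketch is the attribution of the $\|\nabla_x\mathcal{H}_t\|^2_{\rho_t}$ term. In your $D_{\gamma_x}$ branch the $\eta_x$-residue of Proposition~\ref{prop:commute1} lives exclusively in the $\qmo$-component of $\nabla_{(x,\qmo)}\bigl[\nabla^*_\qmo\nabla_\qmo\mathcal{H}_t\bigr]$ (the $x$-component commutes cleanly by Proposition~\ref{prop:commute3}), so that residue contributes $-8\gamma_x\eta_x\iprod{T_{(x,\qmo)}\nabla_{(x,\qmo)}\mathcal{H}_t}{(0,\nabla_\qmo\mathcal{H}_t)^\top}_{\rho_t}$, which after block-expansion of $T_{(x,\qmo)}$ produces the $\gamma_x\eta_x\tau_\qmo\|\nabla_\qmo\mathcal{H}_t\|^2_{\rho_t}$ and $\gamma_x\eta_x\tau_{x\qmo}\iprod{\nabla_x\mathcal{H}_t}{\nabla_\qmo\mathcal{H}_t}_{\rho_t}$ pieces but cannot produce any $\|\nabla_x\mathcal{H}_t\|^2$ contribution. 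The $\tau_{x\qmo}\eta_x\|\nabla_x\mathcal{H}_t\|^2_{\rho_t}$ line, and the $\eta_x$-without-$\gamma_x$ part of the $\nabla_x$--$\nabla_\qmo$ cross term, instead come from the second component $[\nabla_\qmo,B]\mathcal{H}_t=\eta_x\nabla_x\mathcal{H}_t$ of your $E$, paired against $(T_{(x,\qmo)}\nabla_{(x,\qmo)}\mathcal{H}_t)_\qmo=\tfrac{\tau_{x\qmo}}{2}\nabla_x\mathcal{H}_t+\tau_\qmo\nabla_\qmo\mathcal{H}_t$; so your closing sentence that ``the $E$-part produces exactly the remaining $\nabla^2_x\log\rho_t$ contributions'' undersells what $E$ supplies. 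One further caution: carried out scrupulously your route yields the coefficient $8\tau_\qmo\eta_x$ on that $\eta_x$-without-$\gamma_x$ cross-term piece, whereas the stated proposition (and the proof of Proposition~\ref{prop:nested_t3}, where the $-8\tau_\qmo$ prefactor silently becomes $-4\tau_\qmo$ when $\nabla^*_{(x,\qmo)}\sbrac{Q\nabla_{(x,\qmo)}\mathcal{H}_t}$ is rewritten as $B[\mathcal{H}_t]$) records $4\tau_\qmo\eta_x$; this appears to be a constant-factor slip in the paper rather than a flaw in your plan.
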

\begin{proof}
We begin by changing the inner product from $\iprod{\cdot}{\cdot}_{q_t}$ to $\iprod{\cdot}{\cdot}_{\rho_t}$
\begin{align*}
\eqref{eq:quotient_t1}
= - 8\left \langle \nabla_{(x,\qmo)} \nabla_{(x,\qmo)} ^* \sbrac{T_{(x,\qmo)}\nabla_{(x,\qmo)} \cal{H}_t}, \begin{pmatrix}0_{d_x} & -I_{d_x}\\
I_{d_x} & \gamma_{x}I_{d_x}
\end{pmatrix}
\nabla_{(x,\qmo)} \cal{H}_t \right \rangle_{\rho_{t}}.
\end{align*}
Since we have
\begin{align*}
    \nabla_{(x,\qmo)} ^* \sbrac{ T_{(x,\qmo)}\nabla_{(x,\qmo)} \cal{H}_t} &= \nabla_{x}^*  \sbrac{ \tau_{x}\nabla_{x} \cal{H}_t + \frac{\tau_{x \qmo}}{2} \nabla_{\qmo} \cal{H}_t  } +\nabla_{\qmo}^* \sbrac{ \frac{\tau_{x \qmo}}{2}\nabla_{x} \cal{H}_t + \tau_{\qmo}\nabla_{\qmo} \cal{H}_t},
\end{align*}
we obtain
\begin{subequations}
	\begin{align}
		\eqref{eq:quotient_t1}
		= &- 4 \tau_{x \qmo} \left \langle \nabla_{(x,\qmo)} \left [ \nabla_{x}^* \nabla_{\qmo} \cal{H}_t + \nabla_{\qmo}^* \nabla_{x} \cal{H}_t \right ] ,
		\begin{pmatrix}
			0_{d_x} & -I_{d_x} \\
			I_{d_x} & \gamma_{x}I_{d_x}
		\end{pmatrix}\nabla_{(x,\qmo)} \cal{H}_t\right \rangle_{\rho_{t}} \label{eq:nested_t1}\\
		&- 8 \tau_{x} \left \langle \nabla_{(x,\qmo)} \left [ \nabla_{x}^* \left(\nabla_{x} \cal{H}_t   \right ) \right ], \begin{pmatrix}
			0_{d_x} & -I_{d_x} \\
			I_{d_x} & \gamma_{x}I_{d_x}
		\end{pmatrix}\nabla_{(x,\qmo)} \cal{H}_t \right \rangle_{\rho_{t}} \label{eq:nested_t2}\\
		&- 8 \tau_{\qmo} \left \langle \nabla_{(x,\qmo)} \left [ \nabla_{\qmo}^* \left(\nabla_{\qmo} \cal{H}_t \right ) \right ] , \begin{pmatrix}
			0_{d_x} & -I_{d_x}\\
			I_{d_x} & \gamma_{x}I_{d_x}
		\end{pmatrix}\nabla_{(x,\qmo)} \cal{H}_t\right \rangle_{\rho_{t}}.
		\label{eq:nested_t3}
	\end{align}
\end{subequations}
In Proposition \ref{prop:nested_t1}, \ref{prop:nested_t2} and \ref{prop:nested_t3} we deal with these respective terms:
\begin{proposition}[Simplifying \eqref{eq:nested_t1}]
We have
    \begin{align*}
    \eqref{eq:nested_t1}
    = &- 4 \tau_{x \qmo} {\eta_x}
    \|\nabla_x \cal{H}_t\|^2_{\rho_{t}}
    \\
    &-4 \tau_{x \qmo} \left \langle \nabla_{\qmo} \cal{H}_t,
    \nabla^2_x \sbrac{\ell} \, \nabla_{\qmo} \cal{H}_t \right \rangle_{\rho_{t}} \\
    &-8 \gamma_{x} \tau_{x \qmo}
    \left \langle   \nabla_{\qmo}^2 \sbrac{\cal{H}_t},  \nabla_{\qmo} \nabla_{x} \sbrac{\cal{H}_t} \right \rangle_{\rho_{t}}\\
    &-4 \gamma_{x}\eta_x \tau_{x \qmo} \left \langle \nabla_{x} \cal{H}_t , \nabla_{\qmo} \cal{H}_t \right \rangle_{\rho_{t}}.
    \end{align*}
    \label{prop:nested_t1}
\end{proposition}

\begin{proposition}[Simplifying \eqref{eq:nested_t2}]
   We have
   \begin{align*}
       \eqref{eq:nested_t2} = -8 \tau_{x} \gamma_x \left \| \nabla_x\nabla_{\qmo} \cal{H}_t \right \|^2_{\rho_{t}}
    -8 \tau_{x} \left \langle \nabla_{x} \cal{H}_t , \nabla^2_x \sbrac{\ell} \nabla_{\qmo} \cal{H}_t  \right \rangle_{\rho_{t}},
   \end{align*}
   \label{prop:nested_t2}
\end{proposition}

\begin{proposition}[Simplifying \eqref{eq:nested_t3}]
    \begin{align*}
        \eqref{eq:nested_t3}= - 8 \tau_{\qmo} \gamma_x \left \| \nabla_{\qmo}^2 \cal{H}_t \right \|^2_{\rho_{t}}
        - 8 \tau_{\qmo} \eta_x \gamma_{x} \left \| \nabla_{\qmo} \cal{H}_t\right \|_{\rho_{t}}^2
        - 4 \tau_{\qmo} \eta_x \langle \nabla_x \cal{H}_t, \nabla_{\qmo} \cal{H}_t\rangle_{\rho_{t}}.
    \end{align*}
    \label{prop:nested_t3}
\end{proposition}
Their proofs can be found below.

Thus, summing up the results of \Cref{prop:nested_t1}, \Cref{prop:nested_t2} and \Cref{prop:nested_t3}, we obtain
 \begin{subequations}
    \begin{align}
	\eqref{eq:quotient_t1} =&  \eqref{eq:nested_t1}+ \eqref{eq:nested_t2} + \eqref{eq:nested_t3} \nonumber \\
	=&- 8 \tau_{x \qmo} \gamma_{x} \left \langle \nabla_{\qmo}^2 \cal{H}_t,  \nabla_{\qmo} \nabla_{x} \sbrac{\cal{H}_t} \right \rangle_{\rho_{t}} \label{eq:f_t1}\\
	&- 8 \tau_{\qmo} \gamma_x \left \|  \nabla_{\qmo}^2 \cal{H}_t \right \|_{\rho_{t}}^2 \label{eq:f_t2}\\
	&- 8 \tau_{x} \gamma_x \left \langle \nabla_x\nabla_{\qmo} \cal{H}_t,\nabla_x\nabla_{\qmo} \cal{H}_t \right \rangle_{\rho_{t}} \label{eq:f_t3}\\
	&-4 \tau_{x \qmo} {\eta_x} \|\nabla_x \cal{H}_t\|^2_{\rho_{t}} \nonumber \\
	&- 8 \left \langle \nabla_{\qmo} \cal{H}_t, \left ( \tau_{\qmo} \eta_x\gamma_x I_{d_x} + \frac{\tau_{x \qmo}}{2}\nabla^2_x \sbrac{\ell} \right ) \nabla_{\qmo} \cal{H}_t \right \rangle_{\rho_{t}} \nonumber  \\
	&-4 \left \langle \nabla_{x} \cal{H}_t, \left [ \left ( \tau_{\qmo} \eta_x +  \tau_{x \qmo}\gamma_x \eta_x \right ) I_{d_x} + 2 \tau_{x} \nabla^2_x \sbrac{\ell} \right ] \nabla_{\qmo} \cal{H}_t \right \rangle_{\rho_{t}}. \nonumber
\end{align}
 \end{subequations}
Since we can write the following sum equivalently as:
    \begin{align*}
        \eqref{eq:f_t1} + \eqref{eq:f_t2} + \eqref{eq:f_t3}
            = &- 8 \gamma_x \left \langle \nabla_{(x,\qmo)}\nabla_{\qmo}\sbrac{\cal{H}_t},T_{(x,\qmo)}\nabla_{(x,\qmo)}\nabla_{\qmo}\sbrac{\cal{H}_t} \right \rangle_{\rho_{t}},
    \end{align*}
    we obtain as desired.
\end{proof}
\begin{proof}[Proof of \Cref{prop:nested_t1}]
    \label{proof:nested_t1}
    Beginning from the definition, we have
    \begin{subequations}
    	 \begin{align}
    		\eqref{eq:nested_t1} = -&4 \tau_{x \qmo} \left \langle \nabla_{(x,\qmo)} \left [ \nabla_{x}^* \nabla_{\qmo} \cal{H}_t + \nabla_{\qmo}^* \nabla_{x} \cal{H}_t \right ],
    		\begin{pmatrix}
    			0_{d_x} & 0_{d_x} \\
    			0_{d_x} & \gamma_{x}I_{d_x}
    		\end{pmatrix}\nabla_{(x,\qmo)} \cal{H}_t \right \rangle_{\rho_{t}} \nonumber \\
    		- &4 \tau_{x \qmo} \left \langle \nabla_{(x,\qmo)} \left [ \nabla_{x}^* \nabla_{\qmo} \cal{H}_t + \nabla_{\qmo}^* \nabla_{x} \cal{H}_t \right ], \begin{pmatrix}
    			0_{d_x} & -I_{d_x} \\
    			I_{d_x} & 0_{d_x}
    		\end{pmatrix}\nabla_{(x,\qmo)} \cal{H}_t \right \rangle_{\rho_{t}} \nonumber \\
    		= -  & 4 \gamma_{x} \tau_{x \qmo} \left \langle \nabla_{\qmo} \left [ \nabla_{x}^* \nabla_{\qmo} \cal{H}_t + \nabla_{\qmo}^* \nabla_{x} \cal{H}_t \right ], \nabla_{\qmo} \cal{H}_t \right \rangle_{\rho_{t}} \label{eq:dt_t3}\\
    		- & 4\tau_{x \qmo} \left \langle \nabla_{(x,\qmo)} \left [ \nabla_{x}^* \nabla_{\qmo} \cal{H}_t + \nabla_{\qmo}^* \nabla_{x} \cal{H}_t \right ],
    		\begin{pmatrix}
    			0_{d_x} & -I_{d_x} \\
    			I_{d_x} & 0_{d_x}
    		\end{pmatrix}\nabla_{(x,\qmo)} \cal{H}_t \right \rangle_{\rho_{t}}. \label{eq:dt_t4}
    	\end{align}
    \end{subequations}
    We will deal with \eqref{eq:dt_t3} and \eqref{eq:dt_t4} separately.
    
    Using the adjoints of $\nabla_{\qmo}^*, \nabla_{\qmo}, \nabla_{x}$, we obtain
    \begin{align*}
        \eqref{eq:dt_t3}
        = &- 4\gamma_{x} \tau_{x \qmo} \left \langle   \nabla_{\qmo} \cal{H}_t ,  \nabla_{x}\nabla_{\qmo}^*\nabla_{\qmo} \cal{H}_t \right \rangle_{\rho_{t}} 
    - 4\gamma_{x} \tau_{x \qmo} \left \langle \nabla_{x} \cal{H}_t , \nabla_{\qmo}\nabla_{\qmo}^* \nabla_{\qmo} \cal{H}_t \right \rangle_{\rho_{t}},
    \end{align*}
    From \Cref{prop:commute3},  we have that  $\nabla_x$ commutes with $\nabla^*_{\qmo}\nabla_{\qmo}$
    \begin{align*}
        \eqref{eq:dt_t3} =- &4\gamma_{x} \tau_{x \qmo} \left \langle   \nabla_{\qmo} \cal{H}_t,  \nabla_{\qmo}^*\nabla_{\qmo} \nabla_{x} \sbrac{\cal{H}_t} \right \rangle_{\rho_{t}} 
        - 4 \gamma_{x} \tau_{x \qmo} \left \langle \nabla_{x} \cal{H}_t , \nabla_{\qmo}\nabla_{\qmo}^* \nabla_{\qmo} \cal{H}_t \right \rangle_{\rho_{t}},
    \end{align*}
    We can write this in terms of the commutator $[\cdot,\cdot]$ (defined in \Cref{sec:commutator}) using the fact that $[\nabla_{\qmo}, \nabla_{\qmo}^*] \nabla_\qmo f= \eta_x \nabla_\qmo f$ (as shown in \Cref{prop:commute1}).
    Thus, we have 
    \begin{align}
         \eqref{eq:dt_t3} 
         = &- 8 \gamma_{x} \tau_{x \qmo} \left \langle   \nabla_{\qmo} \cal{H}_t,  \nabla_{\qmo}^*\nabla_{\qmo} \nabla_{x} \sbrac{\cal{H}_t} \right \rangle_{\rho_{t}} 
         - 4 \gamma_{x} \tau_{x \qmo} \left \langle \nabla_{x} \cal{H}_t , [\nabla_{\qmo}, \nabla_{\qmo}^*] \nabla_{\qmo} \cal{H}_t \right \rangle_{\rho_{t}} \nonumber \\
    =&- 8 \gamma_{x} \tau_{x \qmo} \left \langle \nabla_{\qmo}^2 \sbrac{\cal{H}_t},  \nabla_{\qmo}\nabla_{x} \sbrac{\cal{H}_t} \right \rangle_{\rho_{t}}
    - 4 \gamma_{x}\eta_x \tau_{x \qmo} \left \langle \nabla_{x} \cal{H}_t , \nabla_{\qmo} \cal{H}_t \right \rangle_{\rho_{t}}. \label{eq:t2_first}
    \end{align}

    As for \eqref{eq:dt_t4}, we start by
    writing it in terms of $B$ (see \Cref{sec:b}) $B[\cal{H}_t] := \nabla_{(x,u)}^* Q \nabla_{(x,u)} \cal{H}_t$, we can write 
    \begin{align*}
    	\eqref{eq:dt_t4} = &-4 \tau_{x \qmo} \left \langle  \nabla_{(x,\qmo)}^* \nabla_{(\qmo, x)} \cal{H}_t,  B[\cal{H}_t] \right \rangle_{\rho_{t}}.
    \end{align*}
    Using the fact that the adjoint of $B$ denoted by $B^*$ w.r.t.\ the inner product $\langle \cdot ,\cdot \rangle_{\rho}$ is $B^*= -B$ (see \Cref{prop:antisymmetric})
    we obtain
    \begin{align*}
        \eqref{eq:dt_t4} &= -4 \tau_{x \qmo} \left \langle \nabla_{(\qmo, x)} \cal{H}_t , \nabla_{(x, \qmo)}  B[\cal{H}_t] 
        \right \rangle_{\rho_{t}} \\
        &= -4 \tau_{x \qmo} \left ( \left \langle \nabla_{x} \cal{H}_t , \nabla_{\qmo}  B[\cal{H}_t]\right \rangle_{\rho_{t}} 
        + \left \langle \nabla_{\qmo} \cal{H}_t , \nabla_{x}  B[\cal{H}_t] 
        \right \rangle_{\rho_{t}} \right ) \\
        &= -4 \tau_{x \qmo} \left ( \left \langle \nabla_{x} \cal{H}_t , \nabla_{\qmo}  B[\cal{H}_t] 
        \right \rangle_{\rho_{t}}
        + \left \langle \nabla_{\qmo} \cal{H}_t , B \nabla_{x}  [\cal{H}_t] \right \rangle_{\rho_{t}}
        + \left \langle \nabla_{\qmo} \cal{H}_t  , [\nabla_{x}, B][\cal{H}_t] \right \rangle_{\rho_{t}}  \right ) \\
        &= -4 \tau_{x \qmo} \left (  \left \langle \nabla_{x} \cal{H}_t , \nabla_{\qmo}  B[\cal{H}_t] 
        \right \rangle_{\rho_{t}}
        - \left \langle B\nabla_{\qmo} \cal{H}_t  , \nabla_{x}  [\cal{H}_t] \right \rangle_{\rho_{t}}
        + \left \langle \nabla_{\qmo} \cal{H}_t  , [\nabla_{x}, B][\cal{H}_t] \right \rangle_{\rho_{t}} \right ) \\
        &= -4\tau_{x \qmo} \left ( \left \langle \nabla_{x} \cal{H}_t , [\nabla_{\qmo},B][\cal{H}_t] 
        \right \rangle_{\rho_{t}}
        + \left \langle \nabla_{\qmo} \cal{H}_t  , [\nabla_{x}, B][\cal{H}_t] \right \rangle_{\rho_{t}} \right ).
    \end{align*}
    From \Cref{prop:b}, we have $[\nabla_{\qmo}, B][\cal{H}_t]=\eta_x \nabla_x \cal{H}_t$ and $[\nabla_{x}, B][\cal{H}_t]= \nabla_x^2[ \ell]\nabla_{\qmo}\cal{H}_t$, and so
    \begin{align}
        \eqref{eq:dt_t4}=&
        -4{\tau_{x \qmo}\eta_x} \|\nabla_x \cal{H}_t\|^2_{\rho_{t}} -4 \tau_{x \qmo} \left \langle \nabla_{\qmo} \cal{H}_t , \nabla^2_x \sbrac{\ell} \nabla_{\qmo} \cal{H}_t \right \rangle_{\rho_{t}}. \label{eq:t2_last}
    \end{align}
    Thus, summing up \eqref{eq:t2_first} and \eqref{eq:t2_last}, we have as desired
    \begin{align*}
    \eqref{eq:nested_t1} =  \eqref{eq:t2_first} + \eqref{eq:t2_last}
    = &- 4 \tau_{x \qmo} {\eta_x}
    \|\nabla_x \cal{H}_t\|^2_{\rho_{t}}
    \\
    &-4 \tau_{x \qmo} \left \langle \nabla_{\qmo} \cal{H}_t,
    \nabla^2_x \sbrac{\ell} \, \nabla_{\qmo} \cal{H}_t \right \rangle_{\rho_{t}} \\
    &-8 \gamma_{x} \tau_{x \qmo}
    \left \langle \nabla_{\qmo}^2 \sbrac{\cal{H}_t} ,  \nabla_{\qmo} \nabla_{x} \sbrac{\cal{H}_t} \right \rangle_{\rho_{t}}\\
    &-4 \gamma_{x}\eta_x \tau_{x \qmo} \left \langle \nabla_{x} \cal{H}_t , \nabla_{\qmo} \cal{H}_t \right \rangle_{\rho_{t}}.
    \end{align*}
\end{proof}

\begin{proof}[Proof of \Cref{prop:nested_t2}]
    \label{proof:nested_t2}
     Beginning from the definition and following in the same expansion as in \Cref{prop:nested_t1}, and using the adjoint of $\nabla_{(x.u)}$ ( i.e., $\nabla^*_{(x.u)}$) and adjoint of $\nabla^*_{x}$ (i.e., $\nabla^*_{x}$)
     we obtain the following:
    \begin{align*}
    \eqref{eq:nested_t2}
    =& -8 \tau_{x} \gamma_x  \left \langle \nabla_{\qmo} \left [ \nabla_{x}^* \left(\nabla_{x} \cal{H}_t   \right ) \right ] , \nabla_{\qmo} \cal{H}_t \right \rangle_{\rho_{t}}
    -8 \tau_{x} \left \langle \nabla_{x} \cal{H}_t   , \nabla_x B [\cal{H}_t] \right \rangle_{\rho_{t}}.
    \end{align*}
    Writing this in terms of the commutator, we get 
    \begin{align*}
    \eqref{eq:nested_t2}
    =& -8 \tau_{x} \gamma_x \left \langle \nabla_{\qmo} \left [ \nabla_{x}^* \left(\nabla_{x} \cal{H}_t   \right ) \right ], \nabla_{\qmo} \cal{H}_t \right \rangle_{\rho_{t}}
    -8 \tau_{x} \left \langle \nabla_{x} \cal{H}_t, [\nabla_x, B] \cal{H}_t +B\nabla_x \cal{H}_t \right \rangle_{\rho_{t}} \\
    =& -8 \tau_{x} \gamma_x \left \langle \nabla_x\nabla_{\qmo}\cal{H}_t,
    \nabla_x\nabla_{\qmo}\cal{H}_t \right \rangle_{\rho_{t}}
    -8 \tau_{x} \left \langle \nabla_{x} \cal{H}_t , \nabla^2_x \sbrac{\ell} \nabla_{\qmo} \cal{H}_t  \right \rangle_{\rho_{t}},
    \end{align*}
    where, for the first term, we use the fact that $\nabla_u$ commutes with $\nabla_{x}^*\nabla_{x}$ (see \Cref{prop:commute2}); and, for the second term, we use \Cref{prop:b} and $\left \langle \nabla_{x} \cal{H}_t , B\nabla_x \cal{H}_t  \right \rangle_{\rho_{t}}=0$ from antisymmetry (see \Cref{prop:antisymmetric}).
\end{proof}
\begin{proof}[Proof of \Cref{prop:nested_t3}]
\label{proof:nested_t3}
Beginning with the definition and following in the same expansion as in \Cref{prop:nested_t1}, we obtain
    \begin{align*}
        \eqref{eq:nested_t3}
        = &- 8\tau_{\qmo} \gamma_{x} \left \langle \nabla_{\qmo} \nabla_{\qmo}^* \left [\nabla_{\qmo} \cal{H}_t \right ] , \nabla_{\qmo} \cal{H}_t \right \rangle_{\rho_{t}} \\
        &-8 \tau_{\qmo} \left \langle \nabla_{\qmo} \cal{H}_t , \nabla_{\qmo} \nabla_{(x,\qmo)}^* \sbrac{
        	\begin{pmatrix}
        		0_{d_x} & -I_{d_x}\\
		        I_{d_x} & 0_{d_x}
        \end{pmatrix}\nabla_{(x,\qmo)} \cal{H}_t} \right \rangle_{\rho_{t}} \\
        = &- 8\tau_{\qmo} \gamma_{x} \left \langle \nabla_{\qmo} \nabla_{\qmo}^* \left[\nabla_{\qmo} \cal{H}_t \right ], \nabla_{\qmo} \cal{H}_t \right \rangle_{\rho_{t}} - 4 \tau_{\qmo} \left \langle \nabla_{\qmo} \cal{H}_t , \nabla_{\qmo} B [\cal{H}_t] \right \rangle_{\rho_{t}}.
    \end{align*}
    Writing in terms of the commutator $[\cdot, \cdot]$ and $B$, we have equivalently
    \begin{align*}
        \eqref{eq:nested_t3}
        = &- 8 \tau_{\qmo} \gamma_x \left \langle \nabla_{\qmo} \nabla_{\qmo} \sbrac{\cal{H}_t}  , \nabla_{\qmo}\nabla_{\qmo} \sbrac{\cal{H}_t} \right \rangle_{\rho_{t}}
        - 8 \tau_{\qmo} \gamma_{x} \left \langle \left [ \nabla_{\qmo}, \nabla_{\qmo}^* \right ] \nabla_{\qmo} \cal{H}_t  , \nabla_{\qmo} \cal{H}_t \right \rangle_{\rho_{t}} \\
        & - 4 \tau_{\qmo} \left \langle \nabla_{\qmo} \cal{H}_t, B\nabla_{\qmo} \cal{H}_t \right \rangle_{\rho_{t}}
        - 4 \tau_{\qmo} \eta_x  \langle \nabla_x \cal{H}_t, \nabla_{\qmo} \cal{H}_t \rangle_{\rho_{t}},
    \end{align*}
    where, for the last term, we use the fact of \Cref{prop:b} to obtain
    \begin{align*}
        \iprod{\nabla_u \cal{H}_t}{[\nabla_{\qmo}, B] \cal{H}_t}_{\rho_{t}} = \eta_x \iprod{\nabla_x \cal{H}_t}{\nabla_{\qmo} \cal{H}_t}_{\rho_{t}}
    \end{align*}
    Using the antisymmetric property of $B$, we have $\left \langle \nabla_{\qmo} \cal{H}_t, B\nabla_{\qmo} \cal{H}_t \right \rangle_{\rho_{t}} = 0$; and, from \Cref{prop:commute1}, the fact that $[\nabla_\qmo, \nabla_\qmo^*] \nabla_\qmo f = \eta_x \nabla_\qmo f$, we have
    \begin{align*}
        \eqref{eq:nested_t3}=& - 8 \tau_{\qmo} \gamma_x \left \langle \nabla_{\qmo}^2 \sbrac{\cal{H}_t} , \nabla_{\qmo}^2 \sbrac{\cal{H}_t}  \right \rangle_{\rho_{t}} \\
        &- 8 \tau_{\qmo} \eta_x \gamma_{x} \left \langle \nabla_{\qmo} \cal{H}_t , \nabla_{\qmo} \cal{H}_t \right \rangle_{\rho_{t}} \\
        &- 4 \tau_{\qmo} \eta_x \langle \nabla_x \cal{H}_t, \nabla_{\qmo} \cal{H}_t\rangle_{\rho_{t}}.
    \end{align*}
\end{proof}

\subsubsection{Bounding the Cross terms \eqref{eq:dldthetadt_t2} and \eqref{eq:dldqdt_t3}.} 
\label{sec:xterm}
\begin{proposition}
	\label{prop:xterm_bound}
	For all $\varepsilon > 0$, we have that
	\begin{align*}
		\eqref{eq:dldthetadt_t2} + \eqref{eq:dldqdt_t3}
		\le& \varepsilon\iprod{\nabla_{(\theta, m)}\cal{F}_t}{  \Gamma_{\times} \nabla_{(\theta, m)}\cal{F}_t} + \frac{1}{\varepsilon}\| \nabla_{(x,\qmo)} \delta_{q}\cal{F}_t \|^2_{q_t},
	\end{align*}
	where $\Gamma_{\times} = K^2 \begin{pmatrix}
		\tau_\theta^2 I_{d_\theta} & \tau_\theta \left ( \frac{\tau_{xu}+\tau_{\theta m }}{2}\right) I_{d_\theta} \\
		\tau_\theta \left ( \frac{\tau_{xu}+\tau_{\theta m }}{2}\right) I_{d_\theta} & \left ( \left ( \frac{\tau_{xu}+\tau_{\theta m }}{2}\right)^2 + \tau_x^2 \right ) I_{d_\theta}
	\end{pmatrix}$.
\end{proposition}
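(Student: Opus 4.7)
The plan is to apply Cauchy--Schwarz followed by Young's inequality to the two cross terms, after first exploiting the fact that the mixed Hessians appearing in them are sparse and driven entirely by the single mixed derivative $A(\theta, x) := \nabla_\theta\nabla_x\ell(\theta, x)$, whose operator norm is controlled by $K$ pointwise via \Cref{ass:gradlip}.

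First I would compute the two mixed Hessians explicitly. Since $\delta_q\cal{F}[\theta, m, q](x, u) = \log q(x,u) - \log\rho_{\theta,\eta_x}(x,u) + 1$ and $\log\rho_{\theta,\eta_x}(x,u) = \ell(\theta, x) + \log r_{\eta_x}(u)$ couples $\theta$ only with $x$, and neither $m$ nor $u$ ever enters this coupling, the matrix $\nabla_{(\theta,m)}\nabla_{(x,u)}\delta_q\cal{F}$ has only its $(\theta, x)$-block nonzero and equal to $-A(\theta, x)$. By the same argument, $\nabla_{(x,u)}\nabla_{(\theta,m)}\log\rho_\theta$ has only its $(x,\theta)$-block nonzero, equal to $A(\theta, x)^\top$. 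All other blocks (those involving $m$ or $u$) vanish identically.

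Second, I would substitute these sparse matrices into the two cross-term expressions and carry out the bookkeeping. Because $T_{(x,u)}$ and $T_{(\theta,m)}$ each carry a factor $K^{-1}$, and since $\nabla_{(x,u)}\log\cal{H}_t = \tfrac{1}{2}\nabla_{(x,u)}\delta_q\cal{F}_t$, only the components of $\sf{D}_{\gamma_\theta}\nabla_{(\theta,m)}\cal{F}_t$ and $\sf{D}_{\gamma_x}\nabla_{(x,u)}\delta_q\cal{F}_t$ that pair with the surviving blocks contribute. A direct calculation combines the two terms into
\begin{align*}
    \eqref{eq:dldthetadt_t2} + \eqref{eq:dldqdt_t3} =&\; -\tfrac{2\tau_x}{K}\iprod{\nabla_x\delta_q\cal{F}_t}{A^\top\nabla_m\cal{F}_t}_{q_t} \\
    &- \tfrac{2}{K}\iprod{\nabla_u\delta_q\cal{F}_t}{A^\top\bigl(\tau_\theta\nabla_\theta\cal{F}_t + \tfrac{\tau_{\theta m}+\tau_{xu}}{2}\nabla_m\cal{F}_t\bigr)}_{q_t}.
\end{align*}
Notably, the two separate contributions involving $\nabla_u\delta_q\cal{F}_t$---one coming from the $\tau_{xu}$ off-diagonal entry of $T_{(x,u)}$ in \eqref{eq:dldthetadt_t2} and the other from the $\tau_{\theta m}$ off-diagonal entry of $T_{(\theta,m)}$ in \eqref{eq:dldqdt_t3}---merge to produce the characteristic coefficient $(\tau_{xu}+\tau_{\theta m})/2$ that appears in $\Gamma_\times$.

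Third, I would apply Cauchy--Schwarz in $L^2(q_t)$ to each inner product and use \Cref{ass:gradlip}, which gives $\|A(\theta, x)\|_{\mathrm{op}} \le K$ pointwise (since $A$ is a block of the $K$-bounded Hessian $\nabla^2_{(\theta,x)}\ell$), to control $\|A^\top\nabla_m\cal{F}_t\|_{q_t}$ and the analogous composite norm. A single application of Young's inequality $2|ab| \le \varepsilon a^2 + \varepsilon^{-1}b^2$ to each of the two terms, combined with the orthogonal decomposition $\|\nabla_{(x,u)}\delta_q\cal{F}_t\|^2_{q_t} = \|\nabla_x\delta_q\cal{F}_t\|^2_{q_t} + \|\nabla_u\delta_q\cal{F}_t\|^2_{q_t}$, then yields the $\varepsilon^{-1}\|\nabla_{(x,u)}\delta_q\cal{F}_t\|^2_{q_t}$ contribution together with two rank-one quadratic forms in $w = \nabla_{(\theta,m)}\cal{F}_t$: one proportional to $\|\tau_\theta\nabla_\theta\cal{F}_t + \tfrac{\tau_{xu}+\tau_{\theta m}}{2}\nabla_m\cal{F}_t\|^2$ and one proportional to $\tau_x^2\|\nabla_m\cal{F}_t\|^2$. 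Their sum equals $\iprod{w}{\Gamma_\times w}$ up to the stated $K^2$ normalization. The main obstacle is the sparse-matrix bookkeeping in the second step, particularly ensuring the two $\nabla_u\delta_q\cal{F}_t$-contributions are merged correctly so that the $(\tau_{xu}+\tau_{\theta m})/2$ coefficient emerges; once the combined form is obtained, the remaining steps are routine Cauchy--Schwarz and Young applications.
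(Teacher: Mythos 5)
Your proposal follows essentially the same route as the paper: exploit the sparsity of $\nabla_{(\theta,m)}\nabla_{(x,u)}\delta_q\cal{F}_t$ (only the $(\theta,x)$-block, equal to $-\nabla_\theta\nabla_x\ell$, is nonzero), combine the two cross terms into a single bilinear form so that the characteristic coefficient $(\tau_{xu}+\tau_{\theta m})/2$ emerges, then apply Cauchy--Schwarz and Young's inequality and control the residual quadratic form using the $K$-bound on the mixed Hessian block. The paper is in fact a little more conservative in the last step: rather than invoking $\|\nabla_x\nabla_\theta\ell\|_{\mathrm{op}}\le K$ directly, it observes that for fixed $v$ and $\theta$ the scalar map $x\mapsto\langle\nabla_\theta\ell(\theta,x),v\rangle$ is $K\|v\|$-Lipschitz and then bounds the $L^2_{q_t}$-norm of its gradient by the Lipschitz seminorm; this avoids any appeal to almost-everywhere twice-differentiability, though the two arguments carry identical content given the paper uses $\nabla_x^2\ell$, $\nabla_\theta^2\mathcal{F}$ etc.\ freely elsewhere. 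One small bookkeeping caveat, inherited from the paper itself: you carry the $K^{-1}$ factor from $T_{(x,u)}$ and $T_{(\theta,m)}$ into your combined form, which is the careful thing to do, but if you trace it through the Cauchy--Schwarz/Young bound you will find it cancels the $K^2$ from the Hessian estimate, yielding $\Gamma_\times$ \emph{without} an overall $K^2$ factor. The paper's stated $\Gamma_\times = K^2(\cdot)$ arises because, in its explicit expressions for $A_t$ and $A_t'$, the $K^{-1}$ prefactor of the $T$-matrices has been silently dropped (the same thing happens in the derivation of $\Gamma_\theta$). So either your $K$-normalization or the paper's is off by $K^2$; they cannot both be internally consistent with the stated definition of $T_{(\cdot,\cdot)}$. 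This constant-tracking discrepancy is worth flagging but does not affect the structure of your argument, which is otherwise correct and matches the paper's proof step for step.
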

\begin{proof}
	Because $T_{(x,u)}$ is symmetric,
	\begin{align*}
		\eqref{eq:dldthetadt_t2}
		&=-2\iprod{ A_t \nabla_{(\theta, m)}
		\cal{F}_t}{ \nabla_{(x,\qmo)}\delta_{q}\cal{F}_t }_{q_t}, \quad \eqref{eq:dldqdt_t3}
		= 2 \left \langle  A^{'}_t \nabla_{(\theta,m)}\cal{F}_t, \nabla_{(x,\qmo)} \delta_{q}\cal{F}_t \right \rangle_{q_t} ,
	\end{align*}
	where
	\begin{align*}
		A_t &:= T_{(x,\qmo)}
		\nabla_{(x,\qmo)} \nabla_{(\theta,m)} \sbrac{\delta_{q}\cal{F}_t}\,
		\begin{pmatrix}
			0_{d_\theta} & -I_{d_\theta}\\
			I_{d_\theta} & \gamma_{\theta}I_{d_\theta}
		\end{pmatrix}, \\
		A_t^{'} &:= \begin{pmatrix}0_{d_x} & I_{d_x}\\
			-I_{d_x} & \gamma_{x}I_{d_x}
		\end{pmatrix}
		\nabla_{(x,\qmo)} \nabla_{(\theta,m)} \sbrac{\log \rho_{\theta_t}} \,
		T_{(\theta,\tmo)}.
	\end{align*}
Given that
\begin{align*}
	\nabla_{(x,\qmo)}\nabla_{(\theta,m)} \sbrac{\delta_{q}\cal{F}_t}
	=-\nabla_{(x,\qmo)}\nabla_{(\theta,m)}\sbrac{\log \rho_{\theta_t, \eta_x}}
 = -\begin{pmatrix}
		 \nabla_x \nabla_\theta \sbrac{\log \rho_{\theta_t}}  & 0_{d_x\times d_\theta} \\
		0_{d_x\times d_\theta}  & 0_{d_x\times d_\theta} \\ 
	\end{pmatrix},
\end{align*}
we can re-write $A_t$ and $A_t^{'}$ as
	\begin{align*}
		A_t =
		\begin{pmatrix}
			0_{d_x \times d_\theta} & \tau_{x} \nabla_x \nabla_\theta \sbrac{\log \rho_{\theta_t}} \\
			0_{d_x \times d_\theta}  & \frac{\tau_{x\qmo}}{2} \nabla_x \nabla_\theta  \sbrac{\log \rho_{\theta_t}}
		\end{pmatrix}, \quad
		A_t^{'}=  -
		\begin{pmatrix}
			0_{d_x \times d_\theta} 
			& 0_{d_x \times d_\theta}  \\
			\tau_{\theta} \nabla_x \nabla_\theta \sbrac{\log \rho_{\theta_t}}
			& \frac{\tau_{\theta\tmo}} {2} \nabla_x \nabla_\theta \sbrac{\log \rho_{\theta_t}} 
		\end{pmatrix}.
	\end{align*}
Hence,
	\begin{align}
		\eqref{eq:dldthetadt_t2} + \eqref{eq:dldqdt_t3} = & 2\iprod{ A_t \nabla_{(\theta,m)}\cal{F}_t}{
		\nabla_{(x,\qmo)} \delta_{q}\cal{F}_t}_{q_t}\nonumber
	\end{align}
	where
	$$
	\tilde{A}_t :=  A_t^{'} - A_t =
	- \begin{pmatrix}
		0 
		&
		\tau_x \nabla_{x} \nabla_{\theta} \sbrac{\log \rho_{\theta_t}}\\
		\tau_\theta \nabla_{x} \nabla_{\theta} \sbrac{\log \rho_{\theta_t}}
		&
		\frac{(\tau_{x\qmo}+\tau_{\theta \tmo})}{2}\nabla_{x}\nabla_{\theta} \sbrac{\log \rho_{\theta_t}}
	\end{pmatrix}.
	$$
	Fix any $\varepsilon>0$. Applying the Cauchy-Schwarz inequality and Young's inequality, we obtain that
	\begin{align}
		\eqref{eq:dldthetadt_t2} + \eqref{eq:dldqdt_t3} 
		\le& 2 \norm{\tilde{A}_t \nabla_{(\theta,m)}\cal{F}_t}_{q_t} \norm{
		\nabla_{(x,\qmo)} \delta_{q}\cal{F}_t}_{q_t} \nonumber\\
		\le&  \varepsilon \norm{\tilde{A}_t \nabla_{(\theta,m)}\cal{F}_t}^2_{q_t}
		+  \frac{1}{\varepsilon}\norm{
		\nabla_{(x,\qmo)} \delta_{q}\cal{F}_t}^2_{q_t} \label{eq:xterm_youngs}.
	\end{align}
	But,
	\begin{align*}
		\| \tilde{A}_t \nabla_{(\theta,m)}\cal{F}_t\|^2_{q_t}
		=& \| \tau_x \nabla_x \nabla_\theta \sbrac{\log \rho_{\theta_t}}\,  \nabla_{m}\cal{F}_t\|^2_{q_t} \\
		&+ \left \|\nabla_x \nabla_\theta \sbrac{\log \rho_{\theta_t}} \, \left [\tau_\theta \nabla_\theta\cal{F}_t + \frac{\tau_{xu} + \tau_{\theta m}}{2}\nabla_{m}\cal{F}_t\right ] \right \|_{q_t}^2.\end{align*}
        Given that $\nabla_\theta\cal{F}(z)$ and $\nabla_m\cal{F}(z)$ are constant in $x$, the above reads
  \begin{align}
		\| \tilde{A}_t \nabla_{(\theta,m)}\cal{F}_t\|^2_{q_t}=& \|  \nabla_x \iprod{\nabla_\theta \log \rho_{\theta_t}}{\tau_x \nabla_{m}\cal{F}_t } \|^2_{q_t}\nonumber
		\\
		&+ \left \|\nabla_x \left \langle \nabla_\theta \log \rho_{\theta_t} , \left (\tau_\theta \nabla_\theta\cal{F}_t + \frac{\tau_{xu} + \tau_{\theta m}}{2}\nabla_{m}\cal{F}_t\right ) \right \rangle \right \|^2_{q_t}.\label{eq:dnsau9ndsuiadnasos}
	\end{align}
	Fix any $\theta,v$ in $\r^{d_\theta}$. Because $\nabla_{(\theta,x)}\ell$ is $K$-Lipschitz (\Cref{ass:gradlip}), the function 
 $$f_{v,\theta}(x) := \left \langle \nabla_\theta  \log \rho_{\theta} (x), v\right \rangle=\left \langle \nabla_\theta \ell(\theta,x), v\right \rangle\quad\forall x\in\r^{d_x},$$
 is $(K\|v\|)$-Lipschitz: by the Cauchy-Schwarz inequality,
	\begin{align*}
		f_{v,\theta}(x) - f_{v,\theta}(x') =&
		\left \langle \nabla_\theta  \log\ell(\theta,x) - \nabla_\theta \ell(\theta,x'), v\right \rangle \leq \|\nabla_\theta  \ell(\theta,x) - \nabla_\theta  \ell(\theta,x')\|\|v\|, \\
		\le& \|\nabla_{(\theta,\tmo)}  \ell(\theta,x) - \nabla_{(\theta,\tmo)} \ell(\theta,x')\|\|v\|\leq K\|v\|\|x-x'\|\quad\forall x,x'\in\r^{d_x}.
	\end{align*}
Recalling that Lipschitz seminorms can be estimated by suprema of the norm of the gradient (e.g., see \citet[Lemma 4.3]{van2014probability}), we then see that
	$$
	\|\nabla_x f_{v,\theta}\|^2_{q_t} \le \sup_{x\in\r^{d_x}} \|\nabla_x f_{v,\theta}(x)\|^2\le K^2 \|v\|^2.
 $$
Applying the above inequality to~\eqref{eq:dnsau9ndsuiadnasos}, we find that
	\begin{align*}
		\| \tilde{A}_t \nabla_{(\theta,m)}\cal{F}_t\|^2_{q_t}&\leq 
		K^2\left[ \|\tau _x\nabla_m\cal{F}_t\|^2+\left \|\tau_\theta \nabla_\theta\cal{F}_t + \left ( \frac{\tau_{xu} + \tau_{\theta m}}{2}\right ) \nabla_{m}\cal{F}_t\right \|^2\right]\\
  &=\left \langle \nabla_{(\theta, m)}\cal{F}_t , \Gamma_{\times} \nabla_{(\theta, m)}\cal{F}_t\right \rangle.
	\end{align*}
 Plugging the above into \eqref{eq:xterm_youngs} yields the desired inequality.
\end{proof}
\subsubsection{Positive Semi-definiteness Conditions}
\label{sec:psd_conditions}
In this section, we establish sufficient conditions to ensure that a matrix with the form described in \eqref{eq:psd_cond_form}.

\begin{proposition}
	\label{prop:psd_conds}
	Given  $\alpha, \beta, \kappa, a, c \in \mathbb{R}$, let $A$ be a symmetric matrix with the following form:
	\begin{equation}
	A = 
	\begin{pmatrix}
		\alpha I & \beta I + a H \\
		\beta I + a H & \kappa I + c H
	\end{pmatrix},
	\label{eq:psd_cond_form}
	\end{equation}
	where $H$ is a symmetric matrix that satisfies $-KI \preceq H \preceq K I$ for some $K > 0$.
	If $A$ satisfies the following conditions:
	\begin{align*}
	\begin{cases}
		- cK - \alpha - \kappa \le 0 & {\text{if } c \le 0,} \\
		cK - \alpha - \kappa \le 0 & \text{otherwise}. \\
	\end{cases}\\
	a^2 K^2 - (c\alpha - 2 \beta a)K - \alpha \kappa + \beta^2  \le 0, \\
	a^2 K^2 + (c\alpha - 2 \beta a)K - \alpha \kappa + \beta^2  \le 0,
	\end{align*}
	then, we have that $A$ is positive semi-definite.
\end{proposition}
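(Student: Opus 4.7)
The plan is to reduce the block matrix $A$ to a family of $2\times 2$ scalar matrices by simultaneously diagonalising the $H$-blocks, and then verify positive semi-definiteness of each block using the standard trace/determinant criterion. Since $H$ is symmetric, write $H = Q\Lambda Q^\top$ with $\Lambda=\mathrm{diag}(\lambda_1,\dots,\lambda_d)$ and $\lambda_i\in[-K,K]$. Conjugating $A$ by the orthogonal matrix $\begin{pmatrix}Q&0\\0&Q\end{pmatrix}$ preserves positive semi-definiteness and, after a simultaneous permutation that groups the $i$-th eigendirection of each block together, converts $A$ into a block-diagonal matrix whose $i$-th block is
\begin{equation*}
A_i \;=\; \begin{pmatrix} \alpha & \beta+a\lambda_i \\ \beta+a\lambda_i & \kappa+c\lambda_i \end{pmatrix}.
\end{equation*}
Thus $A\succeq 0$ if and only if $A_i\succeq 0$ for every $i$, which by the $2\times 2$ case is equivalent to $\mathrm{tr}(A_i)\ge 0$ and $\det(A_i)\ge 0$ (together these force $\alpha\ge 0$ and $\kappa+c\lambda_i\ge 0$ as well).

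Next I would verify that the stated hypotheses imply these two inequalities uniformly for $\lambda\in[-K,K]$. The trace condition reads $\alpha+\kappa+c\lambda\ge 0$ and is linear in $\lambda$, so its minimum over $[-K,K]$ is attained at $\lambda=-K$ when $c\ge 0$ and at $\lambda=K$ when $c\le 0$. In both cases the resulting inequality is exactly the first displayed hypothesis (with its case split on the sign of $c$). The determinant condition expands to
\begin{equation*}
-a^{2}\lambda^{2}+(c\alpha-2\beta a)\lambda+\alpha\kappa-\beta^{2}\;\ge\;0,
\end{equation*}
which is concave in $\lambda$ (coefficient $-a^2\le 0$); hence its minimum on $[-K,K]$ is attained at one of the two endpoints. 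Evaluating at $\lambda=K$ and at $\lambda=-K$ and rearranging yields precisely the second and third stated hypotheses.

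Combining these, whenever the three inequalities in the statement hold, both $\mathrm{tr}(A_i)\ge 0$ and $\det(A_i)\ge 0$ for every eigenvalue $\lambda_i\in[-K,K]$, so each $A_i$ is positive semi-definite and therefore so is $A$. I do not anticipate a genuine obstacle: the only step requiring any care is keeping the sign conventions straight in the case split for the trace condition and confirming that the concavity argument indeed reduces the quadratic-in-$\lambda$ determinant inequality to its two endpoints. The structural hypothesis $-KI\preceq H\preceq KI$ is what makes the endpoint reduction valid, and the assumption $K\ge 2C_{\cal{E}}$ mentioned in the surrounding application plays no role in this purely linear-algebraic lemma.
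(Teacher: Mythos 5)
Your proof is correct and follows essentially the same reduction as the paper's: both approaches diagonalize $H$, reduce the question to a one-parameter family of $2\times2$ matrices $\begin{pmatrix}\alpha & \beta+a\lambda\\ \beta+a\lambda & \kappa+c\lambda\end{pmatrix}$ indexed by $\lambda\in[-K,K]$, and observe that the trace (sum-of-roots) condition is linear and the determinant (product-of-roots) condition is a concave quadratic in $\lambda$, so both need only be checked at the endpoints $\lambda=\pm K$. The only difference is presentational: the paper arrives at the $2\times2$ reduction by computing $\det(A-lI)$ using the commuting-blocks determinant formula and then reading off Vieta's relations for the resulting family of quadratics in $l$, whereas you go directly via conjugation by $\mathrm{diag}(Q,Q)$ followed by a permutation to block-diagonal form and then apply the standard trace/determinant criterion — a slightly cleaner way of exhibiting the same decomposition, but not a different argument.
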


\begin{proof}
	We prove this by showing that if the conditions are satisfied, then the eigenvalues of $A$ are non-negative.
	
	The eigenvalues $l$ of $A$ satisfy its characteristic equation
	$$
	\mathrm{det}(A-lI) = 0.
	$$
	We have that 
	$$
	\mathrm{det}(A-lI) = \mathrm{det}((\alpha - l)(\kappa -l)I + c (\alpha - l ) H - (\beta I  + a H)^2),
	$$
    where we use the fact that $(\kappa I + c H)$ and $(\beta I + a H)$ are symmetric matrices and so their multiplication commutes (e.g., see \citet{silvester2000determinants}).

	Let $H=U\Lambda U^\top$ be the eigenvalue decomposition of $H$, then observe that
	$$
	\mathrm{det}(A-lI) = \mathrm{det}(U (((\alpha - l)(\kappa - l) - \beta^2)I + ( c ( \alpha - l) - 2 \beta a)\Lambda - a^2\Lambda^2) U^\top).
	$$
	Hence, we obtain for each eigenvalue $\Lambda_i$ of $H$ the following constraints:
	$$
	((\alpha - l)(\kappa - l) - \beta^2) + ( c ( \alpha - l) - 2 \beta a)\Lambda_i - a^2\Lambda_i^2 = 0.
	$$
	These constraints can be written equivalently as
	$$
	l^2 + \left (- c\Lambda_i - \alpha - \kappa \right )l + \left (\alpha \kappa - \beta^2 - a^2 \Lambda_i^2 +(c\alpha - 2 \beta a)\Lambda_i \right ) = 0.
	$$
	Since the equality constraint is a quadratic function in $l$, we can utilize the quadratic formula to solve for $l$. It can be verified that the discriminant to the above quadratic equation is non-negative (hence has real roots). To ensure that $l$ is positive, we require that 
	\begin{align}
			- c\Lambda_i - \alpha - \kappa &\le 0, \label{eq:psd_cond1}\\
			 a^2 \Lambda_i^2  - (c\alpha - 2 \beta a)\Lambda_i - \alpha \kappa + \beta^2  &\le 0 \label{eq:psd_cond2},
	\end{align}
	for all $\Lambda_i \in [-K, K]$. The constaints on $\Lambda_i$ come from the fact that $-KI \preceq H \preceq K I$ and the min-max theorem.
	
	 Since \eqref{eq:psd_cond1} is a linear function and \eqref{eq:psd_cond2} is a quadratic function in $\Lambda_i$, we only require that the inequalities are satisfied at the end points. Namely, we obtain the following conditions
	\begin{align*}
		\begin{cases}
			- cK - \alpha - \kappa \le 0 & \text{if }c \le 0, \\
			cK - \alpha - \kappa \le 0 & \text{otherwise.} \\
		\end{cases}\\
		a^2 K^2 - (c\alpha - 2 \beta a)K - \alpha \kappa + \beta^2  \le 0, \\
		a^2 K^2 + (c\alpha - 2 \beta a)K - \alpha \kappa + \beta^2  \le 0.
	\end{align*}
	This concludes the proof.
\end{proof} 

\section{Existence and Uniqueness of Strong solutions to \eqref{eq:mpd_flow}}
In this section, we show the existence and uniqueness of the McKean--Vlasov SDE \eqref{eq:mpd_flow} under Lipschitz \cref{ass:gradlip}. The structure is as follows:
\begin{enumerate}
    \item We begin by showing that $\cal{F}$ has Lipschitz $\theta$-gradients (\Cref{prop:f_theta_lipschitz}). 
    \item Then, we show that the drift, defined in \Cref{eq:mpd_flow_f2}, is Lipschitz (\Cref{prop:lipschitz_drift}).
    \item Finally, we prove the existence and uniqueness (\Cref{prop:existence_uniqueness}).
\end{enumerate}
In this section, we write the SDE \eqref{eq:mpd_flow} equivalently as
\begin{equation}
    \rm{d}(\vartheta, \Upsilon)_t = b(\Upsilon_t, \vartheta_t, \rm{Law}(\Upsilon_t)) \rm{d}t + \bm\beta \rm{d}W_t,
    \label{eq:mpd_flow_f2}
\end{equation}
where
$
\vartheta_t = ({\theta}_t, \tmo_t) \in \bb{R}^{2d_\theta}$,
$\Upsilon_t = ({X}_t, {\Qmo}_t) \in \bb{R}^{2d_x}$, $\bm{\beta} =\sqrt{2} \rm{Diag}([0_{2d_\theta}, 0_{d_{x}}, 1_{d_{x}}])$ (where $\rm{Diag}(x)$ returns a Diagonal matrix whose elements are given by $x$), and $
b: \bb{R}^{2d_x} \times \bb{R}^{2d_\theta} \times \cal{P}(\bb{R}^{2d_x}) \times \bb{R}^{2d_\theta} \rightarrow \bb{R}^{2d_\theta + 2d_x}$ defined as
$$
b((x, \qmo), (\theta, \tmo), q) =
\begin{pmatrix}
    \eta_\theta m\\
    -\gamma_\theta \eta_\theta m - \nabla_\theta \cal{F}(\theta, q) \\
    \eta_x u \\
    -\gamma_x \eta_x u + \nabla_x \ell (\theta, x)
\end{pmatrix}.
$$

We now prove that $\nabla_\theta \cal{F}$ is Lipschitz. Since the $\theta$-gradients do not depend on the momentum parameter $m$, we will drop the dependence on $m$ for brevity.
\begin{proposition}[$\cal{F}$ has Lipschitz $\theta$-gradient] Under \Cref{ass:gradlip}, we have that $\cal{F}$ is Lipschitz,i.e., there exist a constant $K_{\cal{F}} >0$ such that the following inequality holds:
    $$
    \left\Vert \nabla_{\theta}{\cal {\cal F}}\left(\theta,q\right)-\nabla_{\theta}{\cal {\cal F}}\left(\theta',q'\right)\right\Vert \le K_{\cal{F}}(\|\theta-\theta'\| + \sf{W}_1(q, q')),
    $$
    for all $\theta, \theta' \in \bb{R}^{d_\theta}$ and $q, q' \in \cal{P}(\mathbb{R}^{2d_x})$.
    \label{prop:f_theta_lipschitz}
\end{proposition}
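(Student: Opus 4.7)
The plan is to first compute $\nabla_\theta\cal{F}$ explicitly, and then split the desired difference into a ``parameter part'' and a ``measure part'' via the standard add-subtract trick. Since $\rho_{\theta,\eta_x}(x,u) = p_\theta(y,x)\,r_{\eta_x}(u)$, the $u$-factor is independent of $\theta$, so that
\[
\nabla_\theta\cal{F}(\theta,q) \;=\; -\int \nabla_\theta \ell(\theta,x)\, q(\mathrm{d}x\times\mathrm{d}u).
\]
Writing $g(\theta,x) := \nabla_\theta \ell(\theta,x)$, \Cref{ass:gradlip} guarantees that $g$ is $K$-Lipschitz in the pair $(\theta,x)$, hence in particular $K$-Lipschitz in $\theta$ uniformly in $x$, and $K$-Lipschitz in $x$ uniformly in $\theta$.

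Next I would bound
\[
\|\nabla_\theta\cal{F}(\theta,q) - \nabla_\theta\cal{F}(\theta',q')\| \;\le\; \underbrace{\left\|\int \bigl[g(\theta,x)-g(\theta',x)\bigr]\,q(\mathrm{d}x\times\mathrm{d}u)\right\|}_{=:A} \;+\; \underbrace{\left\|\int g(\theta',x)\,\bigl(q-q'\bigr)(\mathrm{d}x\times\mathrm{d}u)\right\|}_{=:B}.
\]
The first term $A$ is controlled directly by pulling the norm inside and using the $K$-Lipschitz dependence of $g$ on $\theta$, giving $A \le K\|\theta-\theta'\|$.

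The second term $B$ is where the Wasserstein distance enters. For any unit vector $v\in\r^{d_\theta}$, the scalar map $(x,u)\mapsto \langle v, g(\theta',x)\rangle$ is $K$-Lipschitz on $\r^{d_x}\times\r^{d_x}$ (the $u$-dependence is trivial, so the Lipschitz constant w.r.t.\ the product metric is the same as w.r.t.\ $x$). By the Kantorovich--Rubinstein duality, testing against such $v$ and taking the supremum yields $B \le K\,\sf{W}_1(q,q')$. Combining these gives the claim with $K_{\cal{F}}=K$.

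The argument is essentially routine; the only subtlety is ensuring that the vector-valued integral inequality reduces correctly to the scalar Kantorovich--Rubinstein bound, which is handled by the coordinate-wise duality argument above. No compactness or further regularity assumptions beyond \Cref{ass:gradlip} are needed.
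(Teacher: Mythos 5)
Your proof is correct and uses the same overall decomposition as the paper: split via $\nabla_\theta\cal{F}(\theta,q) - \nabla_\theta\cal{F}(\theta',q) + \nabla_\theta\cal{F}(\theta',q) - \nabla_\theta\cal{F}(\theta',q')$, bound the first piece by $K\|\theta-\theta'\|$ using Jensen and \Cref{ass:gradlip}, and bound the second by Kantorovich--Rubinstein. The only substantive difference is in how the measure term is processed. The paper pulls the Euclidean norm inside the integral and bounds $\int \|\nabla_\theta\ell(\theta',x)\|\,|q-q'|$ using that $x\mapsto\|\nabla_\theta\ell(\theta',x)\|$ is $K$-Lipschitz, ending up with $2K\,\sf{W}_1(q,q')$; you instead test the vector-valued integral against unit vectors $v$ and apply $\sf{W}_1$-duality to the scalar maps $(x,u)\mapsto\langle v,\nabla_\theta\ell(\theta',x)\rangle$, then take a supremum over $v$. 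Your route is slightly cleaner: it stays with the signed measure $q-q'$ throughout (the duality formula $\sf{W}_1 = \sup_{\mathrm{Lip}\le 1}\int f\,\mathrm{d}(q-q')$ applies directly, with no detour through a total-variation-style integrand), gives the tighter constant $K_\cal{F}=K$ rather than $2K$, and makes explicit the observation that constancy in $u$ means the Lipschitz constant on $\r^{2d_x}$ equals the constant on $\r^{d_x}$. Both establish the proposition; since only existence of some $K_\cal{F}>0$ is asserted, the discrepancy in the constant is immaterial to the paper's downstream use in \Cref{prop:lipschitz_drift} and \Cref{prop:existence_uniqueness}.
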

\begin{proof}
From the definition, and adding and subtracting the same quantities and triangle inequality, we obtain
    \begin{align*}
    \left\Vert \nabla_{\theta}{\cal F}\left(\theta,q\right)-\nabla_{\theta}{\cal F}\left(\theta',q'\right)\right\Vert \le \|\nabla_{\theta} {\cal F}\left(\theta,q\right) - \nabla_{\theta} {\cal F}\left(\theta',q\right)\| +\|\nabla_{\theta} {\cal F}\left(\theta',q\right) - \nabla_{\theta} {\cal F}\left(\theta',q'\right)\|.
\end{align*}
We treat the terms on the RHS separately. For the first, from Jensen's inequality, we obtain
\begin{align*}
    \|\nabla_{\theta} {\cal F}\left(\theta,q\right) - \nabla_{\theta} {\cal F}\left(\theta',q\right)\| &= \left \|\int \nabla_{\theta} \ell (\theta,x) - \nabla_{\theta} \ell (\theta',x) \, q_X(\rm{d}x) \right \| \\
    &\le \int \left \|\nabla_{\theta} \ell (\theta,x) - \nabla_{\theta} \ell (\theta',x)\right \| \, q_X(\rm{d}x) \\
    &\le K\|\theta - \theta'\|.
\end{align*}
As for the other term, we have
\begin{align*}
    \|\nabla_{\theta} {\cal F}\left(\theta',q\right) - \nabla_{\theta} {\cal F}\left(\theta',q'\right)\| &\le \left \| \int \nabla_\theta \ell(\theta', x) (q -q')(x,u) \rm{d}x\rm{d}\qmo \right \| \\
    &{\le} K \int \frac{1}{K} \left \|\nabla_\theta \ell(\theta', x)\right \| |q -q'|(x,u) \rm{d}x\rm{d}\qmo \\
    &\overset{(a)}{\le} 2K \sf{W}_1(q, q').
\end{align*}
For $(a)$, we use the fact that $(x, u) \mapsto \|\nabla_\theta \ell (\theta', x)\|$ is $K$-Lipschitz (under \cref{ass:gradlip})
(and so the map $x \mapsto \frac{1}{K}\|\nabla_\theta \ell (\theta', x)\|$ is $1$-Lipschitz), from the dual representation of $\sf{W}_1$ we have as desired.

To conclude, by combining the two bounds, we have shown that $\nabla_\theta \cal{F}$ is Lipschitz with constant $K_{\cal{F}} = 2K$.
\end{proof}
We now prove that the drift of the SDE \eqref{eq:mpd_flow_f2} is Lipschitz.
\begin{proposition}[Lipschitz Drift]
\label{prop:lipschitz_drift}
    Under \Cref{ass:gradlip}, the drift $b$ is Lipschitz, i.e., there is some constant $K_b>0$ such that the following inequality holds:
    $$
    \|b(\Upsilon, \vartheta, q) - b(\Upsilon', \vartheta',q') \| \le K_b (\|\Upsilon - \Upsilon'\| + \|\vartheta-\vartheta'\| + \sf{W}_1(q,q')),
    $$
    for all $\vartheta, \vartheta' \in \mathbb{R}^{d_\theta}$, $\Upsilon, \Upsilon' \in \bb{R}^{2d_\theta} $, and $q, q' \in \cal{P}(\bb{R}^{2d_x})$.
\end{proposition}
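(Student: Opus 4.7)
The plan is to exploit the block structure of $b$: its four components are either linear (hence trivially Lipschitz) or of the form $\nabla_\theta\mathcal{F}(\theta,q)$ and $\nabla_x\ell(\theta,x)$, for which Lipschitz bounds are already in hand. Concretely, I will bound $\|b(\Upsilon,\vartheta,q)-b(\Upsilon',\vartheta',q')\|^2$ by the sum of squared norms of the four block differences, and control each block in turn.

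The three easy blocks are the linear parts. The first block difference $\eta_\theta(m-m')$ is bounded by $\eta_\theta\|\vartheta-\vartheta'\|$. The third block $\eta_x(u-u')$ is bounded by $\eta_x\|\Upsilon-\Upsilon'\|$. The damping contributions $-\gamma_\theta\eta_\theta(m-m')$ and $-\gamma_x\eta_x(u-u')$ appearing inside the second and fourth blocks are handled identically, with constants $\gamma_\theta\eta_\theta$ and $\gamma_x\eta_x$.

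For the remaining nonlinear pieces I will add and subtract intermediate terms and apply the triangle inequality. For the second block, $\|\nabla_\theta\mathcal{F}(\theta,q)-\nabla_\theta\mathcal{F}(\theta',q')\|\le K_{\mathcal{F}}(\|\theta-\theta'\|+\mathsf{W}_1(q,q'))$ by \Cref{prop:f_theta_lipschitz}, and $\|\theta-\theta'\|\le\|\vartheta-\vartheta'\|$. For the fourth block, $\|\nabla_x\ell(\theta,x)-\nabla_x\ell(\theta',x')\|\le K(\|\theta-\theta'\|+\|x-x'\|)\le K(\|\vartheta-\vartheta'\|+\|\Upsilon-\Upsilon'\|)$ directly from \Cref{ass:gradlip}.

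Collecting the four bounds and absorbing constants yields
\begin{align*}
\|b(\Upsilon,\vartheta,q)-b(\Upsilon',\vartheta',q')\|\le K_b\bigl(\|\Upsilon-\Upsilon'\|+\|\vartheta-\vartheta'\|+\mathsf{W}_1(q,q')\bigr),
\end{align*}
with $K_b$ depending only on $K$, $K_{\mathcal{F}}$, $\eta_\theta$, $\eta_x$, $\gamma_\theta$, $\gamma_x$. There is no real obstacle here; the only point requiring care is to remember to propagate the $\mathsf{W}_1$-dependence through the $\nabla_\theta\mathcal{F}$ term via \Cref{prop:f_theta_lipschitz} rather than trying to reprove it, and to note that the $u$-drift has no measure dependence so no $\mathsf{W}_1$ term enters from the fourth block.
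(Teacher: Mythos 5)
Your proposal is correct and takes essentially the same approach as the paper: both arguments decompose the drift into its four blocks, handle the linear blocks directly, and control the two nonlinear blocks via \Cref{prop:f_theta_lipschitz} for $\nabla_\theta\mathcal{F}$ and \Cref{ass:gradlip} for $\nabla_x\ell$, before absorbing constants into $K_b$. The only cosmetic difference is that you frame the combination step in terms of squared block norms, while the paper works with un-squared norms and the inequality $\sqrt{a+b}\le\sqrt{a}+\sqrt{b}$; the resulting constant is the same up to a fixed factor.
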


\begin{proof}
We begin with the definition, apply the triangle inequality, and use the fact that from the concavity of $\sqrt{\cdot}$ we have $\sqrt{a+b} \le \sqrt{a} + \sqrt{b}$ for $a,b>0$ to obtain
\begin{align*}
    \|b(\Upsilon, \vartheta, q) - b(\Upsilon', \vartheta', q') \|
    &\le  \eta_{\theta} (1 + \gamma_\theta) \left\Vert \tmo - \tmo'\right\Vert + \|\nabla_\theta \cal{F}(\theta, q) - \nabla_\theta \cal{F}(\theta', q')\| \\
     &+ \eta_x(1+\gamma_x)\|\qmo-\qmo'\| + \|\nabla_x \ell(\theta, x) -\nabla_x \ell(\theta', x')\| \\
     &\le  \eta_{\theta} (1 + \gamma_\theta) \left\Vert \tmo - \tmo'\right\Vert + K_{\cal{F}}(\|\theta-\theta'\| + \sf{W}_1(q, q'))  \\
     &+ \eta_x(1+\gamma_x)\|\qmo-\qmo'\| + K(\|\theta - \theta'\| + \|x-x'\|) \\
     &\le \sqrt{2}\max\{\eta_{\theta} (1 + \gamma_\theta) ,K_\cal{F} + K\}\|\vartheta - \vartheta'\| \\
     &+\sqrt{2}\max\{\eta_x(1+\gamma_x), K, K_\cal{F} \}(\| \Upsilon - \Upsilon' \| + \sf{W}_1(q,q')),
\end{align*}
where we use the Lipschitz \cref{ass:gradlip} on $\ell$, and \Cref{prop:f_theta_lipschitz}. Hence we obtained as desired.
\end{proof}

\begin{proof}[Proof of \Cref{prop:existence_uniqueness}]
\label{proof:existence_uniqueness}
The proof is similar to \citet[Theorem 1.7]{carmona2016lectures} but with key generalizations to the product space.

Fix some $\nu \in C([0,T], \bb{R}^{2d_\theta} \times \cal{P}(\bb{R}^{2d_x}))$.
We denoted by $\nu^\vartheta$ and $\nu^\Upsilon$ the projection to the $\bb{R}^{2d_\theta}$ and $\cal{P}(\bb{R}^{2d_x})$ components respectively.
Consider substituting $\nu_t$ into \eqref{eq:mpd_flow_f2} in place of the $\rm{Law}(\Upsilon_t)$ and $\vartheta_t$, from \citet[Theorem 1.2]{carmona2016lectures}, we have existence and uniqueness of the strong solution for some initial point $(\vartheta,\Upsilon)_0$. More explicitly, we have
$$
(\vartheta, \Upsilon)^\nu_t = (\vartheta, \Upsilon)_0 + \int_0^tb(\Upsilon_t^\nu, \nu^\vartheta_s, \nu^\Upsilon_s)\rm{d}s + \int_0^t\beta\rm{d}W_t.
$$
for $t\in [0,T]$. 
We define the operator $F_T:C([0,T], \bb{R}^{2d_\theta}\times \cal{P}(\mathbb{R}^{2d_x})) \rightarrow C([0,T], \bb{R}^{2d_\theta} \times \cal{P}(\mathbb{R}^{2d_x}))$ as
$$
 F_T: \nu \rightarrow (t \mapsto (\vartheta^\nu_t, \rm{Law}(\Upsilon^\nu_t))).
$$
Clearly, if the process $(\vartheta, \Upsilon)_t$ is a solution to  \eqref{eq:mpd_flow_f2} then the function $t \mapsto (\vartheta_t, \rm{Law}(\Upsilon_t))$ is a fixed point to the operator $F_T$, and vice versa.
Now we establish the existence and uniqueness of the fixpoint of the operator $F_T$.

We begin by endowing the space $\bb{R}^{2d_\theta} \times \cal{P}(\bb{R}^{2d_x})$ with the metric:
\begin{equation*}
    \sf{d}((\vartheta, q),(\vartheta', q')) =  \sqrt{\|\vartheta - \vartheta'\|^2 + \sf{W}_2(q,q')^2}.
\end{equation*}
Note that the metric space $(\bb{R}^{2d_\theta} \times \cal{P}(\bb{R}^{2d_x}), d)$ is complete \citep{villani2009optimal}.

First note that using Jensen's inequality, $b$ is Lipschitz, and the fact that $(a+b+c+d)^2 \le 4(a^2 + b^2 + c^2+d^2)$ we obtain
\begin{align*}
    \|\vartheta_t^\nu-\vartheta_t^{\nu'}\|^2+\mathbb{E}[\|\Upsilon^\nu_t - \Upsilon^{\nu'}_t\|^2] &=  \mathbb{E} \left [\left \| \int_0^tb(\Upsilon^\nu_s,  \nu^\vartheta_s,\nu^\Upsilon_s) - b(\Upsilon^{\nu'}_s, {\nu'}^\vartheta_s, {\nu'}^\Upsilon_s) \rm{d}s \right\|^2 \right ] \\
    &\le t \int_0^t  \mathbb{E} \left [\left \|b(\Upsilon^\nu_s,  \nu^\vartheta_s, \nu^\Upsilon_s) - b(\Upsilon^{\nu'}_s, {\nu'}^\vartheta_s,  {\nu'}^\Upsilon_s)\right\|^2 \right ] \rm{d}s \\
    &\le tC\int_0^t \mathbb{E} \left [\left \|\Upsilon^{\nu}_s - \Upsilon^{\nu'}_s\right \|^2 \right ] +  \|\vartheta^\nu_s- \vartheta^{\nu'}_s\|^2+ \|{\nu}^\vartheta_s-{\nu'}^\vartheta_s\|^2 + \sf{W}^2_1({\nu}^\Upsilon_s, {\nu'}^\Upsilon_s)\rm{d}s,
\end{align*}
where $C=4K_b^2$. Applying Gr\"onwall's inequality, we obtain that
\begin{align*}
    \|\vartheta_t^\nu-\vartheta_t^{\nu'}\|^2+\mathbb{E}[\|\Upsilon^\nu_t - \Upsilon^{\nu'}_t\|^2] &\le C(t)\int_0^t \left[ \sf{W}^2_1({\nu}^\Upsilon_s, {\nu'}^\Upsilon_s) +  \|{\nu}^\vartheta_s-{\nu'}^\vartheta_s\|^2 
 \right ]\rm{d}s,
\end{align*}
where $C(t)=Ct\exp(Ct^2)$. Then, using the fact that the LHS is an upper bound for the squared distance $d$ and $\sf{W}_1 \le \sf{W}_2$ \citep[Remark 6.6]{villani2009optimal}, we have
\begin{align}
    \sf{d}^2(F_T(\nu)_t,F_T(\nu')_t)
 &\le C(t)\int_0^t \left[ \sf{W}^2_2({\nu}^\Upsilon_s, {\nu'}^\Upsilon_s) +  \|{\nu}^\vartheta_s-{\nu'}^\vartheta_s\|^2
 \right ]\rm{d}s \nonumber\\
 &\le C(T)\int_0^t \sf{d}^2(\nu_s, \nu'_s)\rm{d}s \le tC(T) \sup_{s \in [0,T]}\sf{d}^2(\nu_s, \nu'_s) \label{eq:induction_base_case},
\end{align}
where we use the fact that $C(t) \le C(T)$ since $t \in [0,T]$.
We show that for $k\ge 1$ successive compositions of the map $F_T$ denoted by $F_{T}^k$, we have the following inequality:
\begin{equation}
\sf{d}^2(F^k_T(\nu)_t,F^k_T(\nu')_t) \le \frac{(tC(T))^k}{k!} \sup_{u\in [0,T]}\sf{d}^2(\nu_u, \nu'_u).
\label{eq:induction_inequality}
\end{equation}
This can be proved inductively. The base case $k=1$ follows immediately from \eqref{eq:induction_base_case}, assume the inequality holds for $k-1$, then for $k$ we have
\begin{align*}
\sf{d}^2(F^k_T(\nu)_t,F^k_T(\nu')_t) &\le C(T) \int_0^t \sf{d}^2(F^{k-1}_T(\nu)_s,F^{k-1}_T(\nu')_s)\rm{d}s \\
&\le \frac{C(T)^k}{{(k-1)!}}\sup_{u\in [0,T]}\sf{d}^2(\nu_u, \nu'_u) \int_0^t {t^{k-1}} \rm{d}s \\
&\le \frac{(tC(T))^k}{{k!}}\sup_{u\in [0,T]}\sf{d}^2(\nu_u, \nu'_u).
\end{align*}
Hence, we have shown that the inequality \eqref{eq:induction_inequality} holds. Taking the supremum, we obtain
$$
\sup_{s \in [0,T]}\sf{d}^2(F^k_T(\nu)_s,F^k_T(\nu')_s)\le \frac{(TC(T))^k}{{k!}} \sup_{s \in [0,T]}\sf{d}^2(\nu_s, \nu'_s).
$$
Since $k \mapsto (TC(T))^k \in o(k!)$, there exists a large enough $k$ such that there is a constant $0 < \alpha < 1$ for the following inequality holding: 
$$
\sup_{s\in[0,T]}\sf{d}^2(F_T^k(\nu)_s, F_T^k(\nu')_s) \le \alpha \sup_{s\in[0,T]}\sf{d}^2(\nu_s, \nu_s').
$$
Hence, we have shown that the operator $F^k_T$ is a contraction and from the Banach Fixed Point theorem and completeness of the space $(C([0,T], \bb{R}^{d_\theta} \times \cal{P}(\bb{R}^{d_x})), \sup \sf{d})$, we have existence and uniqueness.
\end{proof}

\section{Space Discretization}

In this section, we establish asymptotic pointwise propagation of chaos results. We are interested in justifying the use of a particle approximation in the flow \eqref{eq:mpd_flow}. The flow \eqref{eq:mpd_flow}  can be rewritten equivalently as follows. Let $(\Upsilon_0^i, W^i)_{i\in[M]}$ be i.i.d.\ copies of $(\Upsilon_0, W)$. We write $(\vartheta, \{\Upsilon^{i}\}_{i=1}^M)$ as solutions of \eqref{eq:mpd_flow} starting at $(\vartheta_0, \{\Upsilon_0^i\}_{i=1}^M)$ driven by the $\{W^i\}_{i=1}^M$. In other words, \eqref{eq:mpd_flow} satisfies
\begin{subequations}
    \begin{align*}
        \rm{d}\vartheta_t &= b_\vartheta\left(\vartheta_t, \rm{Law}(\Upsilon^1_t)\right)\rm{d}t, \\
    \forall i \in [M]: \rm{d}\Upsilon^{i}_t &= b_\Upsilon(\Upsilon^{i}_t, \vartheta_t)\rm{d}t + \bm{\beta}_\Upsilon\,\rm{d}W^i_t.
    \end{align*}
\end{subequations}
where
$$
b_\vartheta(\vartheta_t, q ) = \begin{bmatrix}
\eta_\theta m \\
-\gamma_\theta \eta_\theta m - \nabla_\theta \cal{F}(\theta', q)
\end{bmatrix}, \quad 
b_\Upsilon(\Upsilon, \vartheta ) = \begin{bmatrix}
    \eta_x u \\
    -\gamma_x \eta_x u + \nabla_x \ell (\theta, x)
\end{bmatrix}, \quad
\bm{\beta}_\Upsilon := \sqrt{2} \rm{Diag}([0_{d_x}, 1_{d_x}]).
$$
Clearly, for all $i,j \in [M]$,
we have $\rm{Law}(\Upsilon^i_t) =\rm{Law}(\Upsilon^j_t)$.

We will justify that we can replace the $\rm{Law}(\Upsilon^1_t)$ with a particle approximation to obtain the approximate process  \eqref{eq:space_discretization}, or equivalently as:
\begin{subequations}
\begin{align}
    \rm{d}\vartheta^M_t &= b_\vartheta\left(\vartheta^M_t, \frac{1}{M}\sum_{i=1}^M \delta_{\Upsilon^{i,M}_t}\right)\rm{d}t \\
    \forall i \in [M]: \rm{d}\Upsilon^{i,M}_t &= b_\Upsilon(\Upsilon^{i,M}_t, \vartheta^M_t)\rm{d}t + \beta_\Upsilon \rm{d}W^i_t.
\end{align}
\end{subequations}
Similarly to \Cref{prop:lipschitz_drift}, we can show that $b_\vartheta$ and $b_\Upsilon$ are both Lipschitz.
We are now ready to prove \Cref{prop:chaos} justifying that \eqref{eq:space_discretization} is a good approximation to \eqref{eq:mpd_flow}.

\begin{proof}[Proof of \Cref{prop:chaos}]
    \label{proof:chaos}
    This is equivalent to showing that
    \begin{align*}
       \lim_{M\rightarrow \infty} \bb{E} \left [ \sup_{t \in [0,T]} \left  \{\|\vartheta_t - \vartheta^M_t \|^2 + \frac{1}{M} \sum_{i=1}^M\|\Upsilon^i_t - \Upsilon^{i,M}_t\|^2 \right \} \right ] = 0.
    \end{align*}
    More specifically, we will show that 
    \begin{align*}
       \underbrace{\bb{E} \left [ \sup_{t \in [0,T]} \|\vartheta_t - \vartheta^M_t \|^2 \right ]}_{(a)} + \underbrace{\bb{E} \left [\sup_{t \in [0,T]} \frac{1}{M} \sum_{i=1}^M\|\Upsilon^i_t - \Upsilon^{i,M}_t\|^2  \right ]}_{(b)} = o(1).
    \end{align*}
    We begin with (a). From Jensen's inequality, we have
    \begin{align*}
        (a)
        &\le T \mathbb{E} \int_0^T \left \|b_\vartheta(\vartheta_s, \rm{Law}(\Upsilon^1_s)) 
        - b_\vartheta(\vartheta^M_s, \frac{1}{M}\sum_{i=1}^M \delta_{\Upsilon^{i,M}_s})\right  \|^2 \rm{d}s\\
        &\le C_\vartheta \int_0^T \mathbb{E}\|\vartheta_s - \vartheta^M_s\|^2 + \mathbb{E}\sf{W}_2^2\left(\rm{Law}(\Upsilon^{0}_s), \frac{1}{M}\sum_{i=1}^M \delta_{\Upsilon^{i,M}_s}\right) \rm{d}s,
    \end{align*}
    where we use the fact that $b_\vartheta$ is $K_{b_\vartheta}$--Lipschitz, $C_\vartheta = 2 T K_{b_\vartheta}^2$, and $\mathbb{E}(a+b)^2 \le 2(\mathbb{E}a^2 + \mathbb{E}b^2)$ (known as the $C_r$--inequality \citep[p157]{Loève1977}).
    
    Note that by the triangle inequality
    \begin{align*}
        \mathbb{E}\sf{W}_2^2\left(\rm{Law}(\Upsilon^{0}_s), \frac{1}{M}\sum_{i=1}^M \delta_{\Upsilon^{i,M}_s}\right) &\le 2\mathbb{E} \sf{W}_2^2\left(\rm{Law}(\Upsilon^{0}_s), \frac{1}{M}\sum_{i=1}^M \delta_{\Upsilon^{i}_s}\right)\\
        &+ 2\mathbb{E}\sf{W}_2^2\left(\frac{1}{M}\sum_{i=1}^M \delta_{\Upsilon^{i}_s}, \frac{1}{M}\sum_{i=1}^M \delta_{\Upsilon^{i,M}_s}\right).
    \end{align*}
    The two terms can be bounded using \citet[Eq.\ (1.24) and Lemma 1.9]{carmona2016lectures} to produce the inequality
    \begin{align*}
        \mathbb{E}\sf{W}_2^2\left(\rm{Law}(\Upsilon^{0}_s), \frac{1}{M}\sum_{i=1}^M \delta_{\Upsilon^{i,M}_s}\right) &\le o(1) + \frac{2}{M}\sum_{i=1}^M \mathbb{E} \|\Upsilon^i_s - \Upsilon^{i,M}_s\|^2.
    \end{align*}
    Hence, we have that
    $$
    (a) \le C' \int_0^T \left \{ \mathbb{E}\|\vartheta_s - \vartheta^M_s\|^2 + o(1) + \frac{1}{M}\sum_{i=1}^M \mathbb{E} \|\Upsilon^i_s - \Upsilon^{i,M}_s\|^2 \right \} \rm{d}{s}.
    $$
    where $C'_\vartheta = 2C_\vartheta$. We use the fact that 
    \begin{equation}
        \|\vartheta_s - \vartheta^M_s\|^2 \le \sup_{s'\in [0,s]}\|\vartheta_{s'} - \vartheta^M_{s'}\|^2, \quad  \frac{1}{M}\sum_{i=1}^M\|\Upsilon^i_s - \Upsilon^{i,M}_s\|^2 \le \sup_{s'\in[0,s]}\frac{1}{M}\sum_{i=1}^M\|\Upsilon^i_{s'} - \Upsilon^{i,M}_{s'}\|^2,
        \label{eq:trick}
    \end{equation}
    to obtain
    \begin{equation}
        (a) \le C'\int_0^T \left \{ \mathbb{E}\sup_{s'\in [0,s]}\|\vartheta_{s'} - \vartheta^M_{s'}\|^2 + o(1) + \mathbb{E}\sup_{s'\in[0,s]}\frac{1}{M}\sum_{i=1}^M\|\Upsilon^i_{s'} - \Upsilon^{i,M}_{s'}\|^2 \right \} \rm{d}{s}
    \end{equation}
    Similarly, for (b), we have
    \begin{align*}
        (b) &\le\frac{T}{M} \sum_{i=1}^M \bb{E}  \int_0^T \|b_\Upsilon(\Upsilon^{i}_s, \vartheta_{s})-b_\Upsilon(\Upsilon^{i,M}_s, \vartheta_{s}^{M})\|^2\rm{d}s \\
        &\le C_\Upsilon\mathbb{E} \int^T_0 \|\vartheta_s - \vartheta_s^M\|^2 + \frac{1}{M} \sum_{i=1}^M \|\Upsilon_s^{i} - \Upsilon_s^{i,M}\|^2 
        \rm{d}s \\
        &\le C_\Upsilon\int^T_0 \mathbb{E}\sup_{s'\in [0,s]}\|\vartheta_{s'} - \vartheta^M_{s'}\|^2 + \bb{E}\sup_{s'\in[0,s]}\frac{1}{M}\sum_{i=1}^M\|\Upsilon^i_{s'} - \Upsilon^{i,M}_{s'}\|^2, 
        \rm{d}s,
    \end{align*}
    where $C_\Upsilon := 2TK_{b_\Upsilon}^2$ for the last line we use the trick in \eqref{eq:trick}.

    Combining the bounds for $(a)$ and $(b)$, we obtain
    $$
    (a) + (b) \le C \int_0^T \left \{ \mathbb{E}\sup_{s'\in [0,s]}\|\vartheta_{s'} - \vartheta^M_{s'}\|^2 + o(1) + \mathbb{E}\sup_{s'\in[0,s]}\frac{1}{M}\sum_{i=1}^M\|\Upsilon^i_{s'} - \Upsilon^{i,M}_{s'}\|^2 \right \} \rm{d}s,
    $$
    where $C=C_\Upsilon+C'_\vartheta$.
    Applying Gronwall's inequality, we obtain 
    $$
    \bb{E} \left [ \sup_{t \in [0,T]} \|\vartheta_t - \vartheta^M_t \| \right ] + \bb{E} \left [\sup_{t \in [0,T]} \frac{1}{M} \sum_{i=1}^M\|\Upsilon^i_t - \Upsilon^{i,M}_t\|  \right ] \le o(1).
    $$
    Taking the limit, we have as desired.
\end{proof}

\section{Time Discretization}

In this section, we are concerned with discretization schemes of various ODE/SDEs. The structure is as follows:
\begin{itemize}
	\item \Cref{sec:pgd_discetization}. We describe the Euler-Marayama discretization of PGD as described in \citet{Kuntz2022}.
	\item \Cref{sec:nag_discretization}. We show we can obtain NAG as a discretization of the damped Hamiltonian.
	\item \Cref{sec:cheng_nag_mpd}. We show a discretization of MPD (described in \Cref{eq:mpd_flow}) using a scheme replicating NAG as described in \Cref{sec:nag_discretization} for the $(\theta, \tmo)$-components, and \citet{cheng2018underdamped}'s for $(x, \qmo)$-components.
	\item  \Cref{sec:our_discretization}. We derive the transition using a scheme inspired by \citet{cheng2018underdamped} while incorporating NAG-style gradient correction as described in \citet{sutskever2013importance}.
\end{itemize}
\subsection{PGD discretization}
\label{sec:pgd_discetization}
In order to obtain an implementable system, it is standard to then discretise the distribution $q_t$ by representing it with a finite particle system, i.e. $q_t \left( \mathrm{d} x \right) \approx \frac{1}{M} \sum_{i \in \left[ M \right]} \delta \left( X_t^i, \mathrm{d} x \right)$. Upon making this approximation, one obtains the system
\begin{align*}
    \dot{\theta}_t &= \frac{1}{M} \sum_{i \in \left[ M \right]} \nabla_{\theta} \ell \left( \theta_t, X_t^i \right), \\
    \text{for } i \in \left[ M \right], \quad \mathrm{d} X_t^i &= \nabla_x \ell \left( \theta_t, X_t^i \right) \, \mathrm{d} t + \sqrt{2} \, \mathrm{d} W_t^i,
\end{align*}
in which all terms are readily available. Discretising this process in time then yields the Particle Gradient Descent (PGD) algorithm of \cite{Kuntz2022}, i.e. for $k \geq 1$, iterate
\begin{align*}
    \theta_k &= \theta_{k - 1} + h  \left( \frac{1}{M} \sum_{i=1}^M \nabla_\theta \ell \left( \theta_{k - 1}, X_{k - 1}^i \right) \right), \\
    \text{for } i \in [M], \quad X_k^i &=  X_{k - 1}^i + h \nabla_x \ell \left( \theta_{k - 1}, X_{k - 1}^i \right) + \sqrt{2 h} \epsilon^i_k,
\end{align*}
where $\epsilon^i_k \overset{\textrm{i.i.d.}}{\sim} \mathcal{N}(0_{d_x}, I_{d_x})$, for some initialization $(\theta_0, \{X_0^i\}_{i=1}^M)$.

\subsection{NAG as a discretization}
\label{sec:nag_discretization}

Recall the momentum-enriched ODE is given by
$$
\ddot{ \theta}_t + \gamma   \eta \dot{\theta}_t + \eta  \nabla_\theta  f(\theta_t) = 0.
$$
Let $m_t = \frac{1}{\eta} \dot{\theta}_t$, then we can write the above equivalently as the following coupled first-order ODEs:
\begin{align*}
	\dot{m}_t &= - \gamma  \eta m_t - \nabla f(\theta_t), \quad \dot{\theta_t} = \eta  m_t.
\end{align*}
When $\gamma= \frac{3}{t}$ and $\eta = 1$, we show that a particular discretization of the above 
is equivalent to Nesterov Accelerated Gradient (NAG) method \citep{nesterov1983method}. The argument is inspired by reversing the one of \citet{su2014differential} who obtained the continuous limit of NAG.

Recall that NAG \citep{nesterov1983method}, for convex $f$ (but not strongly convex), is defined by the following iteration:
\begin{align*}
	\theta_{k}&=y_{k-1}-h\nabla f\left(y_{k-1}\right),\quad y_{k}=\theta_{k}+\left (\frac{k-1}{k+2}\right )\left(\theta_{k}-\theta_{k-1}\right).
\end{align*}
Since $\frac{k-1}{k+2} \approx (1 - \frac{3}{k})$, we have
\begin{align}
	\theta_{k}&\approx y_{k-1}-h\nabla_\theta f\left(y_{k-1}\right),\quad y_{k}\approx \theta_{k}+\left ( 1-\frac{3}{k}\right )\left(\theta_{k}-\theta_{k-1}\right).
	\label{eq:approx_nag}
\end{align}
We will now show how a particular discretization scheme will produce \eqref{eq:approx_nag}.
It uses a combination of implicit Euler for $\dot{\theta}_t$, and use a semi-implicit Euler scheme for $\dot{m}_t$. More specifically, the semi-implicit scheme for $\dot{m_t} = -\frac{3}{t} m_t - \nabla_\theta f(\theta_t)$ uses an explicit approximation for the momentum $\frac{3}{t} m_t$, and implicit approximation for the gradient $\nabla_\theta f(\theta_t)$. In summary, we obtain the following discretization
\begin{align*}
	{m_{t}}
	& \approx m_{t-\sqrt{h}} - \sqrt{h} \left ( \frac{3}{t - \sqrt{h}} m_{t-\sqrt{h}} + \nabla_\theta f(\theta_{t})  \right ), 
	\quad {\theta_{t}} \approx \theta_{t-\sqrt{h}} + \sqrt{h} m_{t}.
\end{align*}
We can write the above equivalently through the map $t \mapsto \frac{t}{\sqrt{h}}$ and  in terms of $k:=\frac{t}{\sqrt{h}}$, to obtain
\begin{align*}
	{m_{k}} &\approx \left ( 1 - \frac{3}{k - 1} \right ) m_{k-1} -  \sqrt{h}\nabla_\theta f(\theta_{k}) , \\
	{\theta_{k}} &\approx \theta_{k-1} + \sqrt{h} m_{k}.
\end{align*}
Hence,
\begin{align*}
	{\theta_{k}} &\approx \theta_{k-1} + \sqrt{h} \left ( 1 - \frac{3}{k-1} \right ) m_{k-1} -  h\nabla_\theta f(\theta_{k}).
\end{align*}
From our discretization, we have that $\sqrt{h} m_{k} \approx \theta_{k} -\theta_{k-1} $ then we obtain
\begin{align*}
	{\theta_{k}} &\approx \theta_{k-1} + \left ( 1 - \frac{3}{k-1} \right ) \left ( \theta_{k-1} -\theta_{k-2} \right )-  h\nabla_\theta f(\theta_{k}).
\end{align*}
We can write the $\theta$ update in terms of $y_k$ (cf.~\eqref{eq:approx_nag}),
\begin{align*}
	{\theta_{k}} &\approx y_{k-1} -  h\nabla_\theta f(\theta_{k}).
\end{align*}
If $f$ is Lipschitz, then we have 
$$
\|h\nabla_\theta f(\theta_k) - h\nabla_\theta f(y_{k-1})\| \le Lh\|\theta_k - y_{k-1}\| \le h^2 L \|\nabla_\theta f(\theta_k)\| \le o(h).
$$
We replace the gradient $\nabla_\theta f(\theta_k)$ with $\nabla_\theta f(y_{k-1})$ to obtain as desired 
\begin{align*}
	{\theta_{k}} &\approx y_{k-1} -  h\nabla_\theta f(y_{k-1}), \quad 
	y_{k} \approx \theta_{k}+\left ( 1-\frac{3  }{k}\right )\left(\theta_{k}-\theta_{k-1}\right).
\end{align*}
Interestingly, for given step size $h$, the (approximate) NAG iterations approximate the flow for time $\sqrt{h}$ as opposed to $h$ \citep[Section 3.4]{su2014differential}

\subsection{Nesterov and Cheng's discretization of MPD}
\label{sec:cheng_nag_mpd}

In this section, we show how to discretize the MPD using a combination of Nesterov's (see \Cref{sec:nag_discretization}) and \citet{cheng2018underdamped}'s discretization. Recall, we have
\begin{align*}
	\rm{d}{\theta}_t &= \eta_\theta m_t \, \rm{d}t,\\
	\rm{d}{\tmo}_t &= -\nabla_\theta \cal{E} (\theta_t, q^M_{t,X})\,dt - \gamma_\theta \eta_\theta m_t\, \rm{d}t,\\
	\forall i \in [M]:\rm{d}{X}^i_t &= \eta_x {\Qmo^i_t} \rm{d}t, \\
	\forall i \in [M]:\rm{d}{\Qmo}^i_t &= \nabla_x \ell (\theta_t, X_t^i)\rm{d}t - \gamma_x \eta_x {\Qmo}^i_t\rm{d}t + \sqrt{2\gamma_x}\,\rm{d}W^i_t.
\end{align*}

In this section, we show how discretizing the $\theta$ component in the style of Nesterov (specifically, \citet{sutskever2013importance}'s formulation) and $q$ component in the style of \citet{cheng2018underdamped} obtains the MPD-NC (Nesterov--Cheng) algorithm. The MPD-NC algorithm is described as follows: given previous values $(\theta_k, v_k, \{X_k^i, \Qmo_k^i\}_{i=1}^M)$ and step-size $h>0$, we iterate
\begin{subequations}
	\begin{align}
		\label{eq:nc_algo}
		\theta_{k+1} &= \theta_k + v_k,\\
		v_{k+1} &= \mu v_{k} - h^2 \nabla_\theta \cal{E}(\theta_k +\mu v_{k}, q^M_{k,X}),\\
		\forall i \in [M]:  X^i_{k+1} &= X_k^{i} + \frac{1}{\gamma_x} \left [ ( 1- \omega_x(h))  {\Qmo}^i_{k} + \nabla_x \ell (\theta_{k+1}, X^i_{k}) \left ( h - \frac{1- \omega_x(h)}{\gamma_x\eta_x} \right ) \right ] + L_\Sigma^{XX} \xi^i_k, \\ 
		\forall i \in [M]: {\Qmo}^i_{k+1} &= \omega_x(h) {\Qmo}_k^i +  \frac{1- \omega_x(h)}{\gamma_x \eta_x}\nabla_x \ell (\theta_{k+1}, X^i_{k}) + L_\Sigma^{X\Qmo} \xi^i_k + L_\Sigma^{\Qmo\Qmo}\xi_k'^{i},
	\end{align}
\end{subequations}
for all $i \in [N]$, where $\mu := 1 - h \gamma_\theta \eta_\theta$, $\omega_x (t) := $; $L_\Sigma^{XX}, L_\Sigma^{X\Qmo}, L_\Sigma^{\Qmo\Qmo}$ is described in \eqref{eq:sampling_constants}, and $\{\xi^i_k, \xi'^i_k\} \overset{i.i.d.}{\sim} \cal{N}(0, I_{d_x})$. Each iteration corresponds to (approximately) solving \eqref{eq:mpd_flow} for time $h$.

In \Cref{sec:nesterov_aaflow},  we show how Nesterov's discretization can be used to produce the update
\begin{align*}
		\theta_{k+1} &= \theta_k + v_k,\\
	v_{k+1} &= \mu v_{k} - h^2 \nabla_\theta \cal{E}(\theta_k +\mu v_{k}, q^M_{k,X}).
\end{align*}
In \Cref{sec:transition_derivation}, given $(\theta_{k+1}, \{X_k^i, \Qmo_k^i\}_{i=1}^M)$, we show that the transition is described by
\begin{align*}
	\forall i \in [M]: X^i_{k+1} &= X_k^{i} + \frac{1}{\gamma_x} \left [ ( 1- \omega_x(h))  {\Qmo}^i_{k} + \nabla_x \ell (\theta_{k+1}, X^i_{k}) \left ( h - \frac{1- \omega_x(h)}{\gamma_x\eta_x} \right ) \right ] + L_\Sigma^{XX} \xi^i_k, \\ 
	\forall i \in [M]: {\Qmo}^i_{k+1} &= \omega_x(h) {\Qmo}_k^i +  \frac{1- \omega_x(h)}{\gamma_x \eta_x}\nabla_x \ell (\theta_{k+1}, X^i_{k}) + L_\Sigma^{X\Qmo} \xi^i_k + L_\Sigma^{\Qmo\Qmo}\xi_k'^{i}.
\end{align*}
\subsubsection{Discretization of $(\dot{\theta}_t, \dot{m}_t)$}
\label{sec:nesterov_aaflow}
This discretization follows similarly to that in \Cref{sec:nag_discretization}. Similarly, let $k: = \frac{t}{\sqrt{h}}$ with the time rescaling $t \mapsto \frac{t}{\sqrt{h}}$, then we consider the following discretization:
\begin{align*}
	m_{k} &= m_{k-1} - \sqrt{h} \left (\gamma_\theta   \eta_\theta  m_{k-1} + \nabla_\theta \cal{E}(\theta_k, q^M_{k-1,X}) \right ) \\
	&= \left ( 1 - \sqrt{h} \gamma_\theta \eta_\theta \right ) m_{k-1} - \sqrt{h} \nabla_\theta \cal{E}(\theta_k, q^M_{k-1,X}), \\ 
	\theta_{k} &= \theta_{k-1} + \sqrt{h} \eta_\theta  m_{k}, 
\end{align*}
Expanding $m_k$, we obtain
\begin{align*}
	\theta_{k} &= \theta_{k-1} + \sqrt{h}  \eta_\theta  \left ( 1 - \sqrt{h}  \gamma_\theta \eta_\theta \right )  m_{k-1} -  h  \nabla_\theta \cal{E}(\theta_k, q^M_{k-1,X}) \\
	&= \theta_{k-1} + \left ( 1 - \sqrt{h}  \gamma_\theta  \eta_\theta \right )  (\theta_{k-1} - \theta_{k-2}) - h \nabla_\theta \cal{E}(\theta_k, q^M_{k-1,X}),
\end{align*}
Let $v_t = \theta_k - \theta_{k-1}$, then we have
\begin{align*}
	\theta_{k} &= \theta_{k-1} + v_{k} \\
	v_{k} &= \bar{\mu}  v_{k-1} - h \nabla_\theta \cal{E}(\theta_k, q^M_{k-1,X}),
\end{align*}
where $\bar{\mu} := 1 - \sqrt{h}  \gamma_\theta  \eta_\theta $. Then, as before, using the approximation $h\nabla \cal{E}(\theta_{k}, q^M_{k-1,X}) \approx h \nabla {\cal{E}}(\theta_{k-1} + \bar{\mu} v_{k-1}, q^M_{k-1,X}) + o(h)$, we obtain that
\begin{align*}
	\theta_{k} &= \theta_{k-1} + v_{k}, \\
	v_{k} &= \bar{\mu} v_{k-1} - h \nabla_\theta  \cal{E}(\theta_{k-1} + \bar{\mu} v_{k-1}, q^M_{k-1,X}).
\end{align*}
Similarly to \Cref{sec:nag_discretization} this approximates the flow for time $\sqrt{h}$ instead of $h$.
Hence, we have to apply the appropriate rescaling to obtain the following iteration with the desired behaviour:
\begin{align*}
	\theta_{k} &= \theta_{k-1} + v_{k}, \\
	v_{k} &= {\mu} v_{k-1} - h^2 \nabla_\theta  \cal{E}(\theta_{k-1} + \mu v_{k-1}, q^M_{k-1,X}),
\end{align*}
where ${\mu} :=  1 - h \gamma_\theta \eta_\theta$. This is exactly that of \citet{sutskever2013importance}'s characterization of NAG (see Equations 3 and 4 in their paper).
\subsubsection{Discretization of $(\dot{X}_t, \dot{\Qmo}_t)$}
\label{sec:transition_derivation}
We describe the discretization scheme of \citet{cheng2018underdamped}. For simplicity, we derive the transition of a single particle $(X_t, \Qmo_t)$ since there are no interactions between the particles given $\theta$. Furthermore, for brevity and without loss in generality, we derive the transition for some step size $h>0$ given initial values $(\theta, X_0, U_0)$, which can be easily generalized to future transitions.

Consider a time interval $t \in (0, h]$ and given $(X_{0}, \Qmo_{0})$, we first approximate the gradient $\nabla_x \ell (\theta, X_t)$ with $\nabla_x \ell (\theta ,X_{0})$ to arrive at the following linear SDE:
\begin{align}
	\rm{d}\begin{pmatrix} X_t \\
		{\Qmo}_t
	\end{pmatrix} 
	&= \left [
	\begin{pmatrix}
		0 \\
		\nabla_x \ell (\theta, X_0)
	\end{pmatrix}
	+
	\begin{pmatrix}
		0 & \eta_x I \\
		0 & -\gamma_x\eta_x I
	\end{pmatrix}
	\begin{pmatrix}
		X_t \\
		{\Qmo}_t
	\end{pmatrix} \right ] \,\rm{d}t
	+
	\sqrt{2\gamma_x}
	\begin{pmatrix}
		0 \\
		1
	\end{pmatrix} \,\rm{d}W_t.
	\label{eq:ula_as_linearsde}
\end{align}
A $2d_x$-dimensional linear SDE is given by:
\begin{equation}
	\rm{d}\bm{X}_t = (\bm{A}\bm{X}_t + \bm{\alpha})\, \rm{d}t + \bm{\beta} \,\rm{d}W_t
	\label{eq:linear_sde},
\end{equation}
where $\bm{A} \in \mathbb{R}^{2d_x\times 2d_x}$ and $\bm{\alpha}, \bm\beta \in \mathbb{R}^{2d_x}$ are  fixed matrices. It is clear that if we set
\begin{equation*}
	\bm{X}_t =
	\begin{pmatrix}
		X_t \\
		{\Qmo}_t
	\end{pmatrix},\quad 
	\bm{A} =
	\begin{pmatrix}
		0_{d_x} & \eta_x I_{d_x} \\
		0_{d_x} & -\gamma_x\eta_x I_{d_x}
	\end{pmatrix}, \quad \bm{\alpha} =
	\begin{pmatrix}
		0_{d_x \times 1} \\
		\nabla_x \ell (\theta , X_{0})
	\end{pmatrix}, \quad \bm{\beta} = \sqrt{2\gamma_x}
	\begin{pmatrix}
		0_{d_x \times 1} \\
		1_{d_x\times 1}
	\end{pmatrix},
\end{equation*}
then the discretized underdamped Langevin SDE of \eqref{eq:ula_as_linearsde} falls within the class of linear SDE of the form specified in \eqref{eq:linear_sde}. These SDEs
admits the following explicit solution \citep[see pages 48, 101]{platen2010numerical},
\begin{equation}
	\bm{X}_t = \bm{\Psi}_t \left (\bm{X_0} + \int_0^t \bm{\Psi}^{-1}_s \bm{\alpha} \,\rm{d}s + \int_0^t \bm{\Psi}^{-1}_s\bm{\beta}\,\rm{d}W_s\right ),
	\label{eq:linearsde_sol}
\end{equation}
where 
$$
\bm{\Psi}_t := \exp \left (\bm{A}t \right ),
$$
with $\exp$ to be understood as the matrix exponential. In our case, the matrix exponential and its inverse is given by
\begin{subequations}
	\begin{align}
		\bm{\Psi}_t &=I_{2d_x}+\sum_{i=1}^\infty \frac{1}{k!}\begin{bmatrix}
			0_{d_x} & \frac{(-\gamma_x\eta_xt)^k}{-\gamma_x} I_{d_x} \\
			0_{d_x} & (- \gamma_x \eta_x t)^k I_{d_x}
		\end{bmatrix} = 
		 \begin{pmatrix}
			I_{d_x} & \frac{1- \omega_x (t)}{\gamma_x} I_{d_x} \\
			0_{d_x} & \omega_x (t) I_{d_x}
		\end{pmatrix},\\ \bm{\Psi}^{-1}_t &= \begin{pmatrix}
			I_{d_x} & \frac{1- \omega_x (-t) }{\gamma_x}I_{d_x} \\
			0_{d_x} & \omega_x (-t) I_{d_x}
		\end{pmatrix},
	\end{align}
	\label{eq:fundamental_matrx}
\end{subequations}
where $\omega_x (t) := \exp(-\gamma_x\eta_x t)$.
It can be verified that they are indeed the inverse of each other, i.e., $\bm{\Psi}_t \bm{\Psi}^{-1}_t = I_{2d_x}$. 

Plugging in \eqref{eq:fundamental_matrx} into \eqref{eq:linearsde_sol}, we obtain the following:
\begin{align*}
	\begin{pmatrix} X_t \\
		{\Qmo}_t
	\end{pmatrix}
	&= \begin{pmatrix}
		I_{d_x} & \frac{1- \omega_x (t)}{\gamma_x}I_{d_x} \\
		0_{d_x} & \omega_x (t) I_{d_x}
	\end{pmatrix} \left ( \begin{bmatrix} X_0 \\
		{\Qmo}_0 
	\end{bmatrix} + \int_0^t \begin{pmatrix}
		\frac{1- \omega_x (-s) }{\gamma_x}\nabla_x \ell (\theta, X_{0}) \\
		\omega_x (-s) \nabla_x \ell (\theta, X_{0})
	\end{pmatrix} \,\rm{d}s  + \sqrt{2} \int_0^t \begin{pmatrix}
		{\frac{1- \omega_x (-s)}{\sqrt{\gamma_x}}}  1_{d_x \times 1} \\
		\sqrt{\gamma_x}\omega_x (-s) 1_{d_x \times 1}
	\end{pmatrix}\,\rm{d}W_s\right ).
\end{align*}
Or, equivalently, we have
\begin{align*}
	{\Qmo}_t &= \omega_x (t) {\Qmo}_{0} +  \nabla_x \ell(\theta, X_{0}) \int_0^t \omega_x (t-s) \,\rm{d}s +\sqrt{2\gamma_x}\int_0^t \omega_x (t-s) \,\rm{d}W_s \\
	&= \omega_x (t) {\Qmo}_{0} +  \frac{1- \omega_x (t)}{\gamma_x\eta_x}\nabla_x \ell (\theta ,X_{0})  +\sqrt{2\gamma_x}\int_0^t \omega_x (t-s) \,\rm{d}W_s,  \\
	X_t &= X_0 + \frac{1}{\gamma_x} \left [ ( 1- \omega_x (t))  {\Qmo}_0 + \nabla_x \ell (\theta, X_{0}) \left [ \int_0^t 1- \omega_x (t-s) \,\rm{d}s \right ]  + \sqrt{\frac{2}{\gamma_x}}\int_0^t 1 - \omega_x (t-s) \,\rm{d}W_s \right  ]  \\
	&= X_0 + \frac{1}{\gamma_x} \left [ ( 1- \omega_x (t))  {\Qmo}_0 + \nabla_x \ell (\theta, X_{0}) \left [ t - \frac{1- \omega_x (t)}{\gamma_x \eta_x} \right ]  + \sqrt{\frac{2}{\gamma_x}}\int_0^t 1- \omega_x (t-s)\,\rm{d}W_s \right  ].
\end{align*}

As noted by \citet{cheng2018underdamped}, this is a Gaussian transition. To characterize it, we need to calculate the first two moments:

For the first moments, we have
\begin{align*}
	\mu_{\Qmo}(X_0, \Qmo_0, t) :=& \mathbb{E}[{\Qmo}_t] \\
	=& \omega_x(t) {\Qmo}_0 +  \frac{1- \omega_x(t)}{\gamma_x\eta_x}\nabla_x \ell(\theta, X_{0}),  \\
	\mu_X(X_0, \Qmo_0, t)  :=& \mathbb{E}[X_t] \\
	=&  X_0 + \frac{1}{\gamma_x} \left [ ( 1- \omega_x(t))  {\Qmo}_0 + \nabla_x \ell (\theta, X_{0}) \left [ t - \frac{1- \omega_x (t)}{\gamma_x\eta_x} \right ] \right ].
\end{align*}
For the second moments, we have
\begin{align*}
	\Sigma_{\Qmo \Qmo}(t) &:= \mathbb{E}[({\Qmo}_t - \mathbb{E}[{\Qmo}_t])( {\Qmo}_t - \mathbb{E}[{\Qmo}_t])^\top] \\
	&= 2\gamma_x\mathbb{E} \left [ \left (\int_0^t \omega_x(t-s) \,\rm{d}W_s \right ) \left ( \int_0^t \omega_x(t-s)\,\rm{d}W_s\right )^\top  \right ] \\
	&= 2\gamma_x\left (\int_0^t \omega_x(2(t-s))\,\rm{d}s \right )  I_{d_x} \\
	&= \left (\frac{1-\omega_x(2t)}{\eta_x } \right ) I_{d_x}, \\
	\Sigma_{XX}(t) &:= \mathbb{E}[(X_t - \mathbb{E}[X_t])( X_t - \mathbb{E}[{X_t}])^\top] \\
	&= \frac{2}{\gamma_x}  \mathbb{E} \left [ \left (\int_0^t 1 - \omega_x (t-s)\,\rm{d}W_s \right ) \left ( \int_0^t 1- \omega_x(t-s) \,\rm{d}W_s\right )^\top  \right ] \\
	&= \frac{2}{\gamma_x} \left (\int_0^t \left [ 1-\omega_x(t-s) \right ]^2 \,\rm{d}s \right )  I_{d_x} \\
	&= \frac{1}{\gamma_x}  \left [2t -\frac{\omega_x(2t)}{\gamma_x\eta_x } +  \frac{4\omega_x(t)}{\gamma_x\eta_x } - \frac{3}{\eta_x \gamma_x} \right ] I_{d_x}, \\
	\Sigma_{\Qmo X}(t) &:= \mathbb{E}[({\Qmo}_t - \mathbb{E}[{\Qmo}_t])( X_t - \mathbb{E}[{X_t}])^\top] \\
	&= 2 \mathbb{E}\left [ \left (\int_0^t \omega_x(t-s) \,\rm{d}W_s \right )\left (\int_0^t 1 - \omega_x(t-s)\,\rm{d}W_s \right )^\top \right ] \\
	&=2\left (\int_0^t \omega_x(t-s) (1 - \omega_x(t-s))\,\rm{d}s \right )  I_{d_x}\\
	&=\frac{1}{\gamma_x\eta_x}  \left (  1- 2 \omega_x(t) + \omega_x(2t)  \right )  I_{d_x}.
\end{align*}
Hence, the transition can be described as 
$
\begin{pmatrix} X_t \\
	{\Qmo}_t
\end{pmatrix}
\sim \mathcal{N}\left (
\mu(t),
\Sigma(t)
\right )
$
where
$$
\mu(t) = \begin{pmatrix} \mu_X (X_k, \Qmo_k, t) \\
	\mu_{\Qmo}(X_k, \Qmo_k, t) 
\end{pmatrix}, \quad
\Sigma (t) =
\begin{pmatrix}
	\Sigma_{XX}(t) & \Sigma_{\Qmo X}(t)  \\
	\Sigma_{\Qmo X}(t) & \Sigma_{\Qmo \Qmo}(t)
\end{pmatrix}.
$$
Therefore, given some point $(X_k, \Qmo_k)$ and some step-size $h>0$, we have that 
\begin{align}
	\begin{pmatrix} X_{k+1} \\
		{\Qmo}_{k+1}
	\end{pmatrix} \sim \mathcal{N}\left (
	\mu (h), \Sigma (h)\right ).
	\label{eq:transition}
\end{align}
\textbf{Sampling from the transition}. Using samples from a standard Gaussian ${z} \sim \mathcal{N}(0_d, {I_d})$, one may produce samples from of a multivariate distribution $\mathcal{N}({\mu}, {\Sigma})$ by using the  fact that
$$
{X} = {\mu} + {L} {z} \sim \mathcal{N}({\mu}, {\Sigma}),
$$
where ${L}$ is a lower triangular matrix with positive diagonal entries obtained via the Cholesky decomposition of ${\Sigma}$, i.e, ${\Sigma} = {L}{L}^\top$.

The Cholesky decomposition of the covariance matrix of \eqref{eq:transition} will be described here. Consider the LDL decomposition of a symmetric matrix
$$
\Sigma = \begin{bmatrix}
	A & B \\
	B^\top & C
\end{bmatrix} = \begin{bmatrix}
	I & 0 \\
	B^\top A^{-1} & I
\end{bmatrix}
\begin{bmatrix}
	A & 0 \\
	0 & S
\end{bmatrix}
\begin{bmatrix}
	I & A^{-1} B\\
	0  & I
\end{bmatrix},
$$
where $S = D-B^\top A^{-1}B$ is the Schur complement. Given Cholesky factorization of $A$ and $B$, written as $A = L_A L_A^\top$ and $S = L_S L_S^\top$ respectively, we can write the Cholesky decomposition of $\Sigma$ as 
$$
\Sigma = \begin{bmatrix}
	L_A & 0 \\
	B^\top L_A^{-\top } & L_S
\end{bmatrix}
\begin{bmatrix}
	L_A^\top & L_A^{-1} B\\
	0  & L^\top_S
\end{bmatrix} = L_\Sigma L_\Sigma^\top.
$$
In our case, we compute the Cholesky decomposition of $\Sigma$ as follows:
$$
L_\Sigma =
\begin{pmatrix}
	L_\Sigma^{XX} \, I_{d_x}
	& 0 \\
	L_\Sigma^{X\Qmo}\, I_{d_x} &
	L_\Sigma^{\Qmo\Qmo}\, I_{d_x}
\end{pmatrix},
$$
where the constants are defined as
\begin{subequations}
	\label{eq:sampling_constants}
	\begin{align}
		L_\Sigma^{XX} &= \sqrt{\frac{1}{\gamma_x}  \left [2h -\frac{\omega_x (2h)}{\gamma_x\eta_x } +  \frac{4\omega_x (h)}{\gamma_x\eta_x } - \frac{3}{\eta_x \gamma_x} \right ]}, \\
		L_\Sigma^{X\Qmo} &=  \frac{\frac{1}{\gamma_x\eta_x}  \left (  1- 2 \omega_x (h) + \omega_x (2h)  \right )}{\sqrt{\frac{1}{\gamma_x}  \left [2h -\frac{\omega_x (2h)}{\gamma_x\eta_x } +  \frac{4\omega_x (h)}{\gamma_x\eta_x } - \frac{3}{\eta_x \gamma_x} \right ]}}, \\
		L_\Sigma^{\Qmo\Qmo } &= \sqrt{\frac{1-\omega_x (2h)}{\eta_x } - \frac{\left ( \frac{1}{\gamma_x\eta_x}  \left (  1- 2 \omega_x (h) + \omega_x (2h)  \right ) \right )^2}{\frac{1}{\gamma_x}  \left [2h -\frac{\omega_x (2h)}{\gamma_x\eta_x } +  \frac{4\omega_x (h)}{\gamma_x\eta_x } - \frac{3}{\eta_x \gamma_x} \right ]}}.
	\end{align}
\end{subequations}
Therefore, we the transition can be written as follows:
\begin{align*}
	\forall i \in [M]: X^i_{k+1} &= X_k^{i} + \frac{1}{\gamma_x} \left [ ( 1- \omega_x (h))  {\Qmo}_k + \nabla_x \ell (\theta, X^i_{k}) \left [ h - \frac{1-\omega_x (h)}{\gamma_x\eta_x} \right ] \right ] + L_\Sigma^{XX} \xi^i_k, \\
	\forall i \in [M]:  {\Qmo}^i_{k+1} &= \omega_x (h) {\Qmo}_k^i +  \frac{1-\omega_x (h)}{\gamma_x\eta_x}\nabla_x \ell (\theta, X^i_{k}) + L_\Sigma^{X \Qmo} \xi^{i}_k + L_\Sigma^{\Qmo\Qmo}\xi^{\prime, i}_k,
\end{align*}
where $\{\xi^i_k, \xi^{\prime, i}_k\}_{i\in [M], k} \overset{\text{i.i.d.}}{\sim} \mathcal{N}(0_{d_x}, I_{d_x})$.

\subsection{Proposed discretization of \eqref{eq:approximate_sde}}
\label{sec:our_discretization}
Recall, our approximating SDE in \eqref{eq:approximate_sde} is given by:
\begin{align*}
	\mathrm{d} \tilde{\theta}_t &=\eta_\theta \tilde{m}_t \, \mathrm{d} t \\
	\mathrm{d} \tilde{m}_t &= - \gamma_\theta \eta_\theta  \tilde{m}_t \, \mathrm{d} t - \nabla_\theta \cal{E} \left( \bar{\theta}_0, \tilde{q}^M_{0,X} \right) \, \mathrm{d} t \\
	\text{for } i \in [M], \quad \mathrm{d} \tilde{X}_t^i &= \eta_x \tilde{U}_t^i \, \mathrm{d} t\\
	\text{for } i \in [M], \quad \mathrm{d} \tilde{U}_t^i &= -\gamma_x \eta_x \tilde{U}_t^i \, \mathrm{d} t + \nabla_{x} \ell \left( \bar{\bar{\theta}}_0, \tilde{X}_0^i \right) \, \mathrm{d} t + \sqrt{2 \gamma_{x}} \, \mathrm{d} W_t^i.
\end{align*}

For simplicity and without loss in generality, we assume there is a single particle, i.e., $M=1$, and derive the transition for a single step. Clearly, \eqref{eq:approximate_sde} can be written as an linear $2d_x + 2d_\theta$ SDE:
\begin{equation*}
	\rm{d}\bm{X}_t = (\bm{A}\bm{X}_t + \bm{\alpha})\,\mathrm{d}t + \bm{\beta}\, \mathrm{d}W_t,
\end{equation*}
where
\begin{align*}
\bm{X}_t
=
\begin{pmatrix}
	\tilde{\theta}_t \\
	\tilde{\tmo}_t \\
	\tilde{X}_t \\
	\tilde{\Qmo}_t
\end{pmatrix}, \quad
\bm{A} =
\begin{pmatrix}
\begin{matrix}
	0_{d_\theta} & \eta_\theta I_{d_\theta} \\
	0_{d_\theta} & -\eta_\theta \gamma_\theta I_{d_\theta}
\end{matrix} &
0_{2d_\theta \times 2d_x} \\
0_{2d_x \times 2d_\theta} & \begin{matrix}
	0_{d_x} & \eta_x I_{d_x} \\
	0_{d_x} & -\eta_x \gamma_x  I_{d_x}
\end{matrix}
\end{pmatrix},
\\
\bm{\alpha} = \begin{pmatrix}
0_{d_\theta \times 1} \\
-\nabla_\theta \cal{E}(\bar{\theta}_0, \tilde{q}^M_{0,X}) \\
0_{d_x \times 1} \\
\nabla_{x} \ell (\bar{\bar{\theta}}_0, \tilde{X}_0) \\
\end{pmatrix}, \quad
\bm{\beta} = 
\begin{pmatrix}
0_{d_\theta \times 1} \\
0_{d_\theta \times 1} \\
0_{d_x \times 1} \\
1_{d_x \times 1} 
\end{pmatrix}.
\end{align*}
Hence it admits the following explicit solution \citep[see page 48, 101]{platen2010numerical}:
\begin{equation*}
	\bm{X}_t = \bm{\Psi}_t \left (\bm{X_0} + \int_0^t \bm{\Psi}^{-1}_s \bm{\alpha} \,\mathrm{d}s + \int_0^t \bm{\Psi}^{-1}_s\bm{\beta}\,\mathrm{d}W_s\right ),
\end{equation*}
where 
$$
\bm{\Psi}_t := \exp \left (\bm{A}t \right ),
$$
with $\exp$ to be understood as the matrix exponential. In our case, similarly to \eqref{eq:fundamental_matrx}, we have
\begin{align*}
	\bm{\Psi}_t=\begin{pmatrix}
		\begin{matrix}
			I_{d_\theta} & \frac{1- \omega_\theta (t)}{\gamma_\theta} I_{d_\theta} \\
			0_{d_\theta} & \omega_\theta (t) I_{d_\theta}
		\end{matrix} & 0_{2d_\theta \times 2d_x} \\
		0_{2d_x \times 2d_\theta} & \begin{matrix}
			I_{d_x} & \frac{1- \omega_x (t)}{\gamma_x} I_{d_x} \\
			0_{d_x} & \omega_x (t) I_{d_x}
		\end{matrix} \\
	\end{pmatrix},\\ 
	\bm{\Psi}^{-1}_t =
	\begin{pmatrix}
		\begin{matrix}
			I_{d_\theta} & \frac{1-\omega_\theta (-t)}{\gamma_\theta} I_{d_\theta} \\
			0_{d_\theta} & \omega_\theta (-t) I_{d_\theta}
		\end{matrix} & 0_{2d_x \times 2d_\theta} \\
		0_{2d_\theta \times 2d_x} & \begin{matrix}
			I_{d_x} & \frac{1-\omega_x (-t)}{\gamma_x} I_{d_x} \\
			0_{d_x} & \omega_x (-t) I_{d_x}
			\end{matrix}
	\end{pmatrix},
\end{align*}
where $\omega_x(t) := \exp(-\gamma_x \eta_x t)$, and similarly $\omega_\theta(t):= \exp(-\gamma_\theta \eta_\theta t)$. Hence, we have
\begin{align*}
	\tilde{\theta}_t &= \tilde{\theta}_0 + \frac{1}{\gamma_\theta} \left [ (1-\omega_\theta(t))\tilde{m}_0 - \left (t - \frac{1-\omega_\theta(t)}{\gamma_\theta \eta_\theta}\right )\nabla_\theta \cal{E}(\bar{\theta}_0, \tilde{q}^M_{0,X}) \right ], \\ 
	\tilde{\tmo}_t &= \omega_\theta (t) \tilde{\tmo}_0 - \frac{1- \omega_\theta(t)}{\gamma_\theta \eta_\theta} \nabla_\theta \cal{E}(\bar{\theta}_0, \tilde{q}^M_{0,X}), \\
	\forall i \in [M]: \tilde{X}_{t}^i &= \tilde{X}^i_0 + \frac{1}{\gamma_x} \left [ ( 1- \omega_x (t))  \tilde{\Qmo}^i_0 +  \left ( t - \frac{1-\omega_x (t)}{\gamma_x\eta_x} \right ) \nabla_x \ell (\bar{\bar{\theta}}_0, \tilde{X}^i_0) \right ] + L_\Sigma^{XX} \xi, \\
	\forall i \in [M]: \tilde{\Qmo}_{t}^i &= \omega_x (t) \tilde{\Qmo}^i_0 +  \frac{1-\omega_x (t)}{\gamma_x\eta_x}\nabla_x \ell (\bar{\bar{\theta}}_0, \tilde{X}^i_0) + L_\Sigma^{X \Qmo} \xi + L_\Sigma^{\Qmo\Qmo}\xi^\prime,
\end{align*}
where the constants are exactly the same as those from \Cref{eq:sampling_constants}, and $\xi,\xi^\prime \sim \mathcal{N}(0,I_{d_x})$.

It can be seen that taking setting $t=h$, the general iterations are given by
\begin{align*}
	\tilde{\theta}_{k+1} &= \tilde{\theta}_k + \frac{1}{\gamma_\theta} \left [ (1-\omega_\theta(k))\tilde\tmo_k - \left (t - \frac{1-\omega_\theta(k)}{\gamma_\theta \eta_\theta}\right )\nabla_\theta \cal{E}(\bar{\theta}_k,\tilde{q}^M_{k,X}) \right ], \\ 
	\tilde{\tmo}_{k+1} &= \omega_\theta (k) \tilde{\tmo}_k - \frac{1- \omega_\theta(k)}{\gamma_\theta \eta_\theta} \nabla_\theta {\cal{E}}(\bar{\theta}_k, \tilde{q}^M_{k,X}), \\
	\forall i \in [M]: \tilde{X}_{k+1}^i &= \tilde{X}^i_k + \frac{1}{\gamma_x} \left [ ( 1- \omega_x (k))  \tilde{\Qmo}^i_k +  \left ( t - \frac{1-\omega_x (k)}{\gamma_x\eta_x} \right ) \nabla_x \ell (\bar{\bar{\theta}}_{k}, \tilde{X}_k^i) \right ] + L_\Sigma^{XX} \xi^i_k, \\
	\forall i \in [M]: \tilde{\Qmo}_{k+1}^i &= \omega_x (k) \tilde{\Qmo}_k^i +  \frac{1-\omega_x (k)}{\gamma_x\eta_x}\nabla_x \ell (\bar{\bar{\theta}}_{k}, \tilde{X}^i_k) + L_\Sigma^{X \Qmo} \xi_k^i + L_\Sigma^{\Qmo\Qmo}\xi_k^{\prime i},
\end{align*}
where $\{\xi^i_k,\xi^{\prime i}_k\}_{k, i \in [M]} \sim \mathcal{N}(0,I_{d_x})$.

\section{Practical Concerns and Experimental Details}

Here, we describe practical considerations and experiment details. First, in \Cref{sec:rmsprop_preconditioner}, we describe the RMSProp precondition \citep{tieleman2012lecture,staib2019escaping} used in our experiments. Then, in \Cref{appendix:subsampling}, we describe the subsampling procedure used for our image generation task. After, in \Cref{appen:heuristic}, we discuss the heuristic we introduced for choosing momentum parameters $(\gamma_\theta, \gamma_x, \eta_\theta, \eta_x)$. Finally, in \Cref{appendix:exp_details}, we detail all the parameters and models used in our experiment for reproducibility.

\subsection{RMSProp Preconditioner}
\label{sec:rmsprop_preconditioner}
Given $0<\beta<1$, the RMSProp preconditioner $G_k$ \citep{tieleman2012lecture,staib2019escaping} at iteration $k$ is defined by the following iteration:
\begin{equation*}
	G_{k+1} = \beta G_{k} + (1-\beta) \nabla_\theta \cal{E}(\bar{\theta}_k, q^M_{k,X})^2,
\end{equation*}
where $G_{0} = 0_{d_\theta}$ and $x^2$ denotes element-wise square.

\subsection{Subsampling}
\label{appendix:subsampling}
In the presence of a large dataset, it is common to develop computationally cheaper implementations by appropriate using of data sub-sampling. Such a scheme was devised in \citet[Appendix E.4]{Kuntz2022}. We utilize the same subsampling scheme for $(\theta, \tmo)$-components. However, we found it necessary to devise a new scheme for $(x, \qmo)$-components. We found that this alternative subsampling scheme substantially improved the performance of MPD but did not noticeably affect the PGD algorithm.

It is desirable to update the whole particle cloud. However, in cases where each sample has its own posterior cloud approximation, as in the generative modelling task, the dataset can become prohibitively large for updating the whole cloud. In \citet{Kuntz2022}, the subsampling scheme only updated the portion of the particle cloud associated with the mini-batch, which neglects the remainder of the posterior approximation. As such, we propose to update the subsampled cloud at the beginning of each iteration at a step proportional to the ``time''/steps it has missed. This is described more succinctly in \Cref{alg:subsampled_alg}.

\begin{algorithm}
\caption{A single subsampled step. In \textcolor{mypink2}{pink}, we indicate the existing subsampling scheme of \citet{Kuntz2022}. We indicate our proposed additions in \textcolor{teal}{teal}.}\label{alg:subsampled_alg}
\begin{algorithmic}
\REQUIRE Subsampled indices $\cal{I}$, subsampled data $\{y^{i}\}_{i\in \cal{I}}$, step-size $h$, previous iterates $(\theta_k, \tmo_k, \{X^i_k, \Qmo_k^i\}_{i\in \cal{I}})$
\STATE // Updated cloud for missed time
\STATE \textcolor{teal}{$\{X^i_{k+\epsilon}, \Qmo^i_{k+\epsilon}\}_{i\in \cal{I}} \gets$ Solve \Cref{eq:x_approx_sde,eq:qmo_approx_sde} with step-size $(h\times missed[\cal{I}])$ and $(\theta_k, \tmo_k, \{X^i_k, \Qmo_k^i\}_{i\in \cal{I}})$. }
\STATE // Reset time missed
\STATE \textcolor{teal}{$missed[\cal{I}] = 0$}
\STATE // Take a step with step-size $h$
\STATE \textcolor{mypink2}{$\theta_{k+1}, \tmo_{k+1} \gets$ Solve \Cref{eq:theta_approx_sde,eq:tmo_approx_sde} with step-size $h$ with $(\theta_k, \tmo_k, \{X^i_{k+\epsilon}, \Qmo_{k+\epsilon}^i\}_{i\in \cal{I}})$.}
\STATE \textcolor{mypink2}{$X^\cal{I}_{k+1}, \qmo^\cal{I}_{k+1} \gets$ Solve \Cref{eq:x_approx_sde,eq:qmo_approx_sde} with step-size $h$ with $(\theta_k, \tmo_k, \{X^i_{k+\epsilon}, \Qmo_{k+\epsilon}^i\}_{i\in \cal{I}})$.}.
\STATE // Increment missed time for particles that are not updated
\STATE \textcolor{teal}{$missed[\text{not in }\cal{I}] =missed[\text{not in }\cal{I}] + 1$}.
\end{algorithmic}
\end{algorithm}

\subsection{Heuristic}
\label{appen:heuristic}
We (partially) circumvent the choice of momentum parameters by leveraging the relationship between \Cref{eq:dhamil_euclidean} and NAG to define a heuristic. For completeness, we briefly describe the heuristic here. For a given step-size $h$, one can define the ``momentum coefficient'' $\mu_\theta = 1 - h\gamma_\theta \eta_\theta$ (and similarly, for $\mu_x$). Since, in NAG, $\mu$ has typical values, we can use say $\mu_\theta=0.9$ with a fixed value of $\gamma_\theta$ to find a suitable value of $\eta_\theta$ (and similarly, for $\eta_x, \gamma_x$). In our experiments, we found that $\gamma_\theta \in [\frac{1}{2}, 5]$ performed well. Another possible approach to handling hyperparameters is to borrow inspiration from adaptive restart methods \citep{o2015adaptive}. While some practical heuristics exist, it seems that to a large extent, the problem of tuning these hyperparameters remains open; we leave this topic for future work.

In \Cref{sec:nesterov_aaflow}, we show how we can discretize \Cref{eq:mpd_flow} to obtain the following update in the $(\theta, \tmo)$-components:
\begin{align*}
	\theta_{k} &= \theta_{k-1} + v_{k}, \\
	v_{k} &= {\mu} v_{k-1} - h^2 \nabla_\theta  {\cal{E}}(\theta_{k-1} + \mu v_{k-1}, q_{k-1,X}^M),
\end{align*}
where $\mu_\theta  = 1 - h \gamma_\theta \eta_\theta$, and $h>0$ is the step size. We refer to the resulting iterations as MPD-NAG-Cheng. This is exactly the update used in \citet{sutskever2013importance}'s characterization of NAG with step-size $h^2$. Since there are some well-accepted choices for choosing $\mu_\theta$ in NAG (e.g., \citet{nesterov1983method} advocating for $\mu_t = 1-3/(t+5)$), we can leverage the formula $\mu_\theta  = 1 - h \gamma_\theta \eta_\theta$ to define a suitable choice of $\eta_\theta$ for a fixed $\gamma_\theta$. One can, of course, fix $\eta_\theta$ instead, but we found that suitable values for $\gamma_\theta$ are easier to choose from than $\eta_\theta$. We note that for MPD-NAG-Cheng, it depends only on the value of $\mu_\theta$ which is not the case for our discretization. In \Cref{fig:momentum_cofficient}, we show the effect of varying the momentum coeffect $\mu_\theta$, while keeping the damping parameter $\gamma_\theta$ fixed (and vice versa). In \Cref{fig:appendix_fix_damping_vary_momentum}, it can be seen that for a fixed damping parameter $\gamma_\theta$ and varying the momentum coefficient MPD-NAG-Cheng changes significantly while ours does not. As for the vice versa case, in \Cref{fig:appendix_fix_momentum_vary_damping}, it can be observed that MPD-NAG-Cheng does not change while ours varies significantly instead. This suggests that the momentum coefficient in our discretization no longer has the same interpretation as that in MPD-NAG-Cheng. Nonetheless, it can be observed that for a suitably chosen $\gamma_\theta$ and momentum parameter $\mu_\theta$, our discretization performs better than MPD-NAG-Cheng. Hence, we use this as a heuristic when choosing momentum parameters $(\gamma_\theta, \gamma_x, \eta_\theta, \eta_x)$.

\begin{figure}
    \centering
    \begin{subfigure}[b]{0.4\textwidth}
		\centering
		\includegraphics[width=\textwidth]{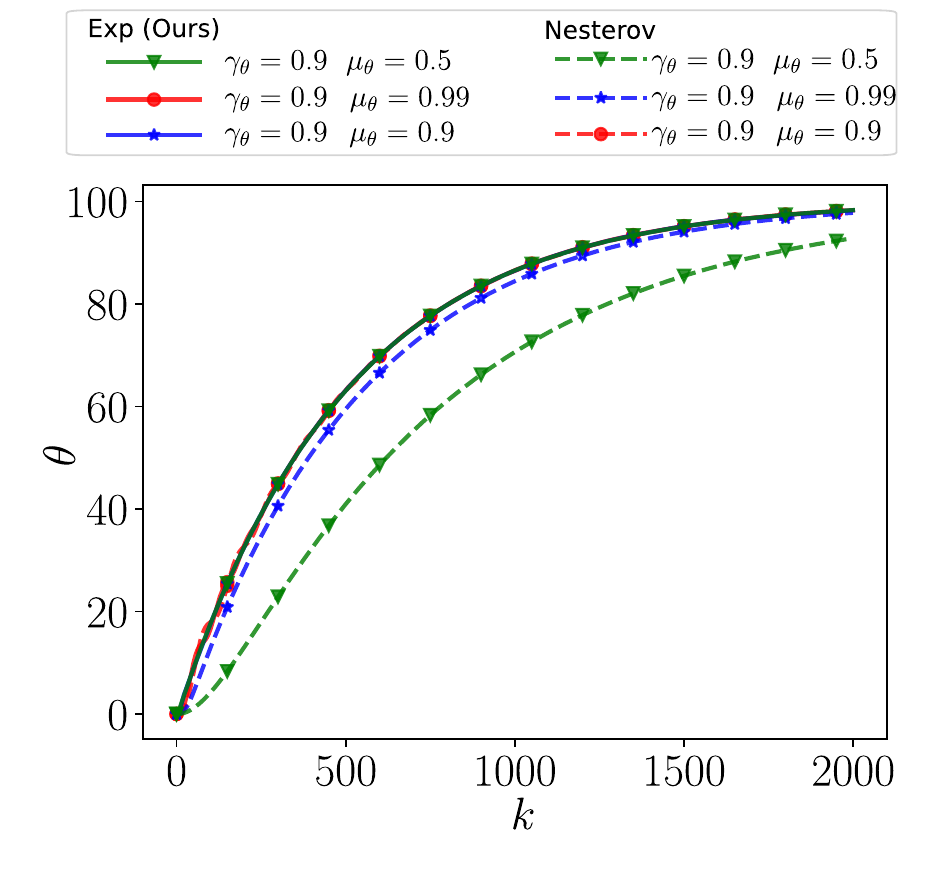}
		\caption{Fixed $\gamma_\theta$, varying $\mu_\theta$.}
        \label{fig:appendix_fix_damping_vary_momentum}
	\end{subfigure}
	\begin{subfigure}[b]{0.4\textwidth}
		\centering
		\includegraphics[width=\textwidth]{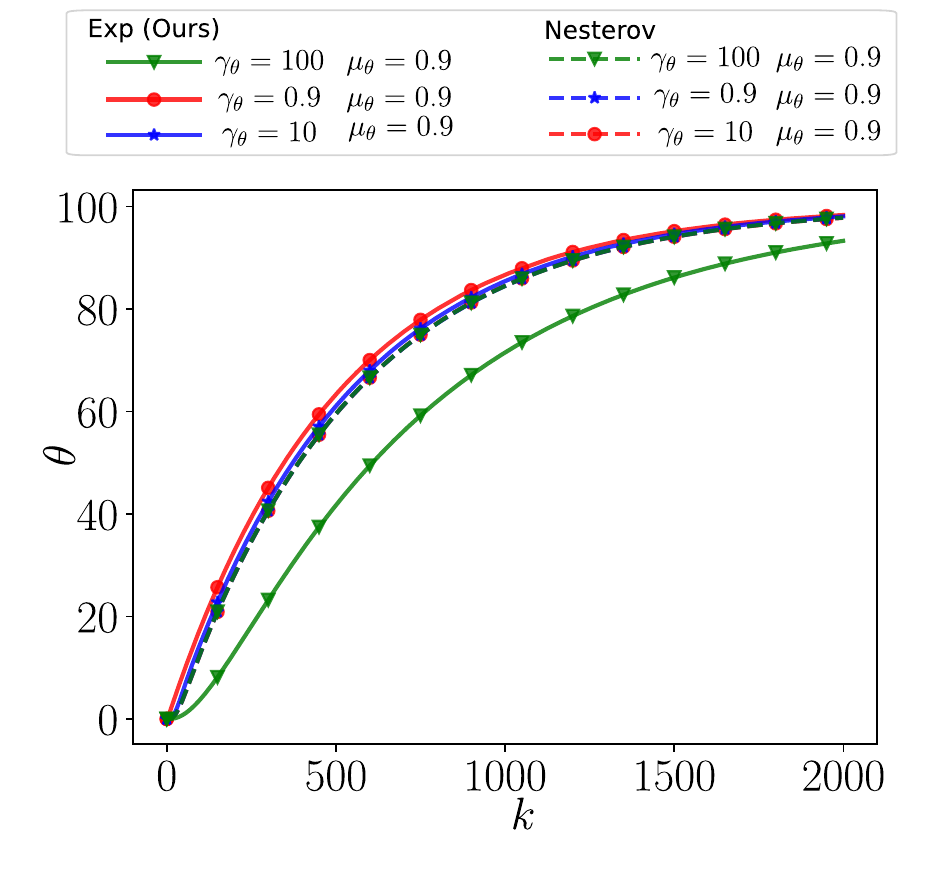}
		\caption{Fixed $\mu_\theta$, varying  $\gamma_\theta$.}
            \label{fig:appendix_fix_momentum_vary_damping}
	\end{subfigure}
    \caption{The effect of the damping parameter $\gamma_\theta$ and momentum coefficient $\mu_\theta$ on MPD.}
    \label{fig:momentum_cofficient}
\end{figure}

\label{sec:exp_details}

\subsection{Experimental Details}
\label{appendix:exp_details}
Here, we outline for reproducibility the settings: for the Toy Hierarchical Model (\Cref{appendix:toyhmm}); image generation experiment (\Cref{appendix:vae}); and, additional figures are in \Cref{app:add_figs}. Much akin to \citet[Section 2]{Kuntz2022}, we found it beneficial to consider separate time-scales for the $(\theta_t, \tmo_t)$ and $(X_t,\Qmo_t)$ which we denote $h_\theta$ and $h_x$ respectively.
\subsubsection{Toy Hierarchical Model}
\label{appendix:toyhmm}

\textbf{Model.}
The model is described in \citet[Example 1]{Kuntz2022}. For completeness, we describe it here. The model is defined by
\begin{equation}
    p_\theta(x, y) = \prod_{i=1}^{d_x} \cal{N}(y_i; x_i, 1) \cal{N}(x_i; \theta, \sigma^2).
    \label{eq:toyhmm_model}
\end{equation}
Thus, for the log-likelihood with $\sigma^2=1$, we have
$$
\log p_\theta(x, y) = C - \frac{1}{2} \sum_{i=1}^{d_x} \left ( (x_i - \theta)^2 + (y_i - x_i)^2 \right ),
$$
where $C$ is a constant independent of $\theta$ and $x$.

\textbf{\Cref{fig:different_regimes}}.
For the different regimes, we use the following momentum parameters:
\begin{itemize}
    \item \textbf{Underdamped}:  $\gamma_\theta=0.1, \eta_\theta=403.96, \gamma_x=0.1, \eta_x = 403.96$.
    \item \textbf{Overdamped}: $\gamma_\theta=1, \eta_\theta=403.96, \gamma_x=1, \eta_x = 403.96$.
    \item \textbf{Critically Damped} $\gamma_\theta=0.7, \eta_\theta=403.96, \gamma_x=0.7, \eta_x = 403.96$.
\end{itemize}
The step-sizes are $h_\theta = 10^{-2}/ N = 10^{-4}$, $h_x = 10^{-2}$, with number of samples $N=100$, and number of particles $M=100$.

\textbf{\Cref{fig:theta_integrators}}. We compare the discretization outlined in \Cref{sec:cheng_nag_mpd}, and \Cref{sec:mt_discretization}. We keep all parameters equal except for changing the momentum coefficient $\mu$. For step-sizes, we have $h_\theta = 10^{-5/2} \approx 5.8 \times 10^{-3}, h_x = 10^{-3}$, $\gamma_x=\gamma_\theta = 0.5$, with the momentum coefficient set to $\mu_\theta = \mu_x = \mu$ where $\mu \in \{0.9, 0.8, 0.5\}$.

\textbf{Lipschitz Constant and Strong log concavity.}
For the toy Hierarchical model with $\sigma^2=1$, one can show that it satisfies our Lipschitz assumption, as well as being strongly log-concave. We have
$$
\nabla_{(\theta, x)}^2 \log p_\theta
=
\begin{pmatrix}
	-d_x & 1_{1 \times d_x} \\
	1_{d_x \times 1} & -2I_{d_x} \\
\end{pmatrix}.
$$
The characteristic equation can be written as 
$$
\det(\nabla_{(\theta, x)}^2 \log p_\theta - lI_{d_{x + 1}}) = 0.
$$
The determinant can be evaluated by expanding the minors. Thus, we obtain
\begin{align*}
	\det(\nabla_{(\theta, x)}^2 \log p_\theta - lI_{d_{x + 1}}) =& (- d_x- l) (-2 - l)^{d_x} - d_x(-2 - l)^{d_x - 1} \\
	=&(-2 - l)^{d_x - 1} ((2 + l)(d_x + l) - d_x) \\
	=&(-2 - l)^{d_x - 1} (l^2 + (2 + d_x)l + d_x).
\end{align*}
Hence, the characteristic equation is satisfied when $l\in \{ -2, \frac{-(2+d_x) \pm \sqrt{d_x^2 + 4}}{2}\}$. Note that $\frac{-(2+d_x) - \sqrt{d_x^2 + 4}}{2} \le -2$.

From the above calculations, it can be seen that $\nabla_{(\theta, x)}^2 \log p_\theta$ is strongly-log concave, i.e., we have $\nabla_{(\theta, x)}^2 \log p_\theta \preceq - 2 I$.

In order to calculate its Lipschitz constant of \Cref{ass:gradlip}, by \citet[Lemma 1.2.2]{nesterov2003introductory}, this is equivalent to finding a constant $K_{hm} > 0$ such that 
$$
\| \nabla_{(\theta, x)}^2 \log p_\theta \| \le  K_{hm}.
$$
Since $\nabla_{(\theta, x)}^2 \log p_\theta$ is symmetric, then the matrix norm is given by the largest absolute value of the eigenvalue of $\nabla_{(\theta, x)}^2 \log p_\theta$ (i.e., it's spectral radius).

Hence, we have $K_{hm} = \frac{(2+d_x) + \sqrt{d_x^2 + 4}}{2}$.

\subsubsection{Image Generation}
\label{appendix:vae}
The dataset of both MNIST and CIFAR-10 is processed similarly. We use $N=5000$ images for training and $3000$ for testing. The model is updated $6280$ times using subsampling with a batch size of $32$. Hence, it completes $40$ epochs. The model is defined as:
$$
p_\theta(y, x) = \prod_{i=1}^N p_\theta(y^i, x^i),
$$
where:
\begin{itemize}
    \item The datum $(y^i, x^i) \in \mathbb{R}^{d_y} \times \mathbb{R}^{d_x}$ denotes a single image and its corresponding latent variable. For MNIST, we have $d_y = 28 \times 28 = 784$ and for CIFAR, we have $d_y = 32 \times 32 \times 3 = 3072$. For both datasets, we set $d_x = 64$. Thus, we have that $(y,x) \in \mathbb{R}^{d_y \times N} \times \mathbb{R}^{d_x \times N}$.
    \item For the VAE model, we have $p(y^i, x^i) = \cal{N}(y^i| g_\psi(x^i), \sigma^2I) p_\phi(x^i)$, where $\psi$ and $\phi$ are parameters of the generator and prior respectively. Thus, the parameter of the model is given by $\theta = \{\psi, \phi\}$. We set $\sigma^2 = 0.1$. The following specifies the details regarding the generator $g_\psi$ and prior $p_\phi$:
    \begin{itemize}
        \item \textbf{Generator $g_\psi$}. For CIFAR, we use a Convolution Transpose network (as in \citet{pang2020learning}) shown in \Cref{table:cifar_generator}. For MNIST, we use a fully connected network specified in \Cref{tab:mnist_generator} with $d_{in} = d_x$ and $d_{out} = d_y$.
    	\item \textbf{Prior $p_\phi (x)$.} The prior is inspired by the VampPrior \citep{tomczak2018vae} and is defined as:
    	$$
    	p_\phi(x) = \frac{1}{K}\sum_{i=1}^K \mathcal{N}(x|\mu_\nu(z_i), \sigma^2_\nu(z_i) I_{d_z}),
    	$$
    	where $\mu_\nu,\sigma^2_\nu:\mathbb{R}^{d_z}\rightarrow \mathbb{R}^{d_x}$ are neural networks (with parameters $\nu$) that parameterized the mean and variance; pseudo-points $\{z_i\}_{i=1}^K$ are optimized; and so the prior parameters are $\phi := \{\nu\} \cup \{z_i\}_{i=1}^K$. The architecture can be found in \Cref{tab:vi_mlp} where $d_{in}=d_z$ and $d_{out} =d_z$
    \end{itemize}

\end{itemize}

\begin{table}[]
\centering
\begin{tabular}{@{}cccc@{}}
\toprule
Layers                    & Size           & Stride & Pad \\ \midrule
Input                     & 1x1x128        & -      & -   \\
8x8 ConvT(ngf x 8), LReLU & 8x8x(ngf x 8)  & 1      & 0   \\
4x4 ConvT(ngf x 4), LReLU & 16x16x(ngf x 4) & 2      & 1   \\
4x4 ConvT(ngf x 2), LReLU & 32x32x(ngf x 2) & 2      & 1   \\
3x3 ConvT(3), Tanh        & 32x32x3  & 1      & 1   \\ \bottomrule
\end{tabular}
\caption{Generator network for the VAE used for CIFAR-10 (ngf = 256).}
\label{table:cifar_generator}
\end{table}
\begin{table}[]
    \centering
\begin{tabular}{ll}
\hline
Layers                    & Size      \\ \hline
Input                     & $d_{x}$  \\
Linear($d_{x}$, 512), LReLU & 512       \\
Linear(512,512), LReLU    & 512       \\
Linear(512,512), LReLU    & 512       \\
Linear(512,$d_{y}$), Tanh     & $d_{y}$ \\ \hline
\end{tabular}
    \caption{Generator network for the VAE used for MNIST.}
    \label{tab:mnist_generator}
\end{table}

\begin{table}[]
\centering
\begin{tabular}{ll}
\hline
Layers                               & Size         \\ \hline
Input                                & $d_{in}$         \\
Linear($d_{in}$, 512), LReLU            & 512          \\
Linear(512,512), LReLU               & 512          \\
Linear(512,512), LReLU               & 512          \\
Linear(512,$d_{out} \times 2$)             & $d_{out} \times 2$ \\
Output: Id([:$d_{out}$]), Softplus([$d_{out}$:]) & $d_{out}, d_{out}$     \\ \hline
\end{tabular}
\caption{Neural network parametrization of the mean and variance of a Gaussian Distribution used in VI and VAMPprior. ``Id'' is the identity function, and ``[:$d_{out}$]'' is a standard pseudo-code notation that refers to the operation that extracts the first $d_{out}$ dimensions and similarly for  [$d_{out}$:].}
\label{tab:vi_mlp}
\end{table}

\subsection{Hyperparameter settings}
Unless specified otherwise, we use the same parameters for both MNIST and CIFAR datasets. Our approach to selecting the step size involved first setting it to $10^{-3}$, then we would decrease it appropriately if instabilities arise.
\begin{itemize}
    \item \textbf{MPD}. For step sizes, we have $h_\theta = h_x= 10^{-4}$. The number of particles is set to $M=5$. For the momentum parameters, we use $\gamma_\theta= \gamma_x=0.9$ with the momentum coefficient $\mu_\theta = 0.95, \mu_x=0$ (or equivalently, $\eta_\theta\approx 556, \eta_x\approx 11,111$ )for MNIST and $\mu_\theta=0.95, \mu_x=0.5$  (or equivalently, $\eta_\theta\approx 556,, \eta_x\approx 55,555$ ) for CIFAR. We use the RMSProp preconditioner (see \Cref{sec:rmsprop_preconditioner}) with $\beta = 0.9$.
    \item \textbf{PGD}. We have $h_\theta = 10^{-4}, h_x =10^{-3}$. The number of particles is set to $M=5$. We use the RMSProp preconditioner (see \Cref{sec:rmsprop_preconditioner}) with $\beta = 0.9$.
    \item \textbf{ABP}. We use SGD optimizer with step size $h_\theta = 10^{-4}$, and run a length $5$ ULA chain $5$ with step-size $h_x =10^{-3}$. We found that RMSProp worked well in the transient period but failed to achieve better performance than SGD.
    \item \textbf{SR}. We used RMSProp optimizer with step size $h_\theta = 10^{-4}$, and ran five $20$-length ULA chain with step-size $h_x =10^{-3}$
    \item \textbf{VI}. We use RMSProp optimizer with a step size of $h = 10^{-5}$. We found that this resulted in the best performance. For the encoder, we use a Mean Field approximation as in \citet{kingma2013auto}. The neural network is a $3$-layer fully connected network with Leaky Relu activation functions. In the last layer, we use a softmax to parameterize the variance. See \Cref{tab:vi_mlp}.
\end{itemize}
\subsection{Additional Figures}
\label{app:add_figs}
\begin{figure}[ht]
	\begin{subfigure}[b]{0.24\textwidth}
		\centering
		\includegraphics[width=\textwidth]{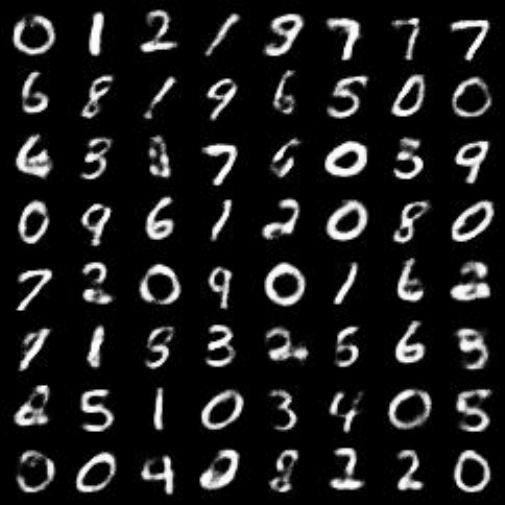}
		\caption{MPD.}
	\end{subfigure}
	\begin{subfigure}[b]{0.24\textwidth}
		\centering
		\includegraphics[width=\textwidth]{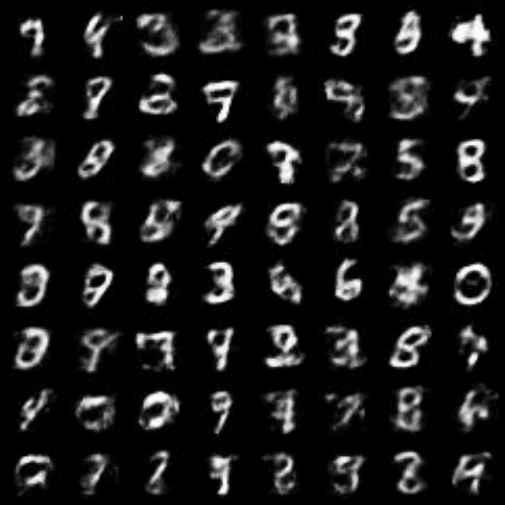}
		\caption{PGD.}
	\end{subfigure}
	\begin{subfigure}[b]{0.24\textwidth}
		\centering
		\includegraphics[width=\textwidth]{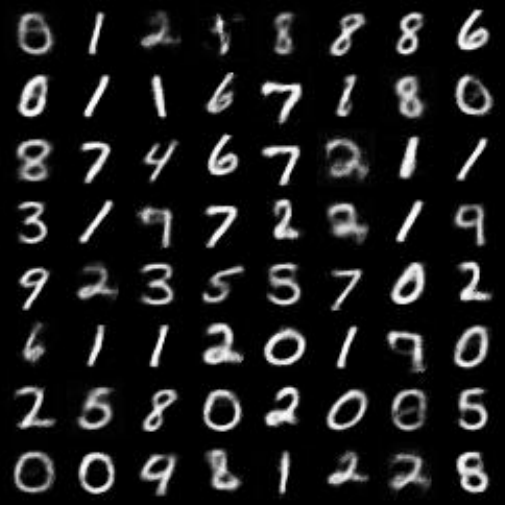}
		\caption{VI.}
	\end{subfigure}
	\begin{subfigure}[b]{0.24\textwidth}
		\centering
		\includegraphics[width=\textwidth]{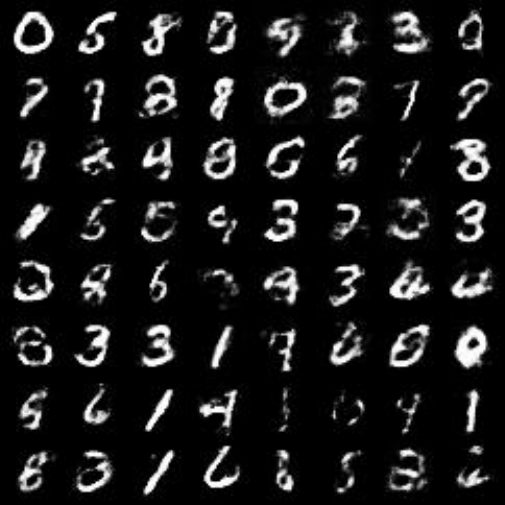}
		\caption{ABP.}
		\label{fig:mnist_}
	\end{subfigure}
	\caption{\textbf{MNIST}. Samples generated from various algorithms.}
	\label{fig:mnist_samples}
\end{figure}

\begin{figure*}[ht]
\centering
	\begin{subfigure}[b]{0.24\textwidth}
		\centering
		\includegraphics[width=\textwidth]{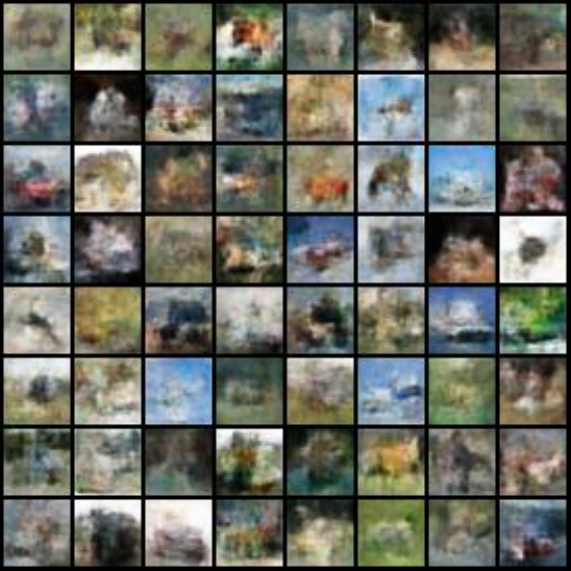}
		\caption{MPD.}
	\end{subfigure}
	\begin{subfigure}[b]{0.24\textwidth}
		\centering
		\includegraphics[width=\textwidth]{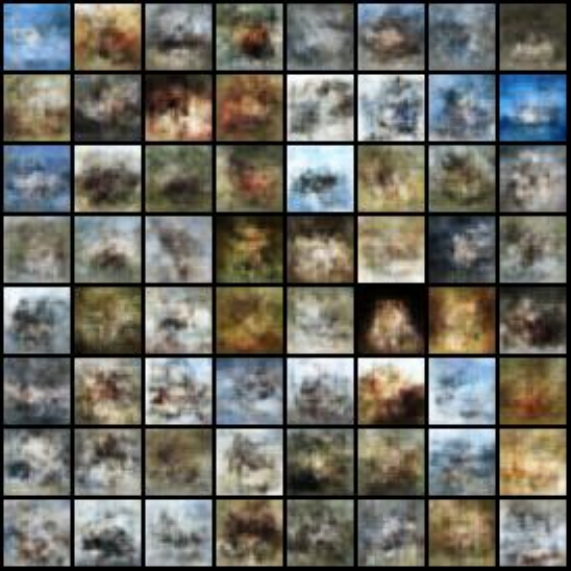}
		\caption{PGD.}
	\end{subfigure}
	\begin{subfigure}[b]{0.24\textwidth}
		\centering
		\includegraphics[width=\textwidth]{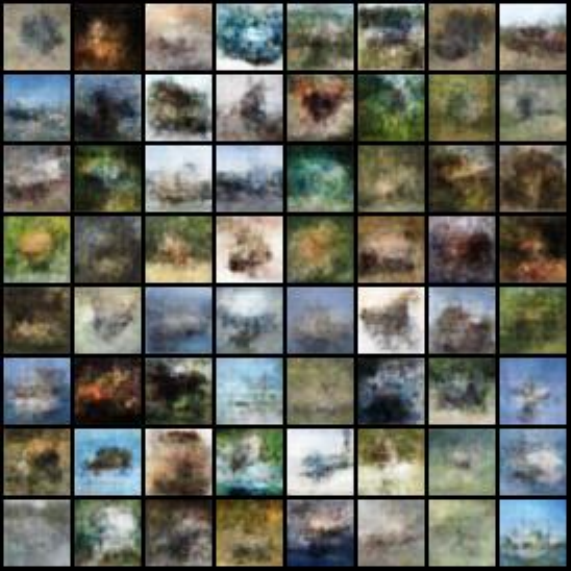}
		\caption{VI.}
	\end{subfigure}
	\begin{subfigure}[b]{0.24\textwidth}
		\centering
		\includegraphics[width=\textwidth]{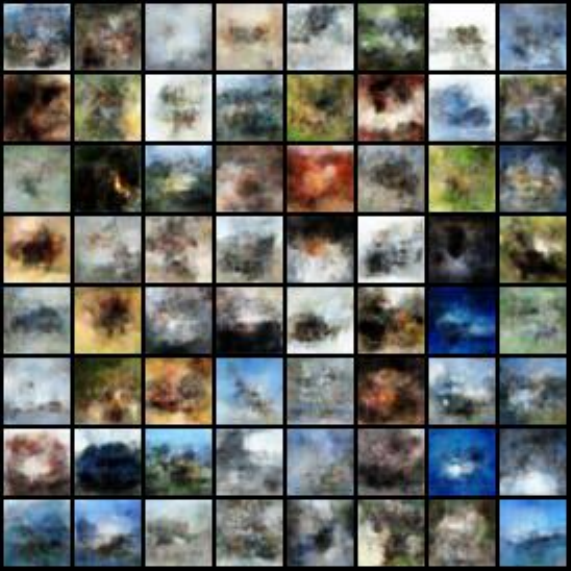}
		\caption{ABP.}
	\end{subfigure}
	\caption{\textbf{CIFAR-10}. Samples generated from various algorithms.}
	\label{fig:cifar_samples}
 \vspace{-2mm}
\end{figure*}

\section{Additional Experiments}
\label{appendix:more_experiments}
In this section, we show additional experiments particularly concerned with comparing MPD with PGD and other variants of MPD that only accelerated one component of the space instead of two. We say $X$-enriched to indicate algorithms with momentum incorporated in $X$ with either momentum included/excluded in $\theta$. To separate the two cases, we call $X$-only-enriched to refer to the algorithm with gradient descent in $\theta$. We use similar terminology for $\theta$-enriched and $\theta$-only-enriched.

We consider two settings: in \Cref{appendix:additional_thm}, further results with the toy Hierarchical model (see \Cref{appendix:toyhmm}); and, in \Cref{sec:appendix:additional_mog} a density estimation task using VAEs on a Mixture of Gaussian dataset.

\subsection{Toy Hierarchical Model}
\label{appendix:additional_thm}
\textbf{\Cref{fig:toy_hmm_scale}.} As the model, we consider a Toy Hierarchical model for different $\sigma$ values in \eqref{eq:toyhmm_model} where $\sigma$ is chosen to be the same as the data generating process. The data is generated from a toy Hierarchical model with parameters $(\theta =10, \sigma)$ where $\sigma = \{5, 10, 12\}$. As noted by \citet[Eq.\ 51]{Kuntz2022}, the marginal maximum likelihood of $\theta$ is the empirical average of the observed samples, i.e., $\frac{1}{d_x}\sum_{i=1}^{d_x} y_i$. For each trial, we process the data $(y_i)_{i=1}^{d_x}$ to have an empirical average of $10$ for simplicity.

\textbf{\Cref{fig:why_accelerate_both} (a), (b), (c)}. We use a toy Hierarchical model with $\sigma =12$, and, similarly, the data-generating processing is a toy Hierarchical model with $(\theta = 10, \sigma=12)$.  We are interested in how the initialization of the particle cloud affects MPD, PGD and other algorithms that only momentum-enriched one component. Each subplot shows the evolution of the parameter $\theta$ for the initialization from $\cal{N}(\mu, I)$ where $\mu \in \{-5, -20,-100\}$. This example serves as an illustration of the typical case where the cloud is initialized badly. It can be seen that all methods are affected by poor initialization. For $X$-enriched, the algorithms can recover faster to achieve almost identical performances (at the later iterations) between different initializations. In other words, for $X$-enriched algorithms transient phase is short. Methods without such $X$-enrichment take longer to recover and are unable to achieve similar performances compared to the cases where they are better-initialized. Overall, it can be seen that MPD perform better than PGD.

\begin{figure}[ht]
    \centering
    \begin{subfigure}[b]{0.32\textwidth}
        \centering
        \includegraphics[width=\textwidth]{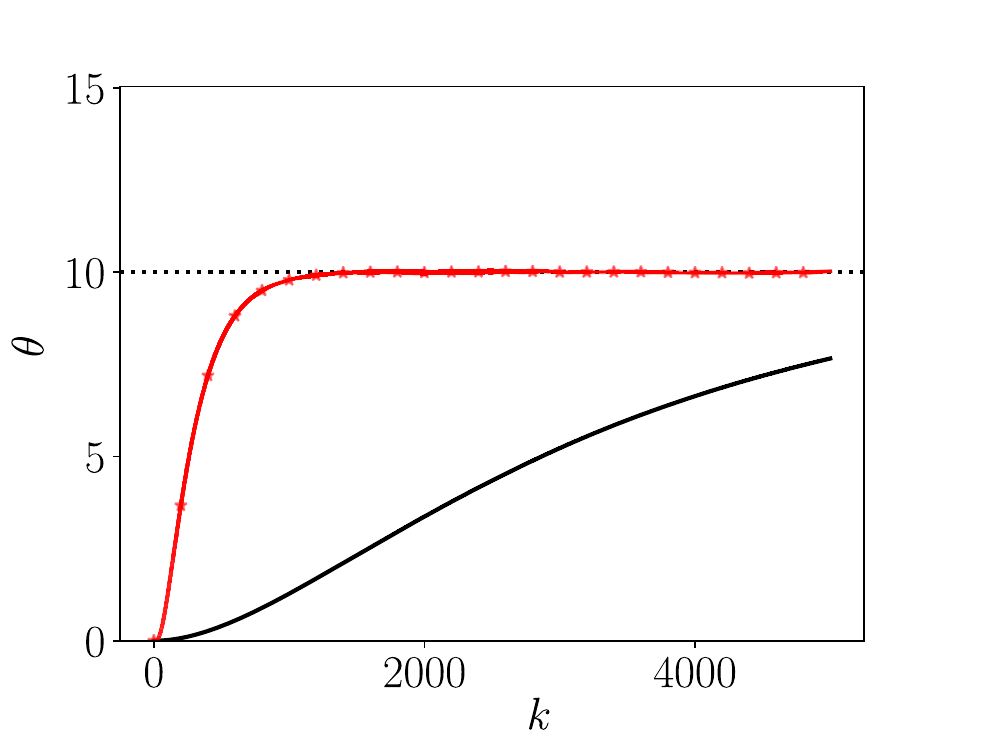}
        \caption{$\sigma =5$.}
    \end{subfigure}
    \begin{subfigure}[b]{0.32\textwidth}
        \centering
        \includegraphics[width=\textwidth]{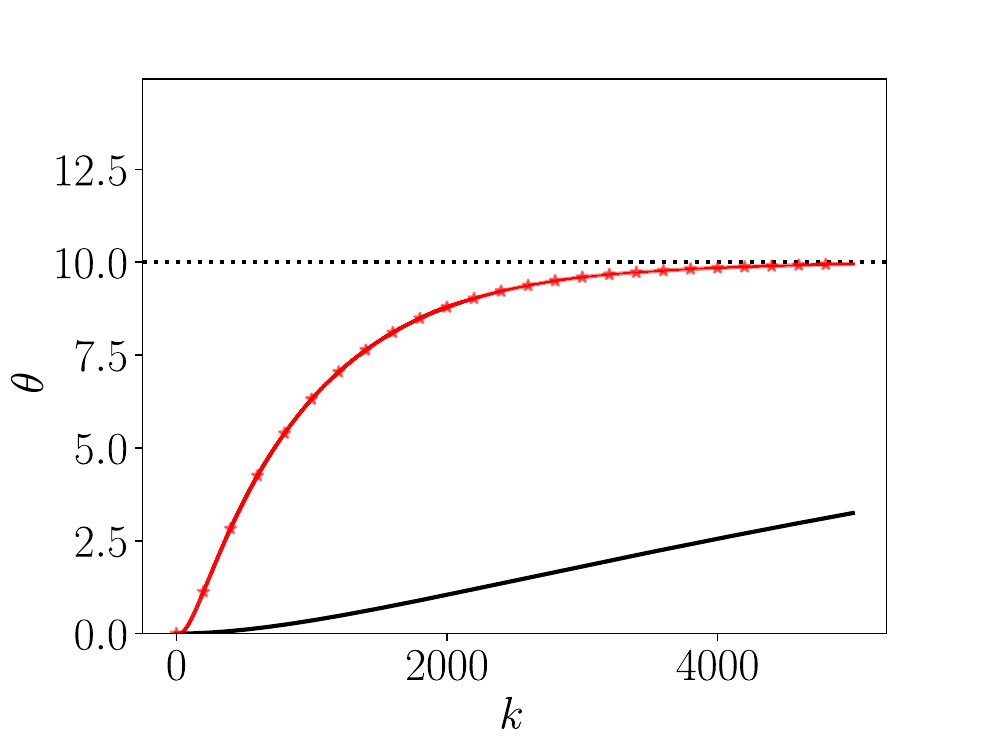}
        \caption{$\sigma = 10$.}
    \end{subfigure}
        \begin{subfigure}[b]{0.32\textwidth}
        \centering
        \includegraphics[width=\textwidth]{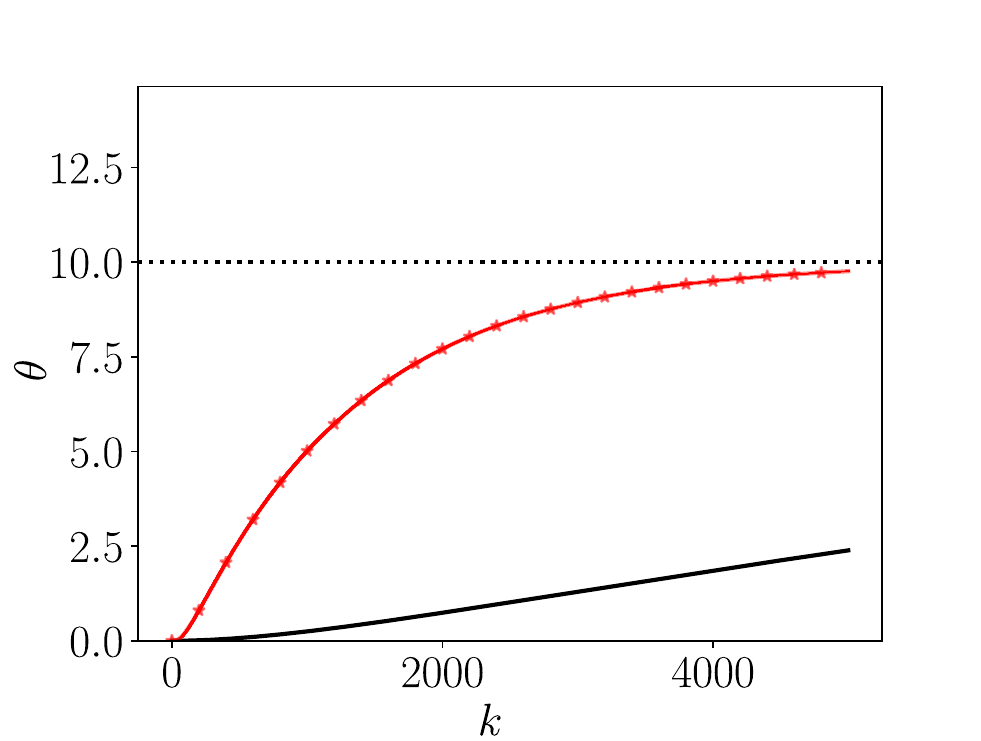}
        \caption{$\sigma = 12$.}
    \end{subfigure}
    \caption{Comparison between MPD and PGD on the Toy Hierarchical Model for different choices of $\sigma$. The plot shows the average and standard deviation of $\theta$ across iterations computed over $10$ independent trials. MPD is shown in \textcolor{red}{red} and PGD in \textcolor{black}{black}. The dashed line shows the true value.}
    \label{fig:toy_hmm_scale}
\end{figure}

\subsection{Density Estimation}
\label{sec:appendix:additional_mog}
In this problem, we show the result of training a VAE with VAMP Prior on a 1-d Mixture of Gaussians (MoG) dataset with PDF shown in \Cref{fig:mog_pdf}. The dataset is composed of $100$ samples. In  \Cref{fig:why_accelerate_both} (d), we show the performance of various acceleration algorithms. The empirical Wasserstein-1 distance (denoted by $\hat{\sf{W}}_1$) is estimated from the empirical CDF produced from $1000$ samples across each iteration. The results show the average (with standard error bars which are barely noticeable) computed over $100$ trials.  PGD is shown in \textcolor{black}{black}, and MPD is shown in \textcolor{green}{green}. In this case, all accelerated algorithms perform better than PGD, and MPD performs best of all.

\textbf{Hyperparameters}: No subsampling used. We have $d_y=1, d_x=10$.  We set $\gamma_x = \gamma_\theta = 0.4$, and use the momentum heuristic with $\mu_\theta = \mu_x = 0.1$ (see \Cref{appen:heuristic}). We use the same VAE architecture in \Cref{appendix:vae} with likelihood noise $\sigma=0.01$, an MLP decoder (see \Cref{tab:vi_mlp}), and VAMP prior with  $d_z=2$ and $K=20$.
\begin{figure}[ht]
\centering
    \includegraphics[width=0.4\textwidth]{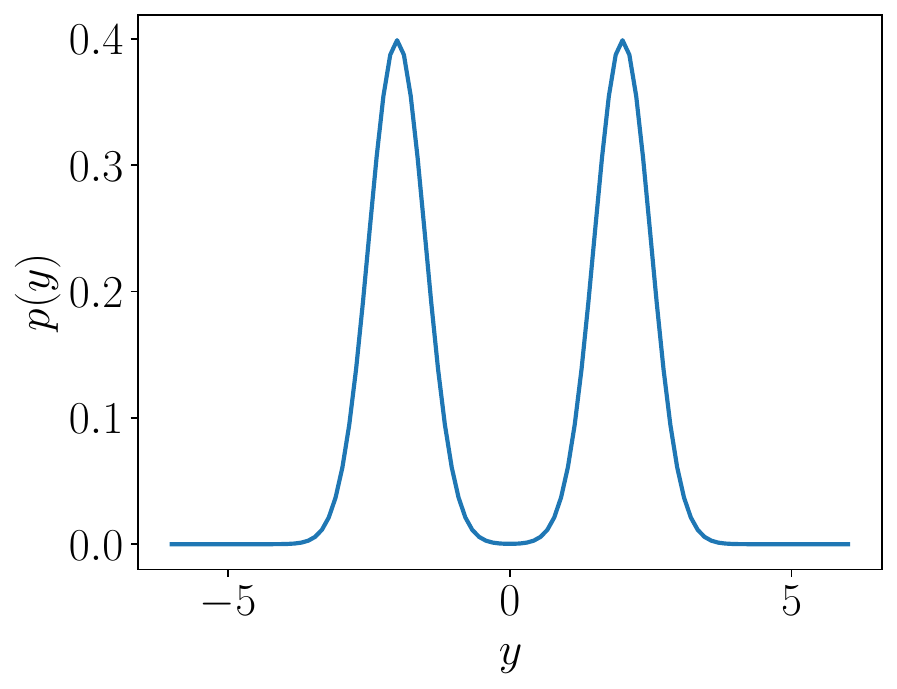}
    \caption{The true density of the MoG dataset.}
    \label{fig:mog_pdf}
    \vspace{-3mm}
\end{figure}
\subsection{Sensitivity Analysis}
\label{app:extra_sensitivity}

We are interested in investigating the sensitivity of MPD to different momentum parameters. One way to quantify whether you are in a regime where MPD is an improvement to PGD is by computing an approximation to the (signed and weighted) area between PGD and the MPD curves called \textit{Area Between Curves} (ABC) More specifically, we compute
\begin{equation*}
    \mathrm{ABC} := \frac{1}{K}\sum_{k=1}^K w(k)[\mathrm{PGD}(k) - \mathrm{MPD}(k)],
\end{equation*}
where $\mathrm{PGD}: \mathbb{Z}_{+} \rightarrow \bb{R}$ and $\mathrm{MPD}: \mathbb{Z}_{+} \rightarrow \bb{R}$ denote the loss/metric curves of PGD and MPD respectively, and $w(k)$ is a weighting function such that $\sum_{k=1}^Kw(k)=1$. The role of the factor $\frac{1}{K}$ is to normalize and $w(k)$ is to reweight the sum to have more importance at the end of the run than the beginning (we use ${w}(k)= \frac{2}{K(1+K)}\cdot k/K$). If MPD is in a desirable regime and performs better than PGD then the curve of $\mathrm{MPD}$ will lower bound the curve of $\mathrm{PGD}$. In this case, ABC will obtain large positive values and in the converse case, ABC will have small negative, hence a higher ABC indicates better hyperparameters.
\begin{figure}[ht]
    \centering
    \includegraphics[width=0.4\linewidth]{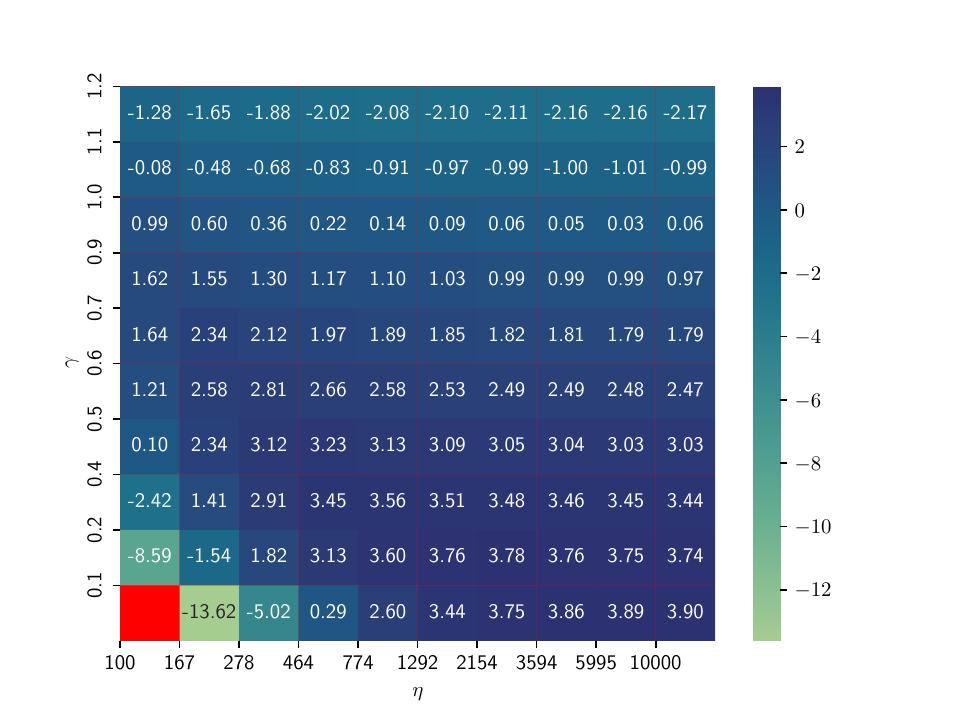}
    \caption{The Area Between the Curve of PGD and MPD for different momentum parameters.}
    \label{fig:abc}
\end{figure}
\Cref{fig:abc} shows the ABC for different momentum parameters $\gamma$ and $\eta$ while keeping all other hyperparameters the same between PGD and MPD. We use the toy Hierarchical model which allows us to have access to the true parameter $\theta$, so ABC is computed from curves $\rm{MPD}$ and $\rm{PGD}$ that returns the absolute difference between the model and true $\theta$. We note that a similar heatmap can be produced using the loss curve. This heatmap supports our recommendation that if we set $\eta$ to be sufficiently large, we can easily tune $\gamma$ to achieve better performance than PGD. Note that $\eta$-axis is in log scale and that this plot only applies to toy Hierarchical Model.
\end{document}